\documentclass{article}

\usepackage{microtype}
\usepackage{graphicx}
\usepackage{subcaption}
\usepackage{booktabs} 

\usepackage{hyperref}

\usepackage[accepted]{icml2026}

\usepackage{amsmath}
\usepackage{amssymb}
\usepackage{mathtools}
\usepackage{amsthm}

\usepackage{tcolorbox}
\tcbuselibrary{skins,breakable}
\usepackage{threeparttable}

\usepackage[capitalize,noabbrev]{cleveref}
\crefname{assumption}{Assumption}{Assumptions}
\Crefname{assumption}{Assumption}{Assumptions}

\theoremstyle{plain}
\newtheorem{theorem}{Theorem}
\newtheorem{proposition}{Proposition}
\newtheorem{lemma}{Lemma}
\newtheorem{corollary}{Corollary}

\theoremstyle{definition}
\newtheorem{definition}{Definition}
\newtheorem{assumption}{Assumption}

\theoremstyle{remark}
\newtheorem{remark}{Remark}

\usepackage[textsize=tiny]{todonotes}

\icmltitlerunning{Bilevel Optimization over Saddle Points of Zero-Sum Markov Games}

\usepackage{amsmath}
\usepackage[utf8]{inputenc} 
\usepackage[T1]{fontenc}    
\usepackage{hyperref}       
\usepackage{url}            
\usepackage{booktabs}       
\usepackage{amsfonts}       
\usepackage{nicefrac}       
\usepackage{microtype}      
\usepackage{mathrsfs}
\usepackage{flushend}
\usepackage{wrapfig}
\usepackage{lipsum}
\usepackage{etoolbox}
\usepackage{amsthm}
\usepackage{graphicx}
\usepackage{bbm}
\usepackage{amssymb}
\usepackage{color}
\usepackage{xcolor}
\usepackage{multirow}
\usepackage{enumitem}
\usepackage{mdframed}
\setlist[itemize]{noitemsep, topsep=0pt, label=$\blacktriangleright$, leftmargin=*}
\usepackage[suppress]{color-edits}

\def\b1{\mathbf 1}

\newcommand{\st}{{\mathrm{s.t.}}}

\DeclareMathOperator{\dist}{dist}

\def\E{\mathbb E}
\def\R{\mathbb R}

\def\cA{\mathcal A}
\def\cB{\mathcal B}

\def\cM{\mathcal M}

\def\cL{\mathcal L}
\def\cO{\mathcal O}
\def\cP{\mathcal P}
\def\cS{\mathcal S}
\def\cT{\mathcal T}

\def\bydef{\triangleq}
\def\b1{\mathbbm{1}}

\newenvironment{tightalign}
  {%
    \setlength{\abovedisplayskip}{6pt}%
    \setlength{\belowdisplayskip}{6pt}%
    \setlength{\abovedisplayshortskip}{1pt}%
    \setlength{\belowdisplayshortskip}{1pt}%
    \align
  }
  {%
    \endalign
  }

\usepackage{amsopn}
\DeclareMathOperator{\diag}{diag}

\begin{document}

\twocolumn[
  \icmltitle{Bilevel Optimization over Saddle Points of Zero-Sum Markov Games}



  \icmlsetsymbol{equal}{*}

  \begin{icmlauthorlist}
    \icmlauthor{Zihao Zheng}{cuhkcse,cuhk}
    \icmlauthor{Irwin King}{cuhkcse}
    \icmlauthor{Songtao Lu}{cuhkcse,cuhk}
  \end{icmlauthorlist}

  \icmlaffiliation{cuhk}{Shun Hing Institute of Advanced Engineering, The Chinese University of Hong Kong}
   \icmlaffiliation{cuhkcse}{Department of Computer Science and Engineering, The Chinese University of Hong Kong}

  \icmlcorrespondingauthor{Songtao Lu}{stlu@cse.cuhk.edu.hk}

  \icmlkeywords{Machine Learning, ICML}

  \vskip 0.3in
]



\printAffiliationsAndNotice{}  

\begin{abstract}
Reinforcement learning (RL) often has a hierarchical structure, where an upper-level (UL) learner selects model parameters and a lower-level (LL) decision-making process responds, naturally leading to a bilevel optimization problem. Most existing bilevel RL methods assume a single-policy LL Markov decision process (MDP), and therefore fail to capture competitive structures arising in applications such as incentive design, where multiple policies interact. We study bilevel optimization problems in which the LL problem is a regularized min–max zero-sum Markov game and the UL objective is optimized through the saddle-point equilibrium induced by the LL game. In this work, we propose penalty-augmented Nikaido–Isoda descent–ascent (PANDA), a penalty-based first-order policy-gradient method based on the Nikaido–Isoda function. By exploiting the min–max game structure, PANDA avoids computing UL hypergradients and does not require second-order information. We prove that PANDA converges to stationary points without convexity assumptions on either the UL or LL objectives. Moreover, PANDA reaches an $\epsilon$-stationary point in $\tilde{\mathcal{O}}(\epsilon^{-1})$ iterations with sample complexity $\tilde{\mathcal{O}}(\epsilon^{-3})$, matching the best-known rates for bilevel RL with single-policy LL MDPs. Experiments demonstrate the superior performance of PANDA over closely related baselines.
\end{abstract}

\begin{table*}[t]
\vspace{-2mm}
  \label{table: algorithm-comparison}
      \caption{Comparison of the proposed PANDA algorithm with existing methods. Deter. and Stoc. denote deterministic and stochastic settings, respectively. N/A indicates that the method does not provide theoretical guarantees on iteration complexity or sample complexity. $^\dag$ indicates methods applicable to multi-agent competitive settings. - means sample complexity is not considered in deterministic algorithms.}    
  \begin{center}
    \begin{small}
        \begin{threeparttable}
        \vspace{-4mm}
         \begin{tabular}{lcccccc}
            \toprule
            Algorithms & LL Problem  & Deter. or Stoc.   &  Iteration  Complexity & Sample Complexity  & Oracle \\
            \midrule
            PARL \citep{chakraborty2024parl} & Max   & Deter. & $\tilde{\cO}(\epsilon^{-1})$ & - & 1st + 2nd\\
            PBRL \citep{shen2025principled} & Max & Deter. & $\tilde{\cO}(\epsilon^{-1.5})$ & - & 1st\\
            HPGD \citep{thoma2024contextual} & Max & Stoc. & $\tilde{\cO}(\epsilon^{-2})$ & N/A & 1st\\
            SoBiRL \citep{yang2025bilevel} & Max & Stoc. & $\tilde{\cO}(\epsilon^{-1.5})$ & $\tilde{\cO}(\epsilon^{-3.5})$ & 1st\\
            First-Order BRL \citep{gaur2025on} & Max  & Stoc.& $\tilde{\cO}(\epsilon^{-1})$ & $\tilde{\cO}(\epsilon^{-3})$& 1st\\  
            SLAC \citep{zeng2025regularized}  & Max  & Stoc. & $\tilde{\cO}(\epsilon^{-3})$ & $\tilde{\cO}(\epsilon^{-3})$ & 1st\\
            \midrule
            Meta-Gradient \citep{yang2022adaptive} & Min--Max $^\dag$ & Stoc. & N/A & N/A & 1st \\
            DA \citep{wang2023differentiable} & Min--Max  & Deter. & $\tilde{\cO}(\epsilon^{-1})$ & - & 1st + 2nd\\
            PBRL \citep{shen2025principled} & Min--Max  & Deter. & $\tilde{\cO}(\epsilon^{-1.5})$ & -& 1st\\
            \midrule
            \textbf{PANDA (Ours)}
                & Min--Max & Stoc. & $\tilde{\cO}(\epsilon^{-1})$ & $\tilde{\cO}(\epsilon^{-3})$ & 1st\\
            \bottomrule
            \end{tabular}

        \end{threeparttable}
    \end{small}
  \end{center}
\vspace{-2mm}
\end{table*}

\section{Introduction}
Bilevel reinforcement learning (BRL) studies hierarchical decision-making settings in which an upper-level (UL) learner optimizes high-level variables that shape a lower-level (LL) reinforcement learning (RL) problem.
This paradigm has recently gained momentum as a flexible tool for modeling and solving complex learning and control problems, and a growing body of work has developed principled bilevel algorithms together with convergence guarantees~\citep{chakraborty2024parl,thoma2024contextual,shen2025principled}.

\vspace{-1mm}
Despite this progress, most existing BRL methods treat the LL as a single-policy Markov Decision Process (MDP)~\citep{chakraborty2024parl,thoma2024contextual,gaur2025on,zeng2025regularized}. This assumption sidesteps the competitive structure that is central to many real applications, where multiple decision-makers with opposing objectives interact and the UL must reason through their equilibrium behavior. A canonical model for such multi-policy interactions is the min--max
zero-sum Markov game (MMZSMG), which has been extensively studied from both
theoretical and algorithmic perspectives~\citep{bai2020near,zeng2022regularized,cen2024fast}
and has found broad use in artificial intelligence~\citep{munos2024nash,kuba2025language}.

\vspace{-1mm}
However, bilevel optimization over saddle points of regularized min--max zero-sum Markov games (BOSMG) at the LL remains comparatively underdeveloped. In particular, there is an urgent need for algorithms that can optimize UL variables through the LL saddle-point equilibrium while remaining computationally efficient in large-scale, sample-based settings. The main obstacle is that an MMZSMG induces strategic coupling between two opposing policies: each player’s optimal response depends on the other’s behavior. This competitive interaction is fundamentally different from a single-policy MDP and makes algorithm design substantially more challenging, since one must account for the simultaneous optimization of both players. As a result, many BRL methods tailored to single-policy LL MDPs do not transfer directly.

\vspace{-1mm}
For instance, hyper policy gradient descent (HPGD) proposed in \citet{thoma2024contextual} and soft BRL algorithm (SoBiRL) proposed in \citet{yang2025bilevel} use first-order oracles to construct hypergradients, but their constructions rely on structural properties specific to single-policy MDPs. In an MMZSMG, the coupled two-policy optimization breaks these derivations, rendering the corresponding hypergradient formulas inapplicable. Likewise, penalty-based approaches in BRL, such as the first-order approach to BRL (First-Order BRL)  proposed in \citet{gaur2025on} and the single-loop actor-critic algorithm (SLAC) proposed in \citet{zeng2025regularized}, often use value-function-based reformulations that encode LL optimality as constraints for a single policy, which do not directly extend to the saddle-point structure of an MMZSMG.

\vspace{-1mm}
Due to these challenges, only a few methods have been proposed for BOSMG. Existing approaches are either heuristic, updating UL parameters via policy-gradient steps with respect to each objective without a corresponding convergence result, as in the meta-gradient incentive design with pipelining method (Meta-Gradient) of \citet{yang2022adaptive}; or they rely on second-order information, such as Hessian inverses, which is typically computationally prohibitive at scale, as in the differentiable arbitrating (DA) algorithm of \citet{wang2023differentiable}. While a penalty-based BRL gradient-descent (PBRL) algorithm proposed in \citet{shen2025principled} uses only first-order oracles, their guarantees establish convergence to a stationary point of the penalized surrogate rather than to a stationary point of the original bilevel problem. Moreover, much of the existing theoretical analysis is restricted to deterministic regimes, leaving open the stochastic setting that is standard in RL, where gradients must be estimated from sampled trajectories and sample complexity is a primary concern.

\vspace{-1mm}
This motivates the following research question:
\vspace{-1mm}

\noindent
\begin{tcolorbox}[
    colback=white,
    colframe=black,
    boxrule=0.6pt,
    arc=2pt,
    left=5pt,
    right=5pt,
    top=3pt,
    bottom=3pt,
    width=\linewidth,
    boxsep=2pt,
    before skip=2pt,
    after skip=2pt
]
\centering
\emph{Can stochastic first-order methods solve BOSMG
with provably efficient iteration and sample complexity?}
\end{tcolorbox}
 
\vspace{-2mm}
\subsection{Related Works}
\vspace{-1mm}
\noindent{\bf Bilevel Optimization}. Recent years have seen significant progress in developing first-order algorithms for bilevel optimization with rigorous theoretical convergence guarantees~\citep{lu2024first,shen2023penalty, kwon2023penalty, huang2024optimal,liu2024moreau}. In particular, penalty-based methods have attracted particular attention due to their simplicity and strong empirical performance~\citep{shen2023penalty,kwon2023penalty,chen2025near,lu2025tsp}.

\vspace{-1mm}
When the LL problem is strongly convex, \citet{kwon2023fully} reformulate the LL optimality condition as a constraint via the function value gap, and incorporate this gap into the UL objective through a penalty term. Their method alternates between optimizing the LL variables using an aggregated surrogate objective and updating the UL variables using the resulting surrogate that is approximately optimized with respect to the LL variables. Building on this nested structure, \citet{chen2025near} further show that, when the penalty coefficient is sufficiently large, this approach converges to a hypergradient-based stationary point of the original bilevel problem with near-optimal iteration complexity.

\vspace{-1mm}
For nonconvex LL problems, a widely studied regime assumes that the LL objective satisfies the Polyak-{\L}ojasiewicz (P{\L}) condition \citep{lu2023slm,xiao2023alternating,huang2024optimal}. Under this assumption, \citet{kwon2023penalty} and \citet{chen2024finding} establish that if both the LL objective and the surrogate objective satisfy the P{\L} condition, then their proposed penalty-based algorithms converge to hypergradient-based stationary points of the original bilevel problem. More recently, \citet{jiang2025beyond} show that if the UL objective additionally satisfies a stronger regularity requirement than P{\L}, namely a flatness condition, then one can design a single-loop algorithm that still provably finds these hypergradient-based stationary points.

\vspace{-1mm}
\textbf{Bilevel Reinforcement Learning}. Compared to single-level RL, which has been extensively studied~\citep{sutton1999policy,nachum2017bridging,sutton2018reinforcement,geist2019theory,mei2020global,agarwal2021theory,xiao2022convergence}, BRL has recently attracted growing attention for its ability to model hierarchical decision-making, particularly in the development of efficient algorithms~\citep{chen2022adaptive,chakraborty2024parl,li2024getting}. For example, motivated by reinforcement learning with human feedback (RLHF), \citet{chakraborty2024parl} propose an algorithm for policy alignment in RL (PARL), which requires computing second-order derivatives of the loss functions. HPGD proposed \citet{thoma2024contextual} and SoBiRL proposed in \citet{yang2025bilevel} leverage closed-form characterizations of optimal policies in regularized LL MDPs, and use first-order information for estimating hypergradients, thereby avoiding second-order oracles.

\vspace{-1mm}
In parallel, penalty-based BRL methods provide an alternative that bypasses hypergradient computation entirely while using only first-order information from the underlying objective functions. In particular, \citet{shen2025principled} propose a penalty-based approach PBRL in the spirit of penalty methods in bilevel optimization. Under P{\L}-type conditions, \citet{gaur2025on} propose First-Order BRL, and establish the best-known iteration and sample complexity guarantees for penalty-based BRL. More recently, \citet{zeng2025regularized} develop SLAC for BRL and provide convergence guarantees to hypergradient-based stationary solutions of the original BRL formulation.

\vspace{-1mm}
\textbf{Min--Max Zero-Sum Markov Games}. Min--max zero-sum Markov games are a canonical model for competitive sequential decision making, providing a principled extension from single-agent MDPs to adversarial multi-agent settings, and their theoretical foundations are well established~\citep{littman1994markov,bai2020near,bai2020provable,yang2023t,kalogiannis2025solving}. Recent advances for regularized zero-sum Markov games further deliver strong algorithmic and statistical guarantees, including linear convergence to Nash equilibria under suitable regularization~\citep{zeng2022regularized,cen2023faster,cen2024fast,nayak2025achieving}. Besides, a range of recent and practically important applications can be modeled through such games. For example, several lines of work in RLHF formulate preference learning via two-player zero-sum Markov games coupled with a  preference model~\citep{swamy2024minimaximalist,rosset2024direct,munos2024nash,ye2024online,zhang2025iterative,zhou2025extragradient}. 

\vspace{-1mm}
\textbf{Bilevel Optimization with Min--Max Structured Games at LL}.
Yet, when such competitive games appear as the LL component of a bilevel problem, the landscape changes dramatically: the UL must optimize through an equilibrium induced by coupled min-max dynamics. A representative application is incentive design, where a principal at the UL shapes incentives to influence the behavior of competing agents at the LL~\citep{yang2020learning,liu2022inducing,yang2022adaptive}.

Provably efficient methods for this setting remain scarce. Existing attempts highlight this gap. Meta-Gradient~\citep{yang2022adaptive} targets this setting but is heuristic and provides no convergence guarantees. Hypergradient-based approaches such as \citet{liu2022inducing} require inverting the LL Hessian, relying on the strong assumption that second-order information can be accurately estimated. Similarly, DA~\citep{wang2023differentiable} estimates the Markov-game value-function Hessian via policy gradients and still requires Hessian inversion to form the hypergradient, which can be prohibitive in large-scale RL. In addition, \citet{yao2024overcoming} also consider related bilevel problems. However, their main focus is constrained bilevel optimization, where the LL problem involves functional constraints and is reformulated via a Lagrangian-based gap function, leading to a convex-linear primal-dual saddle-point structure rather than a zero-sum Markov game. Closest in spirit to our approach, \citet{shen2025principled} propose a first-order penalty method by aggregating the UL objective with a Nikaido--Isoda (NI) gap term constructed from the LL loss.

\vspace{-2mm}
\subsection{Main Contributions of This Work}
\vspace{-1mm}

In this work, we propose penalty-augmented Nikaido–Isoda descent–ascent (PANDA), a first-order method for solving BRL problems in which the LL is a regularized MMZSMG and the UL optimizes through the induced saddle-point equilibrium. Using a penalty-based reformulation, PANDA relies only on first-order information and avoids UL hypergradient computation, thereby eliminating the need for second-order derivatives. Moreover, by exploiting the intrinsic structure of regularized MMZSMG, we prove that PANDA converges to approximate stationary points of the original bilevel problem without imposing restrictive conditions, such as strong convexity, on either the UL or LL objectives. To the best of our knowledge, PANDA is the first stochastic first-order algorithm for bilevel problems with a regularized MMZSMG at the LL that achieves $\tilde{\cO}(\epsilon^{-1})$ iteration complexity and $\tilde{\cO}(\epsilon^{-3})$ sample complexity, matching the state-of-the-art rates for BRL with a single-policy MDP at the LL \citep{gaur2025on}.

\vspace{-1mm}
The main contributions of this work are highlighted as follows:
\vspace{-2mm}
\begin{itemize}
\item We propose PANDA, a first-order, stochastic, policy-gradient-based algorithm for BOSMG, making it readily applicable to large-scale and sample-based settings.
\item To the best of our knowledge, this is the first theoretical result showing that a stochastic first-order method can find an $\epsilon$-stationary point of the original bilevel problem in $\tilde{\cO}(\epsilon^{-1})$ iterations with $\tilde{\cO}(\epsilon^{-3})$ samples, matching the best-known rates for BRL with a single-policy LL MDP.
\item We establish several structural results for the underlying BRL formulation, including the uniqueness of LL saddle points under general regularization, as well as smoothness and a non-uniform P{\L} property of the NI function. These properties yield new insights into regularized MMZSMGs and may serve as useful tools for future work.
\item We validate PANDA across multiple environments, where it consistently outperforms closely related baseline methods.
\end{itemize}


\vspace{-2mm}

\section{Bilevel Optimization over Saddle Points of Zero-Sum Markov Games}
\vspace{-1mm}
\label{sec: problem-formulation}

In this work, we study the following BRL problem, where the LL problem is a regularized MMZSMG:
\vspace{-2mm}
\begin{subequations}\label{problem: bilevel-min-max} 
\begin{align} 
\min_{x, \phi,\psi} & \ f(x, \phi,\psi) 
\\ 
\st & \ (\phi,\psi) \in \arg\min_{\phi'}\max_{\psi'} \ J(x,\phi',\psi') \label{eq.minmaxgame}
\end{align}  
\end{subequations}
where $f$ denotes the UL objective, $\phi$ and $\psi$ are learnable policy parameters, and $J$ is the LL objective. Here, the UL objective $f$ is evaluated at the UL decision variable $x$ and the optimal policy pair $(\pi_\phi,\pi_\psi)$ induced by the LL problem. 
We nevertheless write it as $f(x,\phi,\psi)$ for notational convenience. In particular, $J$ corresponds to the regularized value function, which we define explicitly below. 

\vspace{-2mm}
\subsection{Regularized Min--Max Zero-Sum Markov Games at LL}
\vspace{-2mm}

Let $\cM(x)\!\!\bydef\!\!\{\cS,\cA,\cB,r_x,\cP,\gamma,h\}$ be a regularized MMZSMG, where $\cS$ is a finite state space; $\cA$ and $\cB$ are finite action spaces for the min-player and max-player, respectively; $r_x:\cS\times\cA\times\cB \mapsto \R$ is the parameterized reward function; $\cP:\cS\times\cA\times\cB \mapsto \Delta_{|\cS|}$ is the transition kernel; $\gamma \in (0,1)$ is the discount factor; and $h = (h_s)_{s\in \cS}$ is a regularizer, where each $h_s: \Delta_{|\cA|} \times \Delta_{|\cB|} \mapsto \R_+$ is strongly convex in the min-player's mixed strategy and strongly concave in the max-player's mixed strategy at state $s$. Furthermore, let $\pi_\phi(\cdot| s)\in \Delta_{|\cA|}$ denote the min-player's policy parameterized by $\phi$ for each state $s$, and similarly let $\pi_\psi(\cdot| s)\in \Delta_{|\cB|}$ denote the max-player's policy parameterized by $\psi$ for each state $s$.
We consider policy parameterizations that are expressive enough to represent all the mixed policies in the policy class.

For any policy pair $(\pi_\phi,\pi_\psi)$, the regularized state-value function at state $s$ is defined as
\vspace{-2mm}
\begin{align}
    & V^{\pi_\phi,\pi_\psi}_{\cM(x)}(s)  \bydef \E \Bigg[
        \sum_{t=0}^{\infty}\gamma^t\bigg( r_x(s_t,a_t,b_t) \notag \\
        &\quad +  h_{s_t} \left(\pi_\phi(\cdot|s_t),\pi_\psi(\cdot|s_t) \right) \bigg) \bigg| s_0 = s, \phi,\psi, \cM(x)
    \Bigg].\!\!\! \label{eq: definition-state-value-function-main}
    \vspace{-2mm}
\end{align}
For an initial state distribution $\rho\in\Delta_{|\mathcal S|}$ with full support, i.e., $\min_{s\in\mathcal S}\rho(s)>0$, we define $V^{\pi_\phi,\pi_\psi}_{\cM(x)}(\rho) \bydef \E_{s \sim \rho} V^{\pi_\phi,\pi_\psi}_{\cM(x)}(s)$. For notational simplicity, we write $J(x,\phi,\psi)$ to denote $V^{\pi_\phi,\pi_\psi}_{\cM(x)}(\rho)$ whenever the dependence is clear from context.

Due to the strong convexity--concavity of the regularizer, the LL min--max problem admits a unique optimal policy pair, which coincides with the saddle
point of the game in the policy space.
\begin{proposition}
    \label{proposition: existence-and-uniqueness-of-equilibrium-main}
    For any given $x$, 
    the regularized MMZSMG admits a unique equilibrium policy pair $(\pi_\phi^*(x), \pi_\psi^*(x))$ such that for any $\phi$ and $\psi$,
    \begin{align}
        V^{\pi_\phi^*(x),\pi_\psi}_{\cM(x)}(\rho) \leq  V^{\pi_\phi^*(x),\pi_\psi^*(x)}_{\cM(x)}(\rho)\leq V^{\pi_\phi,\pi_\psi^*(x)}_{\cM(x)}(\rho).
    \end{align}
    Moreover, we have 
\vspace{-2mm}
\begin{align}\notag
V^{\pi_\phi^*(x),\pi_\psi^*(x)}_{\cM(x)}(\rho) 
&= \min_{\phi} \max_{\psi} \, V^{\pi_\phi,\pi_\psi}_{\cM(x)}(\rho) \\\notag
&= \max_{\psi} \min_{\phi} \, V^{\pi_\phi,\pi_\psi}_{\cM(x)}(\rho).
\end{align}
\end{proposition}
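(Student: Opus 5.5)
I plan to follow the classical Shapley route, working state by state in the space of mixed Markov policies. The parameterization is assumed expressive enough to represent every mixed policy, so I can identify $\phi,\psi$ with policies $\mu=(\mu(\cdot|s))_{s}$ and $\nu=(\nu(\cdot|s))_{s}$. For a fixed $x$, I introduce the regularized Shapley operator acting on $V\in\R^{|\cS|}$:
\begin{align}\notag
(\cT V)(s) \bydef \min_{p\in\Delta_{|\cA|}}\max_{q\in\Delta_{|\cB|}} \Big[ p^\top Q_V(s) q + h_s(p,q)\Big],
\end{align}
where $Q_V(s)_{a,b} = r_x(s,a,b)+\gamma\sum_{s'}\cP(s'|s,a,b)V(s')$.

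For each $s$, the inner objective is strongly convex in $p$ and strongly concave in $q$ on compact convex simplices, since it is bilinear plus $h_s$. Sion's minimax theorem therefore lets me swap $\min$ and $\max$. Strong convexity–concavity also gives a unique saddle point $(p_s^*(V),q_s^*(V))$. The operator $\cT$ is a $\gamma$-contraction in $\|\cdot\|_\infty$, because the difference of min-max values is bounded by the sup difference of the payoffs, which is at most $\gamma\|V-V'\|_\infty$. Banach's theorem then yields a unique fixed point $V^*$. I define $\mu^*(\cdot|s)=p^*_s(V^*)$ and $\nu^*(\cdot|s)=q^*_s(V^*)$.

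Next I verify the saddle inequalities statewise. Fix $\mu^*$. For any $\nu$, the regularized Bellman operator $\cT^{\mu^*,\nu}$ for the pair $(\mu^*,\nu)$ satisfies $\cT^{\mu^*,\nu}V^*\le \cT V^*=V^*$, by the saddle property at each state. This operator is monotone and a $\gamma$-contraction, so iterating gives $V^{\mu^*,\nu}\le V^*$ pointwise. Symmetrically, $V^{\mu,\nu^*}\ge V^*$ for every $\mu$. Applying both with $\nu=\nu^*$ and $\mu=\mu^*$ gives $V^{\mu^*,\nu^*}=V^*$. Averaging over $\rho$ yields the displayed inequalities, which hold in fact for every $s$. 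The min-max equality follows from the standard argument: the saddle inequalities give $\min_\mu\max_\nu V\le \max_\nu V^{\mu^*,\nu}= V^*(\rho)= \min_\mu V^{\mu,\nu^*}\le \max_\nu\min_\mu V$, and the reverse weak-duality inequality always holds. I also need that $\max_\nu V^{\mu,\nu}(\rho)$ is attained, which follows from continuity of $V$ in the policies on a compact set.

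\textbf{Uniqueness} is the main obstacle, because the $\rho$-level game is not convex-concave in the policies. My first attempt is the following. Let $(\mu',\nu')$ be any saddle pair for $V(\rho)$. Then $\mu'$ minimizes $\mu\mapsto \max_\nu V^{\mu,\nu}(\rho)$. Since $\rho$ has full support and the best-response value against a fixed $\mu$ is an MDP value, it is attained simultaneously at all states. I would argue that $V^{\mu',\nu'}$ equals $V^*$ at every state, using the statewise saddle values and full support of $\rho$ to upgrade the $\rho$-averaged inequalities to statewise ones.

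Given the statewise identity $V^{\mu',\nu'}=V^*$, the next step is the Bellman equation of the MDP faced by each player. Against $\nu'$, optimality of $\mu'$ for the min-player means that $\mu'(\cdot|s)$ minimizes $p\mapsto \max_q[\cdots]$ restricted to $q=\nu'(\cdot|s)$, and $\nu'(\cdot|s)$ maximizes the analogous expression with $p=\mu'(\cdot|s)$, both with $V=V^*$. Thus $(\mu'(\cdot|s),\nu'(\cdot|s))$ is a saddle point of the strongly convex–concave stage game at $V^*$. By uniqueness of that stage saddle point, $\mu'=\mu^*$ and $\nu'=\nu^*$.

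The delicate part is the statewise upgrade. I would handle it by a performance-difference argument. If $\mu'$ were suboptimal at some state against the best response, the performance-difference lemma would express the gap in $V(\rho)$ as a nonnegative combination of per-state advantages, weighted by a visitation distribution that dominates $(1-\gamma)\rho>0$. This forces every per-state advantage to vanish, giving the needed statewise optimality.
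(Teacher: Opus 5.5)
Your proposal is correct. Its existence half is essentially the paper's argument. The paper likewise defines the soft min--max Bellman optimality operator $T_{*,h}$ and shows it is a monotone $\gamma$-contraction. It takes the unique stage-game saddle point $(y_h^*,z_h^*)$ at the fixed point $V_h^*$, proves $V_h^{y_h^*,z}\le V_h^*\le V_h^{y,z_h^*}$ statewise by iterating the monotone operators $T_{y_h^*,z,h}$ and $T_{y,z_h^*,h}$, and then averages over $\rho$. The only cosmetic difference is that it gets $V_h^{y_h^*,z_h^*}=V_h^*$ from $V_h^*=T_{y_h^*,z_h^*,h}V_h^*$ rather than by your sandwich.

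The genuine difference is uniqueness. The paper deduces it in one line from uniqueness of $V_h^*$ and of the stage-game saddle point at $V_h^*$. It then calls the constructed pair the unique saddle point of the $\rho$-averaged game, without explaining why an arbitrary saddle pair $(\mu',\nu')$ of $V(\rho)$ must be a stage-game saddle point at $V_h^*$. Your argument supplies exactly these missing links:
\begin{enumerate}
\item full support of $\rho$ turns the $\rho$-level best-response property into a statewise one;
\item the chain $V^*(s)\le V^{\mu',\nu^*}(s)\le V^{\mu',\nu'}(s)\le V^{\mu^*,\nu'}(s)\le V^*(s)$ then gives $V^{\mu',\nu'}=V^*$;
\item greediness of optimal policies in each player's regularized MDP makes $(\mu'(\cdot|s),\nu'(\cdot|s))$ a saddle point of the stage game at $V^*$.
\end{enumerate}
So your route actually proves uniqueness among all saddle pairs of the $\rho$-averaged (nonconvex) game, which is what the proposition claims. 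The cost is modest: you rely on standard regularized-MDP facts (a simultaneously optimal policy exists, and optimal policies are greedy).

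Two small simplifications are available. First, the performance-difference lemma is unnecessary for the statewise upgrade. Take a statewise-optimal response $\nu^\dagger$; then $V^{\mu',\nu'}(s)\le V^{\mu',\nu^\dagger}(s)$ at every state, the $\rho$-averages are equal, and $\min_s\rho(s)>0$, which already forces equality at every state. Second, you should make the greediness of $\mu'$ explicit, and a one-state deviation does it. Write $H_s(p,q)=p^\top Q_{V^*}(s)\,q+h_s(p,q)$ and suppose $H_{s_0}(p,\nu'(\cdot|s_0))<V^*(s_0)$ for some $p$. Let $\mu''$ switch to $p$ at $s_0$. Then $T_{\mu'',\nu',h}V^*\le V^*$ with strict inequality at $s_0$. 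Monotone iteration gives $V^{\mu'',\nu'}(s_0)<V^*(s_0)$, contradicting $V^*=\min_\mu V^{\mu,\nu'}$ statewise.
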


\vspace{-4mm}
This optimal policy pair is also referred to as the Nash equilibrium (NE) of the regularized MMZSMG.Proposition~\ref{proposition: existence-and-uniqueness-of-equilibrium-main} establishes, to the best of our knowledge, the most general uniqueness guarantee for equilibrium policies in regularized MMZSMG, in the sense that the regularizer $h_s$ can be \emph{any} state-wise strongly convex--concave function with arbitrary strong convexity/concavity \emph{moduli}. For concreteness, throughout the remainder of this work we focus on entropy regularization, i.e.,
\begin{equation}
\!\!\!\!h_s(\pi_\phi(\cdot|s),\!\pi_\psi(\cdot|s)) \!=\! - \tau_\phi H(\pi_\phi(\cdot|s)) +  \tau_\psi H(\pi_\psi(\cdot|s)), \label{eq: entropy-regularization}
\end{equation}
where $\tau_\phi > 0$ and $\tau_\psi > 0$ are the regularization coefficients.

\begin{remark}
    Other regularization choices are also covered by Proposition~\ref{proposition: existence-and-uniqueness-of-equilibrium-main}. For example, a KL-based regularizer, $h_s(\pi_\phi(\cdot|s),\pi_\psi(\cdot|s)) = - \tau_\phi \mathrm{KL}(\pi_\phi(\cdot|s)\,\|\,\pi_{\mathrm{ref}}(\cdot|s)) + \tau_\psi \mathrm{KL}(\pi_\psi(\cdot|s)\,\|\,\pi_{\mathrm{ref}}(\cdot|s))$, is a special case that is often useful in practice, where $\pi_{\mathrm{ref}}$ denotes a reference policy.
\end{remark}

Consequently, problem~\eqref{problem: bilevel-min-max} can be equivalently reformulated as
\vspace{-2mm}
\begin{equation}\label{eq.Fx}
    \min_{x} F(x) \bydef f\!\left(x,\phi^*(x),\psi^*(x)\right).
    \vspace{-2mm}
\end{equation}
Here, $\phi^*(x) \in \{\phi:\pi_\phi=\pi_\phi^*(x)\}$ and
$\psi^*(x) \in \{\psi:\pi_\psi=\pi_\psi^*(x)\}$ denote parameters that induce
the optimal policy pair of the LL min--max game for a given $x$.
We note that the UL objective $f$ is evaluated at the equilibrium policy pair induced by the LL problem.
Although there may exist multiple optimal parameter pairs, they all induce the same unique equilibrium policy pair $(\pi_\phi^*(x),\pi_\psi^*(x))$ and hence yield the same value of $f$.

\subsection{Penalty-Based Reformulation of BOSMG}
However, computing the hypergradient of $F(x)$ with respect to $x$ via the chain rule is computationally expensive, since the parameter $x$ couples the UL and LL problems. 
By Proposition~\ref{proposition: existence-and-uniqueness-of-equilibrium-main},
the policy pair induced by $(\phi^*(x),\psi^*(x))$ coincides with a saddle
point of the LL min--max problem. This observation motivates the use of the classical NI function~\citep{nikaido1955note} to quantify how far the policy pair induced by $(\phi,\psi)$ is from
equilibrium in the LL MMZSMG. Specifically, the NI function for our problem is defined as
\vspace{-1mm}
\begin{align}
    g(x,\phi,\psi) \bydef  \max_{\psi^\prime } J(x,\phi,\psi^\prime) - \min_{\phi^\prime } J(x,\phi^\prime,\psi), \label{eq: ni-gap-function}
    \vspace{-3mm}
\end{align}
which is always nonnegative and equals zero if and only if the policy pair
induced by $(\phi,\psi)$ is an NE of the MMZSMG $\cM(x)$~\citep{von2009optimization}.

Therefore, it is natural to use the NI function to reformulate the original BRL problem~\eqref{problem: bilevel-min-max} as
\vspace{-1mm}
\begin{align}\label{eq.penformulation}
    \min_{x,\phi,\psi}   f(x,\phi,\psi) \quad 
    \st  \quad g(x,\phi,\psi) \leq 0.
    \vspace{-2mm}
\end{align}
One of the most direct approaches to solving the resulting single-constraint
optimization problem~\eqref{eq.penformulation} is to employ a penalty method
\citep{wright1999numerical,kwon2023penalty,chen2025near}.

Subsequently, we define the following penalty-based objective function:
\vspace{-1mm}
\begin{align}\label{eq.nigapfun}
    \mathcal{L}_\lambda(x,\phi,\psi) \bydef  f(x,\phi,\psi) + \lambda g(x,\phi,\psi)
    \vspace{-2mm}
\end{align}
for some penalty parameter $\lambda>0$. We then define the associated hyper-objective as the minimum value of $\mathcal{L}_\lambda(x,\phi,\psi)$ for a given $x$:
\vspace{-2mm}
\begin{align}
    \mathcal{L}_\lambda^*(x) \bydef \min_{\phi,\psi}   f(x,\phi,\psi) + \lambda g(x,\phi,\psi),
    \vspace{-2mm}
\end{align}
which can serve as a surrogate for $F(x)$ in~\eqref{eq.Fx}. This leads to a simple strategy for solving~\eqref{problem: bilevel-min-max}: for a given $\lambda>0$, first solve the inner penalty-based problem to obtain $(\phi^*_\lambda(x),\psi^*_\lambda(x)) \in \arg\min_{\phi,\psi} \mathcal{L}_\lambda(x,\phi,\psi)$, and then update $x$ by minimizing the resulting surrogate objective $\mathcal{L}_\lambda^*(x)$, i.e., $\min_x \mathcal{L}_\lambda^*(x)$. Next, we propose a policy-gradient-based algorithm for solving this class of BOSMG.

\vspace{-2mm}
\subsection{Policy Gradient Methods for BOSMG}
\label{sec: algorithm-design}
\vspace{-2mm}
Motivated by the nested structure induced by the penalty-based reformulation of~\eqref{problem: bilevel-min-max}, our algorithm consists of three components.

\noindent{\bf Step 1: Best-response approximation.} (lines 7 -- 10 in PANDA) Solving~\eqref{eq.nigapfun} requires estimating the NI function, which in turn entails approximately solving the two best response problems in~\eqref{eq: ni-gap-function}. To this end, we apply policy-gradient descent/ascent to obtain $\tilde{\phi}$ and $\tilde{\psi}$ as effective approximations of the best-response parameters.

To compute policy gradients $\nabla_\phi J(x,\phi,\psi)$ and $\nabla_\psi J(x,\phi,\psi)$, we further define the joint action-value function as
\vspace{-2mm}
\begin{align}
    Q^{\pi_\phi,\pi_\psi}_{\cM(x)}(s,a,b) & = r_x(s,a,b) + \gamma \E_{s^\prime} V^{\pi_\phi,\pi_\psi}_{\cM(x)}(s^\prime). \label{eq: q-function-definition} 
    \vspace{-2mm}
\end{align}
Then the policy gradient $\nabla_\phi J(x,\phi,\psi)$ 
(cf. Lemma~\ref{lemma: policy-gradient-J} in Appendix) is given by
\vspace{-2mm}
\begin{align}
   & \nabla_\phi J(x,\phi,\psi)  = \E \Big[ \sum_{t=0}^{\infty } \gamma^t \nabla_\phi \log \pi_\phi(a_t |s_t) \notag
   \\\notag
   &  \!\!\!\cdot\!\!\big(Q^{\pi_\phi,\pi_\psi}_{\cM(x)}(s_t,a_t,b_t) 
   \!-\! \tau_\psi \log \pi_\psi(b_t|s_t) 
   \!+\! \tau_\phi \log \pi_\phi(a_t|s_t) 
    \big) \Big], 
    \vspace{-2mm}
\end{align}
where the expectation is taken over trajectories generated by the policy pair $(\pi_\phi,\pi_\psi)$ in the Markov game $\cM(x)$. 
 
In implementation, we sample a batch of $B_J$ trajectories with truncated horizon length $H$ and approximate the expectation by the sample average. The resulting sample-based gradient estimators are given by
\vspace{-2mm}
\begin{align}
    \nabla_\phi \hat{J}(x,\phi,\psi; B_J,H) & = \frac{1}{B_J} \sum_{i=1}^{B_J} \nabla_\phi \hat{J}_i(x,\phi,\psi;H), 
    \vspace{-2mm}
\end{align}
and the per-trajectory gradient estimator is given by
\vspace{-2mm}
\begin{align}\notag
   \nabla_\phi  \hat{J}_i(x,\phi,&\psi;H)   = \sum_{t=0}^{H-1} \gamma^t \nabla_\phi \log \pi_\phi (a_{i,t}|s_{i,t})  
    \\ 
    & \cdot\!\!\Big( \notag \widehat{Q}^{\pi_\phi,\pi_\psi}_{\cM(x)}(s_{i,t},a_{i,t},b_{i,t}) - \tau_\psi \log \pi_\psi(b_{i,t}|s_{i,t}) \notag 
    \\
    &\quad\quad  + \tau_\phi\log \pi_\phi(a_{i,t}|s_{i,t}) \Big),
    \vspace{-2mm}
\end{align}
where $\widehat{Q}^{\pi_\phi,\pi_\psi}_{\cM(x)}(s,a,b)$ denotes an empirical estimate of the Q-function, and $\{s_{i,t},a_{i,t},b_{i,t}\}_{t=0}^{H-1}$ is the $i$th sampled trajectory of length $H$ generated by $s_0 \sim \rho$, $a_t \sim \pi_\phi(\cdot|s_t)$, $b_t \sim \pi_\psi(\cdot|s_t)$, and $s_{t+1} \sim \cP(\cdot|s_t,a_t,b_t)$. In this work, we use Monte Carlo roll-outs to estimate the policy gradients. The gradients $\nabla_\psi J(x,\phi,\psi)$ and $\nabla_{\psi} \hat{J}(x,\phi,\psi; B_J,H)$ are defined and estimated analogously.



\noindent{\bf Step 2: Penalty-subproblem approximation.} (lines 11 -- 12 in PANDA)
The next step is to obtain $\phi_\lambda^*(x)$ and $\psi_\lambda^*(x)$ by solving the inner minimization problem
\vspace{-1mm}
\begin{align}
    \min_{\phi,\psi} h(x,\phi,\psi) \bydef \frac{1}{\lambda} f(x,\phi,\psi) +  g(x,\phi,\psi) \label{eq: problem-inner-minimization}.
    \vspace{-2mm}
\end{align}
Using the updated $\tilde{\phi}$ and $\tilde{\psi}$ obtained from the best-response approximation step, we are able to estimate the NI function via
\begin{align}
    \tilde{g}(x,\phi,\psi,\tilde{\phi},\tilde{\psi}) \bydef J(x,\phi,\tilde{\psi}) - J(x,\tilde{\phi},\psi),
    \vspace{-2mm}
\end{align}
and use it as a surrogate of $g(x,\phi,\psi)$. We then update $(\phi,\psi)$ via stochastic gradient descent using the following estimated gradient:
\vspace{-1mm}
\begin{align}
    &\quad\; \nabla_{(\phi,\psi)} \tilde{h}(x,\phi,\psi,\tilde{\phi},\tilde{\psi}) \notag \\
    & =  \frac{1}{\lambda} \nabla_{(\phi,\psi)} f(x,\phi,\psi)  + \nabla_{(\phi,\psi)} \tilde{g}(x,\phi,\psi,\tilde{\phi},\tilde{\psi}).  
    \vspace{-2mm}
\end{align}
Here, $\nabla_{(\phi,\psi)} f(x,\phi,\psi)$ is estimated from a batch of size $B$ in implementation as $\nabla_{(\phi,\psi)} \hat{f}(x,\phi,\psi;B) = \frac{1}{B}\sum_{i=1}^{B}\nabla_{(\phi,\psi)} \hat{f}_i(x,\phi,\psi)$, where $\hat{f}_i(x,\phi,\psi)$ denotes the per-sample objective estimate.
After $K$ update steps, we obtain $(\phi^K,\psi^K)$, which is expected to be close to the optimal solution $(\phi^*_\lambda(x),\psi^*_\lambda(x))$.


\noindent{\bf Step 3: Hypergradient step.} (lines 16 -- 17 in PANDA)
\begin{algorithm}[tb]
  \caption{PANDA: \underline{P}enalty-\underline{A}ugmented \underline{N}ikaido--Isoda based \underline{D}escent--\underline{A}scent}
  \label{alg: stoc-double-loop-single-large-batch-main}
  \begin{algorithmic}[1]
    \STATE {\bfseries Input:} Initial parameters $x_0,\phi_0,\psi_0$, trajectory sample batch size $B_{J}$, UL sample batch size $B$, horizon length $H$.
    \STATE Init $(y_0,z_0) \gets (\phi_0,\psi_0)$ and $(\tilde{y}_0,\tilde{z}_0) \gets (\phi_0,\psi_0)$ \label{line: init-policies}
    \FOR{$t=0$ {\bfseries to} $T-1$}
        \STATE $(\phi_t^0,\psi_t^0) \gets (y_t,z_t)$ 
        \STATE $(\tilde{\phi}_t^0,\tilde{\psi}_t^0) \gets (\tilde{y}_t,\tilde{z}_t)$
        \FOR{$k=0$ {\bfseries to} $K-1$} 
        \STATE $u_t^{k} \gets  \nabla_\phi \hat{J}(x_t,\tilde{\phi}_t^k,\psi_t^k;B_{J},H) $
        \STATE $v_t^{k} \gets  \nabla_\psi \hat{J}(x_t,\phi_t^k,\tilde{\psi}_t^k;B_{J},H)$ 
        \STATE $\tilde{\phi}_t^{k+1} \gets \tilde{\phi}_t^k - \eta_\phi u_t^{k}$ 
        \STATE $\tilde{\psi}_t^{k+1} \gets \tilde{\psi}_t^k + \eta_\psi v_t^{k}$
        \STATE $g_t^k \gets \nabla_{(\phi,\psi)} \widehat{\tilde{h}}(x_t,\phi_t^k,\psi_t^k, \tilde{\phi}_t^{k+1},\tilde{\psi}_t^{k+1};B_{J},B, H)$ 
        \STATE $(\phi_t^{k+1},\psi_t^{k+1}) \gets (\phi_t^{k},\psi_t^{k}) - \eta_\theta g_t^{k} $ \label{line: penalty-subproblem-estimation-end}
        \ENDFOR
        \STATE $(y_{t+1},z_{t+1}) \gets  (\phi_t^K,\psi_t^K)$
        \STATE $(\tilde{y}_{t+1},\tilde{z}_{t+1}) \gets (\tilde{\phi}_t^K,\tilde{\psi}_t^K)$
        \STATE $\ell_t \gets \nabla_x \widehat{\tilde{\cL}}_\lambda(x_t,y_{t+1},z_{t+1},\tilde{y}_{t+1},\tilde{z}_{t+1};B, B_J,H)$
        \STATE $x_{t+1} \gets x_t - \eta_x \ell_{t}$ \label{line: hypergradient-update}
    \ENDFOR
  \end{algorithmic}
\end{algorithm}
Finally, we apply stochastic gradient descent to update the UL parameter $x$ using the following estimate of the hypergradient $\nabla_x \mathcal{L}_\lambda^*(x)$ in~\eqref{eq: gradient-hyper-objective}:
\vspace{-1mm}
\begin{align}
    &\quad\; \nabla_x \tilde{\mathcal{L}}_\lambda(x,\phi^K,\psi^K,\tilde{\phi}^K,\tilde{\psi}^K) \notag \\
     &=   \nabla_x f(x,\phi^K,\psi^K) + \lambda \nabla_x \tilde{g}(x,\phi^K,\psi^K,\tilde{\phi}^K,\tilde{\psi}^K),\!\!\! \label{eq: estimated-gradient-hyper-objective}
\end{align}
where $\nabla_x f(x,\phi^K,\psi^K)$ is estimated analogously to $\nabla_{(\phi,\psi)} f(x,\phi,\psi)$. Moreover, the gradient $\nabla_x J(x,\phi,\psi)$ (cf. Lemma~\ref{lemma: gradient-J-wrt-x} in Appendix) is given by $\nabla_x J(x,\phi,\psi) \allowbreak= \E \big[\sum_{t=0}^{\infty} \gamma^t \nabla_x r_x(s_t,a_t,b_t)\big]$, which we estimate using Monte Carlo roll-outs as $\nabla_x \hat{J}(x,\phi,\psi;B_J,H) = \frac{1}{B_J}\sum_{i=1}^{B_J} \nabla_x \hat{J}_i(x,\phi,\psi;H)$ with per-sample gradient $\nabla_x \hat{J}_i(x,\phi,\psi;H) \allowbreak= \sum_{t=0}^{H-1} \gamma^t \nabla_x r_x(s_{i,t},a_{i,t},b_{i,t})$. The detailed implementation of this policy-gradient-based algorithm is summarized in Algorithm~\ref{alg: stoc-double-loop-single-large-batch-main}.

\section{Theoretical Analysis of PANDA}
\label{sec: theoretical-analysis}
In this section, we present the main theoretical convergence results for the proposed PANDA algorithm. We begin by stating the assumptions used in our analysis.


First, we impose the following blanket assumptions on $\cM(x)$ to ensure the well-posedness of the LL problem and to facilitate our convergence analysis.
\begin{assumption}
\label{assump: mdp-assumptions-main}
Suppose the following conditions hold:
\vspace{-4mm}
    \begin{enumerate}
        \item The reward function $r_x$ is bounded by $B_r$, i.e., $|r_x(s,a,b)| \leq B_r$ for any $x$, $s$, $a$, and $b$.
        \vspace{-3mm}
        \item The reward function $r_x(s,a,b)$ is $C_r$-Lipschitz continuous and $L_r$-smooth in $x$, for any state $s$, action $a$, and action $b$.
        \vspace{-3mm}
        \item There exists some constant $\delta_\pi> 0$ such that  $\pi_\phi(a|s) \geq \delta_\pi$ for any $(s,a) \in \cS \times \cA$ and $\pi_\psi(b|s) \geq \delta_\pi$ for any $(s,b) \in \cS \times \cB$. 
        \vspace{-3mm}
        \item There exists some constant $\delta_\rho >0$ such that the initial distribution satisfies $\rho(s) \geq \delta_\rho$ for any state $s$.
    \end{enumerate}
\end{assumption}
\vspace{-2mm}
\begin{remark}
These conditions are commonly adopted in the analysis of Markov games and BRL~\citep{zeng2022regularized,gaur2025on}. Next, we turn to the assumptions on the objective functions.
\end{remark} 
\begin{assumption}
\label{assump:function-assumptions-main}
(Lipschitz continuity and P{\L} condition) Suppose the following conditions hold:
\vspace{-2mm}
\begin{enumerate}
\item $f(x,\phi,\psi)$ is twice differentiable, $C_f$-Lipschitz continuous, $L_{f,1}$-smooth and $L_{f,2}$-Hessian Lipschitz continuous in $(x,\phi,\psi)$.
\vspace{-3mm}
\item $J(x,\phi,\psi)$ is twice differentiable, $C_J$-Lipschitz continuous, $L_{J,1}$-smooth and $L_{J,2}$-Hessian Lipschitz continuous in $(x,\phi,\psi)$.
\vspace{-3mm}
\item There exists $\lambda_0>0$ such that for any given $x$ and $c \in (0,\lambda_0^{-1}]$, $c f(x,\phi,\psi) + g(x,\phi,\psi)$ is $\mu_h$-P{\L} in $(\phi,\psi)$.
\end{enumerate}
\end{assumption}
\vspace{-2mm}

\begin{remark}
The first two conditions are standard Lipschitz-continuity assumptions on the UL and LL objective functions, and are widely used in bilevel optimization and BRL~\cite{chen2024finding,gaur2025on}. 
Under Assumption~\ref{assump: mdp-assumptions-main}, the Lipschitz continuity and smoothness of $J$ can be established using Lemmas 5 and 6 in~\citet{zeng2022regularized}, together with straightforward derivations. 
These conditions do not introduce any additional restrictions beyond Assumption~\ref{assump: mdp-assumptions-main}. 
However, the Lipschitz continuity of the Hessian of $J$ does not follow from Assumption~\ref{assump: mdp-assumptions-main}, and is required to ensure that stationary points are well-defined.
\end{remark}

\begin{remark}
The last condition is essential for establishing the correspondence between stationary points of the original bilevel problem and those of the penalty reformulation when $\lambda$ is sufficiently large. It also guarantees that the hyper-objective $\mathcal{L}_\lambda^*(x)$ is differentiable~\cite{kwon2023penalty,chen2024finding}, even when the loss functions $f$, $g$, or their combination (e.g., $cf+g$) are possibly nonconvex. This assumption is broadly adopted in the bilevel optimization literature~\cite{kwon2023penalty,chen2024finding,jiang2025beyond} to facilitate convergence analysis, and has also been used in recent BRL works~\cite{gaur2025on,zeng2025regularized}.    
\end{remark} 

\begin{remark}
It is worth noting that this assumption is not required when $c=0$. In particular, we can show that the NI function $g(x,\phi,\psi)$ automatically satisfies a non-uniform P{\L} condition with respect to $(\phi,\psi)$; see Lemma~\ref{lemma: ni-gap-non-uniform-pl-main}.    
\end{remark}

\vspace{-1mm}
\noindent{\bf Policy parameterization.}
We adopt a tabular softmax parameterization for both players' policies. This parameterization is commonly used in solving RL problems~\citep{mei2020global,zeng2022regularized,zeng2025regularized}.

\vspace{-1mm}
We further make the following standard assumption on the stochastic gradient estimators of the UL objective function.
\vspace{-1mm}
\begin{assumption}
\label{assump: stochastic-gradient-assumptions-main}
Suppose the gradient estimator of the UL objective is unbiased and has bounded variance $\sigma_f^2$.
\end{assumption}

\vspace{-1mm}
\subsection{Properties in BOSMG}
\vspace{-1mm}

{\bf Properties of the NI function.} 
We next show that the NI function $g(x,\phi,\psi)$ satisfies a non-uniform
P{\L} property that arises intrinsically from the regularized MMZSMG structure.
\begin{lemma}
    \label{lemma: ni-gap-non-uniform-pl-main}
    For any $x$ and any $(\phi,\psi)$, $g(x,\phi,\psi)$ satisfies the following condition:
    \vspace{-1mm}
    \begin{align}
        \frac{1}{2}\| \nabla_{(\phi,\psi)} g(x,\phi,\psi) \|_2^2 \geq \mu(\phi,\psi) g(x,\phi,\psi), 
        \vspace{-2mm}
    \end{align}
    where $\mu(\phi,\psi) = (1-\gamma)\frac{\min \{\tau_\phi,\tau_\psi\}}{|\cS|} \min_s \rho^2(s)  \allowbreak\min\{ \min_{s,a}\pi^2_\phi(a|s), \min_{s,b} \pi^2_\psi(b|s) \} > 0$.
\end{lemma}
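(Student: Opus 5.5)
We plan to split the NI function into its two best-response gaps and prove a P{\L}-type inequality for each separately. Write
\[
g(x,\phi,\psi) = \underbrace{\Big(\max_{\psi'} J(x,\phi,\psi') - J(x,\phi,\psi)\Big)}_{g_1(\phi,\psi)} + \underbrace{\Big(J(x,\phi,\psi) - \min_{\phi'} J(x,\phi',\psi)\Big)}_{g_2(\phi,\psi)} .
\]
Both $g_1$ and $g_2$ are nonnegative. For fixed $\phi$, the max-player faces a single-agent entropy-regularized MDP: the min-player's policy is absorbed into the reward and transition kernel, and the $-\tau_\phi H(\pi_\phi)$ term is just a state-dependent reward shift. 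The same holds for the min-player when $\psi$ is fixed. For entropy-regularized MDPs with softmax parameterization, the non-uniform {\L}ojasiewicz inequality of \citet{mei2020global} gives
\[
\|\nabla_\psi J(x,\phi,\psi)\|_2^2 \ge \frac{2\tau_\psi}{|\cS|}\min_s\rho(s)\,\min_{s,b}\pi_\psi^2(b|s)\,\Big\|\frac{d_\rho^{\pi^*}}{d_\rho^{\pi}}\Big\|_\infty^{-1} g_1 .
\]
Using $d_\rho^{\pi}(s)\ge(1-\gamma)\rho(s)$ and $d^{\pi^*}_\rho\le 1$, the distribution-mismatch factor is at least $(1-\gamma)\min_s\rho(s)$. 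This yields $\tfrac12\|\nabla_\psi J\|^2\ge \mu_\psi g_1$ with $\mu_\psi=(1-\gamma)\frac{\tau_\psi}{|\cS|}\min_s\rho^2(s)\min_{s,b}\pi_\psi^2(b|s)$, possibly up to a factor of $2$. The analogous bound $\tfrac12\|\nabla_\phi J\|^2\ge\mu_\phi g_2$ holds for the min-player. We will re-derive these constants carefully, following Mei et al.'s proof via the performance difference lemma and the soft-optimal policy, adapted to the game reward.

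Next we relate $\nabla g$ to $\nabla J$. Danskin's theorem applies because, by uniqueness of the regularized best response, the max and min are attained uniquely in policy space. It gives $\nabla_\psi g = \nabla_\psi J(x,\phi,\psi)\cdot(-1) + \nabla_\psi[-\min_{\phi'}J(x,\phi',\psi)]$ and a similar expression in $\phi$. The problem is that these contain envelope terms, namely $\nabla_\phi \max_{\psi'}J(x,\phi,\psi')$ evaluated at the best response, which do not simply cancel. This is the main obstacle: $\|\nabla_{(\phi,\psi)}g\|^2$ is not obviously at least $\|\nabla_\phi J\|^2+\|\nabla_\psi J\|^2$.

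Our first attempt is to show that $\nabla_\psi g(\phi,\psi) = \nabla_\psi\big[-\min_{\phi'}J(\phi',\psi)\big] - \nabla_\psi J(\phi,\psi)$ and to interpret $\psi\mapsto\min_{\phi'}J(\phi',\psi)$ as the objective of the max-player facing a best-responding opponent. That objective is itself a regularized MDP value in which the min-player's soft best response is folded in, and it has a gradient-dominance property. We would then lower-bound $\|\nabla_{(\phi,\psi)}g\|^2$ using the block structure of the gradient together with the state-wise strong concavity and convexity of the entropy regularizer. The intended outcome is $\tfrac12\|\nabla g\|^2 \ge \min\{\mu_\phi,\mu_\psi\}(g_1+g_2)$, which matches the stated $\mu(\phi,\psi)$ with $\min\{\tau_\phi,\tau_\psi\}$ and the minimum over both policies' probabilities.

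If the cross terms resist this approach, the fallback works state-wise. The NI gap is bounded by a $d_\rho$-weighted sum of per-state regularized matrix-game gaps on $Q$, with coefficient $1/(1-\gamma)$. Each matrix-game NI gap is strongly convex-concave in the policies, so it is gradient-dominated in softmax coordinates with the factor $\min\pi^2$. Chaining back through $\min_s\rho$ then recovers $\mu(\phi,\psi)$.
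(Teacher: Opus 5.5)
There is a genuine gap, and it comes from the decomposition you chose. Your split $g=g_1+g_2$ pairs each piece with $\nabla J$ at the \emph{current} pair $(\phi,\psi)$, but $\nabla_{(\phi,\psi)}g$ contains no such term. Write $g(x,\phi,\psi)=J_1(x,\phi)-J_2(x,\psi)$, with $J_1(x,\phi)=\max_{\psi'}J(x,\phi,\psi')$ depending only on $\phi$ and $J_2(x,\psi)=\min_{\phi'}J(x,\phi',\psi)$ depending only on $\psi$. Then the $J(x,\phi,\psi)$ terms of $g_1+g_2$ cancel identically, and $\nabla_{(\phi,\psi)}g=(\nabla_\phi J_1(x,\phi),\,-\nabla_\psi J_2(x,\psi))$ exactly, with no cross terms. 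Your formula $\nabla_\psi g=-\nabla_\psi J_2-\nabla_\psi J(x,\phi,\psi)$ therefore has a spurious extra term. By Danskin, the two blocks are $\nabla_\phi J(x,\phi,\psi^*(x,\phi))$ and $-\nabla_\psi J(x,\phi^*(x,\psi),\psi)$, i.e.\ gradients of $J$ at the best-response opponent. Your per-piece inequalities are true, but they do not feed into $\|\nabla g\|$. The pairing itself is also wrong. Take $\phi=\phi^*(x)$ with $\psi$ off-equilibrium: $\phi^*(x)$ minimizes the differentiable function $J_1(x,\cdot)$, so the $\phi$-block of $\nabla g$ vanishes, while $g_2=J(x,\phi^*(x),\psi)-J_2(x,\psi)$ is generally positive. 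So $g_2$ must be charged to the $\psi$-block, not the $\phi$-block. Relatedly, $\psi\mapsto J_2(x,\psi)$ is not a value function of a fixed MDP, since the folded-in best response moves with $\psi$.

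The missing idea is to split at the game value instead: $g=\big(J_1(x,\phi)-J^*(x)\big)+\big(J^*(x)-J_2(x,\psi)\big)$. Both pieces are nonnegative by the minimax equality of Proposition~\ref{proposition: existence-and-uniqueness-of-equilibrium-main}, and each depends on one block only. Apply the inequality of \citet{mei2020global} not at $(\phi,\psi)$ but in the min-player's MDP with the opponent frozen at $\pi_\psi^*(x,\phi)$. This gives $\tfrac12\|\nabla_\phi J_1(x,\phi)\|^2\ge\mu_1\big(J_1(x,\phi)-\min_{\phi'}J(x,\phi',\psi^*(x,\phi))\big)\ge\mu_1\big(J_1(x,\phi)-J^*(x)\big)$. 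The last step holds because $\min_{\phi'}J(x,\phi',\psi^*(x,\phi))\le\max_{\psi}\min_{\phi'}J=J^*(x)$. Symmetrically, $\tfrac12\|\nabla_\psi J_2(x,\psi)\|^2\ge\mu_2\big(J^*(x)-J_2(x,\psi)\big)$, using $\max_{\psi'}J(x,\phi^*(x,\psi),\psi')\ge J^*(x)$. Summing, and lower-bounding the mismatch ratios by $(1-\gamma)\min_s\rho(s)$ as you planned, gives the stated $\mu(\phi,\psi)$. Your ingredients (Danskin via unique best responses, Mei's constant, the mismatch bound) are all correct. What must change is the decomposition and the point at which Mei's inequality is applied. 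Your state-wise fallback is too underspecified to count as a proof. The two blocks involve different $Q$-functions, $Q^{\pi_\phi,\pi^*_\psi(x,\phi)}$ versus $Q^{\pi^*_\phi(x,\psi),\pi_\psi}$, so there is no single per-state matrix-game NI gap to work with. Carried out correctly, it reduces to the argument above.
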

\begin{remark}
    Lemma~\ref{lemma: ni-gap-non-uniform-pl-main} extends the non-uniform P{\L} condition of the soft value function established in~\citet{mei2020global} to the regularized MMZSMG setting. This property is crucial for analyzing the convergence of our proposed PANDA algorithm. 
\end{remark}

In addition to the non-uniform P{\L} property above, we can show that the NI
function $g(x,\phi,\psi)$ also satisfies the following Lipschitz-type
regularity properties.
\begin{lemma}
    \label{lemma: ni-gap-properties-main}
    Under Assumptions~\ref{assump: mdp-assumptions-main} and~\ref{assump:function-assumptions-main}, the NI function $g(x,\phi,\psi)$ defined in~\eqref{eq: ni-gap-function} is $\mu_g$-P{\L} in $(\phi,\psi)$ for some constant $\mu_g>0$, $C_g$-Lipschitz continuous, $L_{g,1}$-smooth and $L_{g,2}$-Hessian Lipschitz continuous in $(x,\phi,\psi)$ for some constants $C_g>0$, $L_{g,1}>0$ and $L_{g,2}>0$. 
\end{lemma}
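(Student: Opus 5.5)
The plan is to treat the four claims separately. Each rests on the structure $g(x,\phi,\psi)=\Phi(x,\phi)-\Psi(x,\psi)$ with $\Phi(x,\phi)\bydef\max_{\psi'}J(x,\phi,\psi')$ and $\Psi(x,\psi)\bydef\min_{\phi'}J(x,\phi',\psi)$. So it suffices to establish Lipschitz continuity, smoothness and Hessian-Lipschitz continuity of $\Phi$ and $\Psi$ in $(x,\phi)$ and $(x,\psi)$ respectively, and then to derive a uniform P{\L} constant from Lemma~\ref{lemma: ni-gap-non-uniform-pl-main}.

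\textbf{P{\L} constant.} Lemma~\ref{lemma: ni-gap-non-uniform-pl-main} already gives $\frac12\|\nabla_{(\phi,\psi)}g\|^2\ge\mu(\phi,\psi)g$. Under Assumption~\ref{assump: mdp-assumptions-main}, items 3 and 4, we have $\min_{s,a}\pi_\phi(a|s)\ge\delta_\pi$, $\min_{s,b}\pi_\psi(b|s)\ge\delta_\pi$ and $\min_s\rho(s)\ge\delta_\rho$. Hence $\mu(\phi,\psi)\ge\mu_g\bydef(1-\gamma)\min\{\tau_\phi,\tau_\psi\}\delta_\rho^2\delta_\pi^2/|\cS|>0$ uniformly, which gives the uniform P{\L} property.

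\textbf{Regularity of $\Phi$ and $\Psi$ via a Danskin/implicit-function argument.} Fix $(x,\phi)$. The inner problem $\max_{\psi'}J(x,\phi,\psi')$ is a single-agent entropy-regularized MDP for the max-player, since the min-player's policy and its entropy term are frozen. It therefore has a unique optimal policy $\pi^*_\psi(x,\phi)$, and the Danskin-type formula gives $\nabla_{(x,\phi)}\Phi=\nabla_{(x,\phi)}J(x,\phi,\psi^*(x,\phi))$, which is independent of the choice of maximizing parameter. Lipschitz continuity with $C_g\le 2C_J$ then follows from Assumption~\ref{assump:function-assumptions-main}.

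For smoothness, I need the best response to be Lipschitz in $(x,\phi)$. For this I would use the P{\L} (or restricted strong concavity) property of $-J(x,\phi,\cdot)$. This follows from the non-uniform P{\L} of Mei et al.\ for soft value functions with the same $\delta_\pi,\delta_\rho$ lower bounds; it is essentially the $c=0$ single-player version of Lemma~\ref{lemma: ni-gap-non-uniform-pl-main}. P{\L} combined with $L_{J,1}$-smoothness yields quadratic growth and hence a Lipschitz solution map, as in the standard PL-bilevel lemmas (e.g.\ those used by Kwon et al.\ and Chen et al.). This gives $L_{g,1}$.

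\textbf{Hessian Lipschitz (main obstacle).} Softmax parameterization makes the maximizer set a manifold: adding constants per state leaves the policy unchanged. So $\nabla^2_{\psi\psi}J$ is singular and the naive implicit function theorem fails. I would quotient out this invariance by working with the minimum-norm (projected) optimal parameter, or directly in policy space with the logit-centering constraint. On the orthogonal complement, the Hessian is negative definite with modulus controlled by $\tau_\psi\delta_\pi\delta_\rho/(1-\gamma)$-type quantities. The implicit function theorem then gives a differentiable best response $\psi^*(x,\phi)$ with derivative $-[\nabla^2_{\psi\psi}J]^{\dagger}\nabla^2_{\psi(x,\phi)}J$. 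Consequently $\nabla^2\Phi=\nabla^2_{(x,\phi)}J-\nabla^2_{(x,\phi)\psi}J[\nabla^2_{\psi\psi}J]^{\dagger}\nabla^2_{\psi(x,\phi)}J$. Lipschitzness of this expression follows from $L_{J,2}$, $L_{J,1}$, the uniform lower bound on the restricted eigenvalues, the Lipschitzness of $\psi^*$, and the perturbation bound for pseudo-inverses with a fixed kernel. Treating $\Psi$ symmetrically and summing gives $L_{g,2}$.
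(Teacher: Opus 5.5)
Your proposal is correct. The P{\L} step is exactly the paper's: you uniformize $\mu(\phi,\psi)$ via $\delta_\pi,\delta_\rho$ to get $\mu_g=(1-\gamma)\min\{\tau_\phi,\tau_\psi\}\delta_\rho^2\delta_\pi^2/|\cS|$. The bound $C_g=2C_J$ also matches, except that the paper compares values directly, $J_1(x_1,\phi_1)-J_1(x_2,\phi_2)\le J(x_1,\phi_1,\psi')-J(x_2,\phi_2,\psi')$ with $\psi'$ a maximizer at $(x_1,\phi_1)$, rather than going through Danskin.

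For smoothness and the Hessian-Lipschitz property, your route is genuinely different.

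The paper never touches the singular block $\nabla^2_{\psi\psi}J$. It fixes the canonical selection $\psi=\log\pi^*_\psi(x,\phi)$ and shows that $\pi^*_\psi(x,\phi)$ and its Jacobian are Lipschitz in $(x,\phi)$ by working inside the induced single-agent soft MDP. The tools are the soft Bellman contraction for $V^*$, the Boltzmann form $\pi^*=\exp((Q^*-V^*)/\tau)$, and implicit differentiation of the fixed point through $(I-\gamma P^{\pi^*})^{-1}$, after checking that the induced reward and transition have Lipschitz gradients in $(x,\phi)$. It then bounds $\nabla^2 J_1=\nabla^2_{11}J+\nabla_{(x,\phi)}\log\pi^*_\psi\,\nabla^2_{12}J$ term by term.

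You use generic tools instead:
\begin{itemize}
\item uniform P{\L} plus $L_{J,1}$-smoothness gives an error bound, hence a Lipschitz maximizer set;
\item the implicit function theorem on the complement $W$ of the per-state constant directions gives your Schur-complement formula.
\end{itemize}
Your Hessian formula agrees with the paper's. The Jacobians of the two selections differ only by vectors in $W^\perp$, and $\nabla^2_{(x,\phi)\psi}J$ annihilates these by shift invariance.

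Your argument is shorter and uses only $C_J$, $L_{J,1}$, $L_{J,2}$, $\mu_J$ and uniqueness of the best response. Its constants, however, grow with powers of $1/\mu_J\propto\delta_\rho^{-2}\delta_\pi^{-2}$. The paper's argument is longer, but its $L_{g,1},L_{g,2}$ do not involve the P{\L} modulus. The only inverse it needs is $(I-\gamma P^{\pi^*})^{-1}$, which is bounded by $1/(1-\gamma)$.

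One step should be made precise rather than attributed to a heuristic $\tau_\psi\delta_\pi\delta_\rho/(1-\gamma)$ modulus.
\begin{itemize}
\item Since the best-response policy is unique, the maximizer set is exactly $w^*+W^\perp$.
\item For a unit vector $d\in W$, quadratic growth and the $L_{J,2}$ Taylor remainder then give $-\tfrac{t^2}{2}d^\top\nabla^2_{\psi\psi}J\,d+\tfrac{L_{J,2}}{6}t^3\ge\tfrac{\mu_J}{2}t^2$ for small $t>0$.
\item Letting $t\to0$ yields $-\nabla^2_{\psi\psi}J\succeq\mu_J I$ on $W$ at every maximizer, with kernel exactly $W^\perp$.
\end{itemize}
This is what your implicit-function step needs. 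It is also what makes the fixed-kernel pseudo-inverse bound $\|A^\dagger-B^\dagger\|\le\mu_J^{-2}\|A-B\|$ valid.
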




\vspace{-2mm}
{\bf Properties of the hypergradient.} 
Under the above assumptions and results, the gradient of $\mathcal{L}_\lambda^*(x)$ is given by~\citet{chen2024finding} as
\vspace{-2mm}
\begin{align}
    & \quad\;\nabla_x \mathcal{L}_\lambda^*(x) 
    \notag 
    \\\notag
    & =  \nabla_x f(x,\phi^*_\lambda(x),\psi^*_\lambda(x))  + \lambda \nabla_x g(x,\phi^*_\lambda(x),\psi^*_\lambda(x)) \\
    & = \nabla_x f(x,\phi^*_\lambda(x),\psi^*_\lambda(x)) + \lambda \nabla_x J(x,\phi^*_\lambda(x),\tilde{\psi}^*(x)) \notag \\
    & \quad \, - \lambda \nabla_x J(x,\tilde{\phi}^*(x),\psi_\lambda^*(x)), \label{eq: gradient-hyper-objective}
    \vspace{-2mm}
\end{align}
where $\tilde{\phi}^*(x) = \arg\min_\phi J(x,\phi,\psi_\lambda^*(x))$ and $\tilde{\psi}^*(x) = \arg\max_\psi J(x,\phi_\lambda^*(x),\psi)$. 
\begin{remark}
 By Lemma 4.3 of~\citet{chen2024finding}, the hypergradient $\nabla F(x)$ exists and is defined as $\lim_{\lambda \to + \infty} \nabla \cL_{\lambda}^*(x)$. Moreover, the gradient mismatch between $\nabla F(x)$ and $\nabla \mathcal{L}_\lambda^*(x)$ is bounded as $\|\nabla F(x) - \nabla \mathcal{L}_\lambda^*(x) \| = \cO(\lambda^{-1})$. Consequently, to obtain an $\cO(\epsilon)$-stationary point of the original hyper-objective $F(x)$, where stationarity is measured by the squared gradient norm $\|\nabla F(x)\|^2$, it suffices to compute an $\cO(\epsilon)$-stationary point of the surrogate objective $\mathcal{L}_\lambda^*(x)$, measured by $\|\nabla \mathcal{L}_\lambda^*(x)\|^2$, with a sufficiently large penalty parameter $\lambda = \cO(\epsilon^{-1/2})$.
\end{remark}

\vspace{-3mm}
\subsection{Convergence Results of PANDA}
\vspace{-1mm}
For clarity, we first summarize the additional constants used in the following theorem. 
$L_F$ is the smoothness constant of the hyper-objective $F$, $L_{h,1}$ is the smoothness constant of the penalized objective $h$ for $\lambda\ge\lambda_0$. Their existence under Assumptions~\ref{assump: mdp-assumptions-main}
and~\ref{assump:function-assumptions-main} is established in Appendix~\ref{sec: proofs-of-useful-lemmas}. We also define $\kappa \bydef \frac{\max\{ L_{J,1}, L_{g,1}, L_{h,1}\}}{\min \{ \mu_h, \mu_g\}} >0$.

We now present the following convergence guarantees for Algorithm~\ref{alg: stoc-double-loop-single-large-batch-main}.
\begin{theorem}
\label{theorem: convergence-of-stoc-alg-3-large-batch-main}
Suppose \crefrange{assump: mdp-assumptions-main}{assump: stochastic-gradient-assumptions-main} hold. Let $\lambda \ge \lambda_0$, and let the step sizes satisfy
\vspace{-2mm}
\begin{equation}\notag
    \eta_x \leq \frac{1}{2L_F},   \eta_\theta \asymp \kappa^{-5},  
      \eta_\phi,\eta_\psi \leq \min \left\{\frac{1}{L_{J,1}}, \frac{1}{\mu_g + 1}, 1 \right\}. 
\end{equation}
Then, for Algorithm~\ref{alg: stoc-double-loop-single-large-batch-main} with $K = \cO(\log \lambda)$, we have
\begin{align}\notag
\frac{1}{T}\sum_{t=0}^{T-1} \E  \Big[ \|\nabla & F(x_t)  \|^2\Big]
 \leq \cO\!\left(\frac{\Delta_F}{T}\right)  + \cO\!\left(\frac{\sigma_f^2}{B}\right) 
 \notag\\ 
 +  \cO\!&\left(\lambda^{-2}\right)  + \cO\!\left(\lambda^2\gamma^{2H}H^2\right) + \cO\!\left(\frac{\lambda^2}{B_J}\right),
\end{align}
where $\Delta_F \bydef \E \left[ F(x_0) - F(x_T) \right]$.
\end{theorem}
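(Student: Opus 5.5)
The proof treats the $x$-update as inexact SGD on the surrogate $\mathcal{L}_\lambda^*$ and then transfers to $F$ using the $\cO(\lambda^{-1})$ gradient mismatch from the remark after \eqref{eq: gradient-hyper-objective}.

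\textbf{Step 1: descent on $F$.} Since $F$ is $L_F$-smooth and $\eta_x\le 1/(2L_F)$, the standard descent lemma applied to $x_{t+1}=x_t-\eta_x\ell_t$ gives
\begin{align}\notag
\E[F(x_{t+1})]\le \E[F(x_t)]-\tfrac{\eta_x}{2}\E\|\nabla F(x_t)\|^2+\tfrac{\eta_x}{2}\E\|\E_t\ell_t-\nabla F(x_t)\|^2+L_F\eta_x^2\,\mathrm{Var}_t(\ell_t).
\end{align}
Telescoping and dividing by $T\eta_x$ produces the $\Delta_F/T$ term. The bias term splits into three pieces:
\begin{align}\notag
\|\nabla F-\nabla\mathcal{L}_\lambda^*\|^2=\cO(\lambda^{-2}), \qquad \|\nabla\mathcal{L}_\lambda^*(x_t)-\nabla_x\tilde{\mathcal{L}}_\lambda(x_t,y_{t+1},z_{t+1},\tilde y_{t+1},\tilde z_{t+1})\|^2,
\end{align}
and the estimator bias. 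The first is the mismatch bound. The estimator bias comes from truncating at horizon $H$: the tail $\sum_{t\ge H}\gamma^t$, multiplied by $H$ through the score-function factors, and by $\lambda$ through the $\lambda\nabla_x\tilde g$ term, gives $\cO(\lambda^2\gamma^{2H}H^2)$ after squaring. The variance is $\cO(\sigma_f^2/B+\lambda^2/B_J)$ by Assumption~\ref{assump: stochastic-gradient-assumptions-main} and bounded per-trajectory gradients, which follow from Assumption~\ref{assump: mdp-assumptions-main} with $\log\pi\ge\log\delta_\pi$.

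\textbf{Step 2: inner-loop tracking.} Using \eqref{eq: gradient-hyper-objective} and $L_{J,1}$-smoothness of $J$ in its policy arguments, the middle piece is at most
\begin{align}\notag
\cO(\lambda^2)\bigl(\|(y_{t+1},z_{t+1})-(\phi_\lambda^*,\psi_\lambda^*)(x_t)\|^2+\|(\tilde y_{t+1},\tilde z_{t+1})-(\tilde\phi^*,\tilde\psi^*)(x_t)\|^2\bigr).
\end{align}
Since these are parameters while only policies are unique, I would measure the error by function-value gaps and convert to distance via quadratic growth, which follows from the P\L\ conditions of $h$ (Assumption~\ref{assump:function-assumptions-main}, with $c=1/\lambda\le\lambda_0^{-1}$), of $g$ (Lemma~\ref{lemma: ni-gap-properties-main}), and of $J$ in each player's argument (regularized best response; $\mu_g$ serves this role).

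For the best-response iterates, the step sizes $\eta_\phi,\eta_\psi\le\min\{1/L_{J,1},1/(\mu_g+1)\}$ give linear contraction $(1-\eta\mu)$ of the best-response gap, plus a drift term because the target depends on the moving $(\phi^k_t,\psi^k_t)$. The penalty iterate uses the surrogate gradient $\nabla\tilde h$. Its error relative to $\nabla h$ is controlled by the best-response error, via the envelope/Danskin identity $\nabla_{\phi}g=\nabla_\phi J(\cdot,\tilde\psi^*)$. I would build a Lyapunov function
\begin{align}\notag
\Phi^k=h(\phi^k,\psi^k)-h^*+\alpha\,(\text{best-response gaps}).
\end{align}
With $\eta_\theta\asymp\kappa^{-5}$, chosen so that drift is dominated, $\Phi^k$ contracts linearly, up to additive noise $\cO(1/B_J+\gamma^{2H}H^2+1/(\lambda^2B))$. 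Then $K=\cO(\log\lambda)$ (with $\kappa$ treated as constant) drives the initial error below $\lambda^{-4}$, so multiplying by $\lambda^2$ contributes only $\cO(\lambda^{-2})$ plus the noise terms already listed.

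\textbf{Step 3: warm starts across outer iterations.} Because each outer iteration starts the inner loop from the previous output, I need the optimizers to move slowly. I would show $(\phi_\lambda^*,\psi_\lambda^*)(x)$ and the best responses are Lipschitz in $x$ (in the quadratic-growth sense). Then the initial error at $t+1$ is bounded by the final error at $t$ plus $\cO(\eta_x^2\|\ell_t\|^2)$, which stays uniformly bounded, so $\log\lambda$ inner steps suffice at every $t$.

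I expect the main obstacle to be the coupled contraction in Step 2. It requires simultaneously controlling two best-response sequences whose targets move with $(\phi,\psi)$, and a penalized descent driven by an inexact NI gradient, all under only P\L\ rather than strong convexity. This is where the $\kappa^{-5}$ step size arises. I would handle it by first proving a one-step inequality for each component and then choosing the Lyapunov weights so that the cross terms cancel.
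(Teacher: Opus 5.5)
Your proposal is correct and follows the paper's proof in almost every step. The ingredients match: the same descent inequality on $F$; the same split of the bias into the $\cO(\lambda^{-2})$ mismatch from \citet{chen2024finding}, truncation, and inner-loop tracking; the same Lyapunov function $h(x_t,\theta_t^k)-h^*(x_t)+c\,J_t^k$ contracting at rate $1-\mu_h\eta_\theta$; and the same conversion to distances via quadratic growth and Lipschitz solution maps.

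The one genuinely different step is how you handle the warm-start drift. The paper keeps the term $-\frac{1}{4\eta_x}\|x_{t+1}-x_t\|^2$ in the descent inequality, which your Step~1 inequality discards. It uses this term to cancel the drift contribution $C_1\frac{8L_\theta}{\mu_\theta}e^{-\mu_h\eta_\theta K}\frac{1}{T}\sum_t\E\|x_{t+1}-x_t\|^2$, where $C_1=\cO(\lambda^2)$. This only needs $e^{-\mu_h\eta_\theta K}\lesssim\lambda^{-2}$ and no a priori bound on $\ell_t$.

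You instead bound $\E\|x_{t+1}-x_t\|^2=\eta_x^2\E\|\ell_t\|^2$ uniformly, which also works. Make explicit, though, that this bound is $\cO(\lambda^2)$ rather than $\cO(1)$. At inexact iterates, $\lambda\nabla_x\tilde g$ is only controlled by $2\lambda C_r/(1-\gamma)$, via $\|\nabla_x r_x\|\le C_r$. The $f$-part is controlled only in expectation, by $\cO(C_f^2+\sigma_f^2/B)$. The drift then enters the tracking bias as $\cO(\lambda^4e^{-\mu_h\eta_\theta K})$. So you need $e^{-\mu_h\eta_\theta K}\lesssim\lambda^{-6}$, together with a constant bound so that the cross-iteration recursion contracts. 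This is still $K=\cO(\log\lambda)$, just with a larger constant. Your route saves the cancellation bookkeeping, at the price of relying on bounded reward gradients and Lipschitz $f$.

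One small correction: the $H^2$ in $\cO(\lambda^2\gamma^{2H}H^2)$ does not come from the $x$-gradient. That gradient has no score-function factor, so its truncation bias is only $\cO(\lambda\gamma^H)$. The $H^2$ comes from the inner-loop policy-gradient bias amplified by $C_1=\cO(\lambda^2)$, which your Step~2 already accounts for.
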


\begin{corollary}
\label{cor: complexity-of-stoc-alg-3-large-batch-main}
Under the same conditions as Theorem~\ref{theorem: convergence-of-stoc-alg-3-large-batch-main}, choose
$T = \cO(\epsilon^{-1})$, $\lambda = \cO(\epsilon^{-1/2})$, $K = \cO(\log \epsilon^{-1})$, $H = \cO(\log \epsilon^{-1})$, $B_{J} = \cO(\epsilon^{-2})$, and $B = \cO(\epsilon^{-1})$.
Then the iterates generated by Algorithm~\ref{alg: stoc-double-loop-single-large-batch-main} satisfy
\vspace{-2mm}
\begin{align}
\frac{1}{T}\sum_{t=0}^{T-1} \E \left[ \|\nabla F(x_t)  \|^2 \right] \leq \cO(\epsilon).
\end{align}
\vspace{-2mm}
\end{corollary}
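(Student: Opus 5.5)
The corollary follows by plugging the prescribed parameters into the bound of Theorem~\ref{theorem: convergence-of-stoc-alg-3-large-batch-main} and checking that each of the five error terms is $\cO(\epsilon)$. The theorem applies because $\lambda = \cO(\epsilon^{-1/2})$ satisfies $\lambda\ge\lambda_0$ once $\epsilon$ is small enough. Its requirement $K=\cO(\log\lambda)$ is met by $K=\cO(\log\epsilon^{-1})$, since $\log\lambda = \tfrac12\log\epsilon^{-1}+\cO(1)$. The step sizes $\eta_x,\eta_\theta,\eta_\phi,\eta_\psi$ are independent of $\epsilon$, so they can be fixed as in the theorem.

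We then bound the terms in turn. First, $\Delta_F/T = \cO(\epsilon)$ for $T=\cO(\epsilon^{-1})$. This needs $\Delta_F = \E[F(x_0)-F(x_T)]$ to be bounded uniformly in $T$. That holds because $F(x) = f(x,\phi^*(x),\psi^*(x))$ is bounded below. Indeed, $f$ is Lipschitz, and the policy variables range over a compact set once we pass to policy space (the softmax policies lie in a product of simplices with entries at least $\delta_\pi$). Alternatively, we can simply take $\inf F>-\infty$ as implicit in the definition of $\Delta_F$. Second, $\sigma_f^2/B = \cO(\epsilon)$ for $B=\cO(\epsilon^{-1})$. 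Third, $\lambda^{-2}=\cO(\epsilon)$ for $\lambda\asymp\epsilon^{-1/2}$; in fact this term is $\cO(\epsilon)$ for any $\lambda\gtrsim\epsilon^{-1/2}$.

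The two remaining terms carry the factor $\lambda^2=\cO(\epsilon^{-1})$, and they are the only place requiring care. For the sampling term, $\lambda^2/B_J = \cO(\epsilon^{-1}\cdot\epsilon^{2})=\cO(\epsilon)$ with $B_J=\cO(\epsilon^{-2})$. For the truncation term we need $\gamma^{2H}H^2\le \epsilon^2$ up to constants. We choose $H = c\,\log(1/\epsilon)/\log(1/\gamma)$ with a suitable constant $c>1$. Then $\gamma^{2H}=\epsilon^{2c}$ and $H^2 = \cO(\log^2\epsilon^{-1})$, so $\lambda^2\gamma^{2H}H^2 = \cO(\epsilon^{2c-1}\log^2\epsilon^{-1})$. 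This is $\cO(\epsilon)$ for any $c>1$, since the polylogarithmic factor is absorbed by $\epsilon^{2c-2}\to0$. Hence $H=\cO(\log\epsilon^{-1})$ suffices, with the hidden constant depending on $\gamma$.

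The only genuine subtlety is making sure the constants hidden in the theorem's $\cO(\cdot)$ do not depend on $\lambda$, $H$, $B_J$, $B$, or $T$. They may depend on $\kappa$, $L_F$, $\gamma$, and the other problem constants, which is what allows the five terms to be traded off independently. Summing the five $\cO(\epsilon)$ contributions gives the claim. As a by-product, the total sample complexity is $T\cdot K\cdot(B_J H + B) = \tilde{\cO}(\epsilon^{-3})$.
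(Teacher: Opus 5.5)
Your proposal follows the paper's argument, which simply substitutes the stated choices of $T,\lambda,K,H,B_J,B$ into the bound of Theorem~\ref{theorem: convergence-of-stoc-alg-3-large-batch-main}; your term-by-term check is more careful than the paper's one-line version, in particular taking $H=c\log(1/\epsilon)/\log(1/\gamma)$ with $c>1$ so that $\lambda^2\gamma^{2H}H^2=\cO(\epsilon)$. One correction: Lipschitz continuity of $f$ does not make $F$ bounded below, because $x$ ranges over all of $\R^{d_x}$ (for example, $f$ could be linear in $x$), so the uniform-in-$T$ bound on $\Delta_F$ has to rest on your fallback, the assumption $\inf_x F(x)>-\infty$, which the paper also relies on without stating it.
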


\begin{remark}    
Theorem~\ref{theorem: convergence-of-stoc-alg-3-large-batch-main} implies that, to obtain an $\epsilon$-stationary point measured by $T^{-1}\sum_{t=0}^{T-1}\E[\|\nabla F(x_t)\|^2]\leq \cO(\epsilon)$, the required number of outer iterations is $T =\cO(\epsilon^{-1})$. Moreover, since each outer iteration performs $K$ inner updates and uses $B$ and $B_J$ samples, the total sample complexity of \textsc{PANDA} scales as $T\cdot K\cdot (B+B_J)=\tilde{\cO}(\epsilon^{-3})$, where $\tilde{\cO}(\cdot)$ hides logarithmic factors.
\end{remark}

\begin{remark}
The resulting sample complexity bound matches the best-known guarantees in the BRL literature~\cite{gaur2025on,zeng2025regularized}; however, these works consider LL problems with a single optimization direction (either minimization or maximization), rather than a coupled min--max game. 
\end{remark}

\begin{figure*}[!ht]
    \centering
    \includegraphics[width=0.4\textwidth]{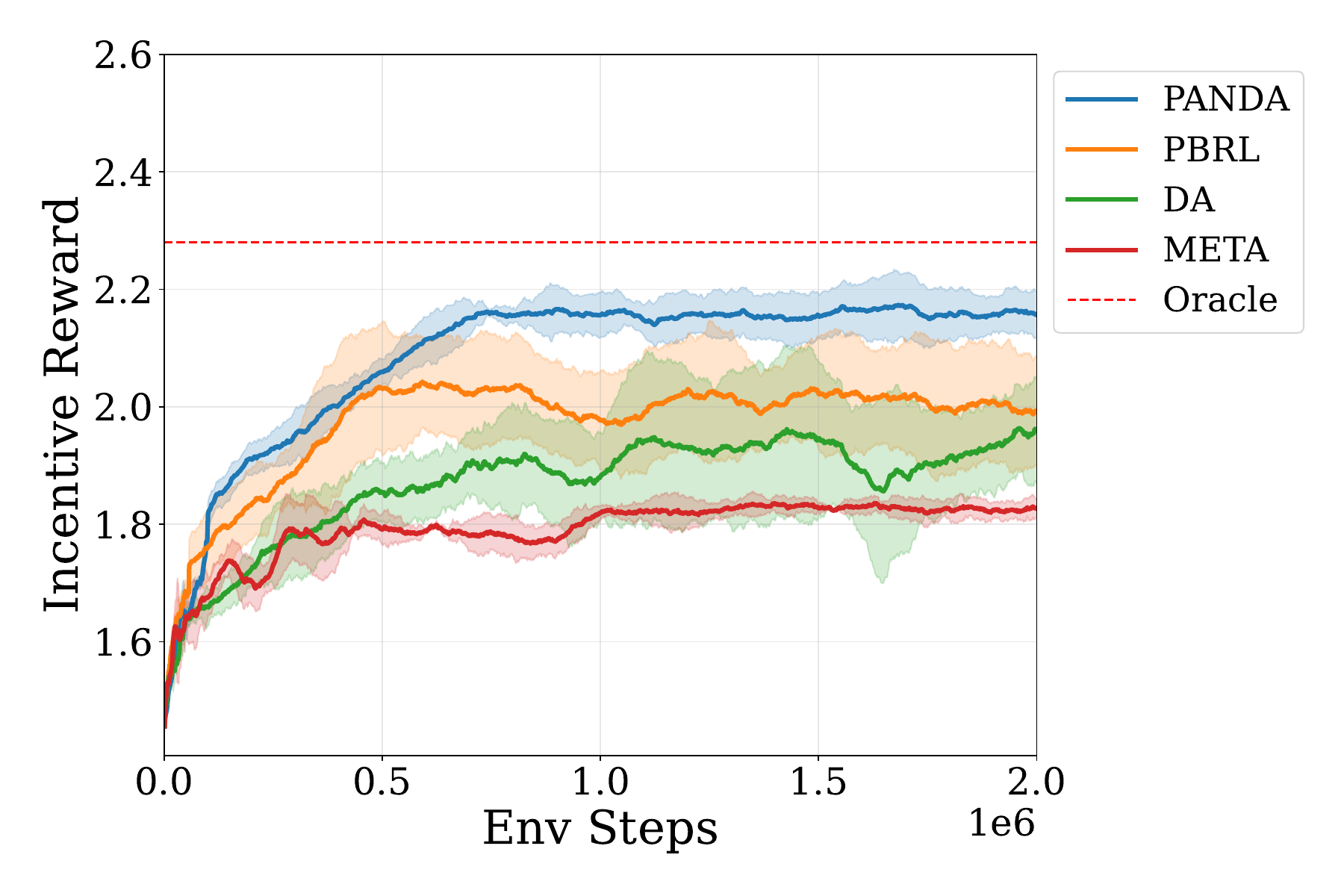}
    \hspace{-0.01\textwidth}
    \includegraphics[width=0.4\textwidth]{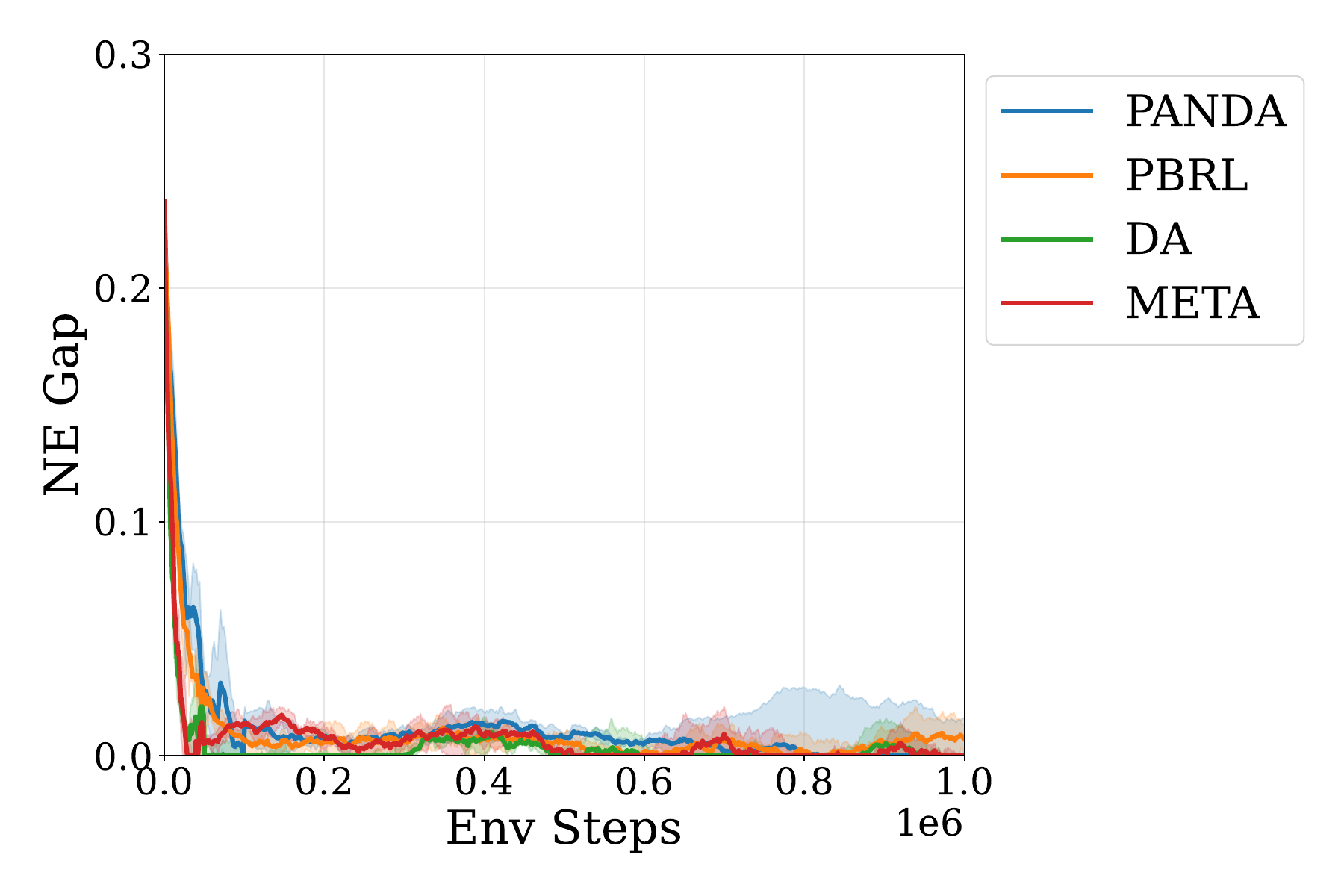}
    \caption{Synthetic problem results, averaged over three random seeds. Left: UL incentive reward vs. environment sample steps. Right: LL NE gap vs. environment sample steps.}
    \label{fig: synthetic-experiment-results}
  \vspace{-4mm}
\end{figure*}

\begin{figure*}[!ht]
    \centering
    \includegraphics[width=0.4\textwidth]{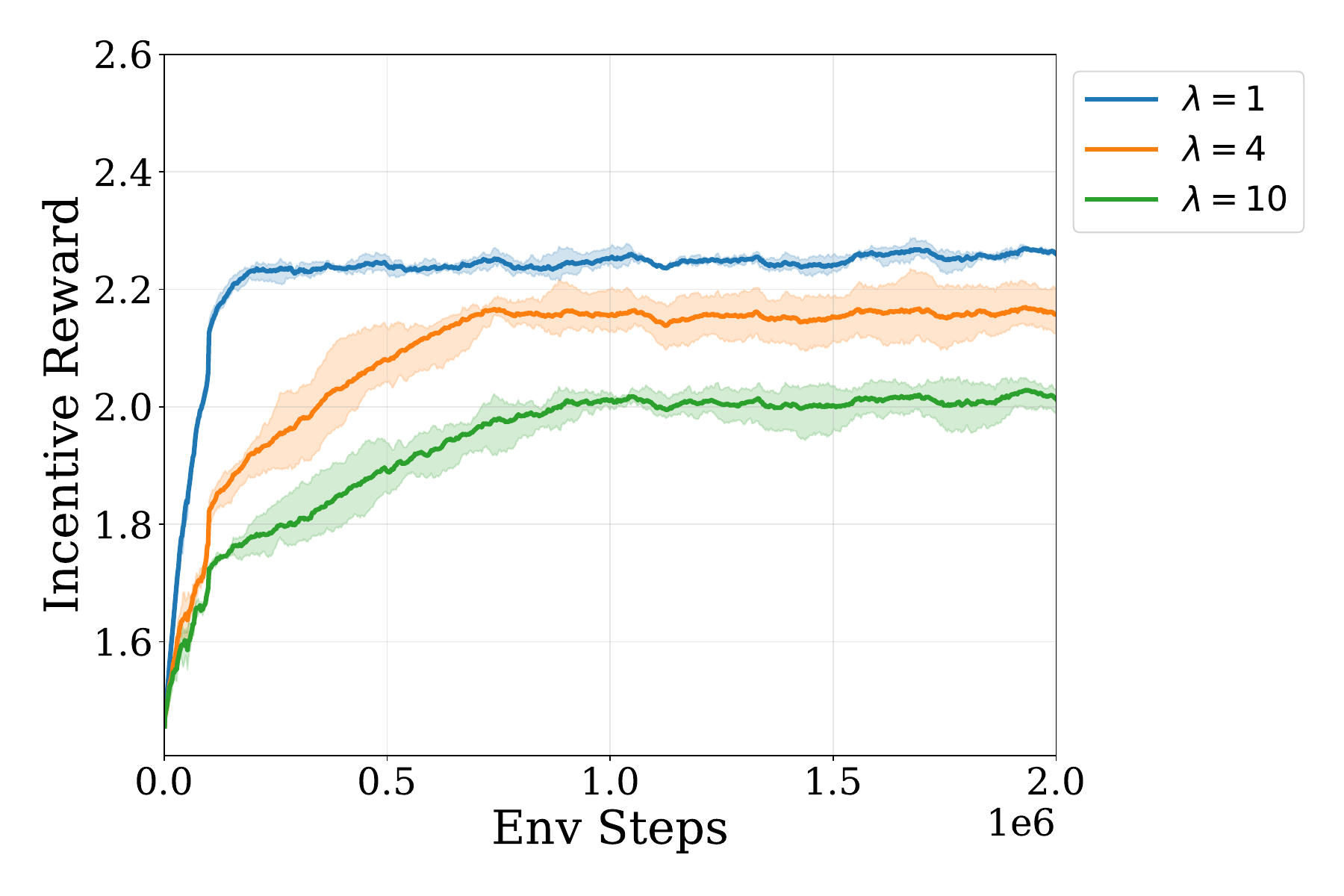}
    \hspace{-0.01\textwidth}
    \includegraphics[width=0.4\textwidth]{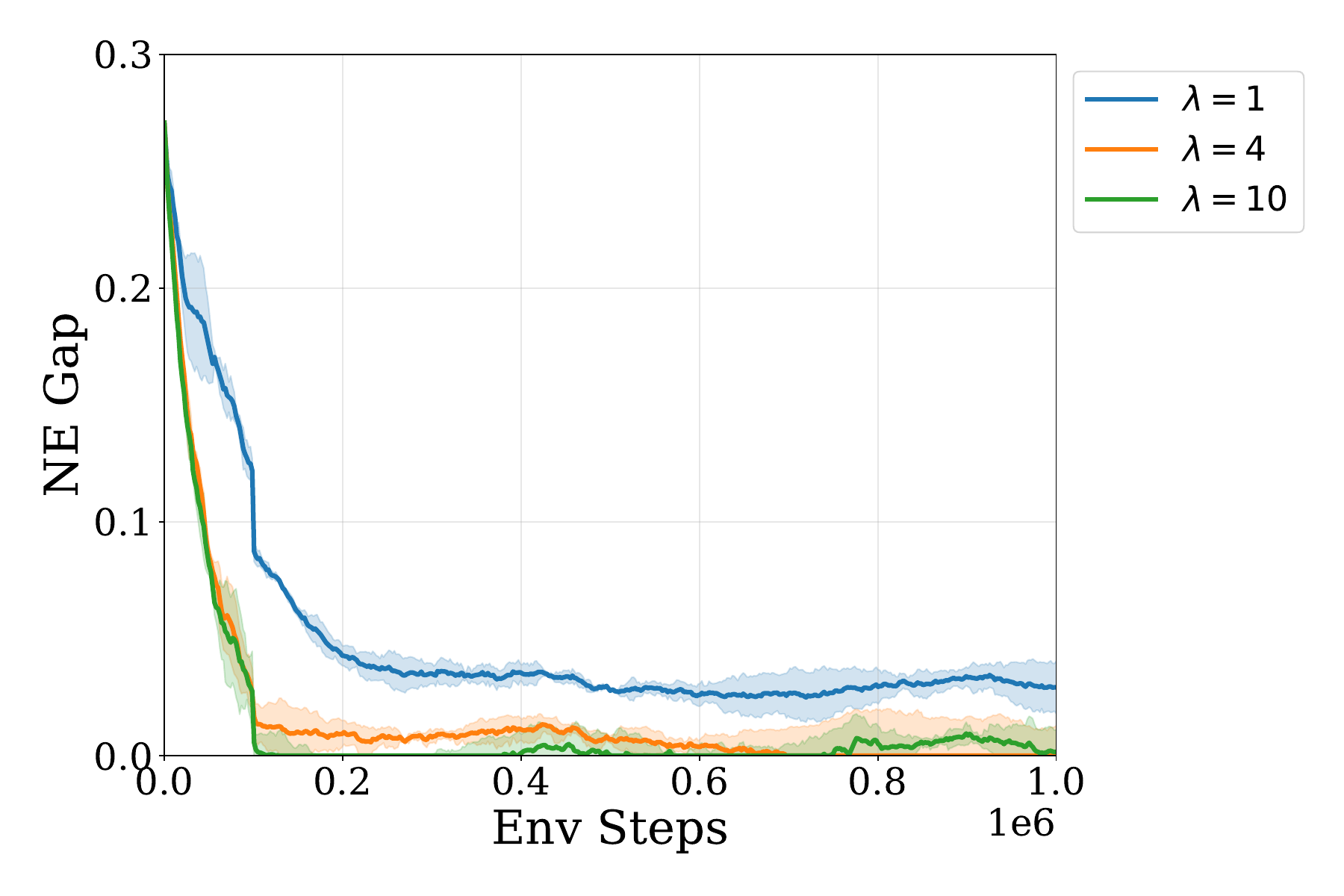}
    \caption{Ablation study on penalty parameter $\lambda$. The results are averaged over three random seeds. Left: UL incentive reward vs. environment sample steps. Right: LL NE gap vs. environment sample steps.}
    \label{fig: ablation-study-results}
  \vspace{-4mm}
\end{figure*}

\begin{figure*}[!ht]
    \centering
    \includegraphics[width=0.4\textwidth]{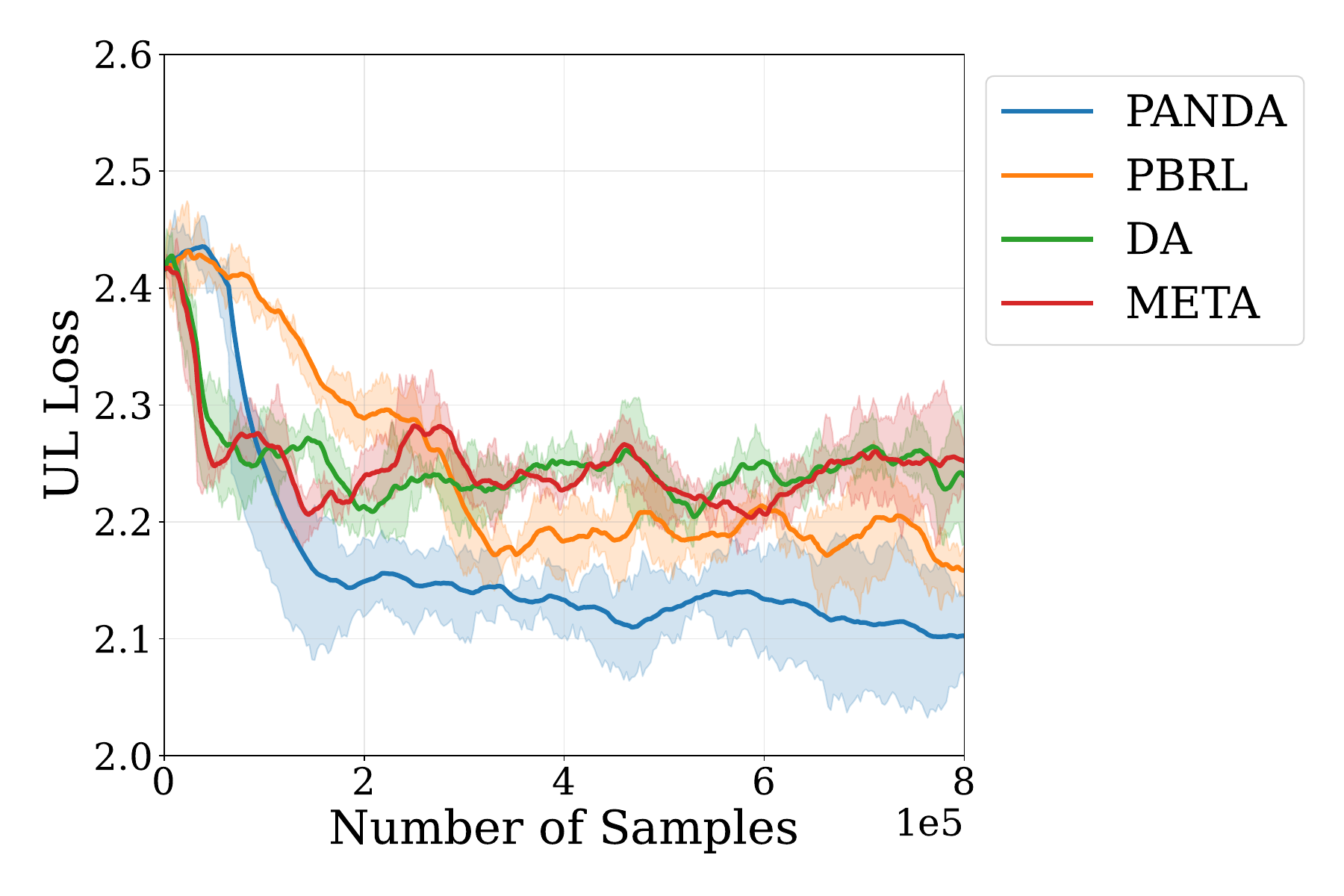}
    \hspace{-0.01\textwidth}
    \includegraphics[width=0.4\textwidth]{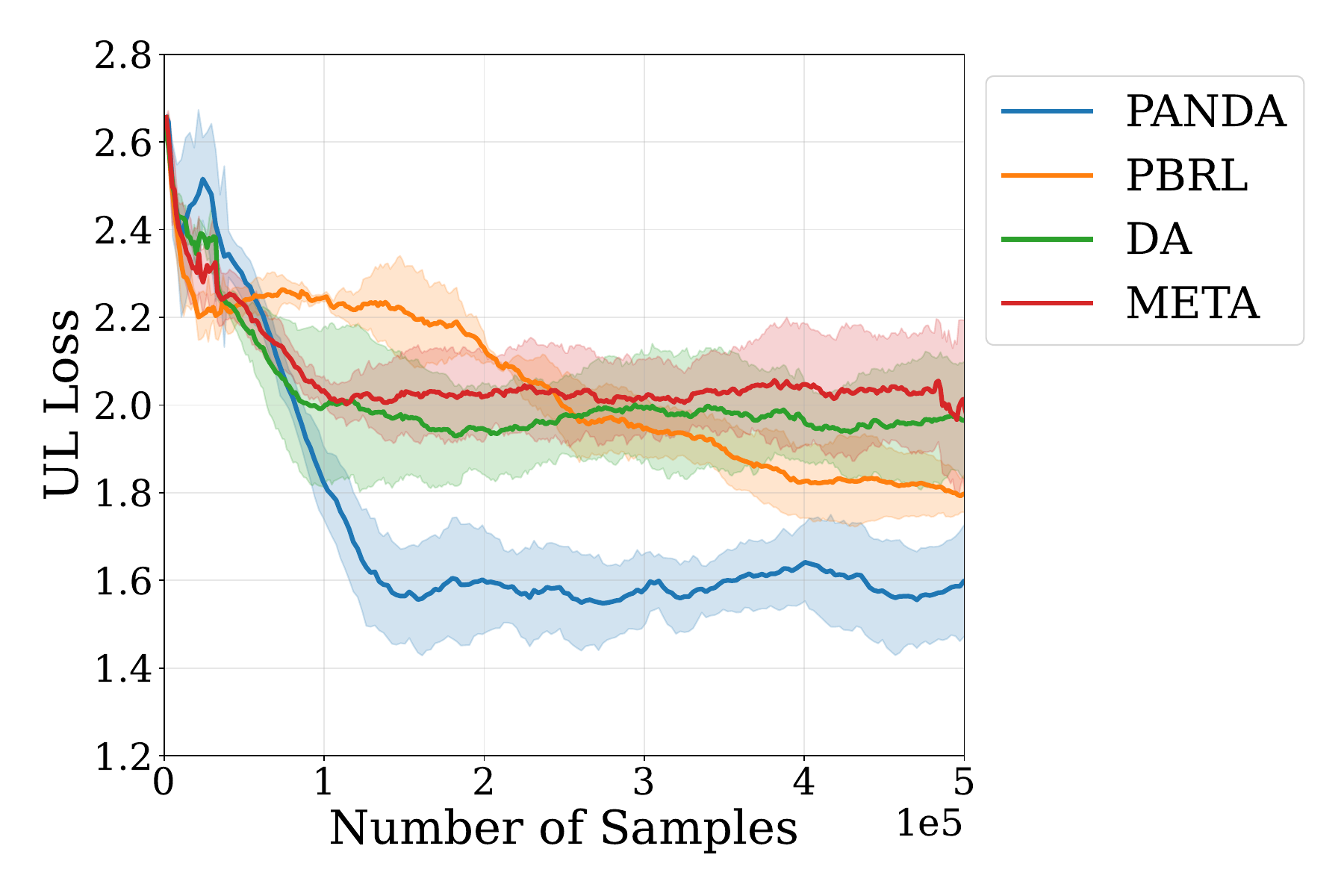}
  \caption{Sentinel-Intruder results averaged over three random seeds. UL loss vs. number of sampled trajectories. Left: $5 \times 5$ grid. Right: $20 \times 20$ grid.}
  \label{fig: sentinel-intruder-experiment-results}
  \vspace{-2mm}
\end{figure*}

\vspace{-2mm}

\vspace{-2mm}
\section{Numerical Experiments}
\vspace{-2mm}
In this section, we present numerical experiments comparing the proposed PANDA algorithm for solving BOSMG problems with closely related baselines, including META~\cite{yang2022adaptive}, DA~\cite{wang2023differentiable}, and PBRL~\cite{shen2025principled}.

\vspace{-2mm}
\subsection{Synthetic Problem}
\vspace{-2mm}

\label{sec: synthetic-experiment}
We first consider a synthetic problem motivated by incentive design~\citep{yang2020learning}, where the UL objective encourages two players to cooperate by maximizing the incentive designer’s cumulative reward in a fixed MDP $\cM_{\mathrm{id}}=\{\cS,\cA,\cB,r_{\mathrm{id}},\cP_{\mathrm{id}},\gamma \}$, while the LL problem is a regularized MMZSMG $\cM(x)$. The overall problem can be written as
\begin{subequations}
\vspace{-2mm}
\begin{align} 
\max_{x, \phi,\psi}& \quad \E_{\pi_\phi,\pi_\psi} \left[ \sum_{t=0}^{T-1} \gamma^t r_{\mathrm{id}}(s_t,a_t,b_t) \bigg| \cM_{\mathrm{id}} \right] \notag    \\  
\text{s.t.} & \quad (\phi,\psi) \in \arg\min_{\phi'} \max_{\psi'} \notag V^{\pi_{\phi'},\pi_{\psi'}}_{\cM(x)}(\rho).
\vspace{-2mm}
\end{align}
\end{subequations}
In the experiments, we set $|\cS|=5$ and $|\cA|=|\cB|=3$, and use tabular softmax policies for both players. The designer reward $r_{\mathrm{id}}$ is uniformly sampled from $[0,1]$. The base reward $r_{\mathrm{base}}$ is uniformly sampled from $[0,1]$, and the incentive reward is $\operatorname{sigmoid}(x(s,a,b))$, where $x \in \R^{|\cS|\times|\cA|\times|\cB|}$ is the incentive parameter to be optimized. The transition dynamics $\cP_{\mathrm{id}}$ and $\cP$ are randomly generated. Detailed settings are provided in Appendix~\ref{appendix: experiment-details}.

We compare PANDA with META, DA and PBRL in terms of the UL incentive reward and the LL NE gap, measured by the NI function. In addition, we construct a strong oracle baseline that uses dynamic programming to compute the exact value function and leverages second-order information of the objective functions to obtain exact hypergradients. 
We regard this oracle as an approximate upper bound on the achievable algorithmic performance.
For fairness, we plot performance against the number of environment sample steps in Figure~\ref{fig: synthetic-experiment-results}. With the same number of steps, higher incentive reward indicates better UL maximization, while a smaller NE gap indicates that the learned LL policy pair is closer to the NE. The NE gaps of all methods approach zero, suggesting that each can effectively solve the LL MMZSMG. Among them, PANDA attains the highest incentive reward, showing that it more effectively steers the LL equilibrium toward higher designer reward. Moreover, the performance gap between PANDA and the oracle baseline remains small, suggesting that PANDA achieves a solution quality close to that of the oracle.

To examine the effect of the penalty parameter $\lambda$ on PANDA, we also conduct an ablation study by varying $\lambda$ while keeping all other experimental settings unchanged. 
As shown in Figure~\ref{fig: ablation-study-results}, when $\lambda=1$, PANDA achieves a relatively high UL objective value, but the NE gap remains large, indicating that the LL solution is far from equilibrium. 
In contrast, when $\lambda=4$ or $\lambda=10$, the NE gap is close to zero, suggesting that the LL equilibrium constraint is approximately satisfied. 
Nevertheless, the stronger penalization with $\lambda=10$ slightly compromises the UL objective. 
These results indicate that $\lambda$ controls a trade-off between enforcing LL equilibrium accuracy and optimizing the UL objective in practice.

\vspace{-2mm}
\subsection{Sentinel-Intruder}
\label{sec: sentinel-intruder-experiment}
\vspace{-2mm}

\begin{figure}[!h]
    \vspace{-2mm}
    \centering
    \begin{subfigure}[t]{0.45\textwidth}
    \centering
    \includegraphics[width=0.75\textwidth]{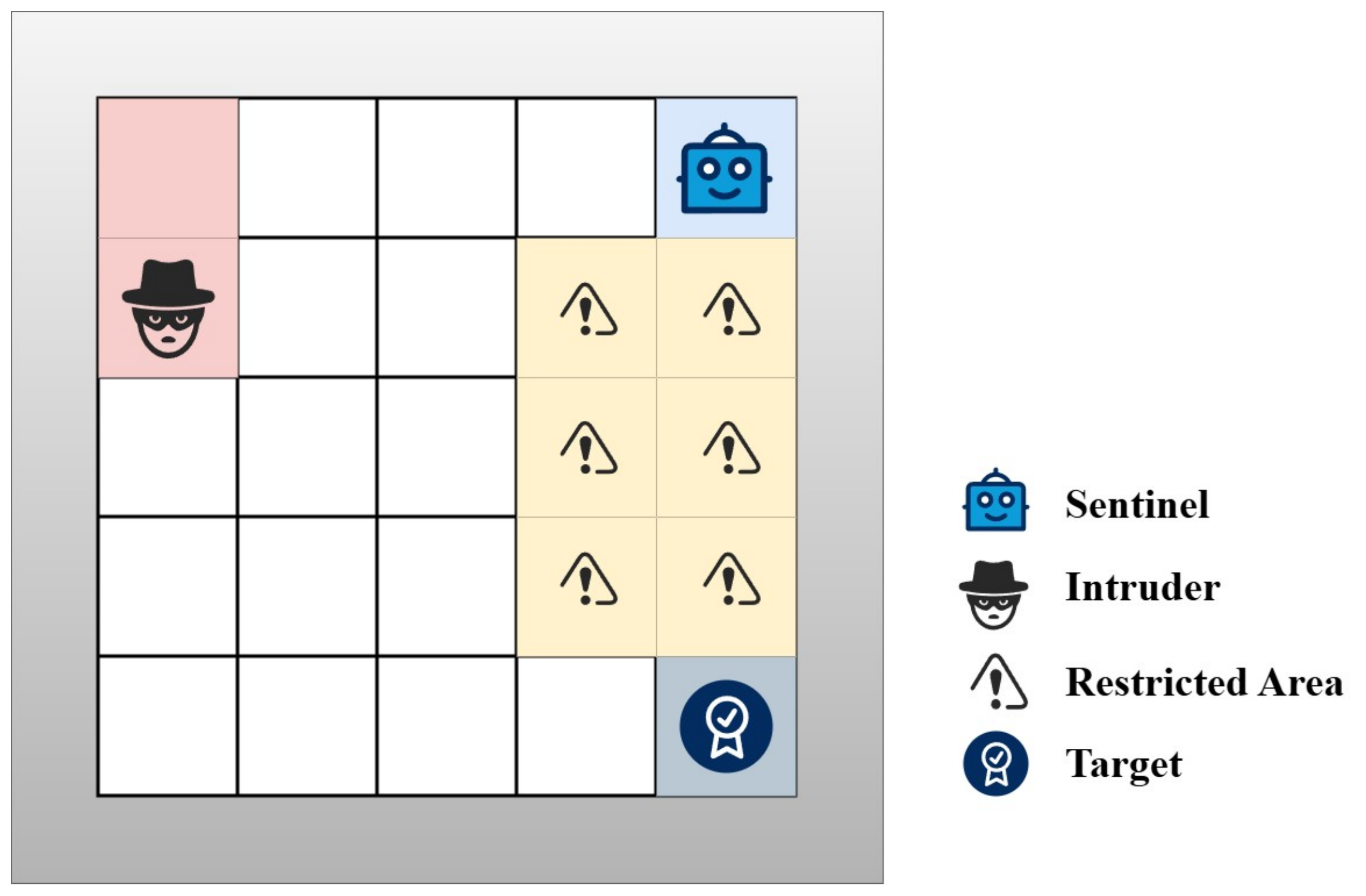}
    \end{subfigure}
  \hfill
\caption{Sentinel-Intruder setup. The sentinel aims to capture the intruder before it reaches the target at the bottom-right corner, while avoiding the restricted areas (yellow cells on the right side of the map). The sentinel spawns at the top-right cell, and the intruder spawns uniformly at random within its spawn region (red cells in the top-left corner).}
  \label{fig: sentinel-intruder-demonstration}
  \vspace{-2mm}
\end{figure}

Figure~\ref{fig: sentinel-intruder-demonstration} illustrates the Sentinel-Intruder environment, a $5\times 5$ grid world with two agents: a sentinel and an intruder. The intruder aims to reach the target at the bottom-right corner without being
captured, while the sentinel attempts to capture the intruder first. At each step, both agents choose one of five actions (up, down, left, right, stay). An episode terminates when the sentinel captures the intruder, the intruder reaches the target, or the maximum number of steps is reached. If the sentinel captures first, the sentinel (intruder) receives $+10$ ($-10$); if the intruder reaches the target, the intruder (sentinel) receives $+10$ ($-10$).

The map also contains six restricted cells on the right side. Our goal is to discourage the sentinel from entering these cells while still competing against the intruder, which models real-world scenarios where the sentinel must avoid dangerous or
privacy-sensitive regions. We formulate this as a BOSMG problem: the UL minimizes the total number of sentinel visits to restricted cells, while the LL computes an NE under a reward parameterized by the UL variable $x$. Specifically, the LL solves $\min_\phi \max_\psi V^{\pi_\phi,\pi_\psi}_{\cM(x)}(\rho)$, where $\pi_\phi$ and $\pi_\psi$ denote the intruder and sentinel policies, respectively, and the LL reward is $r_{\text{env}}(s,a,b) + 0.05\cdot r_x(s,a,b)$. Here, $r_{\text{env}}$ is the terminal environment reward and $r_x$ is an additional learnable reward model with parameter $x$. The UL objective is $\min_x \E_{\tau \sim \pi_{\phi}^*,\pi_{\psi}^*}\!\left[\mathbb{C}(\tau)\right]$, where $\mathbb{C}(\tau)$ counts the number of restricted cells visited by the sentinel along trajectory $\tau=\{s_t,a_t,b_t\}_{t=0}^{T-1}$, and $(\pi_{\phi}^*,\pi_{\psi}^*)$ is the LL NE. 

As shown in Figure~\ref{fig: sentinel-intruder-experiment-results} (Left), in this more complex and realistic setting, PANDA achieves the lowest UL loss under the same number of sampled trajectories in the environment, indicating that PANDA effectively solves the LL adversarial game and improves the UL objective.

To validate the effectiveness of PANDA in larger-scale environments, we further conduct experiments on a $20\times20$ grid map. 
The results in Figure~\ref{fig: sentinel-intruder-experiment-results} (Right) show that PANDA continues to perform effectively in this setting and outperforms existing baselines. Full details are deferred to Appendix~\ref{appendix: experiment-details}.

\vspace{-2mm}

\section{Concluding Remarks}
In this work, we proposed PANDA, a penalty-based policy-gradient algorithm
for BOSMG. To the best of our knowledge, PANDA is the first stochastic
first-order method with convergence guarantees for finding an
$\epsilon$-stationary point of the original BOSMG formulation, achieving
$\tilde{\cO}(\epsilon^{-1})$ iteration complexity and
$\tilde{\cO}(\epsilon^{-3})$ sample complexity. Notably, these rates match
the state-of-the-art rates previously known only for BRL with single-policy
LL optimization, despite the additional challenges posed by MMZSMGs.
Extensive experiments further demonstrate that PANDA consistently outperforms
competitive baselines.

\vspace{-1mm}
Our current work focuses on the setting where the LL problem is a regularized
MMZSMG. Extending the proposed framework to general min--max games and broader
multi-agent settings is a promising direction for future work.


\vspace{-2mm}
\section*{Acknowledgments}
The work of Zihao Zheng and Songtao Lu is supported in part by project \#MMT-8115077 of the Shun Hing Institute of Advanced Engineering, The Chinese University of Hong Kong (CUHK), and in part by the CUHK Direct Grant (Project No. 4055259). The work of Irwin King is supported in part by the General Research Fund (Project No. RGC GRF 2151317). 

\vspace{-3mm}
\section*{Impact Statement}
\vspace{-2mm}

This work contributes to the foundations of bilevel optimization with a regularized MMZSMG at the LL. It may have positive impacts on principled algorithm design for hierarchical and multi-agent decision-making problems. Since the paper is primarily theoretical and evaluated in controlled environments, we do not foresee immediate negative societal impacts. Potential risks may arise only through downstream applications of such methods in strategic decision-making systems, where fairness, safety, and robustness should be carefully considered.

\bibliographystyle{icml2026}
\bibliography{refs}

\clearpage
\newpage
\appendix
\onecolumn

\numberwithin{theorem}{section}
\numberwithin{lemma}{section}
\numberwithin{assumption}{section}
\numberwithin{proposition}{section}

\section{Preliminaries}
\label{sec: preliminaries}
\subsection{Notation}
\label{table: notation-table}
Unless otherwise specified, we use the following notations throughout the appendix.
\begin{table*}[!htbp]
  \begin{center}
    \begin{small}
        \begin{threeparttable}
            \begin{tabular}{cl}
            \toprule
            Notation & Definition \\
            \midrule
            $\cS$ & finite state space  \\
            $\cA,\cB$ & finite action spaces for the min-player and max-player \\
            $r_x(s,a,b)$ & reward function parameterized by $x$ \\
            $\cP(s^\prime|s,a,b)$ & transition probability from state $s$ to $s^\prime$ under actions $a,b$ \\
            $\gamma$ & discount factor \\
            $h(\cdot)$ & regularization function \\
            $\pi_\phi$ & policy of the min-player parameterized by $\phi$ \\
            $\pi_\psi$ & policy of the max-player parameterized by $\psi$ \\
            $\tau_\phi,\tau_\psi$ & regularization coefficients for $\pi_\phi$ and $\pi_\psi$ \\
            $V^{\pi_\phi,\pi_\psi}_{\cM(x)}(\rho)$ & value function of policies $(\pi_\phi,\pi_\psi)$ in the MMZSMG $\cM(x)$ under initial distribution $\rho$ \\
            $\cM(x)$ & regularized MMZSMG $\{\cS,\cA,\cB,r_x,\cP,\gamma,h\}$  parameterized by $x$ \\
            $\pi^*_\phi(x),\pi^*_\psi(x)$ & optimal policies of the min-player and max-player in $\cM(x)$ \\
            $\phi^*(x),\psi^*(x)$ & parameters of the optimal policies $\pi^*_\phi(x),\pi^*_\psi(x)$ \\
            $\pi^*_\phi(x,\psi),\pi^*_\psi(x,\phi)$ &  best-response policies given $x$ and the opponent's policy \\
            $\phi^*(x,\psi),\psi^*(x,\phi)$ & parameters of the best-response policies \\
            $\theta$ & concatenation of $\phi$ and $\psi$ \\
            $\theta^*(x)$ & concatenation of $\phi^*(x)$ and $\psi^*(x)$ \\
            $\theta^*(x,\theta)$ & concatenation of $\phi^*(x,\psi)$ and $\psi^*(x,\phi)$, where $\theta = (\phi,\psi)$ \\
            $J(x,\phi,\psi)$ & abbreviated notation of $V^{\pi_\phi,\pi_\psi}_{\cM(x)}(\rho)$ \\
            $f(x,\phi,\psi)$ & UL objective function \\
            $g(x,\phi,\psi)$ & NI function at LL \\
            $\tilde{g}(x,\phi,\psi,\tilde{\phi},\tilde{\psi})$ & NI function estimator  $J(x,\phi,\tilde{\psi}) - J(x,\tilde{\phi},\psi)$ \\
            $J_1(x,\phi)$ & best-response value function of the min-player $J_1(x,\phi) \bydef \max_\psi J(x,\phi,\psi)$  \\
            $J_2(x,\psi)$ & best-response value function of the max-player $J_2(x,\psi) \bydef \min_\phi J(x,\phi,\psi)$ \\
            $\cL_\lambda(x,\phi,\psi)$ & $f(x,\phi,\psi) + \lambda g(x,\phi,\psi)$ \\ 
            $\tilde{\cL}_\lambda(x,\phi,\psi,\tilde{\phi},\tilde{\psi})$ & $f(x,\phi,\psi) + \lambda \tilde{g}(x,\phi,\psi,\tilde{\phi},\tilde{\psi})$ \\
            $h(x,\phi,\psi)$ & $\frac{1}{\lambda} f(x,\phi,\psi) + g(x,\phi,\psi)$\\
            $\tilde{h}(x,\phi,\psi,\tilde{\phi},\tilde{\psi})$ & $\frac{1}{\lambda} f(x,\phi,\psi) + \tilde{g}(x,\phi,\psi,\tilde{\phi},\tilde{\psi})$ \\
            $F(x)$ &  UL hyper-objective function $F(x) \bydef f(x,\phi^*(x),\psi^*(x))$\\
            $B, B_J$ &  batch sizes for estimating $f(\cdot)$ and $J(\cdot)$ \\
            $H$ & length of truncated trajectories for estimating $J(\cdot)$ \\
            $J(\cdot;B_J,H),\tilde{g}(\cdot;B_J,H)$ & estimators of $J(\cdot)$ and $\tilde{g}(\cdot)$ with $B_J$ samples and $H$-step truncated trajectories \\
            $f(\cdot;B)$ & estimator of $f(\cdot)$ with $B$ samples \\
            $\tilde{h}(\cdot;B,B_J,H), \tilde{\cL}_\lambda(\cdot;B,B_J,H)$ & estimators of $\tilde{h}(\cdot)$ and $\tilde{\cL}_\lambda(\cdot)$ with $B,B_J$ samples and $H$-step truncated trajectories \\
            $\phi^*_\lambda(x),\psi^*_\lambda(x)$ & the optimal solution of the penalized problem $\min_{\phi,\psi} \cL_\lambda(x,\phi,\psi)$\\
            $\theta^*_\lambda(x)$ &  concatenation of $\phi^*_\lambda(x)$ and $\psi^*_\lambda(x)$ \\
            $\dist(X,Y)$ & the distance between two sets $X$ and $Y$, defined in Definition~\ref{definition: distance}\\
            $\delta_\rho, \delta_\pi$ & lower bounds of the initial state distribution and policy \\
            $B_r$ & bound of the reward function: $|r_x(s,a,b)| \leq B_r$ \\
            $C_f,C_J, C_g, C_{\widetilde{g}}$ & Lipschitz continuity constants of $f,J,g,\tilde{g}$ \\ 
            $C_\pi$ (Lemma~\ref{lemma: lipschitz-constant-of-optimal-parameters}) & Lipschitz continuity constant of the best-response parameter set $\phi^*(x,\psi)$ and $\psi^*(x,\phi)$\\
            $L_{f,1},L_{g,1},L_{\widetilde{g},1}, L_{h,1}$ & Lipschitz smoothness constants of $f,g,\tilde{g},h$ \\
            $L_{J,1}, L_{J_1,1},L_{J_2,1}$ & Lipschitz smoothness constants of $J,J_1,J_2$ \\
            $L_{F}$ (Lemma~\ref{lemma: lipschitz-constant-of-F}) & Lipschitz smoothness constant of $F$ \\
            $L_{J,2},L_{g,2},L_{f,2}$ & second-order Lipschitz smoothness constants of $J,g,f$ \\
            $\mu_J,\mu_g,\mu_h$ &  P{\L}-constants of $J,g,h$ in $\theta$ \\
            $\sigma_f, \sigma_J$ (Lemma~\ref{lemma: variance-of-J}) & variance bounds of the stochastic gradient of $f$ and $J$ \\
            $\| \cdot \|$ & if not specified otherwise, we use the Euclidean norm (or the Frobenius norm for matrices) \\ 
            \bottomrule
            \end{tabular} 
        \end{threeparttable}
    \end{small}
  \end{center}
  \vskip -0.1in
\end{table*}

\clearpage
\newpage

\subsection{Mathematical Preliminaries}
\begin{definition}[Distance]
    \label{definition: distance}
    The distance between two sets $X$ and $Y$ is defined as $\dist(X,Y) = \inf_{x\in X, y\in Y} \| x - y \|$.
    Moreover, we also define the distance between a point $x$ and a set $Y$ as $\dist(x,Y) = \inf_{y\in Y} \| x - y \|$.
\end{definition}

\begin{definition}[Lipschitz continuity]
 We say that a function $f(x,y)$ is $C_f$-Lipschitz continuous if, for any $x,x^\prime,y,y^\prime$, we have $\| f(x,y) - f(x^\prime,y^\prime) \| \leq C_f (\| x-x^\prime \| + \| y-y^\prime \|)$.
    
Similarly, we say that $f$ is $L_{f,1}$-smooth if for any $x,x^\prime,y,y^\prime$, we have $\| \nabla f(x,y) - \nabla f(x^\prime,y^\prime) \| \leq L_{f,1} (\| x-x^\prime \| + \| y-y^\prime \|)$. 
    
We say that $f$ is $L_{f,2}$-Lipschitz continuous Hessian if for any $x,x^\prime,y,y^\prime$, we have $\| \nabla^2 f(x,y) - \nabla^2 f(x^\prime,y^\prime) \| \leq L_{f,2} (\| x-x^\prime \| + \| y-y^\prime \|)$.
\end{definition}

\begin{definition}[P{\L} condition]
    We say that a function $f(x,y)$ satisfies $\mu$-Polyak--{\L}ojasiewicz (P{\L}) condition with respect to $y$ if for any fixed $x$ and any $y$, we have $\| \nabla_y f(x,y) \|^2 \geq 2\mu (f(x,y) - \min_{y^\prime} f(x,y^\prime))$.
\end{definition}

\begin{lemma}[Theorem 2 of \citet{karimi2016linear}]
    If $f(x,y)$ is $L_{f,1}$-Lipschitz smooth and $\mu$-P{\L} in $y$, then it satisfies the error bound (EB) condition with $\mu$, i.e.,
    \begin{align}
        \| \nabla_y f(x,y)\| \geq \mu \dist(y, y^*(x)).
    \end{align}
    Moreover, it also satisfies the quadratic growth (QG) condition with $\mu$, i.e.,
    \begin{align}
        f(x,y) - \min_{y^\prime} f(x,y^\prime) \geq \frac{\mu}{2} \dist^2(y,y^*(x)).
    \end{align}
\end{lemma}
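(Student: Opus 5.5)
The plan is to prove the quadratic growth (QG) inequality first. The error bound (EB) then follows by chaining QG with the P{\L} inequality. Fix $x$, write $f(y)$ for $f(x,y)$, let $f^*=\min_{y'} f(x,y')$, and let $Y^*=y^*(x)$ be the set of minimizers. Because $f$ is continuous, $Y^*$ is closed. If $f(y_0)=f^*$ there is nothing to prove, so take a point $y_0$ with $f(y_0)>f^*$.

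\textbf{Step 1 (QG).} Introduce the auxiliary function $q(y)\bydef\sqrt{f(y)-f^*}$, which is differentiable wherever $f(y)>f^*$. There $\nabla q=\nabla f/(2q)$, and the P{\L} inequality $\|\nabla f\|^2\ge 2\mu q^2$ gives the key lower bound
\[
\|\nabla q(y)\|\ge \sqrt{\mu/2}.
\]
Next consider the normalized gradient flow
\[
\dot y(t)=-\nabla q(y(t))/\|\nabla q(y(t))\|, \qquad y(0)=y_0.
\]
Smoothness of $f$ ($L_{f,1}$-Lipschitz gradient) makes this ODE locally well-posed on $\{f>f^*\}$. Along the curve, which has unit speed, we have
\[
\frac{d}{dt}q(y(t))=-\|\nabla q(y(t))\|\le-\sqrt{\mu/2}.
\]
Since $q\ge 0$, the curve must reach the level $q=0$ within time, and hence arc length, at most $T^*\le q(y_0)/\sqrt{\mu/2}$. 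The trajectory is $1$-Lipschitz in $t$, so $y(t)$ is Cauchy as $t\to T^*$ and converges to some $\bar y$. By continuity $f(\bar y)=f^*$, so $\bar y\in Y^*$ because $Y^*$ is closed. Therefore
\[
\dist(y_0,Y^*)\le\|y_0-\bar y\|\le T^*\le \sqrt{2(f(y_0)-f^*)/\mu},
\]
and squaring gives $f(y_0)-f^*\ge\frac{\mu}{2}\dist^2(y_0,Y^*)$, which is QG.

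\textbf{Step 2 (EB).} Combine P{\L} with QG:
\[
\|\nabla f(y)\|^2\ge 2\mu\,(f(y)-f^*)\ge \mu^2\dist^2(y,Y^*).
\]
Taking square roots gives $\|\nabla_y f(x,y)\|\ge\mu\,\dist(y,y^*(x))$.

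\textbf{Main obstacle.} The delicate part is making Step 1 rigorous. Three things need checking. First, the flow must exist up to the hitting time and not blow up or leave the domain; this follows from local Lipschitzness of $\nabla q$ on $\{f>f^*\}$, which comes from smoothness of $f$ and $q>0$ there. Second, the flow must actually reach $Y^*$. The time bound forces $q(y(t))\to0$, and the limit exists by the length bound. Third, the argument needs $f^*$ to be finite and attained, which is implicit in the statement's $y^*(x)$.

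If the ODE argument is considered too analytic, the fallback is a discrete version. Run gradient descent with step size $1/L_{f,1}$. P{\L} gives linear convergence
\[
f(y_k)-f^*\le(1-\mu/L_{f,1})^k\,(f(y_0)-f^*).
\]
The descent lemma gives $\|y_{k+1}-y_k\|^2\le\frac{2}{L_{f,1}}\big(f(y_k)-f(y_{k+1})\big)$. Bounding each step by a square root of the function decrease and summing the resulting geometric series bounds the total path length by $c\sqrt{(f(y_0)-f^*)/\mu}$. This yields QG, but only up to a constant factor, so the exact constant $\mu/2$ is best obtained via the continuous flow.
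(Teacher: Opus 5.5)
Your proof is correct and is essentially the argument behind the cited Theorem~2 of \citet{karimi2016linear}; the paper itself only cites this result. In that argument, QG comes from a gradient flow of $\sqrt{f-f^*}$, whose gradient norm is at least $\sqrt{\mu/2}$ under P{\L}, so the flow reaches the solution set after path length at most $\sqrt{2(f(y_0)-f^*)/\mu}$, and EB then follows by chaining P{\L} with QG, with normalizing the flow being only a cosmetic variation. One caveat on your discrete fallback: summing the step lengths as a plain geometric series gives a path-length bound of order $\sqrt{L_{f,1}(f(y_0)-f^*)}/\mu$, losing a factor $\sqrt{L_{f,1}/\mu}$, and to get an absolute constant you would instead telescope $\sqrt{f(y_k)-f^*}-\sqrt{f(y_{k+1})-f^*}\ge \frac{\sqrt{2\mu}}{4}\|y_{k+1}-y_k\|$, the discrete analogue of your $q$.
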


\begin{lemma}[Generalized Danskin's Theorem~\citep{shen2025principled,clarke1975generalized}]
    \label{lemma: generalized-danskin-theorem}
    Let $\mathcal{F}$ be a compact set and let a continuous function 
$\ell : \mathbb{R}^d \times \mathcal{F} \to \mathbb{R}$ satisfy: 1) $\nabla_x \ell(x,y)$ is continuous in $(x,y)$;
and 2) for any $x$ and for any $y, y' \in \arg\max_{y \in \mathcal{F}} \ell(x,y)$,
    $   \nabla_x \ell(x,y) = \nabla_x \ell(x,y').$
Define
   $h(x) \bydef \max_{y \in \mathcal{F}} \ell(x,y)$.
Then we have
$\nabla h(x) = \nabla_x \ell(x,y^*)$ for any $y^* \in \arg\max_{y \in \mathcal{F}} \ell(x,y)$.
\end{lemma}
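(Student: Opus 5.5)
We prove differentiability of $h$ directly, working with arbitrary sequences $x_n \to x$ rather than a fixed direction, so that we obtain Fréchet and not only Gateaux differentiability. Fix $x$, let $Y^*(x) \bydef \arg\max_{y\in\mathcal F}\ell(x,y)$, which is nonempty by compactness of $\mathcal F$ and continuity of $\ell$. Let $G \bydef \nabla_x \ell(x,y^*)$ for any $y^*\in Y^*(x)$; by hypothesis 2) this vector does not depend on the choice of $y^*$. The goal is
\begin{align}
\lim_{x'\to x}\frac{h(x')-h(x)-\langle G, x'-x\rangle}{\|x'-x\|}=0 .
\end{align}

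\textbf{Lower bound.} Take any $x'$ and any fixed $y^*\in Y^*(x)$. Then
\begin{align}
h(x')-h(x)\ \ge\ \ell(x',y^*)-\ell(x,y^*) = \langle G,x'-x\rangle + o(\|x'-x\|),
\end{align}
because $\ell(\cdot,y^*)$ is differentiable with gradient $G$ at $x$.

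\textbf{Upper bound.} Pick $y_{x'}\in Y^*(x')$. Then
\begin{align}
h(x')-h(x)\ \le\ \ell(x',y_{x'})-\ell(x,y_{x'}) = \langle \nabla_x\ell(\xi_{x'},y_{x'}),x'-x\rangle
\end{align}
for some $\xi_{x'}$ on the segment $[x,x']$, by the mean value theorem applied to the scalar map $s\mapsto \ell(x+s(x'-x),y_{x'})$. Consequently,
\begin{align}
h(x')-h(x)-\langle G,x'-x\rangle \le \|\nabla_x\ell(\xi_{x'},y_{x'})-G\|\,\|x'-x\|.
\end{align}
It therefore suffices to show that $\|\nabla_x\ell(\xi_{x'},y_{x'})-G\|\to 0$ as $x'\to x$.

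\textbf{Main step: convergence of the gradients.} This is where compactness and the gradient-uniqueness hypothesis are used, and it is the main obstacle. We argue by contradiction. Suppose there exist $\epsilon>0$ and a sequence $x_n\to x$ with $\|\nabla_x\ell(\xi_n,y_n)-G\|\ge\epsilon$, where $\xi_n\in[x,x_n]$ and $y_n\in Y^*(x_n)$.

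1. By compactness of $\mathcal F$, pass to a subsequence with $y_n\to \bar y\in\mathcal F$. Also $\xi_n\to x$.
2. We claim $\bar y\in Y^*(x)$, i.e., the argmax map is outer semicontinuous. For any $y\in\mathcal F$ we have $\ell(x_n,y_n)\ge \ell(x_n,y)$. Letting $n\to\infty$ and using joint continuity of $\ell$ gives $\ell(x,\bar y)\ge \ell(x,y)$.
3. By continuity of $\nabla_x\ell$ in $(x,y)$ (hypothesis 1), $\nabla_x\ell(\xi_n,y_n)\to\nabla_x\ell(x,\bar y)$.
4. By hypothesis 2) and $\bar y \in Y^*(x)$, $\nabla_x\ell(x,\bar y)=G$. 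This contradicts $\|\nabla_x\ell(\xi_n,y_n)-G\|\ge\epsilon$.

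Hence the upper bound is $o(\|x'-x\|)$. Combined with the lower bound, $h$ is differentiable at $x$ with $\nabla h(x)=G=\nabla_x\ell(x,y^*)$ for every $y^*\in Y^*(x)$.
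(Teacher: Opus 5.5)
Your proof is correct. It takes a different route from the paper. The paper gives no argument of its own for this lemma and simply imports it from \citet{shen2025principled} and Clarke's generalized-gradient calculus \citep{clarke1975generalized}. The route behind that citation runs as follows:
\begin{itemize}
\item $h$ is locally Lipschitz near $x$.
\item Clarke's rule for pointwise maxima gives $\partial_C h(x)\subseteq \mathrm{conv}\{\nabla_x\ell(x,y): y\in Y^*(x)\}$.
\item Hypothesis 2) collapses this hull to the singleton $\{G\}$.
\item A locally Lipschitz function whose Clarke generalized gradient at $x$ is a singleton is strictly differentiable there.
\end{itemize}

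You replace this machinery with a direct two-sided estimate. A fixed maximizer at $x$ gives the lower bound, and a maximizer at $x'$ together with the mean value theorem gives the upper bound. You then close with outer semicontinuity of the argmax map plus joint continuity of $\nabla_x\ell$. That is exactly the ingredient hidden inside Clarke's max rule, here used only at the single point $x$.

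The cited route is more general: the max rule still describes $\partial_C h(x)$ when gradients differ across maximizers, and it yields strict differentiability for free. Yours is self-contained and needs hypothesis 2) only at $x$. It also upgrades without new ideas:
\begin{itemize}
\item Running the same upper and lower estimates between two nearby points $x',x''$, both tending to $x$, gives strict differentiability.
\item If 2) holds at every $x$, your subsequence argument also shows that $\nabla h$ is continuous.
\end{itemize}

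The only implicit assumption is that $\mathcal{F}$ is sequentially compact. This is harmless here, since in the paper's applications $\mathcal{F}$ is a compact set of policies in a Euclidean space. For an abstract compact space you would run the same argument with nets instead of sequences.
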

\section{Proofs in Section~\ref{sec: problem-formulation}}
\subsection{Proofs of Proposition~\ref{proposition: existence-and-uniqueness-of-equilibrium-main}}
Consider the regularizers $(h_s)_{s\in \cS}$, where $h_s(y_s,z_s): \Delta_{|\cA|} \times \Delta_{|\cB|} \to \mathbb{R}$. Here, for notational simplicity in the proof, we use $y_s$ and $z_s$ to denote the mixed strategies of the min-player and max-player at state $s$, respectively, and we use $|\cA|$ and $|\cB|$ to denote the dimensions of the action spaces. For each state $s$, $h_s(y_s,z_s)$ is defined as a continuous function that is strongly convex in $y_s$ and strongly concave in $z_s$.

We first define $H_s: \Delta_{|\cA|} \times \Delta_{|\cB|} \mapsto \mathbb{R}$ as follows:
\begin{align}
    H_s(y_s,z_s) = y_s^T Q_s z_s + h_s(y_s,z_s), \notag
\end{align}
where $Q_s \in \mathbb{R}^{|\cA| \times |\cB|}$ is a given matrix. Later, we will see that $Q_s$ is related to the Q-function of the Markov game.

It is easy to verify that $H_s(y_s,z_s)$ is also strongly convex in $y_s$ and strongly concave in $z_s$.
Consider two optimization problems with $H_s(y_s,z_s)$ as the objective function:
\begin{align}    
    \min_{y_s \in \Delta_{|\cA|}} \max_{z_s \in \Delta_{|\cB|}}  H_s(y_s,z_s) \notag, 
\end{align}
and
\begin{align}    
    \max_{z_s \in \Delta_{|\cB|}} \min_{y_s \in \Delta_{|\cA|}}  H_s(y_s,z_s) \notag.
\end{align}

By the strong convexity-concavity of $H_s(y_s,z_s)$, we know that there exists a unique $(y_s^*,z_s^*) \in \Delta_{|\cA|} \times \Delta_{|\cB|}$ such that
\begin{align}
    H_s(y_s^*,z_s^*) = \min_{y_s \in \Delta_{|\cA|}} \max_{z_s \in \Delta_{|\cB|}}  H_s(y_s,z_s) = \max_{z_s \in \Delta_{|\cB|}} \min_{y_s \in \Delta_{|\cA|}}  H_s(y_s,z_s), \notag
\end{align}
which is the unique saddle point of $H_s(y_s,z_s)$~\cite{rockafellar1970convex,ekeland1999convex}.

We next consider the envelope function $h_s^*: \mathbb{R}^{|\cA| \times |\cB|} \to \mathbb{R}$, defined by $h^*_s(Q_s) \bydef \min_{y_s \in \Delta_{|\cA|}}\max_{z_s \in \Delta_{|\cB|}}  \{y_s^T Q_s z_s + h_s(y_s,z_s)\}$. The following results establish several basic properties of $h_s^*$.
\begin{lemma}
    \label{lemma: properties-of-h-star}
    For each state $s \in \cS$, the envelope function $h^*_s$ has the following properties:
    \begin{enumerate}[label=\roman*.]
        \item (Uniqueness of the solution) For any $Q_s \in \mathbb{R}^{|\cA| \times |\cB|}$, there exists a unique $(y_s^*,z_s^*) \in \Delta_{|\cA|} \times \Delta_{|\cB|}$ such that $h^*_s(Q_s) = (y_s^*)^T Q_s z_s^* + h_s(y_s^*,z_s^*) = \min_{y_s \in \Delta_{|\cA|}}\max_{z_s \in \Delta_{|\cB|}} \{y_s^T Q_s z_s + h_s(y_s,z_s)\} = \max_{z_s \in \Delta_{|\cB|}} \min_{y_s \in \Delta_{|\cA|}} \{y_s^T Q_s z_s + h_s(y_s,z_s)\}$.
        \item (Monotonicity) For any $Q_s, Q_s^\prime \in \mathbb{R}^{|\cA| \times |\cB|}$ such that $Q_s \le Q_s^\prime$ (element-wise), we have $h^*_s(Q_s) \le h^*_s(Q_s^\prime)$.
    \end{enumerate}
\end{lemma}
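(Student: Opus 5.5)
Part (i) essentially restates what was argued just before the lemma, so the plan is to make that argument explicit; part (ii) will follow from a pointwise comparison of the two payoff functions.

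\textbf{Part (i).} Fix $s$ and $Q_s$. The function $H_s(y_s,z_s)=y_s^TQ_sz_s+h_s(y_s,z_s)$ is continuous on the compact convex set $\Delta_{|\cA|}\times\Delta_{|\cB|}$. For fixed $z_s$ it is strongly convex in $y_s$, since it is a linear term plus a strongly convex one. For fixed $y_s$ it is strongly concave in $z_s$.
\begin{itemize}
\item \emph{Existence and equality of values.} Sion's minimax theorem, or von Neumann's theorem for convex--concave continuous functions on compact convex sets, gives $\min_{y_s}\max_{z_s}H_s=\max_{z_s}\min_{y_s}H_s$. A saddle point $(y_s^*,z_s^*)$ exists because the outer optima are attained: $y_s\mapsto\max_{z_s}H_s$ is continuous, and so is $z_s\mapsto\min_{y_s}H_s$. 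Any pair of an outer minimizer and an outer maximizer is a saddle point.
\item \emph{Uniqueness of $y_s^*$.} The map $\Phi(y_s)\bydef\max_{z_s}H_s(y_s,z_s)$ is a pointwise maximum of functions that are $\mu$-strongly convex with the same modulus, so $\Phi$ is itself $\mu$-strongly convex. Hence its minimizer $y_s^*$ is unique.
\item \emph{Uniqueness of $z_s^*$.} Symmetrically, $\Psi(z_s)\bydef\min_{y_s}H_s(y_s,z_s)$ is strongly concave, so its maximizer $z_s^*$ is unique.
\item \emph{Uniqueness of the saddle point.} The set of saddle points equals the product of the two optimal sets, which are singletons. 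Therefore the saddle point is unique, and $h_s^*(Q_s)=H_s(y_s^*,z_s^*)$.
\end{itemize}

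\textbf{Part (ii).} Suppose $Q_s\le Q_s'$ entrywise. For every $y_s\in\Delta_{|\cA|}$ and $z_s\in\Delta_{|\cB|}$, all entries of $y_s$ and $z_s$ are nonnegative, so
\[
y_s^TQ_s'z_s-y_s^TQ_sz_s=\sum_{a,b}y_s(a)\,(Q_s'-Q_s)_{ab}\,z_s(b)\ge 0 .
\]
Thus $H_s(\cdot,\cdot;Q_s)\le H_s(\cdot,\cdot;Q_s')$ pointwise on $\Delta_{|\cA|}\times\Delta_{|\cB|}$. Taking $\max_{z_s}$ preserves this inequality, and then taking $\min_{y_s}$ preserves it again. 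This yields $h_s^*(Q_s)\le h_s^*(Q_s')$.

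The only step that needs care is the strong convexity of $\Phi$. It follows because $\Phi(y_s)-\tfrac{\mu}{2}\|y_s\|^2$ is a supremum of convex functions and is therefore convex. Everything else is standard.
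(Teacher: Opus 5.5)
Your proposal is correct and follows the paper's route. Part (ii) is exactly the paper's argument: a pointwise comparison using the nonnegativity of $y_s$ and $z_s$, then taking $\max_{z_s}$, then $\min_{y_s}$. Part (i) is the paper's one-line appeal to strong convexity--concavity, with the standard saddle-point details written out: Sion's theorem, strong convexity of $\max_{z_s}H_s$, and the saddle set as a product of the two optimal sets.

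Your argument implicitly assumes the strong-convexity modulus is uniform in $z_s$, which holds for the entropy regularizer. Even without that assumption, strict convexity of each slice $H_s(\cdot,z_s)$ together with attainment of the inner max already makes $\max_{z_s}H_s$ strictly convex, so uniqueness still goes through.
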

\begin{proof}
    The first property follows from the strong convexity-concavity of $y_s^T Q_s z_s + h_s(y_s,z_s)$ for any $Q_s$.

    Now we prove the monotonicity.
    For any $Q_s, Q_s^\prime \in \mathbb{R}^{|\cA| \times |\cB|}$ such that $Q_s \le Q_s^\prime$, since $y_s \in \Delta_{|\cA|}$ and $z_s \in \Delta_{|\cB|}$ are probability distributions, we have for any $y_s$ and $z_s$,
    \begin{align}
        y_s^T Q_s z_s \le y_s^T Q_s^\prime z_s. \notag
    \end{align}
    Fix $y_s$, for any $z_s \in \Delta_{|\cB|}$, we have
    \begin{align}
        & y_s^T Q_s z_s \le y_s^T Q_s^\prime z_s , \quad \forall z_s \in \Delta_{|\cB|} \notag \\
         \implies &  y_s^T Q_s z_s + h_s(y_s,z_s) \le y_s^T Q_s^\prime z_s + h_s(y_s,z_s) , \quad \forall z_s \in \Delta_{|\cB|} \notag\\
         \implies & \max_{z_s \in \Delta_{|\cB|}} \{ y_s^T Q_s z_s + h_s(y_s,z_s) \} \le \max_{z_s \in \Delta_{|\cB|}} \{ y_s^T Q_s^\prime z_s + h_s(y_s,z_s) \}, 
    \end{align}
    which implies
    \begin{align}
        h^*_s(Q_s) & = \min_{y_s \in \Delta_{|\cA|}} \max_{z_s \in \Delta_{|\cB|}} \{ y_s^T Q_s z_s + h_s(y_s,z_s) \} \notag\\
        & \le \min_{y_s \in \Delta_{|\cA|}} \max_{z_s \in \Delta_{|\cB|}} \{ y_s^T Q_s^\prime z_s + h_s(y_s,z_s) \} = h^*_s(Q_s^\prime).
    \end{align}
    The monotonicity of $h^*_s$ is proved.
\end{proof}

Now we consider the regularized MMZSMG $\cM(x)$ defined in Section~\ref{sec: problem-formulation}. 
Inspired by the proof strategy in~\citet{geist2019theory}, we analyze the properties of MMZSMGs by defining the corresponding Bellman operators. We then establish the contraction property of these Bellman operators to prove the existence and uniqueness of the equilibrium in regularized MMZSMGs.

Since we only focus on the proof of the uniqueness of the equilibrium at LL, for simplicity, we may omit the UL variable $x$ in the notations, i.e., $x$ is fixed in the following proof if not specified otherwise.

\begin{definition}[Min--max Bellman Operator]
    \label{definition: min-max-bellman-operator}
    For any $V \in \mathbb{R}^{|\mathcal{S}|}$, define $Q \in \mathbb{R}^{|\mathcal{S}| \times |\mathcal{A}| \times |\mathcal{B}|}$ as $Q_{sab} = r_x(s,a,b) + \gamma \mathbb{E}_{s^\prime \sim \cP(\cdot|s,a,b)} V(s^\prime)$, where $V(s^\prime)$ is the entry of $V$ corresponding to state $s^\prime$. Let $y \in \Delta^{|\cS|}_{|\cA|}$ and $z \in \Delta^{|\cS|}_{|\cB|}$ be the policies of the two players respectively.

    The min--max Bellman operator for a given policy pair $(y,z)$ is defined as
    $T_{y,z}: V \in \mathbb{R}^{|\mathcal{S}|} \mapsto T_{y,z} V \in \mathbb{R}^{|\mathcal{S}|}$,
    where for each state $s\in \mathcal{S}$,
    $(T_{y,z} V)(s) \bydef y_s^T Q_s z_s.$

    The soft min--max Bellman operator for a given policy pair $(y,z)$ is defined as
    $T_{y,z,h}: V \in \mathbb{R}^{|\mathcal{S}|} \mapsto  T_{y,z,h} V \in \mathbb{R}^{|\mathcal{S}|}$,
    where for each state $s\in \mathcal{S}$,
    $\left(T_{y,z,h} V\right)(s)  \bydef \left(T_{y,z} V\right)(s) + h_s(y_s,z_s).$

    Also define the min--max Bellman optimality operator $ T_{*}: V \in \mathbb{R}^{|\mathcal{S}|} \mapsto T_{*} V \in \R^{|\cS|}$ state-wise as follows: for each state $s\in \mathcal{S}$,
    $\left(T_{*} V\right)(s) \bydef \min_{y_s \in \Delta_{|\cA|}} \max_{z_s \in \Delta_{|\cB|}} \left(T_{y_s,z_s} V\right)(s).$ For simplicity, we can write $T_{*}$ as $T_{*} V = \min_{y \in \Delta_{|\cA|}^{|\cS|}} \max_{z \in \Delta_{|\cB|}^{|\cS|}} T_{y,z} V$, where the min--max is taken over all state-wise policies.

    The soft min--max Bellman optimality operator $T_{*,h}: V \in \mathbb{R}^{|\mathcal{S}|} \mapsto T_{*,h} V \in \mathbb{R}^{|\mathcal{S}|}$ is similarly defined as follows: for each state $s\in \mathcal{S}$, $\left(T_{*,h} V\right)(s) \bydef \min_{y_s \in \Delta_{|\cA|}} \max_{z_s \in \Delta_{|\cB|}} \left(T_{y_s,z_s,h} V\right)(s).$ For simplicity, we can write $T_{*,h}$ as $T_{*,h} V = \min_{y \in \Delta_{|\cA|}^{|\cS|}} \max_{z \in \Delta_{|\cB|}^{|\cS|}} T_{y,z,h} V$, where the min--max is taken over all state-wise policies.
\end{definition}

For each state $s\in \mathcal{S}$, consider the min--max problem $\min_{y_s \in \Delta_{|\cA|}} \max_{z_s \in \Delta_{|\cB|}} \{y_s^T Q_s z_s + h_s(y_s,z_s)\}$, by Lemma~\ref{lemma: properties-of-h-star}, we know that there exists a unique optimal policy pair $(y_s^*, z_s^*)$ for this problem. Therefore, for any $V \in \mathbb{R}^{|\mathcal{S}|}$, there exists a unique policy pair $(y^*, z^*)$ such that $T_{*,h} V = T_{y^*,z^*,h} V$, which means that the soft min--max Bellman optimality operator $T_{*,h}$ can be viewed as a special case of the soft min--max Bellman operator $T_{y,z,h}$ with the policy pair $(y^*, z^*)$.

Then we will show some properties of the min--max Bellman operators.
\begin{lemma}[Properties of min--max Bellman Operators]
    \label{lemma: min-max-bellman-operator-properties}
    The min--max Bellman operators $T_{y,z}$ for any given policy pair $(y,z)$ and the optimality operator $T_{*}$ have the following properties:
    \begin{enumerate}[label=\roman*.]
        \item (Monotonicity) For any $V, V^\prime \in \mathbb{R}^{|\mathcal{S}|}$ such that $V \le V^\prime$, we have $T_{y,z} V \le T_{y,z} V^\prime$ and $T_{*} V \le T_{*} V^\prime$. Here, the inequality is element-wise.
        \item (Contraction) For any $V, V^\prime \in \mathbb{R}^{|\mathcal{S}|}$, we have $\|T_{y,z} V - T_{y,z} V^\prime\|_\infty \le \gamma \| V - V^\prime \|_\infty$ and $\|T_{*} V - T_{*} V^\prime\|_\infty \le \gamma \| V - V^\prime \|_\infty$.
        \item (Distributivity) For any constant $c \in \mathbb{R}$, we have $T_{y,z} (V + c \mathbf{1}) = T_{y,z} V + \gamma c \mathbf{1}$ and $T_{*} (V + c \mathbf{1}) = T_{*} V + \gamma c \mathbf{1}$, where $\mathbf{1} \in \mathbb{R}^{|\mathcal{S}|}$ is a vector with all elements equal to 1.
    \end{enumerate}
\end{lemma}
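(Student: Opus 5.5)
The three properties follow directly from the definition $(T_{y,z}V)(s)=y_s^TQ_sz_s$ with $Q_{sab}=r_x(s,a,b)+\gamma\mathbb{E}_{s'\sim\cP(\cdot|s,a,b)}V(s')$. We treat $T_{y,z}$ first and then lift each property to $T_*$ by taking the state-wise min--max. The key observation is that $V\mapsto Q$ is affine. Write $Q(V)$ for the induced matrix. If $V\le V'$, then $Q(V)\le Q(V')$ element-wise, because the transition kernel is a nonnegative measure. Also $Q(V+c\mathbf 1)=Q(V)+\gamma c\,\mathbf{1}\mathbf{1}^T$ at each state, since $\cP(\cdot|s,a,b)$ sums to one. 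Finally, $\|Q(V)-Q(V')\|_{\max}\le\gamma\|V-V'\|_\infty$.

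\emph{Monotonicity.} For fixed $(y,z)$, the entries of $y_s$ and $z_s$ are nonnegative, so $Q_s\le Q_s'$ gives $y_s^TQ_sz_s\le y_s^TQ_s'z_s$. For $T_*$, we reuse the argument of Lemma~\ref{lemma: properties-of-h-star}(ii) with $h_s\equiv 0$. The pointwise inequality survives the inner max over $z_s$ and then the outer min over $y_s$. Both extrema are attained, since the simplices are compact and the objective is continuous.

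\emph{Distributivity.} Because $y_s^T\mathbf{1}\mathbf{1}^Tz_s=1$ for probability vectors, $(T_{y,z}(V+c\mathbf 1))(s)=y_s^TQ_sz_s+\gamma c$. For $T_*$, the constant $\gamma c$ is independent of $(y_s,z_s)$, so it passes through both the max and the min.

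\emph{Contraction.} For $T_{y,z}$, we have $|y_s^T(Q_s-Q_s')z_s|\le\max_{a,b}|Q_{sab}-Q'_{sab}|\le\gamma\|V-V'\|_\infty$, since $y_s$ and $z_s$ are distributions. For $T_*$, one clean route combines the first two properties. Set $c=\|V-V'\|_\infty$, so that $V'-c\mathbf 1\le V\le V'+c\mathbf 1$. Monotonicity and distributivity then give $T_*V'-\gamma c\mathbf 1\le T_*V\le T_*V'+\gamma c\mathbf 1$, which is the claim. An alternative is the elementary inequality $|\min\max A-\min\max B|\le\sup|A-B|$. This step is the only one needing care: the min--max is not linear, so one has to argue through order preservation rather than through a direct difference. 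Either route handles it, and no step should pose a real obstacle.
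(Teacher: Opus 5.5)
Your proposal is correct and follows the paper's proof for monotonicity, distributivity, and the contraction of $T_{y,z}$. There the paper writes $T_{y,z}V=r_{y,z}+\gamma\cP_{y,z}V$ with $\cP_{y,z}$ row-stochastic, which is equivalent to your entrywise bound on $Q$. For the contraction of $T_*$, the paper uses what you list as your alternative: it bounds $\|T_*V-T_*V'\|_\infty$ directly by $\max_{y,z,s}|(T_{y,z}V)(s)-(T_{y,z}V')(s)|$, relying on the nonexpansiveness of $\min$ and $\max$, which it uses without comment. Your primary Blackwell-style route instead derives the contraction from parts i and iii. Both are valid, and yours has the mild advantage of being self-contained once those two properties are established.
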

\begin{proof}
    We can write $T_{y,z} V$ as
    \begin{align}
        T_{y,z} V = r_{y,z} + \gamma \cP_{y,z} V,\notag
    \end{align}
    where $r_{y,z} \in \mathbb{R}^{|\mathcal{S}|}$ is defined as $r_{y,z}(s) = \sum_{a\in \mathcal{A}} \sum_{b\in \mathcal{B}} y_s(a) z_s(b) r(s,a,b)$ and $\cP_{y,z} \in \mathbb{R}^{|\mathcal{S}| \times |\mathcal{S}|}$ is defined as $\cP_{y,z}(s,s^\prime) = \sum_{a\in \mathcal{A}} \sum_{b\in \mathcal{B}} y_s(a) z_s(b) \cP(s^\prime|s,a,b)$.
    Now for any $V, V^\prime \in \mathbb{R}^{|\mathcal{S}|}$ such that $V \le V^\prime$, we have 
    \begin{align}
        T_{y,z} V - T_{y,z} V^\prime= \gamma \cP_{y,z} (V - V^\prime) \le 0,
    \end{align}
    which proves the monotonicity of $T_{y,z}$.
    Similarly, by the definition of $T_{*}$ in Definition~\ref{definition: min-max-bellman-operator} we have
    \begin{align}
        V \le V^\prime & \implies \cP_{y,z} V \le \cP_{y,z} V^\prime,   \notag\\
        & \implies Q_s \le Q_s^\prime \; , \quad \forall s\in \mathcal{S} \notag\\
        & \implies y_s^T Q_s z_s \le y_s^T Q_s^\prime z_s, \quad \forall y_s \in \Delta_{|\cA|}, z_s \in \Delta_{|\cB|}, \forall s\in \mathcal{S} \notag\\
        & \implies \max_{z_s} y_s^T Q_s z_s \le \max_{z_s} y_s^T Q_s^\prime z_s, \quad \forall y_s \in \Delta_{|\cA|}, \forall s\in \mathcal{S} \notag\\
        & \implies \min_{y_s} \max_{z_s} y_s^T Q_s z_s \le \min_{y_s} \max_{z_s} y_s^T Q_s^\prime z_s, \quad \forall s\in \mathcal{S} \notag\\
        & \iff T_{*} V \le T_{*} V^\prime,
    \end{align}
    where the first implication follows from the fact that $\cP_{y,z}$ is a stochastic matrix, the third implication follows from the fact that $y_s$ and $z_s$ are probability distributions.

    Next we prove the contraction property.
    For any $V, V^\prime \in \mathbb{R}^{|\mathcal{S}|}$, we have
    \begin{align}
        \|T_{y,z} V - T_{y,z} V^\prime\|_\infty & = \| \gamma \cP_{y,z} (V - V^\prime) \|_\infty \notag\\
        & \le \gamma \| V - V^\prime \|_\infty.
    \end{align}
    Since this holds for any given policy pair $(y,z)$, we know that 
    \begin{align}
        \max_{y,z} \|T_{y,z} V - T_{y,z} V^\prime\|_\infty & \le \gamma \| V - V^\prime \|_\infty.
    \end{align}

    Then we have for any $V, V^\prime \in \mathbb{R}^{|\mathcal{S}|}$,
    \begin{align}
        \|T_{*} V - T_{*} V^\prime\|_\infty & = \| \min_{y} \max_{z} T_{y,z} V - \min_{y} \max_{z} T_{y,z} V^\prime \|_\infty \notag\\
        & \le \max_{y,z,s} | (T_{y,z} V)(s) - (T_{y,z} V^\prime)(s) |\notag \\
        & = \max_{y,z} \| T_{y,z} V - T_{y,z} V^\prime \|_\infty\notag \\
        & \le \gamma \| V - V^\prime \|_\infty.
    \end{align}

    Now we prove the distributivity property.
    The distributivity of $T_{y,z}$ is straightforward:
    \begin{align}
        T_{y,z} (V + c \mathbf{1}) & = r_{y,z} + \gamma \cP_{y,z} (V + c \mathbf{1})\notag \\
        & = r_{y,z} + \gamma \cP_{y,z} V + \gamma c \cP_{y,z} \mathbf{1} \notag\\
        & = T_{y,z} V + \gamma c \mathbf{1}.
    \end{align}
    Similarly, we have
    \begin{align}
        T_{*} (V + c \mathbf{1}) & = \min_{y} \max_{z} T_{y,z} (V + c \mathbf{1})\notag \\
        & = \min_{y} \max_{z} \{ T_{y,z} V + \gamma c \mathbf{1} \} \notag\\
        & = \min_{y} \max_{z} T_{y,z} V + \gamma c \mathbf{1}\notag \\
        & = T_{*} V + \gamma c \mathbf{1}.
    \end{align}
\end{proof}

Now we will also show some properties of the soft min--max Bellman operators.
\begin{lemma}[Properties of Soft Min--Max Bellman Operators]
    \label{lemma: reg-min-max-bellman-operator-properties}
    The soft min--max Bellman operators $T_{y,z,h}$ for any given $(y,z)$ and the soft optimality operator $T_{*,h}$ have the following properties:
    \begin{enumerate}[label=\roman*.]
        \item (Monotonicity) For any $V, V^\prime \in \mathbb{R}^{|\mathcal{S}|}$ such that $V \le V^\prime$ (element-wise), we have $T_{y,z,h} V \le T_{y,z,h} V^\prime$ and $T_{*,h} V \le T_{*,h} V^\prime$.
        \item (Contraction) For any $V, V^\prime \in \mathbb{R}^{|\mathcal{S}|}$, we have $\|T_{y,z,h} V - T_{y,z,h} V^\prime\|_\infty \le \gamma \| V - V^\prime \|_\infty$ and $\|T_{*,h} V - T_{*,h} V^\prime\|_\infty \le \gamma \| V - V^\prime \|_\infty$, where $\|\cdot\|_\infty$ takes the maximum absolute value among all elements.
        \item (Distributivity) For any constant $c \in \mathbb{R}$, we have $T_{y,z,h} (V + c \mathbf{1}) = T_{y,z,h} V + \gamma c \mathbf{1}$ and $T_{*,h} (V + c \mathbf{1}) = T_{*,h} V + \gamma c \mathbf{1}$, where $\mathbf{1} \in \mathbb{R}^{|\mathcal{S}|}$ is a vector with all elements equal to 1.
    \end{enumerate}
\end{lemma}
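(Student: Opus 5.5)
The plan is to reduce everything to the already-established properties of the unregularized operators in Lemma~\ref{lemma: min-max-bellman-operator-properties} and of the envelope function $h_s^*$ in Lemma~\ref{lemma: properties-of-h-star}. For a fixed pair $(y,z)$ the key observation is that
\begin{align}
T_{y,z,h}V = T_{y,z}V + h_{y,z}, \notag
\end{align}
where $h_{y,z}(s) \bydef h_s(y_s,z_s)$ does not depend on $V$. Consequently $T_{y,z,h}V - T_{y,z,h}V' = T_{y,z}V - T_{y,z}V'$, and all three properties of $T_{y,z,h}$ follow from the corresponding properties of $T_{y,z}$.
\begin{itemize}
\item Monotonicity and contraction transfer verbatim, since they depend only on this difference.
\item Distributivity follows from $T_{y,z,h}(V+c\mathbf 1)=T_{y,z}V+\gamma c\mathbf 1+h_{y,z}$.
\end{itemize}

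For the optimality operator I will use the identity $(T_{*,h}V)(s)=h_s^*(Q_s^V)$, where $Q_s^V$ denotes the matrix built from $V$ as in Definition~\ref{definition: min-max-bellman-operator}; the min and max are attained by Lemma~\ref{lemma: properties-of-h-star}.

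\emph{Monotonicity.} If $V\le V'$, then $Q^V_s\le Q^{V'}_s$ entrywise, because the transition probabilities are nonnegative. Monotonicity of $h_s^*$ (Lemma~\ref{lemma: properties-of-h-star}(ii)) then gives $T_{*,h}V\le T_{*,h}V'$.

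\emph{Distributivity.} Adding $c\mathbf 1$ to $V$ shifts every entry of $Q_s$ by $\gamma c$. Because $y_s$ and $z_s$ are probability vectors, $y_s^T(Q_s+\gamma c\mathbf 1\mathbf 1^T)z_s=y_s^TQ_sz_s+\gamma c$. The constant passes through both the max and the min, so $h_s^*(Q_s+\gamma c \mathbf 1\mathbf 1^T)=h_s^*(Q_s)+\gamma c$.

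\emph{Contraction.} I will derive this from the two properties just proved rather than by manipulating min--max differences directly. Let $\delta=\|V-V'\|_\infty$, so that $V'-\delta\mathbf 1\le V\le V'+\delta\mathbf 1$. Applying monotonicity and then distributivity gives
\begin{align}
T_{*,h}V'-\gamma\delta\mathbf 1\le T_{*,h}V\le T_{*,h}V'+\gamma\delta\mathbf 1, \notag
\end{align}
which is exactly $\|T_{*,h}V-T_{*,h}V'\|_\infty\le\gamma\delta$. The same sandwich argument offers an alternative route for $T_{y,z,h}$.

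The only delicate point is that $T_{*,h}$ is well defined, i.e.\ that the min--max is attained and equals the max--min even though $h_s$ enters nonlinearly. This is settled by Lemma~\ref{lemma: properties-of-h-star}(i), using compactness of the simplices and strong convexity--concavity, so no real obstacle remains. In particular, the sandwich argument avoids needing any Lipschitz control of $h_s$ itself.
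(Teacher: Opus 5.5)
Your proposal is correct. For $T_{y,z,h}$, and for monotonicity and distributivity of $T_{*,h}$, it matches the paper's argument. The regularizer cancels in $T_{y,z,h}V-T_{y,z,h}V'$, and monotonicity of $T_{*,h}$ uses the same chain $Q_s\le Q_s'\Rightarrow$ (max over $z_s$) $\Rightarrow$ (min over $y_s$) that the paper writes out inline, i.e.\ Lemma~\ref{lemma: properties-of-h-star}(ii).

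The one real difference is the contraction of $T_{*,h}$. The paper bounds it directly with the state-wise nonexpansiveness of min/max, $|\min_{y_s}\max_{z_s}a-\min_{y_s}\max_{z_s}b|\le\sup_{y_s,z_s}|a-b|$. It combines this with the already-proved contraction of $T_{y,z,h}$, which holds uniformly over $(y,z)$. You instead derive contraction from (i) and (iii) via the Blackwell-type sandwich $V'-\delta\mathbf 1\le V\le V'+\delta\mathbf 1$.

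Your route is more structural: any monotone operator satisfying $T(V+c\mathbf 1)=TV+\gamma c\mathbf 1$ is a $\gamma$-contraction in $\|\cdot\|_\infty$, and no min/max inequality is needed. The cost is that (ii) now depends logically on (i) and (iii). The paper's route proves each property independently and is just as short.

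Two small remarks. First, avoiding Lipschitz control of $h_s$ is not an advantage over the paper, because there $h_s(y_s,z_s)$ also cancels for each common $(y_s,z_s)$. Second, the min--max $=$ max--min equality from Lemma~\ref{lemma: properties-of-h-star}(i) is never used in this lemma. Only attainment of the min--max is needed, so that $T_{*,h}$ is well defined as written.
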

\begin{proof}
    For any $V, V^\prime \in \mathbb{R}^{|\mathcal{S}|}$ such that $V \le V^\prime$, by the monotonicity of $T_{y,z}$ in Lemma~\ref{lemma: min-max-bellman-operator-properties}, we obtain
    \begin{align}
        T_{y,z,h} V - T_{y,z,h} V^\prime & = T_{y,z} V + h(y,z) - (T_{y,z} V^\prime + h(y,z)) \notag\\
        & = T_{y,z} V - T_{y,z} V^\prime \le 0,
    \end{align}
    where $h(y,z) = \left(h_s(y,z)\right)_{s\in\cS} \in \R^{|\cS|}$ is the vector of regularization terms for all states. This proves the monotonicity of $T_{y,z,h}$.

    Similarly, since $T_{*,h} V = \min_{y \in \Delta_{|\cA|}^{|\mathcal{S}|}} \max_{z \in \Delta_{|\cB|}^{|\mathcal{S}|}} T_{y,z,h} V$, we can obtain
    \begin{align}
         V \le V^\prime 
        & \implies Q_s \le Q_s^\prime , \quad \forall s\in \mathcal{S}\notag \\
        & \implies y_s^T Q_s z_s \ +h_s(y_s,z_s) \le y_s^T Q_s^\prime z_s \ +h_s(y_s,z_s) , \quad \forall y_s \in \Delta_{|\cA|}, z_s \in \Delta_{|\cB|}, \forall s\in \mathcal{S}\notag \\
        & \implies \max_{z_s} \{ y_s^T Q_s z_s \ +h_s(y_s,z_s) \} \le \max_{z_s} \{ y_s^T Q_s^\prime z_s \ +h_s(y_s,z_s) \}, \quad \forall y_s \in \Delta_{|\cA|}, \forall s\in \mathcal{S}\notag \\   
        & \implies \min_{y_s} \max_{z_s} \{ y_s^T Q_s z_s \ +h_s(y_s,z_s) \} \le \min_{y_s} \max_{z_s} \{ y_s^T Q_s^\prime z_s \ +h_s(y_s,z_s) \} , \quad \forall s\in \mathcal{S} \notag\\
        & \iff T_{*,h} V \le T_{*,h} V^\prime.
    \end{align}
    This proves the monotonicity of $T_{*,h}$. 
    Next we prove the contraction property.
    For any $V, V^\prime \in \mathbb{R}^{|\mathcal{S}|}$, we have
    \begin{align}
        \|T_{y,z,h} V - T_{y,z,h} V^\prime\|_\infty & = \| T_{y,z} V - T_{y,z} V^\prime \|_\infty  \le \gamma \| V - V^\prime \|_\infty.
    \end{align}
    Similarly, we can get
    \begin{align}
        \|T_{*,h} V - T_{*,h} V^\prime\|_\infty & = \| \min_{y} \max_{z} T_{y,z,h} V - \min_{y} \max_{z} T_{y,z,h} V^\prime \|_\infty \notag\\
        & \le \max_{y,z,s} | (T_{y,z,h} V)(s) - (T_{y,z,h} V^\prime)(s) |\notag \\
        & = \max_{y,z} \| T_{y,z,h} V - T_{y,z,h} V^\prime \|_\infty \notag\\
        & \le \gamma \| V - V^\prime \|_\infty.
    \end{align}
    Thus, $T_{y,z,h}$ and $T_{*,h}$ are both $\gamma$-contractions.

    Now we prove the distributivity property.
    The distributivity of $T_{y,z,h}$ is straightforward:
    \begin{align}
        T_{y,z,h} (V + c \mathbf{1}) & = T_{y,z} (V + c \mathbf{1}) + h(y,z) \notag\\
        & = T_{y,z} V + \gamma c \mathbf{1} + h(y,z)\notag \\
        & = T_{y,z,h} V + \gamma c \mathbf{1}.
    \end{align}
    Similarly, we can obtain the distributivity of $T_{*,h}$ as follows:
    \begin{align}
        T_{*,h} (V + c \mathbf{1}) & = \min_{y} \max_{z} T_{y,z,h} (V + c \mathbf{1})\notag \\
        & = \min_{y} \max_{z} \{ T_{y,z,h} V + \gamma c \mathbf{1} \}\notag \\
        & = \min_{y} \max_{z} T_{y,z,h} V + \gamma c \mathbf{1}\notag \\
        & = T_{*,h} V + \gamma c \mathbf{1}.
    \end{align}
\end{proof}

By the contraction property in Lemma~\ref{lemma: reg-min-max-bellman-operator-properties}, we know that the Bellman operators defined in Definition~\ref{definition: min-max-bellman-operator} all have unique fixed points.
\begin{definition}[On-Policy and Optimal Value Functions]
    \label{definition: bellman-fixed-point-value-functions}
    We have the following definitions:
    \begin{enumerate}
        \item For any policy pair $(y, z) \in \Delta_{|\cA|}^{|\mathcal{S}|} \times \Delta_{|\cB|}^{|\mathcal{S}|}$, $V^{y,z}_h \in \mathbb{R}^{|\mathcal{S}|}$ is defined as the unique fixed point of the soft min--max Bellman operator $T_{y,z,h}$, i.e., $V^{y,z}_h = T_{y,z,h} V^{y,z}_h$.
        \item $V^{*}_h \in \mathbb{R}^{|\mathcal{S}|}$ is defined as the unique fixed point of the soft min--max Bellman optimality operator $T_{*,h}$, i.e., $V^{*}_h = T_{*,h} V^{*}_h$.
    \end{enumerate}
\end{definition}
Then we can prove the following lemma.
\begin{lemma}[Unique Equilibrium]
    \label{lemma: unique-optimal-solution-min-max-mdp}
    There exists a unique $(y_h^*, z_h^*) \in \Delta_{|\cA|}^{|\mathcal{S}|} \times \Delta_{|\cB|}^{|\mathcal{S}|}$ such that $V^{y_h^*,z_h^*}_h = V^{*}_h$, and for any $y \in \Delta_{|\cA|}^{|\mathcal{S}|}$ and $z \in \Delta_{|\cB|}^{|\mathcal{S}|}$, 
    \begin{align}
        V^{y_h^*,z}_h \le V^{*}_h \le V^{y,z_h^*}_h,\notag
    \end{align}
    where the inequalities are element-wise.
\end{lemma}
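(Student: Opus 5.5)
We construct $(y_h^*,z_h^*)$ as the greedy saddle policy at the fixed point $V^*_h$, then use monotonicity and contraction of the soft Bellman operators to get the saddle inequalities. Uniqueness will follow from the uniqueness in Lemma~\ref{lemma: properties-of-h-star}.

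\emph{Construction and first equality.} Let $Q^*_s$ be the matrix built from $V^*_h$ as in Definition~\ref{definition: min-max-bellman-operator}. By Lemma~\ref{lemma: properties-of-h-star}(i), for each $s$ there is a unique saddle point $(y^*_{h,s},z^*_{h,s})$ of $y_s^TQ^*_sz_s+h_s(y_s,z_s)$. Stacking these gives $(y_h^*,z_h^*)$ with $T_{y_h^*,z_h^*,h}V^*_h=T_{*,h}V^*_h=V^*_h$. Since $T_{y_h^*,z_h^*,h}$ is a $\gamma$-contraction (Lemma~\ref{lemma: reg-min-max-bellman-operator-properties}), its fixed point is unique. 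Hence $V^{y_h^*,z_h^*}_h=V^*_h$.

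\emph{Saddle inequalities.} Fix any $z$. For each $s$, the saddle property at state $s$ gives $(T_{y_h^*,z,h}V^*_h)(s)\le (T_{y_h^*,z_h^*,h}V^*_h)(s)=V^*_h(s)$, so $T_{y_h^*,z,h}V^*_h\le V^*_h$. Applying the monotone operator $T_{y_h^*,z,h}$ repeatedly yields $T^n_{y_h^*,z,h}V^*_h\le V^*_h$ for all $n$. By contraction, $T^n_{y_h^*,z,h}V^*_h\to V^{y_h^*,z}_h$, so $V^{y_h^*,z}_h\le V^*_h$. Symmetrically, for any $y$ we have $T_{y,z_h^*,h}V^*_h\ge V^*_h$, and iterating gives $V^{y,z_h^*}_h\ge V^*_h$.

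\emph{Uniqueness.} Suppose $(\bar y,\bar z)$ also satisfies $V^{\bar y,\bar z}_h=V^*_h$ together with the saddle inequalities. Then at each $s$, $V^*_h(s)=(T_{\bar y,\bar z,h}V^*_h)(s)$. We need $(\bar y_s,\bar z_s)$ to be a saddle point of the one-shot game with matrix $Q^*_s$, i.e., $(T_{\bar y,z,h}V^*_h)(s)\le V^*_h(s)$ for all $z$, and the symmetric condition for $y$.

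This is the main obstacle: the hypothesis gives only value-level saddle inequalities, and we must lift them to per-state one-step inequalities. We would argue by contradiction. Suppose some $z_s'$ gives $(T_{\bar y,z',h}V^*_h)(s)>V^*_h(s)$. Define $z'$ to agree with $\bar z$ except at $s$. Then $T_{\bar y,z',h}V^*_h\ge V^*_h$ with strict inequality at $s$. Iterating with monotonicity, and using that $\cP_{\bar y,z'}$ is nonnegative so the positive increment at $s$ is never cancelled, gives $V^{\bar y,z'}_h\ge T_{\bar y,z',h}V^*_h$. The strict inequality at $s$ then yields $V^{\bar y,z'}_h(s)>V^*_h(s)$, contradicting the saddle inequality for $(\bar y,\bar z)$. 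The same argument applies to the min-player. Hence $(\bar y_s,\bar z_s)$ is the one-shot saddle point at every $s$, and by Lemma~\ref{lemma: properties-of-h-star}(i) it equals $(y^*_{h,s},z^*_{h,s})$.

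If the intended reading of uniqueness is merely that the greedy pair at $V^*_h$ is unique, this step reduces to Lemma~\ref{lemma: properties-of-h-star}(i) directly.
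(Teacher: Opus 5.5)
Your proposal is correct. The existence part and the saddle inequalities follow the paper's proof essentially line by line: you take the state-wise saddle pair at the fixed point $V^*_h$, identify $V^{y_h^*,z_h^*}_h=V^*_h$ through uniqueness of the fixed point of the contraction $T_{y_h^*,z_h^*,h}$, and then combine the per-state saddle property with monotone iteration and a limit.

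You diverge from the paper on uniqueness, and your version is more complete. The paper only notes that, given $V^*_h$, each state-wise regularized game has a unique saddle point. That shows the greedy pair at $V^*_h$ is unique. It does not rule out some other pair $(\bar y,\bar z)$ that satisfies both conditions of the lemma.

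That gap needs an argument, because the value equality alone does not determine the pair. For example, take a single state with $r\equiv 0$ and entropy regularization with $\tau_\phi=\tau_\psi$. Then every pair with $y=z$ gives $V^{y,z}_h=0=V^*_h$.

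Your single-state deviation argument supplies the missing step. From $T_{\bar y,\bar z,h}V^*_h=V^*_h$, a profitable one-shot deviation $z'_s$ at one state gives $T_{\bar y,z',h}V^*_h\ge V^*_h$, with equality off $s$ and strict inequality at $s$. Monotonicity then yields $V^{\bar y,z'}_h\ge T_{\bar y,z',h}V^*_h$, which contradicts $V^{\bar y,z'}_h\le V^*_h$. Hence $(\bar y_s,\bar z_s)$ is the one-shot saddle point at every state, and Lemma~\ref{lemma: properties-of-h-star}(i) finishes the proof. So your route proves the intended uniqueness statement, while the paper's closing sentence only proves the weaker reading you mention at the end.

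One small simplification: you do not need the nonnegativity of $\cP_{\bar y,z'}$. From $TV\ge V$, monotonicity alone gives $T^nV\ge TV$ for all $n\ge 1$.
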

\begin{proof}
    From Definition~\ref{definition: bellman-fixed-point-value-functions},
    we know that $V_h^*$ is the unique fixed point of $T_{*,h}$, i.e.,
    \begin{align}
        V_h^* = T_{*,h}V_h^* .\notag
    \end{align}

    By Definition~\ref{definition: min-max-bellman-operator} of $T_{*,h}$
    and Lemma~\ref{lemma: properties-of-h-star}, for the fixed value vector
    $V_h^*$, there exists a unique policy pair $(y_h^*,z_h^*)$ such that
    \begin{align}
        T_{*,h}V_h^*
        =
        T_{y_h^*,z_h^*,h}V_h^* .\notag
    \end{align}
    Therefore,
    \begin{align}
        V_h^*
        =
        T_{*,h}V_h^*
        =
        T_{y_h^*,z_h^*,h}V_h^* .
    \end{align}
    Hence $V_h^*$ is a fixed point of $T_{y_h^*,z_h^*,h}$. Since
    $T_{y_h^*,z_h^*,h}$ is a contraction, its fixed point is unique, and is defined as $V^{y_h^*,z_h^*}_h$. Thus, we obtain
    \begin{align}
        V_h^{y_h^*,z_h^*}
        =
        V_h^* .
    \end{align}

    Next, we prove the saddle inequalities. For any
    $z\in\Delta_{|\cB|}^{|\cS|}$, since $(y_h^*,z_h^*)$ is the unique
    saddle point of the statewise regularized problem $\min_{y} \max_{z} T_{y,z,h} V_h^*$,
    we have, for every state $s\in\cS$,
    \begin{align}
        \left(T_{y_h^*,z,h}V_h^*\right)(s)
        \le
        \left(T_{y_h^*,z_h^*,h}V_h^*\right)(s).\notag
    \end{align}
    Thus, element-wise,
    \begin{align}
        T_{y_h^*,z,h}V_h^*
        \le
        T_{y_h^*,z_h^*,h}V_h^*
        =
        V_h^* .\notag
    \end{align}

    Since $T_{y_h^*,z,h}$ is monotone by Lemma~\ref{lemma: reg-min-max-bellman-operator-properties}, applying it repeatedly gives
    \begin{align}
        T_{y_h^*,z,h}^{(2)}V_h^*
        \le
        T_{y_h^*,z,h}V_h^*
        \le
        V_h^* .
    \end{align}
    By induction, for any $k\ge 1$,
    \begin{align}
        T_{y_h^*,z,h}^{(k)}V_h^*
        \le
        V_h^* .\notag
    \end{align}
    Taking $k\to\infty$ and using the contraction property of
    $T_{y_h^*,z,h}$, we obtain
    \begin{align}
        V_h^{y_h^*,z}
        =
        \lim_{k\to\infty}
        T_{y_h^*,z,h}^{(k)}V_h^*
        \le
        V_h^* .
    \end{align}
    This proves the left inequality.

    Similarly, for any $y\in\Delta_{|\cA|}^{|\cS|}$, since
    $(y_h^*,z_h^*)$ is the statewise saddle point, we have
    \begin{align}
        \left(T_{y_h^*,z_h^*,h}V_h^*\right)(s)
        \le
        \left(T_{y,z_h^*,h}V_h^*\right)(s),
        \qquad \forall s\in\cS .\notag
    \end{align}
    Hence,
    \begin{align}
        V_h^*
        =
        T_{y_h^*,z_h^*,h}V_h^*
        \le
        T_{y,z_h^*,h}V_h^* .\notag
    \end{align}
    By monotonicity of $T_{y,z_h^*,h}$,
    \begin{align}
        V_h^*
        \le
        T_{y,z_h^*,h}V_h^*
        \le
        T_{y,z_h^*,h}^{(2)}V_h^*
        \le
        \cdots
        \le
        T_{y,z_h^*,h}^{(k)}V_h^* . \notag
    \end{align}
    Taking $k\to\infty$ and using the contraction property of
    $T_{y,z_h^*,h}$ gives
    \begin{align}
        V_h^*
        \le
        \lim_{k\to\infty}
        T_{y,z_h^*,h}^{(k)}V_h^*
        =
        V_h^{y,z_h^*}.
    \end{align}
    Therefore,
    \begin{align}
        V_h^{y_h^*,z}
        \le
        V_h^*
        =
        V_h^{y_h^*,z_h^*}
        \le
        V_h^{y,z_h^*}. 
    \end{align}

Finally, uniqueness follows from the uniqueness of $V_h^*$. Given this
fixed point $V_h^*$, the state-wise regularized min--max problem admits a
unique saddle point at each state due to the strong convexity--concavity of
the regularized objective. Therefore, the policy pair $(y_h^*, z_h^*)$ is
unique.
\end{proof}

Based on these results, we can prove Proposition~\ref{proposition: existence-and-uniqueness-of-equilibrium-main} about the existence and uniqueness of the equilibrium in the regularized two-player zero-sum Markov game.

\begin{proposition}[Proposition~\ref{proposition: existence-and-uniqueness-of-equilibrium-main} in Section~\ref{sec: algorithm-design}]
    \label{proposition: existence-and-uniqueness-of-equlibrium}
    For any given $x$, there exists a unique $(\pi_\phi^*(x), \pi_\psi^*(x))$ such that for any $\phi$ and $\psi$,
    \begin{align}
        V^{\pi_\phi^*(x),\pi_\psi}_{\cM(x)}(\rho) \leq  V^{\pi_\phi^*(x),\pi_\psi^*(x)}_{\cM(x)}(\rho)\leq V^{\pi_\phi,\pi_\psi^*(x)}_{\cM(x)}(\rho).\notag
    \end{align}
    Moreover, we have
    \begin{align}
        V^{\pi_\phi^*(x),\pi_\psi^*(x)}_{\cM(x)}(\rho) & = \min_{\phi} \max_{\psi}  V^{\pi_\phi,\pi_\psi}_{\cM(x)}(\rho)  = \max_{\psi} \min_{\phi} V^{\pi_\phi,\pi_\psi}_{\cM(x)}(\rho) .\notag
    \end{align}
\end{proposition}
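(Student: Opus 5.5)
The plan is to transfer Lemma~\ref{lemma: unique-optimal-solution-min-max-mdp}, which is stated in the space of statewise mixed policies, to the parameterized policies, and then average over $\rho$. First I would identify $V^{\pi_\phi,\pi_\psi}_{\cM(x)}$ in \eqref{eq: definition-state-value-function-main} with $V^{y,z}_h$ from Definition~\ref{definition: bellman-fixed-point-value-functions}, taking $y_s=\pi_\phi(\cdot|s)$ and $z_s=\pi_\psi(\cdot|s)$. This holds because the discounted regularized return satisfies the one-step Bellman recursion $V=T_{y,z,h}V$, and $T_{y,z,h}$ has a unique fixed point by the contraction property in Lemma~\ref{lemma: reg-min-max-bellman-operator-properties}. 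The paper assumes the parameterization can represent every statewise mixed policy. So I would set $\pi^*_\phi(x)\bydef y_h^*$ and $\pi^*_\psi(x)\bydef z_h^*$; these are realizable by some parameters.

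Next come the saddle inequalities. Lemma~\ref{lemma: unique-optimal-solution-min-max-mdp} gives $V^{y_h^*,z}_h\le V^*_h\le V^{y,z_h^*}_h$ elementwise for all $y,z$. Because $\rho\ge 0$, taking the expectation over $s\sim\rho$ preserves these inequalities, which yields the displayed chain for every $\phi,\psi$.

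The minimax equalities then follow from the standard saddle-point argument. Write $V^*(\rho)=V^{y_h^*,z_h^*}(\rho)$. We have
\[
\min_\phi\max_\psi V\le \max_\psi V^{\pi^*_\phi,\pi_\psi}(\rho)=V^*(\rho)=\min_\phi V^{\pi_\phi,\pi^*_\psi}(\rho)\le \max_\psi\min_\phi V,
\]
where the middle equalities are exactly the two saddle inequalities, attained at $\psi^*$ and $\phi^*$ respectively. Combining this with weak duality, $\max\min\le\min\max$, forces all of these quantities to be equal.

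The main obstacle is uniqueness at the level of $\rho$-averaged values, since the saddle inequality is only scalar. Suppose $(\bar y,\bar z)$ also satisfies the $\rho$-averaged saddle inequalities. I would first show that $\bar z$ is a best response to $\bar y$ in every state. The key step is the performance-difference identity: if $T_{\bar y,z,h}V^{\bar y,\bar z}>V^{\bar y,\bar z}$ at some $s$ for some $z$, then $V^{\bar y,z}-V^{\bar y,\bar z}=(I-\gamma P_{\bar y,z})^{-1}(T_{\bar y,z,h}V^{\bar y,\bar z}-V^{\bar y,\bar z})$. This difference is nonnegative when $z$ is chosen as the statewise greedy improvement, and it is strictly positive at $s$. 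Since $\rho$ has full support, this contradicts optimality in $\rho$-average. Hence $V^{\bar y,\bar z}$ is statewise a fixed point of the maximization of $T_{\bar y,z,h}$, and symmetrically of the minimization of $T_{y,\bar z,h}$. It follows that $(\bar y_s,\bar z_s)$ is a saddle point of the statewise problem $H_s$ with $Q$ built from $V^{\bar y,\bar z}$. Consequently $V^{\bar y,\bar z}=T_{*,h}V^{\bar y,\bar z}$, so $V^{\bar y,\bar z}=V^*_h$ by uniqueness of the fixed point. Finally, Lemma~\ref{lemma: properties-of-h-star} says each statewise saddle point is unique, which gives $(\bar y,\bar z)=(y_h^*,z_h^*)$.
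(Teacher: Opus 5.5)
Your proposal is correct. For existence and the minimax equalities it is the paper's argument: identify $V^{\pi_\phi,\pi_\psi}_{\cM(x)}$ with the unique fixed point of $T_{\pi_\phi,\pi_\psi,h}$, take $(y_h^*,z_h^*)$ from Lemma~\ref{lemma: unique-optimal-solution-min-max-mdp}, integrate the elementwise saddle inequalities against $\rho$, and pass to parameters through the expressiveness assumption. Your explicit chain plus weak duality just spells out the paper's ``hence''.

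The real difference is uniqueness. The paper asserts that $(\pi^*_\phi(x),\pi^*_\psi(x))$ is the unique saddle point of the $\rho$-averaged problem, citing the uniqueness clause of Lemma~\ref{lemma: unique-optimal-solution-min-max-mdp}. That lemma's proof only notes that the statewise saddle point of $H_s$ built from $V_h^*$ is unique. It never shows that an arbitrary pair satisfying the scalar, $\rho$-averaged saddle inequalities must be statewise greedy with respect to its own value function, and hence have value $V_h^*$.

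Your performance-difference step supplies exactly this link. Since $(I-\gamma\cP_{\bar y,z'})^{-1}=\sum_{k\ge 0}\gamma^k\cP_{\bar y,z'}^k$ is entrywise nonnegative with diagonal at least one, a strict one-step improvement at a single state yields a strict improvement of $V(\rho)$ when $\min_s\rho(s)>0$. Hence $(\bar y_s,\bar z_s)$ is a saddle point of $H_s$ built from $V^{\bar y,\bar z}$, so $V^{\bar y,\bar z}=T_{*,h}V^{\bar y,\bar z}=V_h^*$, and Lemma~\ref{lemma: properties-of-h-star} finishes.

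This costs a few lines but makes the uniqueness claim rigorous as literally stated; it also covers the uniqueness part of the lemma, whose hypothesis is stronger. It shows where full support of $\rho$ is genuinely needed: without it, policies at states unreachable from the support of $\rho$ would not affect $V(\rho)$, and uniqueness would fail. The one thing to state explicitly is that the improving policies $y',z'$ you construct must be representable, which the paper's expressiveness assumption guarantees.
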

\begin{proof}
    By the definition of the joint state-value function $V^{\pi_\phi,\pi_\psi}_{\cM(x)}(s)$ in~\eqref{eq: definition-state-value-function-main}, we have
    \begin{align}
        V^{\pi_\phi,\pi_\psi}_{\cM(x)}(s) & = \mathbb{E}_{\pi_\phi,\pi_\psi,\cP} \left[ \sum_{t=0}^{\infty} \gamma^t \left( r_x(s_t,a_t,b_t) + h_{s_t}(\pi_\phi(\cdot|s_t),\pi_\psi(\cdot|s_t)) \right) \Big| s_0 = s \right]\notag      \\
        & = \pi_\phi(\cdot|s)^T \left( r(x,s,\cdot,\cdot) + \gamma \sum_{s^\prime} \cP(s^\prime|s,\cdot,\cdot) V^{\pi_\phi,\pi_\psi}_{\cM(x)}(s^\prime) \right) \pi_\psi(\cdot|s) + h_s(\pi_\phi(\cdot|s),\pi_\psi(\cdot|s)) \notag.
    \end{align}

By the definition of $T_{y,z,h}$ in
Definition~\ref{definition: min-max-bellman-operator}, for any policy pair
$(\pi_\phi,\pi_\psi)$, we have
    \begin{align}
        V^{\pi_\phi,\pi_\psi}_{\cM(x)}(s) = (T_{\pi_\phi,\pi_\psi,h} V^{\pi_\phi,\pi_\psi}_{\cM(x)})(s),
    \end{align}
    which implies that $V^{\pi_\phi,\pi_\psi}_{\cM(x)}$ is the unique fixed point of $T_{\pi_\phi,\pi_\psi,h}$, i.e., the on-policy soft value function $V^{\pi_\phi,\pi_\psi}_{\cM(x)}$ is the fixed point $V_h^{\pi_\phi,\pi_\psi}$ defined in Definition~\ref{definition: bellman-fixed-point-value-functions}.
    
    Then, by Lemma~\ref{lemma: unique-optimal-solution-min-max-mdp}, there exists a unique policy pair $(\pi_\phi^*(x), \pi_\psi^*(x))$ such that for any state $s$, 
    \begin{align}
        V^{\pi_\phi^*(x),\pi_\psi}_{\cM(x)}(s) = V^{\pi_\phi^*(x),\pi_\psi}_h(s) \leq V^{*}_h(s) \leq V^{\pi_\phi,\pi_\psi^*(x)}_h(s) = V^{\pi_\phi,\pi_\psi^*(x)}_{\cM(x)}(s),
    \end{align}
    and $V^*_h(s) = V^{\pi_\phi^*(x),\pi_\psi^*(x)}_h(s) = V^{\pi_\phi^*(x),\pi_\psi^*(x)}_{\cM(x)}(s)$.

Since this holds for every state $s$, for the given initial state distribution
$\rho$ with $\min_s \rho(s)>0$, we have
    \begin{align}
        V^{\pi_\phi^*(x),\pi_\psi}_{\cM(x)}(\rho) \leq V^{\pi_\phi^*(x),\pi_\psi^*(x)}_{\cM(x)}(\rho) \leq  V^{\pi_\phi,\pi_\psi^*(x)}_{\cM(x)}(\rho).
    \end{align}

Therefore, $(\pi_\phi^*(x), \pi_\psi^*(x))$ is the unique saddle point of $\min_{\pi_\phi}\max_{\pi_\psi}
V^{\pi_\phi,\pi_\psi}_{\cM(x)}(\rho)$. Hence,
    \begin{align}
        V^{\pi_\phi^*(x),\pi_\psi^*(x)}_{\cM(x)}(\rho) & = \min_{\pi_\phi} \max_{\pi_\psi}  V^{\pi_\phi,\pi_\psi}_{\cM(x)}(\rho)  = \max_{\pi_\psi} \min_{\pi_\phi} V^{\pi_\phi,\pi_\psi}_{\cM(x)}(\rho).
    \end{align}
    Since $\phi$ and $\psi$ are the parameters of the policies $\pi_\phi$ and $\pi_\psi$ respectively, and the parameterization is expressive enough to represent all mixed policies in the policy class, this further implies that
    \begin{align}
        V^{\pi_\phi^*(x),\pi_\psi^*(x)}_{\cM(x)}(\rho) & = \min_{\phi} \max_{\psi}  V^{\pi_\phi,\pi_\psi}_{\cM(x)}(\rho)  = \max_{\psi} \min_{\phi} V^{\pi_\phi,\pi_\psi}_{\cM(x)}(\rho).
    \end{align}

\end{proof}

\subsection{Proofs of Other Technical Lemmas}
For the Markov game $\cM(x) = \{\cS,\cA,\cB,r_x,\cP,\gamma, h\}$ defined in Section~\ref{sec: problem-formulation}, 
we define the state-visitation distribution under policies $\pi_\phi$ and $\pi_\psi$ as
\begin{align}
    d^{\pi_\phi,\pi_\psi}_\rho(s) \bydef (1-\gamma) \sum_{t=0}^{\infty} \gamma^t P(s_t = s | s_0 \sim \rho, \pi_\phi, \pi_\psi, \cM(x)).\notag
\end{align}
\begin{lemma}[Gradient of $J(x,\phi,\psi)$ with respect to $x$]
    \label{lemma: gradient-J-wrt-x}
    For any $x$, $\phi$ and $\psi$, the gradient of $J(x,\phi,\psi)$ with respect to $x$ is given by
    \begin{align}
        \nabla_x J(x,\phi,\psi) = \frac{1}{1-\gamma} \E_{s \sim d^{\pi_\phi,\pi_\psi}_\rho, a \sim \pi_\phi(\cdot|s), b \sim \pi_\psi(\cdot|s)} \left[ \nabla_x r_x(s,a,b) \right] = \E \left[\sum_{t=0}^{\infty} \gamma^t \nabla_x r_x(s_t,a_t,b_t) \right],\notag
    \end{align}
    where the expectation is taken over the trajectory $\{s_t,a_t,b_t\}_{t \ge 0}$ generated by $s_0 \sim \rho$, $a_t \sim \pi_\phi(\cdot|s_t)$, $b_t \sim \pi_\psi(\cdot|s_t)$ and $s_{t+1} \sim P(\cdot|s_t,a_t,b_t)$.
\end{lemma}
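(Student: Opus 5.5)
We write $J(x,\phi,\psi)=\sum_s\rho(s)V^{\pi_\phi,\pi_\psi}_{\cM(x)}(s)$, where the only dependence on $x$ enters through the reward $r_x$. The plan is to differentiate the regularized Bellman fixed-point equation in $x$ for fixed $(\phi,\psi)$ and unroll it. By Definition~\ref{definition: bellman-fixed-point-value-functions} and the proof of Proposition~\ref{proposition: existence-and-uniqueness-of-equlibrium}, in vector form
\begin{align}
V_x = r_{\phi,\psi}(x) + h_{\phi,\psi} + \gamma \cP_{\phi,\psi} V_x,\notag
\end{align}
where $r_{\phi,\psi}(x)(s)=\sum_{a,b}\pi_\phi(a|s)\pi_\psi(b|s)r_x(s,a,b)$. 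The regularization vector $h_{\phi,\psi}$ and the state transition matrix $\cP_{\phi,\psi}$ do not depend on $x$, because the policies are fixed and the kernel $\cP$ is independent of $x$.

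\textbf{Step 1 (closed form).} Since $\gamma\cP_{\phi,\psi}$ has spectral radius at most $\gamma<1$, the matrix $I-\gamma\cP_{\phi,\psi}$ is invertible and
\begin{align}
V_x=(I-\gamma\cP_{\phi,\psi})^{-1}\big(r_{\phi,\psi}(x)+h_{\phi,\psi}\big)=\sum_{t\ge0}\gamma^t\cP_{\phi,\psi}^t\big(r_{\phi,\psi}(x)+h_{\phi,\psi}\big).\notag
\end{align}
The Neumann series converges absolutely. Hence $J(x,\phi,\psi)=\rho^T(I-\gamma\cP_{\phi,\psi})^{-1}(r_{\phi,\psi}(x)+h_{\phi,\psi})$.

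\textbf{Step 2 (differentiate).} The map $x\mapsto r_{\phi,\psi}(x)$ is differentiable by Assumption~\ref{assump: mdp-assumptions-main}(2). The inverse matrix and $h_{\phi,\psi}$ are constant in $x$. Therefore
\begin{align}
\nabla_x J=\sum_{t\ge0}\gamma^t\sum_{s}(\rho^T\cP_{\phi,\psi}^t)(s)\sum_{a,b}\pi_\phi(a|s)\pi_\psi(b|s)\nabla_x r_x(s,a,b).\notag
\end{align}
Exchanging differentiation with the infinite sum is justified because the finite matrix inverse is a fixed linear map. Alternatively, one can argue term by term: each term is bounded by $\gamma^t C_r$, so the series of derivatives converges uniformly.

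\textbf{Step 3 (identify the two forms).} Note that $(\rho^T\cP_{\phi,\psi}^t)(s)=P(s_t=s\mid s_0\sim\rho,\pi_\phi,\pi_\psi)$. With this, the expression equals $\E[\sum_t\gamma^t\nabla_x r_x(s_t,a_t,b_t)]$ by the tower property over $(a_t,b_t)\sim\pi_\phi\otimes\pi_\psi$. Summing over $t$ and using the definition of $d^{\pi_\phi,\pi_\psi}_\rho$ gives $\frac1{1-\gamma}\E_{s\sim d_\rho,a,b}[\nabla_x r_x]$. Fubini applies since $\|\nabla_x r_x\|\le C_r$.

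The main (mild) obstacle is Step 2: the entropy term and the policy-dependent transition matrix must contribute nothing to $\nabla_x$. This holds because the policy parameters are fixed and only $r_x$ depends on $x$. The interchange of limit and derivative is handled by the uniform bound $C_r/(1-\gamma)$.
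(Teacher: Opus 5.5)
Your proof is correct and takes essentially the same route as the paper. The paper writes $J$ directly in occupancy-measure form, $\frac{1}{1-\gamma}\sum_s d^{\pi_\phi,\pi_\psi}_\rho(s)\sum_{a,b}\pi_\phi(a|s)\pi_\psi(b|s)\,(\cdots)$, and differentiates using that only $r_x$ depends on $x$; this is your resolvent expression $\rho^T(I-\gamma\cP_{\phi,\psi})^{-1}(r_{\phi,\psi}(x)+h_{\phi,\psi})$, since $(d^{\pi_\phi,\pi_\psi}_\rho)^T=(1-\gamma)\rho^T(I-\gamma\cP_{\phi,\psi})^{-1}$. Your version only adds explicit justification that the regularizer and the visitation distribution are $x$-independent and that exchanging the derivative with the infinite sum is legitimate, points the paper leaves implicit.
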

\begin{proof}
    Since $J(x,\phi,\psi)$ is defined as the value function $V^{\pi_\phi,\pi_\psi}_{\cM(x)}(\rho)$ for the given initial distribution $\rho$, we have
    \begin{align}
        J(x,\phi,\psi) &  = \underset{\substack{s_0 \sim \rho, a_t \sim \pi_\phi(\cdot|s_t) \\s_{t+1} \sim P^{\pi_\psi}(\cdot|s_t,a_t)}}{\E} \left[\sum_{t=0}^{\infty} \gamma^t \left( r_x(s_t,a_t,b_t) - \tau_\psi \log \pi_\psi(b_t|s_t) + \tau_\phi \log \pi_\phi(a_t|s_t) \right)\right] \notag\\
        & = \frac{1}{1-\gamma}  \sum_{s} d^{\pi_\phi,\pi_\psi}_\rho(s) \sum_{a,b} \pi_\phi(a|s) \pi_\psi(b|s) \left( r_x(s,a,b) - \tau_\psi \log \pi_\psi(b|s) + \tau_\phi \log \pi_\phi(a|s) \right)\notag
    \end{align}

    Taking gradient with respect to $x$ on both sides, we have
    \begin{align}
        \nabla_x J(x,\phi,\psi) & = \frac{1}{1-\gamma} \sum_{s} d^{\pi_\phi,\pi_\psi}_\rho(s) \sum_{a,b} \pi_\phi(a|s) \pi_\psi(b|s) \nabla_x r_x(s,a,b) \notag\\
        & = \underset{\substack{s_0 \sim \rho, a_t \sim \pi_\phi(\cdot|s_t) \\b_t \sim \pi_\psi(\cdot|s_t) \\s_{t+1} \sim P(\cdot|s_t,a_t,b_t)}}{\E} \left[\sum_{t=0}^{\infty} \gamma^t \nabla_x r_x(s_t,a_t,b_t) \right] ,
    \end{align}
    which completes the proof.
\end{proof}

\begin{lemma}[Policy Gradient of $J(x,\phi,\psi)$]
    \label{lemma: policy-gradient-J}
    For any $x$, $\phi$ and $\psi$, the policy gradients of $J(x,\phi,\psi)$ with respect to $\phi$ and $\psi$ are given by
    \begin{align}
        &\quad\; \nabla_\phi J(x,\phi,\psi) \notag \\
        & = \underset{\substack{s_0 \sim \rho, a_t \sim \pi_\phi(\cdot|s_t) \\ b_t \sim \pi_\psi(\cdot|s_t) \\s_{t+1} \sim P(\cdot|s_t,a_t,b_t)}}{\E} \left[\sum_{t=0}^{\infty} \gamma^t \left( \nabla_\phi \log \pi_\phi(a_t|s_t) \left(   Q^{\pi_\phi,\pi_\psi}_{\cM(x)}(s_t,a_t,b_t)- \tau_\psi \log \pi_\psi(b_t|s_t) + \tau_\phi \log \pi_\phi(a_t|s_t)\right) \right) \right] ,\notag\\
        &\quad\; \nabla_\psi J(x,\phi,\psi) \notag \\
        & = \underset{\substack{s_0 \sim \rho, a_t \sim \pi_\phi(\cdot|s_t) \\ b_t \sim \pi_\psi(\cdot|s_t) \\s_{t+1} \sim P(\cdot|s_t,a_t,b_t)}}{\E} \left[\sum_{t=0}^{\infty} \gamma^t \left( \nabla_\psi \log \pi_\psi(b_t|s_t) \left(   Q^{\pi_\phi,\pi_\psi}_{\cM(x)}(s_t,a_t,b_t)- \tau_\psi \log \pi_\psi(b_t|s_t) + \tau_\phi \log \pi_\phi(a_t|s_t)\right) \right) \right],\notag
    \end{align}
    where $Q^{\pi_\phi,\pi_\psi}_{\cM(x)}(s_t,a_t,b_t)$ is the Q-function defined in~\eqref{eq: q-function-definition}.
\end{lemma}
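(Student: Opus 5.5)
The approach is to reduce to the single-agent policy gradient theorem for entropy-regularized MDPs, applied to each player separately with the opponent frozen. I treat $\nabla_\phi J$ first; the $\psi$ case is symmetric.

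\textbf{Step 1: induced MDP.} Fix $x$ and $\psi$. The min-player then faces an ordinary MDP with:
\begin{itemize}
\item marginalized transition $\bar{\cP}(s'|s,a)=\sum_b\pi_\psi(b|s)\cP(s'|s,a,b)$;
\item reward $\bar r(s,a)=\sum_b\pi_\psi(b|s)\bigl(r_x(s,a,b)-\tau_\psi\log\pi_\psi(b|s)\bigr)$;
\item regularizer $\tau_\phi\sum_a\pi_\phi(a|s)\log\pi_\phi(a|s)$, i.e.\ $-\tau_\phi H(\pi_\phi(\cdot|s))$.
\end{itemize}
With this setup, $J(x,\phi,\psi)$ is exactly the regularized value of $\pi_\phi$ in this MDP under $\rho$, since the $\psi$-entropy term depends on $\phi$ only through the state distribution.

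\textbf{Step 2: differentiate the Bellman recursion.} Write
\[
V(s)=\sum_{a,b}\pi_\phi(a|s)\pi_\psi(b|s)\bigl(Q(s,a,b)-\tau_\psi\log\pi_\psi(b|s)+\tau_\phi\log\pi_\phi(a|s)\bigr),
\]
with $Q(s,a,b)=r_x(s,a,b)+\gamma\E_{s'}V(s')$ as in~\eqref{eq: q-function-definition}. I would check that this is consistent with the fixed-point characterization used in the proof of Proposition~\ref{proposition: existence-and-uniqueness-of-equilibrium-main}. Differentiating in $\phi$ with the product rule gives three contributions:
\begin{itemize}
\item the score-function term $\sum_{a,b}\nabla_\phi\pi_\phi(a|s)\,\pi_\psi(b|s)\,(\cdots)$;
\item the derivative of the $\tau_\phi\log\pi_\phi$ factor, which equals $\tau_\phi\sum_a\nabla_\phi\pi_\phi(a|s)$ and vanishes because $\sum_a\pi_\phi(a|s)=1$;
\item the term $\gamma\sum_{a,b}\pi_\phi\pi_\psi\,\E_{s'}\nabla_\phi V(s')$, since $r_x$ does not depend on $\phi$.
\end{itemize}
Unrolling the third term recursively yields
\[
\nabla_\phi V(\rho)=\sum_{t\ge0}\gamma^t\,\E\bigl[\nabla_\phi\log\pi_\phi(a_t|s_t)\,(Q(s_t,a_t,b_t)-\tau_\psi\log\pi_\psi(b_t|s_t)+\tau_\phi\log\pi_\phi(a_t|s_t))\bigr].
\]
Here we use $\nabla\pi=\pi\nabla\log\pi$ and the fact that the expectation is over trajectories generated by the joint policy pair.

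\textbf{Step 3: justify the infinite unrolling and exchanges.} The remainder after $n$ steps is $\gamma^n$ times a bounded quantity and therefore vanishes. Boundedness follows from:
\begin{itemize}
\item $|r_x|\le B_r$;
\item entropies bounded by $\log|\cA|$ and $\log|\cB|$;
\item $\log\pi$ terms bounded thanks to $\pi\ge\delta_\pi$ (Assumption~\ref{assump: mdp-assumptions-main});
\item $\nabla_\phi\log\pi_\phi$ bounded under the tabular softmax parameterization.
\end{itemize}
Dominated convergence then permits interchanging gradient, sum, and expectation.

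\textbf{Step 4: the $\psi$ case.} For $\nabla_\psi J$ I freeze $\phi$ and repeat the argument. The derivative of the $-\tau_\psi\log\pi_\psi$ factor again vanishes by normalization, giving the stated formula with $\nabla_\psi\log\pi_\psi(b_t|s_t)$.

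The main obstacle is the bookkeeping of the entropy terms: one must confirm that the derivative of the inner $\log\pi_\phi$ factor integrates to zero, and that the opponent's entropy term appears only through the state distribution. I would verify this carefully at a single state before unrolling.
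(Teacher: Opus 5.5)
Your proposal is correct. It reaches the formula by a more self-contained route than the paper, though the underlying mechanism is the same.

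\textbf{The paper's route.} The paper freezes $\psi$ and flips the sign, so the min-player's problem becomes a maximization in the induced entropy-regularized MDP with reward $\tilde r_{(x,\psi)}(s,a)=-\E_{b\sim\pi_\psi(\cdot|s)}[r_x(s,a,b)-\tau_\psi\log\pi_\psi(b|s)]$ and kernel $P^{\pi_\psi}$. It identifies the induced soft $Q$-function as $-\E_b[Q^{\pi_\phi,\pi_\psi}_{\cM(x)}(s,a,b)-\tau_\psi\log\pi_\psi(b|s)]$. It then cites Lemma~10 of \citet{mei2020global}, the single-agent soft policy gradient theorem. Finally, it re-expands the inner expectation over $b$ into the joint trajectory expectation, using that $a_t$ and $b_t$ are conditionally independent given $s_t$.

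\textbf{Your route.} You differentiate the joint two-player Bellman recursion directly, so in effect you reprove the cited lemma in the two-player setting. Your normalization argument shows explicitly why the derivative of the player's own $\log\pi$ term vanishes. You also obtain the joint-trajectory form directly, with no sign flip and no re-expansion step. As a result, your Step~1 is never used by your Step~2; it is the paper's route that needs that reduction.

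\textbf{What each buys.} The paper's version is shorter, and it reuses the same reduction it later needs for uniqueness of best responses, Lipschitz continuity of best-response policies, and the P{\L} lemmas. Yours is elementary and does not depend on an external result.

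\textbf{A small refinement to Step~3.} The pointwise identity does not need $\pi\ge\delta_\pi$. What it does need is that $\nabla_\phi V(s)$ exists at every state. This follows because the closed form $V=(I-\gamma \cP_{\pi_\phi,\pi_\psi})^{-1}(r_{\pi_\phi,\pi_\psi}+h_{\pi_\phi,\pi_\psi})$ is smooth in $\phi$ under the softmax parameterization. Your listed bounds control the terms of the series but do not by themselves establish this. Once existence is known, the remainder $\gamma^n\E[\nabla_\phi V(s_n)]$ is at most $\gamma^n\max_s\|\nabla_\phi V(s)\|$, which tends to $0$.
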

\begin{proof}
    By definition, we have
    \begin{align}
        & \quad \,J(x,\phi,\psi) \notag \\
        & = \underset{\substack{s_0 \sim \rho, a_t \sim \pi_\phi(\cdot|s_t) \\ b_t \sim \pi_\psi(\cdot|s_t) \\s_{t+1} \sim P(\cdot|s_t,a_t,b_t)}}{\mathbb{E}} \bigg[
            \sum_{t=0}^{\infty} \gamma^t \left[
                r_x(s_t,a_t,b_t) + \tau_\phi \log \pi_\phi(a_t|s_t) - \tau_\psi \log \pi_\psi(b_t|s_t)
            \right]   
        \bigg] \notag\\
        & = \underset{\substack{s_0 \sim \rho, a_t \sim \pi_\phi(\cdot|s_t) \\s_{t+1} \sim P^{\pi_\psi}(\cdot|s_t,a_t)}}{\mathbb{E}} \bigg[
            \sum_{t=0}^{\infty} \gamma^t \left[
                 \mathbb{E}_{b_t \sim \pi_\psi(\cdot|s_t)}[r_x(s_t,a_t,b_t)] + \tau_\phi \log \pi_\phi(a_t|s_t) -  \tau_\psi \E_{b_t \sim \pi_\psi(\cdot|s_t)}[\log \pi_\psi(b_t|s_t)]
            \right]   
        \bigg] \notag\\
        & = \underset{\substack{s_0 \sim \rho, a_t \sim \pi_\phi(\cdot|s_t) \\s_{t+1} \sim P^{\pi_\psi}(\cdot|s_t,a_t)}}{\mathbb{E}} \bigg[
            \sum_{t=0}^{\infty} \gamma^t \left[
                r^{\pi_\psi}_x(s_t,a_t) + \tau_\phi \log \pi_\phi(a_t|s_t) + \tau_\psi H(\pi_\psi(\cdot|s_t))
            \right]   
        \bigg] 
    \end{align}
    where $P^{\pi_\psi}(\cdot|s_t,a_t) \bydef \mathbb{E}_{b_t \sim \pi_\psi(\cdot|s_t)}[P(\cdot|s_t,a_t,b_t)]$ and $r^{\pi_\psi}_x(s_t,a_t) \bydef \mathbb{E}_{b_t \sim \pi_\psi(\cdot|s_t)}[r_x(s_t,a_t,b_t)]$. Here $H(\pi_\psi(\cdot|s_t))$ does not depend on $\pi_\phi$.

Since $\pi_\phi$ plays the role of the minimizing player, we define a corresponding reward model for $\pi_\phi$ as
    \begin{align}
        \tilde{r}_{(x,\psi)}(s,a) \bydef - r^{\pi_\psi}_x(s,a) - \tau_\psi H(\pi_\psi(\cdot|s)) = - \E_{b \sim \pi_\psi(\cdot|s)}[r_x(s,a,b) - \tau_\psi \log \pi_\psi(b|s)]. \notag
    \end{align}

For any fixed $\pi_\psi$, the optimization over $\pi_\phi$ can then be
reformulated as a single-policy entropy-regularized MDP
$
\mathcal{M}^{\pi_\psi}(x)
=
\{\mathcal{S}, \mathcal{A}, P^{\pi_\psi},
\tilde{r}_{(x,\psi)}, \gamma\}$. Under the transformed reward $\tilde{r}_{(x,\psi)}$, the original
minimization over $\pi_\phi$ is equivalent to maximizing the following
entropy-regularized value function:
\begin{align}
        \widetilde{V}^{\pi_\phi}_{\cM^{\pi_\psi}(x)}(\rho) = \underset{\substack{s_0 \sim \rho, a_t \sim \pi_\phi(\cdot|s_t) \\s_{t+1} \sim P^{\pi_\psi}(\cdot|s_t,a_t)}}{\mathbb{E}} \bigg[
            \sum_{t=0}^{\infty} \gamma^t \left[
                \tilde{r}_{(x,\psi)}(s_t,a_t) - \tau_\phi \log \pi_\phi(a_t|s_t)
            \right]   
        \bigg] , \notag
\end{align}
which equals $-J(x,\phi,\psi)$. Accordingly, the state-value function under
policy $\pi_\phi$ is given by
\begin{align}
        \widetilde{V}^{\pi_\phi}_{\cM^{\pi_\psi}(x)}(s) = \underset{\substack{s_0 = s, a_t \sim \pi_\phi(\cdot|s_t) \\s_{t+1} \sim P^{\pi_\psi}(\cdot|s_t,a_t)}}{\mathbb{E}} \bigg[
            \sum_{t=0}^{\infty} \gamma^t \left[
                \tilde{r}_{(x,\psi)}(s_t,a_t) - \tau_\phi \log \pi_\phi(a_t|s_t)
            \right]   
        \bigg] = - V^{\pi_\phi,\pi_\psi}_{\cM(x)}(s) . \notag
    \end{align}

    Define the corresponding Q-function for $\mathcal{M}^{\pi_\psi}(x)$ as
    \begin{align}
        \widetilde{Q}^{\pi_\phi}_{\cM^{\pi_\psi}(x)}(s,a) \bydef \tilde{r}_{(x,\psi)}(s,a) + \gamma \E_{s^\prime \sim P^{\pi_\psi}(\cdot|s,a)} \left[ \widetilde{V}^{\pi_\phi}_{\cM^{\pi_\psi}(x)}(s^\prime) \right]. 
    \end{align}
    It follows that
    \begin{align}
         \widetilde{Q}^{\pi_\phi}_{\cM^{\pi_\psi}(x)}(s,a) & = - \E_{b \sim \pi_\psi(\cdot|s)}[r_x(s,a,b) - \tau_\psi \log \pi_\psi(b|s)] + \gamma \underset{\substack{b \sim \pi_\psi(\cdot|s) \\s^\prime \sim P(\cdot|s,a,b)}}{\E} \left[ \widetilde{V}^{\pi_\phi}_{\cM^{\pi_\psi}(x)}(s^\prime) \right] \notag\\ 
         & = \E_{b \sim \pi_\psi(\cdot|s)} \left[ - r_x(s,a,b) + \tau_\psi \log \pi_\psi(b|s) + \gamma \underset{s^\prime \sim P(\cdot|s,a,b)}{\E} \left[ - V^{\pi_\phi,\pi_\psi}_{\cM(x)}(s^\prime) \right] \right] \notag\\ 
         & = - \E_{b \sim \pi_\psi(\cdot|s)} \left[Q^{\pi_\phi,\pi_\psi}_{\cM(x)}(s,a,b)- \tau_\psi \log \pi_\psi(b|s) \right].
    \end{align}

    Then by Lemma 10 of~\citet{mei2020global}, we can get
    \begin{align}
       &\quad\; \nabla_\phi \widetilde{V}^{\pi_\phi}_{\cM^{\pi_\psi}(x)}(\rho) \notag \\
       & = \underset{\substack{s_0 \sim \rho, a_t \sim \pi_\phi(\cdot|s_t) \\s_{t+1} \sim P^{\pi_\psi}(\cdot|s_t,a_t)}}{\E} \left[\sum_{t=0}^{\infty} \gamma^t \left( \nabla_\phi \log \pi_\phi(a_t|s_t) \left( \widetilde{Q}^{\pi_\phi}_{\cM^{\pi_\psi}(x)}(s_t,a_t) - \tau_\phi \log \pi_\phi(a_t|s_t)\right) \right) \right]\notag \\
        & =  \underset{\substack{s_0 \sim \rho, a_t \sim \pi_\phi(\cdot|s_t) \\ b_t \sim \pi_\psi(\cdot|s_t) \\s_{t+1} \sim P(\cdot|s_t,a_t,b_t)}}{\E} \left[\sum_{t=0}^{\infty} \gamma^t \left( \nabla_\phi \log \pi_\phi(a_t|s_t) \left( \widetilde{Q}^{\pi_\phi}_{\cM^{\pi_\psi}(x)}(s_t,a_t) - \tau_\phi \log \pi_\phi(a_t|s_t)\right) \right) \right] \notag\\
        & = - \underset{\substack{s_0 \sim \rho, a_t \sim \pi_\phi(\cdot|s_t) \\ b_t \sim \pi_\psi(\cdot|s_t) \\s_{t+1} \sim P(\cdot|s_t,a_t,b_t)}}{\E} \Bigg[\sum_{t=0}^{\infty} \gamma^t \bigg( \nabla_\phi \log \pi_\phi(a_t|s_t) \bigg(  \underset{b_t \sim \pi_\psi(b_t|s_t)}{\E} \left[Q^{\pi_\phi,\pi_\psi}_{\cM(x)}(s_t,a_t,b_t)- \tau_\psi \log \pi_\psi(b_t|s_t)  \right] \notag \\
        &\quad \,  + \tau_\phi \log \pi_\phi(a_t|s_t)\bigg) \bigg) \Bigg],
    \end{align}
    where the second equality is to expand $b_t$ from $P^{\pi_\psi}$, and the last equality follows from the definition of $\widetilde{Q}^{\pi_\phi}_{\cM^{\pi_\psi}(x)}(s_t,a_t)$.

    This implies that
    \begin{align}
        &\quad\; \nabla_\phi J(x,\phi,\psi) \notag \\
        & = \underset{\substack{s_0 \sim \rho, a_t \sim \pi_\phi(\cdot|s_t) \\ b_t \sim \pi_\psi(\cdot|s_t) \\s_{t+1} \sim P(\cdot|s_t,a_t,b_t)}}{\E} \Bigg[\sum_{t=0}^{\infty} \gamma^t \bigg( \nabla_\phi \log \pi_\phi(a_t|s_t) \bigg(  \underset{b_t \sim \pi_\psi(b_t|s_t)}{\E} \left[Q^{\pi_\phi,\pi_\psi}_{\cM(x)}(s_t,a_t,b_t)- \tau_\psi \log \pi_\psi(b_t|s_t)  \right] \notag \\
        &\quad \,  + \tau_\phi \log \pi_\phi(a_t|s_t)\bigg) \bigg) \Bigg] \notag\\
        & = \frac{1}{1-\gamma}\sum_{s} d^{\pi_\phi,\pi_\psi}_\rho(s) \sum_{a}  \pi_\phi(a|s)\Bigg(\nabla_\phi \log \pi_\phi(a|s) \bigg(  \sum_b \pi_\psi(b|s) \left[Q^{\pi_\phi,\pi_\psi}_{\cM(x)}(s,a,b)- \tau_\psi \log \pi_\psi(b|s)  \right] \notag \\
        & \quad \, + \tau_\phi \log \pi_\phi(a|s)\bigg) \Bigg) \notag\\
        & = \frac{1}{1-\gamma}\sum_{s} d^{\pi_\phi,\pi_\psi}_\rho(s) \sum_{a,b}  \pi_\phi(a|s)\pi_\psi(b|s)\left(\nabla_\phi \log \pi_\phi(a|s) \left(  Q^{\pi_\phi,\pi_\psi}_{\cM(x)}(s,a,b)- \tau_\psi \log \pi_\psi(b|s)   + \tau_\phi \log \pi_\phi(a|s)\right) \right) \notag\\
        & = \underset{\substack{s_0 \sim \rho, a_t \sim \pi_\phi(\cdot|s_t) \\ b_t \sim \pi_\psi(\cdot|s_t) \\s_{t+1} \sim P(\cdot|s_t,a_t,b_t)}}{\E} \left[\sum_{t=0}^{\infty} \gamma^t \left( \nabla_\phi \log \pi_\phi(a_t|s_t) \left(   Q^{\pi_\phi,\pi_\psi}_{\cM(x)}(s_t,a_t,b_t)- \tau_\psi \log \pi_\psi(b_t|s_t) + \tau_\phi \log \pi_\phi(a_t|s_t)\right) \right) \right].
    \end{align}

    Similarly, we can prove that
    \begin{align}
        &\quad\; \nabla_\psi J(x,\phi,\psi) \notag \\
        & = \underset{\substack{s_0 \sim \rho, a_t \sim \pi_\phi(\cdot|s_t) \\ b_t \sim \pi_\psi(\cdot|s_t) \\s_{t+1} \sim P(\cdot|s_t,a_t,b_t)}}{\E} \left[\sum_{t=0}^{\infty} \gamma^t \left( \nabla_\psi \log \pi_\psi(b_t|s_t) \left(   Q^{\pi_\phi,\pi_\psi}_{\cM(x)}(s_t,a_t,b_t)- \tau_\psi \log \pi_\psi(b_t|s_t) + \tau_\phi \log \pi_\phi(a_t|s_t)\right) \right) \right].
    \end{align}
\end{proof}


\section{Proofs in Section~\ref{sec: theoretical-analysis}} 
\subsection{Useful Properties of Single-Policy MDPs}
The notation in this subsection \textbf{differs} from that in the rest of the paper, as we focus exclusively on the properties of single-policy MDPs in this section.

Let $\cM(x)=\{\cS,\cA,r_x,\cP_x,\gamma,h\}$ be a parameterized
single-policy MDP with entropy regularization. For any policy $\pi$, the
soft state-value function at state $s$ is defined as
\begin{align}
    V^{\pi}_{\cM(x)}(s) \bydef \E \left[ \sum_{t=0}^{\infty} \gamma^t \left(r_x(s_t,a_t) - \tau \log \pi(a_t|s_t) \right) \bigg| s_0 = s, \pi, \cM(x)\right], \notag
\end{align}
where $\tau>0$ is the regularization coefficient. Then for a full-support initial state distribution $\rho$, we can define the soft value function of policy $\pi$ as $V^{\pi}_{\cM(x)}(\rho) \bydef \E_{s \sim \rho} [V^{\pi}_{\cM(x)}(s)]$.
These are all standard definitions and notations in single-policy RL.

For any given $x$, let $V^*(x)$  be the soft optimal state-value such that $V^*_s(x) = \max_{\pi} V^\pi_{\cM(x)}(s)$. By~\citet{nachum2017bridging} and~\citet{yang2025bilevel}, $V^*(x)$ is the unique fixed point of the softmax Bellman operator $\cT_x: \R^{|\cS|} \mapsto \R^{|\cS|}$ defined as
\begin{align}
    (\cT_x V)(s) = \tau \log \sum_{a} \exp\left(\frac{r_x(s,a) + \gamma \sum_{s^\prime} \cP_x(s^\prime|s,a) V(s^\prime)}{\tau} \right), \notag
\end{align}
which is $\gamma$-contraction in $V$ w.r.t. $\| \cdot\|_\infty$.
Define the optimal soft Q-value as
\begin{align}
     Q^*_{sa}(x) \bydef r_x(s,a) + \gamma \sum_{s^\prime}
\cP_x(s^\prime|s,a) V^*_{s^\prime}(x), \notag
\end{align}
then the optimal policy $\pi^*_x$ is given by
\begin{align}
    \pi^*_x(a|s) =  \frac{\exp (Q^*_{sa}(x)/\tau)}{\sum_{a'} \exp (Q^*_{sa'}(x)/\tau)}. \notag
\end{align}

Now we make some standard assumptions on the parameterized MDP $\cM(x)$.
\begin{assumption}
    \label{assumption: appendix-single-policy-properties}
    Suppose the following conditions hold for $\cM(x)$:
    \begin{enumerate}
        \vspace{-2mm}
        \item Bounded reward: $|r_x(s,a)| \leq R_{\max}$ for any $(s,a) \in \cS \times \cA$ and any $x \in \R^{d_x}$.
        \item Lipschitz reward: $\sup_{s,a} |r_{x_1}(s,a) - r_{x_2}(s,a)| \leq C_r \|x_1 - x_2\|$ for any $x_1,x_2 \in \R^{d_x}$.
        \item Lipschitz transition: $\sup_{s,a} \|\cP_{x_1}(\cdot|s,a) - \cP_{x_2}(\cdot|s,a)\|_1 \leq C_P \|x_1 - x_2\|$ for any $x_1,x_2 \in \R^{d_x}$.
    \end{enumerate}
\end{assumption}

Then we can establish the following Lipschitz continuity of the optimal soft state-value and soft Q-value w.r.t. $x$.
\begin{lemma}
    \label{lemma: lipschitz-optimal-value-auxiliary}
    Under Assumption~\ref{assumption: appendix-single-policy-properties}, the optimal soft state-value $V^*(x)$ is $C_V$-Lipschitz continuous in $x$ for some constant $C_V >0$. The optimal soft Q-value $Q^*(x) \in \mathbb{R}^{|\cS||\cA|}$ is $C_Q$-Lipschitz continuous in $x$ for some constant $C_Q>0$.
\end{lemma}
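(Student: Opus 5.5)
Both claims will come from the contraction of the softmax Bellman operator $\cT_x$ plus a perturbation argument in $x$; no differentiability of $V^*$ is needed.

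\textbf{Step 1: uniform bound on $V^*(x)$.} Since $V^*(x)$ is the fixed point of $\cT_x$, I first bound it uniformly in $x$. Each soft value is at most $(R_{\max}+\tau\log|\cA|)/(1-\gamma)$ in absolute value, because the entropy is bounded by $\log|\cA|$. Hence
$$\|V^*(x)\|_\infty \le \bar V \bydef \frac{R_{\max}+\tau\log|\cA|}{1-\gamma}.$$

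\textbf{Step 2: Lipschitz property of $\cT_x$ in $x$.} Fix $V$ with $\|V\|_\infty\le \bar V$. The log-sum-exp map $u\mapsto \tau\log\sum_a \exp(u_a/\tau)$ is $1$-Lipschitz from $\|\cdot\|_\infty$ to $\R$, since its gradient is a probability vector. Therefore
$$|(\cT_{x_1}V)(s)-(\cT_{x_2}V)(s)|\le \max_a \Big|r_{x_1}(s,a)-r_{x_2}(s,a)+\gamma\sum_{s'}\big(\cP_{x_1}-\cP_{x_2}\big)(s'|s,a)V(s')\Big|.$$
By Assumption~\ref{assumption: appendix-single-policy-properties}, the right-hand side is at most $(C_r+\gamma C_P\bar V)\|x_1-x_2\|$. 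The transition term is controlled by Hölder's inequality: the $\ell_1$ norm of $\cP_{x_1}-\cP_{x_2}$ times $\|V\|_\infty$.

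\textbf{Step 3: the fixed-point perturbation.} Write $V_i=V^*(x_i)$. Using $V_i=\cT_{x_i}V_i$,
$$\|V_1-V_2\|_\infty \le \|\cT_{x_1}V_1-\cT_{x_1}V_2\|_\infty+\|\cT_{x_1}V_2-\cT_{x_2}V_2\|_\infty \le \gamma\|V_1-V_2\|_\infty+(C_r+\gamma C_P\bar V)\|x_1-x_2\|.$$
The first term uses the $\gamma$-contraction of $\cT_{x_1}$; the second uses Step 2, which applies because $\|V_2\|_\infty\le\bar V$ by Step 1. Rearranging gives
$$\|V_1-V_2\|_\infty\le \frac{C_r+\gamma C_P\bar V}{1-\gamma}\|x_1-x_2\|.$$
Passing to the Euclidean norm costs at most a factor $\sqrt{|\cS|}$, so we may take $C_V=\sqrt{|\cS|}\,(C_r+\gamma C_P\bar V)/(1-\gamma)$.

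\textbf{Step 4: the $Q$-bound.} From the definition of $Q^*$, for each $(s,a)$,
$$|Q^*_{sa}(x_1)-Q^*_{sa}(x_2)|\le C_r\|x_1-x_2\|+\gamma\,|\langle \cP_{x_1}(\cdot|s,a)-\cP_{x_2}(\cdot|s,a),V_1\rangle|+\gamma\,|\langle \cP_{x_2}(\cdot|s,a),V_1-V_2\rangle|.$$
The middle term is at most $C_P\bar V\|x_1-x_2\|$ by Hölder. The last term is at most $\|V_1-V_2\|_\infty$, since $\cP_{x_2}(\cdot|s,a)$ is a probability vector, and is then bounded by Step 3. Summing over the $|\cS||\cA|$ entries gives a Euclidean bound and a constant $C_Q$.

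\textbf{Main obstacle.} The only delicate point is Step 2: the transition perturbation multiplies $V$, so the uniform bound $\bar V$ from Step 1 is essential. Everything else is routine.
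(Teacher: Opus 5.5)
Your proposal is correct and follows essentially the same route as the paper. Both arguments run a $\gamma$-contraction fixed-point perturbation for $V^*$, using the $1$-Lipschitz log-sum-exp bound $\|\cT_{x_1}V-\cT_{x_2}V\|_\infty\le (C_r+\gamma C_P\|V\|_\infty)\|x_1-x_2\|$, then apply the same splitting of $Q^*_{sa}(x_1)-Q^*_{sa}(x_2)$ and the $\sqrt{|\cS|}$ and $\sqrt{|\cS||\cA|}$ norm conversions. The only cosmetic difference is that the paper gets the uniform bound $\|V^*(x)\|_\infty\le (R_{\max}+\tau\log|\cA|)/(1-\gamma)$ by contraction against $\cT_x\mathbf{0}$ rather than from the entropy bound on each soft value, with the same constant.
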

\begin{proof}
    For any $x_1,x_2 \in \R^{d_x}$ and any $(s,a)\in \cS \times \cA$, we have
    \begin{align}
        |\cP_{x_1}(\cdot|s,a)^T V - \cP_{x_2}(\cdot|s,a)^T V | \leq \| \cP_{x_1}(\cdot|s,a) - \cP_{x_2}(\cdot|s,a) \|_1 \| V \|_\infty \leq C_P \|x_1 - x_2\| \| V \|_\infty. \notag
    \end{align}
    Thus, we can obtain
    \begin{align}
        & \quad \, \sup_{s,a} \left|\left(r_{x_1}(s,a) + \gamma \sum_{s^\prime} \cP_{x_1}(s^\prime|s,a) V_{s^\prime}\right) - \left(r_{x_2}(s,a) + \gamma \sum_{s^\prime} \cP_{x_2}(s^\prime|s,a) V_{s^\prime}\right)\right| \notag \\
        & \leq (C_r + \gamma C_P \|V\|_\infty) \|x_1 - x_2\|.
    \end{align}

    By the $1$-Lipschitz continuity of the log-sum-exp function in $\|\cdot\|_\infty$, we have for any $v,w \in \mathbb{R}^{|\cA|}$,
    \begin{align}
        |\tau \log \sum_a e^{v_a/\tau} - \tau \log \sum_a e^{w_a/\tau}| \leq \|v - w\|_\infty = \sup_a |v_a - w_a|, \notag
    \end{align}
    and hence
    \begin{align}
        \| \cT_{x_1} V - \cT_{x_2} V \|_\infty &  = \sup_s |(\cT_{x_1} V)(s) - (\cT_{x_2} V)(s)| \notag\\
        & \leq \sup_{s,a} \left|\left(r_{x_1}(s,a) + \gamma \sum_{s^\prime} \cP_{x_1}(s^\prime|s,a) V_{s^\prime}\right) - \left(r_{x_2}(s,a) + \gamma \sum_{s^\prime} \cP_{x_2}(s^\prime|s,a) V_{s^\prime}\right)\right| \notag \\
        & \leq (C_r + \gamma C_P \|V\|_\infty) \|x_1 - x_2\|.
    \end{align}

    Since for any $x$, $V^*(x)$ is the unique fixed point of $\cT_x$~\citep{nachum2017bridging}, i.e., $\cT_x V^*(x) = V^*(x)$, then
    \begin{align}
        \| V^*(x_1) - V^*(x_2) \|_\infty & = \| \cT_{x_1} V^*(x_1) - \cT_{x_2} V^*(x_2) \|_\infty \notag\\
        & \leq \| \cT_{x_1} V^*(x_1) - \cT_{x_1} V^*(x_2) \|_\infty + \| \cT_{x_1} V^*(x_2) - \cT_{x_2} V^*(x_2) \|_\infty \notag\\
        & \leq \gamma \| V^*(x_1) - V^*(x_2) \|_\infty + (C_r + \gamma C_P \|V^*(x_2)\|_\infty) \|x_1 - x_2\|, \notag
    \end{align}
    which implies
    \begin{align}
        \| V^*(x_1) - V^*(x_2) \|_\infty & \leq \frac{C_r + \gamma C_P \|V^*(x_2)\|_\infty}{1-\gamma} \|x_1 - x_2\|.
    \end{align}
    
    Since for any $x \in \R^{d_x}$ and any $s \in \cS$,
    \begin{align}
       \left|(\cT_x \mathbf{0} ) (s)  \right|& = \left|\tau \log \sum_{a} \exp\left(\frac{r_x(s,a)}{\tau} \right) \right|\leq R_{\max} + \tau \log |\cA|, \notag
    \end{align}
    then by $\gamma$-contraction of $\cT_x$ with respect to $\|\cdot\|_\infty$, we can obtain,
    \begin{align}
        \| V^*(x) \|_\infty & =  \| \cT_x V^*(x) \|_\infty \notag\\ 
        & \leq \| \cT_x V^*(x) - \cT_x \mathbf{0} \|_\infty + \|\cT_x \mathbf{0} \|_\infty \notag\\  
        & \leq \gamma \| V^*(x) \|_\infty + R_{\max} + \tau \log |\cA|, \notag
    \end{align}
    which implies $\| V^*(x) \|_\infty \leq \frac{R_{\max} + \tau \log |\cA|}{1-\gamma}$ for any $x \in \R^{d_x}$. We use $B_V$ here to denote $\frac{R_{\max} + \tau \log |\cA|}{1-\gamma}$ for brevity.

    Therefore, we have
    \begin{align}
        \| V^*(x_1) - V^*(x_2) \|_\infty & \leq \frac{C_r + \gamma C_P B_V}{1-\gamma} \|x_1 - x_2\|,
    \end{align}
    which completes the proof for the Lipschitz continuity of $V^*(x)$ by applying $\|\cdot \|_2 \leq \sqrt{|\cS|} \|\cdot \|_\infty$.

    Finally, we prove the Lipschitz continuity of $Q^*(x)$. For any $x_1,x_2 \in \R^{d_x}$, we have
    \begin{align}
        & \quad \, |Q^*_{sa}(x_1) - Q^*_{sa}(x_2)|\notag  \\
        & = |r_{x_1}(s,a) - r_{x_2}(s,a) + \gamma \sum_{s^\prime} \left(\cP_{x_1}(s^\prime|s,a) V^*_{s^\prime}(x_1) - \cP_{x_2}(s^\prime|s,a) V^*_{s^\prime}(x_2)\right) | \notag \\
        & \leq |r_{x_1}(s,a) - r_{x_2}(s,a)| + \gamma \sum_{s^\prime} |\cP_{x_1}(s^\prime|s,a) V^*_{s^\prime}(x_1) - \cP_{x_2}(s^\prime|s,a) V^*_{s^\prime}(x_2) | \notag \\
        & \leq C_r \|x_1 - x_2\| + \gamma \sum_{s^\prime} |\cP_{x_1}(s^\prime|s,a) (V^*_{s^\prime}(x_1) - V^*_{s^\prime}(x_2))| + \gamma \sum_{s^\prime} |(\cP_{x_1}(s^\prime|s,a) - \cP_{x_2}(s^\prime|s,a)) V^*_{s^\prime}(x_2)| \notag \\
        & \leq C_r \|x_1 - x_2\| + \gamma \| V^*(x_1) - V^*(x_2)\|_\infty + \gamma \| \cP_{x_1}(\cdot|s,a) - \cP_{x_2}(\cdot|s,a)\|_1 \| V^*(x_2)\|_\infty \notag \\
        & \leq C_r \|x_1 - x_2\| + \gamma  \frac{C_r + \gamma C_P B_V}{1-\gamma} \|x_1 - x_2\| + \gamma C_P B_V \|x_1 - x_2\|,
    \end{align}
    which completes the proof by applying $\|Q^*(x_1) - Q^*(x_2)  \| \leq \sqrt{|\cS||\cA|} \max_{s,a} |Q^*_{sa}(x_1) - Q^*_{sa}(x_2)|$.

\end{proof}

\begin{lemma}
    \label{lemma: lipschitz-of-pi-star}
    Under Assumption~\ref{assumption: appendix-single-policy-properties}, let
$\pi^*(x)=\arg\max_\pi V^\pi_{\cM(x)}(\rho)$ be the optimal soft policy for
the MDP $\cM(x)$. Then there exists a constant $C_{\pi^*}>0$ such that
$\pi^*(x)$ is $C_{\pi^*}$-Lipschitz continuous in $x$.
\end{lemma}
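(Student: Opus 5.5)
We will exploit the closed-form Boltzmann structure of the optimal soft policy, $\pi^*_x(a|s)=\exp(Q^*_{sa}(x)/\tau)/\sum_{a'}\exp(Q^*_{sa'}(x)/\tau)$, together with Lemma~\ref{lemma: lipschitz-optimal-value-auxiliary}. First, we note that since $\rho$ has full support, the maximizer of $V^\pi_{\cM(x)}(\rho)$ coincides with the state-wise soft optimal policy. This follows from the standard fact that the soft-greedy policy with respect to $Q^*(x)$ attains $V^*_s(x)$ simultaneously at every $s$. Uniqueness of this maximizer follows from strict concavity of the entropy term in each state's action distribution. Hence $\pi^*(x)$ is well defined and is given by the softmax formula above.

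The core step is a Lipschitz bound on the softmax map. For each fixed $s$, write $\sigma_\tau(q)=\mathrm{softmax}(q/\tau)$ for $q\in\R^{|\cA|}$. Its Jacobian is $\tau^{-1}(\diag(\sigma)-\sigma\sigma^T)$, whose operator norm is at most $\tau^{-1}$ in $\|\cdot\|_2$, and which is also bounded by $2/\tau$ as a map $\ell_\infty\to\ell_1$. By the mean value theorem along the segment between $Q^*_{s\cdot}(x_1)$ and $Q^*_{s\cdot}(x_2)$, we obtain
\begin{align}
\|\pi^*_{x_1}(\cdot|s)-\pi^*_{x_2}(\cdot|s)\|_2\le \tfrac{1}{\tau}\|Q^*_{s\cdot}(x_1)-Q^*_{s\cdot}(x_2)\|_2. \notag
\end{align}
Summing the squares over $s$ gives $\|\pi^*(x_1)-\pi^*(x_2)\|\le \tau^{-1}\|Q^*(x_1)-Q^*(x_2)\|$. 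Combining this with the $C_Q$-Lipschitz continuity of $Q^*(x)$ from Lemma~\ref{lemma: lipschitz-optimal-value-auxiliary} yields $C_{\pi^*}=C_Q/\tau$.

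The only genuinely delicate point is the justification that $\arg\max_\pi V^\pi_{\cM(x)}(\rho)$ equals the Boltzmann policy built from the fixed point of $\cT_x$. We will cite \citet{nachum2017bridging} for this, as the paper already does when introducing $\cT_x$. The Jacobian bound itself is routine. We will verify it via Gershgorin's theorem: $\diag(\sigma)-\sigma\sigma^T$ is positive semidefinite with largest eigenvalue at most $\max_a\sigma_a\le 1$.

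If a norm other than the Euclidean one is required later, for example $\ell_1$ per state for use in performance-difference arguments, the same argument with the $\ell_\infty\to\ell_1$ bound gives $\sup_s\|\pi^*_{x_1}(\cdot|s)-\pi^*_{x_2}(\cdot|s)\|_1\le \tfrac{2}{\tau}\sup_{s,a}|Q^*_{sa}(x_1)-Q^*_{sa}(x_2)|$. The proof of Lemma~\ref{lemma: lipschitz-optimal-value-auxiliary} already controls the right-hand side in sup-norm.
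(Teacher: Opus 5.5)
Your proposal is correct and follows essentially the same route as the paper. You invoke the Boltzmann form of $\pi^*(x)$ from \citet{nachum2017bridging}, bound the Lipschitz constant of the state-wise softmax map, and combine this with the $C_Q$-Lipschitz continuity of $Q^*(x)$ from Lemma~\ref{lemma: lipschitz-optimal-value-auxiliary}.

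The only difference is the constant. The paper uses the sharper bound $\lambda_{\max}(\diag(\sigma)-\sigma\sigma^T)\le 1/2$ to get $C_{\pi^*}=C_Q/(2\tau)$, and your own Gershgorin discs, which are $[0,2\sigma_a(1-\sigma_a)]$, already give exactly this. The bound $\max_a\sigma_a$ you state does not come from Gershgorin; it comes from $\diag(\sigma)-\sigma\sigma^T\preceq\diag(\sigma)$. In any case, your $C_Q/\tau$ is equally valid for the existence claim.
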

\begin{proof}
    By \citet{nachum2017bridging}, the optimal soft policy for any $s\in\cS$ and $a\in\cA$ can be represented as
    \begin{align}
        \pi^*_{sa}(x) = \frac{\exp (Q^*_{sa}(x)/\tau)}{\sum_{a'} \exp (Q^*_{sa'}(x)/\tau)},
    \end{align}
    where $Q^*(x)$ is the optimal soft Q-value defined in Lemma~\ref{lemma: lipschitz-optimal-value-auxiliary}.

    Therefore, for any $x_1,x_2 \in \R^{d_x}$, we have
    \begin{align}
        \| \pi^*(x_1) - \pi^*(x_2) \| & \leq \frac{1}{2\tau} \| Q^*(x_1) - Q^*(x_2) \| \leq \frac{C_Q}{2\tau} \|x_1 - x_2\|, 
    \end{align}
    which completes the proof.
\end{proof}

\begin{lemma}
    \label{lemma: lipschitz-of-nable-pi-star-of-x}
    Under Assumption~\ref{assumption: appendix-single-policy-properties}, suppose the reward model $r_x$ and the transition model $\cP_x$ also satisfy the following conditions:
    \begin{enumerate}
        \vspace{-2mm}
        \item $r_x(s,a)$ is differentiable w.r.t. $x$ and there exists a constant $L_r$ such that for any $x_1$ and $x_2$, we have $\sup_{s,a} \| \nabla r_{x_1}(s,a) - \nabla r_{x_2}(s,a) \| \leq L_r \|x_1 - x_2\|$.
        \item $\cP_x(s^\prime|s,a)$ is differentiable w.r.t. $x$ and there exists a constant $L_P$ such that for any $x_1$ and $x_2$, we have $\sup_{s,a, s^\prime} \| \nabla \cP_{x_1}(s^\prime|s,a) - \nabla \cP_{x_2}(s^\prime|s,a) \| \leq L_P \|x_1 - x_2\|$.
    \end{enumerate}
    Let $\pi^*(x) = \arg\max_\pi V^\pi_{\cM(x)}(\rho)$, then there exists a constant $L_\pi$ such that for any $x_1$ and $x_2$, we have
    \begin{align}
        \|\nabla \pi^*(x_1)  - \nabla \pi^*(x_2)\| \leq L_\pi \| x_1 - x_2 \|. \notag
    \end{align}
\end{lemma}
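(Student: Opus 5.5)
Since $\pi^*_{sa}(x)=\exp(Q^*_{sa}(x)/\tau)/\sum_{a'}\exp(Q^*_{sa'}(x)/\tau)$, the chain rule gives $\nabla\pi^*(x)=\nabla_Q\sigma(Q^*(x))\,\nabla Q^*(x)$, where $\sigma$ is the statewise softmax with temperature $\tau$. The softmax Jacobian $\nabla_Q\sigma$ is bounded (by $1/(2\tau)$, as already used in Lemma~\ref{lemma: lipschitz-of-pi-star}) and is itself Lipschitz in $Q$, with a constant of order $\tau^{-2}$, because $\sigma$ is smooth with bounded second derivatives. Hence
\[
\|\nabla\pi^*(x_1)-\nabla\pi^*(x_2)\|\le \|\nabla_Q\sigma\|_{\mathrm{Lip}}\,C_Q^2\|x_1-x_2\|+\tfrac{1}{2\tau}\|\nabla Q^*(x_1)-\nabla Q^*(x_2)\|,
\]
using Lemma~\ref{lemma: lipschitz-optimal-value-auxiliary} to bound $\|Q^*(x_1)-Q^*(x_2)\|\le C_Q\|x_1-x_2\|$ and $\|\nabla Q^*\|\le C_Q$. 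The proof therefore reduces to showing that $\nabla Q^*(x)$ is Lipschitz in $x$.

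Since $Q^*_{sa}(x)=r_x(s,a)+\gamma\sum_{s'}\cP_x(s'|s,a)V^*_{s'}(x)$, we have
\[
\nabla Q^*_{sa}=\nabla r_x(s,a)+\gamma\sum_{s'}\bigl(\nabla\cP_x(s'|s,a)\,V^*_{s'}+\cP_x(s'|s,a)\,\nabla V^*_{s'}\bigr).
\]
Each product is Lipschitz because its factors are bounded and Lipschitz. Specifically, $\nabla r$ is $L_r$-Lipschitz; $\nabla\cP$ is $L_P$-Lipschitz and bounded by $C_P$; $V^*$ is bounded by $B_V$ and is $C_V$-Lipschitz; and $\cP$ is bounded and $C_P$-Lipschitz. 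It therefore remains to show that $\nabla V^*$ is bounded and Lipschitz.

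\textbf{Key step: implicit differentiation of the fixed point.} Define $\Phi(V,x)=\cT_xV-V$, so that $\Phi(V^*(x),x)=0$. The Jacobian $\nabla_V\cT_x=\gamma\,\Pi P$ is $\gamma$ times a row-stochastic matrix, where $\Pi$ collects the softmax weights $\pi^V$ induced by $V$. Hence $\nabla_V\Phi=\gamma\Pi P-I$ is invertible, with $\|(I-\gamma\Pi P)^{-1}\|_\infty\le 1/(1-\gamma)$. The implicit function theorem (with $\cT_x$ jointly $C^1$ in $(V,x)$ by the differentiability assumptions) then gives
\[
\nabla V^*(x)=(I-\gamma \Pi^*P_x)^{-1}\nabla_x\cT_x V\big|_{V=V^*(x)}.
\]
Here $\nabla_x\cT_x V$ is the $\pi^V$-weighted average of $\nabla r_x+\gamma\nabla\cP_x^\top V$. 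This yields the bound $\|\nabla V^*\|\lesssim (C_r+\gamma C_PB_V)/(1-\gamma)$.

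For the Lipschitz property, write the difference at $x_1$ and $x_2$ using $A^{-1}-B^{-1}=A^{-1}(B-A)B^{-1}$ and show that each ingredient is Lipschitz in $x$:
\begin{itemize}
\item $\Pi^*(x)$ is Lipschitz by Lemma~\ref{lemma: lipschitz-of-pi-star};
\item $P_x$ is Lipschitz with constant $C_P$;
\item the weighted term $\sum_a\pi^*\bigl(\nabla r+\gamma\nabla\cP^\top V^*\bigr)$ is Lipschitz, being a product of bounded Lipschitz factors that uses $L_r$, $L_P$, $C_V$, and the Lipschitz continuity of $\pi^*$.
\end{itemize}
Combining these, $\nabla V^*$ is Lipschitz with a constant polynomial in $(1-\gamma)^{-1}$, $\tau^{-1}$, $C_r$, $C_P$, $L_r$, $L_P$, $B_V$, and $\sqrt{|\cS||\cA|}$, where the last factor comes from converting $\|\cdot\|_\infty$ to Euclidean norms.

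The main obstacle is justifying differentiability of $V^*(x)$ and controlling the inverse $(I-\gamma\Pi^*P)^{-1}$ uniformly in $x$. Both are handled by the contraction structure: the uniform bound $1/(1-\gamma)$ on the inverse follows from the stochasticity of $\Pi^*P$ and a Neumann series argument. Everything after that is routine product-rule bookkeeping.
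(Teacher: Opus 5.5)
Your proposal is correct and follows essentially the paper's route. Both arguments reduce to the implicit-differentiation formula $\nabla V^*(x)=(I-\gamma\cP^{\pi^*(x)}_x)^{-1}\nabla_1\Phi(x,V^*(x))$, with the inverse controlled uniformly via $1/(1-\gamma)$, and then check that each factor is bounded and Lipschitz. The differences are cosmetic:
\begin{enumerate}
\item You compose the softmax Jacobian with $\nabla Q^*$, whereas the paper differentiates $\pi^*_{sa}=\exp((Q^*_{sa}-V^*_s)/\tau)$. This is the same expression, since $\nabla V^*_s=\sum_a\pi^*_{sa}\nabla Q^*_{sa}$.
\item You justify differentiability of $V^*$ by the implicit function theorem plus uniqueness of the fixed point, whereas the paper cites Yang et al.
\item You use the resolvent identity, whereas the paper telescopes powers in the Neumann series.
\end{enumerate}
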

\begin{proof}

    For any $s\in\cS$ and $a\in\cA$,
    \begin{align}
        \pi^*_{sa}(x) = \frac{\exp (Q^*_{sa}(x)/\tau)}{\sum_{a'} \exp (Q^*_{sa'}(x)/\tau)} = \exp\left( \frac{Q^*_{sa}(x)}{\tau} - \log \sum_{a'} \exp \left( \frac{Q^*_{sa'}(x)}{\tau} \right) \right) \notag.
    \end{align}
    By the definition of $Q^*_{sa}(x)$, we have
    \begin{align}
        Q^*_{sa}(x) = r_x(s,a) + \gamma \sum_{s^\prime} \cP_x(s^\prime|s,a) V^*_{s^\prime}(x). \notag
    \end{align}
    This implies that 
    \begin{align}
        \log \sum_{a'} \exp \left( \frac{Q^*_{sa'}(x)}{\tau} \right) & = \log \sum_{a'} \exp \left( \frac{r_x(s,a') + \gamma \sum_{s^\prime} \cP_x(s^\prime|s,a') V^*_{s^\prime}(x)}{\tau} \right) \notag\\
        & = \tau^{-1} (\cT_x V^*(x))(s) = \tau^{-1} V^*_s(x).\notag
    \end{align}
    Thus, we can represent $\pi^*_{sa}(x)$ as
    \begin{align}
        \pi^*_{sa}(x) =  \exp\left( \frac{1}{\tau}  \left( Q^*_{sa}(x) - V^*_s(x) \right) \right).
    \end{align}

Taking the gradient with respect to $x$ on both sides, we can obtain
    \begin{align}
        \nabla \pi^*_{sa}(x) & = \frac{\pi^*_{sa}(x)}{\tau}  \left( \nabla r_x(s,a) + \gamma \sum_{s^\prime} \nabla \cP_x(s^\prime|s,a) V^*_{s^\prime}(x) + \gamma \sum_{s^\prime} \cP_x(s^\prime|s,a) \nabla V^*_{s^\prime}(x) - \nabla V^*_s(x)  \right). \notag
    \end{align}
    In matrix form, this can be written as
    \begin{align}
        \nabla \pi^*(x) & = \frac{1}{\tau} \diag(\pi^*(x)) \left( \nabla r_x + \gamma D_{P}(x) - U(x) \nabla V^*(x)  \right),
    \end{align}
    where each element $U \in \R^{|\cS||\cA| \times |\cS|}$ is defined as $U_{sas^\prime} \bydef 1 - \gamma \cP_{sas^\prime}(x)$ for $s = s^\prime$ and $U_{sas^\prime} \bydef - \gamma \cP_{sas^\prime}(x)$ otherwise, and each element of $D_\cP(x) \in \R^{|\cS||\cA| \times d_x}$ is defined as
    \begin{align}
        D_\cP(x)_{sa} \bydef \sum_{s^\prime} V^*_{s^\prime}(x) \nabla \cP_x(s^\prime|s,a)^T. \notag
    \end{align}
    Consider the matrix $U(x)$, we have
    \begin{align}
        \| U(x) \| & =  \sqrt{ \sum_{ \substack{s,a,s^\prime \\s \neq s^\prime}} \gamma^2 \cP_{sas^\prime}(x)^2 + \sum_{s,a} (1 - \gamma \cP_{sas}(x))^2} \leq \sqrt{|\cS||\cA|} \bydef B_U, 
    \end{align}
    and 
    \begin{align}
        \| U(x_1) - U(x_2) \| & \leq \sqrt{|\cS||\cA|} \sup_{s,a} \sum_{s^\prime} |U_{sas^\prime}(x_1) - U_{sas^\prime}(x_2)| \leq \gamma C_P \sqrt{|\cS||\cA|} \|x_1 - x_2\| \bydef C_U \|x_1 - x_2\|. 
    \end{align}

    Consider the transition gradient term $D_\cP(x)$, we have for any state $s$ and action $a$,
    \begin{align}
         \| D_\cP(x)_{sa}\|  & = \|\sum_{s^\prime} V^*_{s^\prime}(x) \nabla \cP_x(s^\prime|s,a)^T \|\notag \\
         & \leq \sum_{s^\prime} |V^*_{s^\prime}(x)| \| \nabla \cP_x(s^\prime|s,a) \| \notag \\
         &  \leq |\cS| B_V C_P. \notag
    \end{align}
    Thus, we can obtain
    \begin{align}
        \| D_\cP(x) \| & \leq \sqrt{|\cS||\cA|} \sup_{s,a} \| D_\cP(x)_{sa} \| \leq |\cS|^{\frac{3}{2}} |\cA|^{\frac{1}{2}} B_V C_P \bydef B_{D_\cP} ,
    \end{align}
    and for any $x_1,x_2 \in \R^{d_x}$, any state $s$ and action $a$, we have
    \begin{align}
        & \quad \; \left\|D_\cP(x_1)_{sa} - D_\cP(x_2)_{sa}\right\| \notag \\
        & = \left\|\sum_{s^\prime} V^*_{s^\prime}(x_1) \nabla \cP_{x_1}(s^\prime|s,a)^T - \sum_{s^\prime} V^*_{s^\prime}(x_2) \nabla \cP_{x_2}(s^\prime|s,a)^T \right\| \notag\\
        & \leq \sum_{s^\prime} \left\|V_{s^\prime}^*(x_1) \left( \nabla \cP_{x_1}(s^\prime|s,a)^T - \nabla \cP_{x_2}(s^\prime|s,a)^T \right) \right\| + \sum_{s^\prime} \left\| \left( V^*_{s^\prime}(x_1) - V^*_{s^\prime}(x_2) \right) \nabla \cP_{x_2}(s^\prime|s,a)^T \right\| \notag \\
        & \leq |\cS| B_V \sup_{s^\prime} \| \nabla \cP_{x_1}(s^\prime|s,a) - \nabla \cP_{x_2}(s^\prime|s,a) \| + |\cS| C_P \| V^*(x_1) - V^*(x_2) \|_\infty \notag \\
        & \leq \left( |\cS| B_V L_P  + |\cS| C_P \frac{C_r + \gamma C_P B_V}{1-\gamma} \right) \|x_1 - x_2\| \notag \\
        & \bydef C_{D_\cP} \|x_1 - x_2\|,
    \end{align}
    where the second last inequality is due to the Lipschitz continuity of $\nabla \cP_x$ and $V^*(x)$ w.r.t. $x$.

    Moreover, define the mapping $\Phi: \R^{d_x} \times \R^{|\cS|} \mapsto \R^{|\cS|}$ as 
    \begin{align}
        \Phi(x,v) = \tau \log \left( \exp\left( \frac{r(x) + \gamma \cP(x) v}{\tau}\right) \cdot \mathbf{1} \right), \notag
    \end{align}
    where $\log(\cdot)$ and $\exp(\cdot)$ are element-wise operations, and $\mathbf{1}\in\R^{|\cA|}$ is an all-one vector. Then $V^*(x)$ is the fixed point of this mapping.
    
    By Proposition 4.1 in~\citet{yang2025bilevel}, we know that $V^*(x)$ is the unique fixed point of the mapping $\Phi(x,\cdot)$ for any $x \in \R^{d_x}$, and $V^*(x)$ is differentiable w.r.t. $x$ with
    \begin{align}
        \nabla V^*(x)  &= (I - \gamma \cP^{\pi^*(x)}_x)^{-1} \nabla_1 \Phi(x,V^*(x)) \notag \\
        & = \sum_{t=0}^\infty \gamma^t  \left(\cP^{\pi^*(x)}_x\right)^t \nabla_1 \Phi(x,V^*(x)), 
    \end{align}
    where $\cP_x^{\pi^*(x)} \in \R^{|\cS| \times |\cS|}$ is the transition kernel induced by the optimal policy $\pi^*(x)$, with each element defined as $\left(\cP^{\pi^*(x)}_x\right)_{ss^\prime} = \sum_a \pi^*_{sa}(x) \cP_x(s^\prime|s,a)$.

    Consequently, for any $x_1,x_2 \in \R^{d_x}$, we obtain
    \begin{align}
        \| \nabla V^*(x_1) - \nabla V^*(x_2) \| & = \left\| \sum_{t=0}^\infty \gamma^t \left(  \left(\cP^{\pi^*(x_1)}_{x_1}\right)^t \nabla_1 \Phi(x_1,V^*(x_1)) -  \left(\cP^{\pi^*(x_2)}_{x_2}\right)^t \nabla_1 \Phi(x_2,V^*(x_2))\right)\right\| \notag \\
        & \leq \left\|  \sum_{t=0}^\infty \gamma^t \left(  \left(\cP^{\pi^*(x_1)}_{x_1}\right)^t \nabla_1 \Phi(x_1,V^*(x_1)) -  \left(\cP^{\pi^*(x_2)}_{x_2}\right)^t \nabla_1 \Phi(x_1,V^*(x_1))\right) \right\| \notag \\
        & \quad \, + \left\|  \sum_{t=0}^\infty \gamma^t \left(  \left(\cP^{\pi^*(x_2)}_{x_2}\right)^t \nabla_1 \Phi(x_1,V^*(x_1)) -  \left(\cP^{\pi^*(x_2)}_{x_2}\right)^t \nabla_1 \Phi(x_2,V^*(x_2))\right) \right\| .\notag
    \end{align}

    Moreover, we have for any state $s$, 
    \begin{align}
        & \quad \,  \| \cP^{\pi^*(x_1)}_{x_1}(\cdot| s) - \cP^{\pi^*(x_2)}_{x_2}(\cdot| s) \|_\infty \nonumber \\
        & = \|\sum_{a}  \pi^*(a|s;x_1) \cP_{x_1}(\cdot| s,a ) - \sum_{a} \pi^*(a|s;x_2) \cP_{x_2}(\cdot| s,a ) \|_\infty \notag\\
        & \leq \| \sum_{a} \left(  \pi^*(a|s;x_1)  - \pi^*(a|s;x_2)\right)\cP_{x_1}(\cdot| s,a )  \|_\infty + \| \sum_{a}  \pi^*(a|s;x_1)\left(  \cP_{x_1}(\cdot| s,a ) - \cP_{x_2}(\cdot| s,a ) \right) \|_\infty \notag\\
        & \leq \sum_{a} | \pi^*(a|s;x_1)  - \pi^*(a|s;x_2) |\| \cP_{x_1}(\cdot| s,a )\|_\infty + \sum_{a}  \pi^*(a|s;x_1) \|  \cP_{x_1}(\cdot| s,a ) - \cP_{x_2}(\cdot| s,a ) \|_\infty \notag\\
        & \leq  \| \pi^*(\cdot|s;x_1) - \pi^*(\cdot|s;x_2) \|_1 + \sup_{s,a} \|  \cP_{x_1}(\cdot| s,a ) - \cP_{x_2}(\cdot| s,a ) \|_\infty \notag \\
        & \leq  \sqrt{|\cA|}\| \pi^*(x_1) - \pi^*(x_2) \| + \sup_{s,a} \|  \cP_{x_1}(\cdot| s,a ) - \cP_{x_2}(\cdot| s,a ) \|_1 \notag \\
        & \leq \left(\sqrt{|\cA|}  C_{\pi^*} + C_P\right) \| x_1 - x_2 \|  ,
    \end{align}
    therefore, we obtain
    \begin{align}
        \|\cP^{\pi^*(x_1)}_{x_1} - \cP^{\pi^*(x_2)}_{x_2}\|_\infty  \leq |\cS| \sup_{s} \| \cP^{\pi^*(x_1)}_{x_1}(\cdot| s) - \cP^{\pi^*(x_2)}_{x_2}(\cdot| s) \|_\infty \leq |\cS| \left(\sqrt{|\cA|}  C_{\pi^*} + C_P\right) \| x_1 - x_2 \| \notag.
    \end{align}

    Hence, denote $A = \cP^{\pi^*(x_1)}_{x_1} \in \R^{|\cS| \times |\cS|}$, $B = \cP^{\pi^*(x_2)}_{x_2}\in \R^{|\cS| \times |\cS|} $. We obtain
    \begin{align}
       \left\|\left(\cP^{\pi^*(x_1)}_{x_1}\right)^t - \left(\cP^{\pi^*(x_2)}_{x_2} \right)^t \right\|_{\infty} & = \|A^t - B^t \|_\infty \notag \\
        & = \left\|\sum_{i=0}^{t-1} A^{t-1-i}(A - B)B^i \right\|_\infty \notag \\
        & \leq \sum_{i=0}^{t-1} \|A^{t-1-i}\|_\infty \|A - B\|_\infty \|B^i\|_\infty \notag \\
        & \leq t \| A - B\|_\infty \notag \\
        & \leq t |\cS| \left(\sqrt{|\cA|}  C_{\pi^*} + C_P\right) \| x_1 - x_2 \|,
    \end{align}
    which implies that there exists a constant $L_{pow}>0$ such that
    \begin{align}
        \left\|\left(\cP^{\pi^*(x_1)}_{x_1}\right)^t - \left(\cP^{\pi^*(x_2)}_{x_2} \right)^t \right\| \leq|\cS| \left\|\left(\cP^{\pi^*(x_1)}_{x_1}\right)^t - \left(\cP^{\pi^*(x_2)}_{x_2} \right)^t \right\|_\infty \leq  t L_{pow} \| x_1 - x_2 \|.
    \end{align}

    Meanwhile, we have
    \begin{align}
        \frac{\partial \Phi_s(x,v)}{\partial x_i} & = \frac{\sum_a \left( \partial_{x_i} r_x(s,a) + \gamma \sum_{s^\prime} \partial_{x_i} \cP_{x}(s^\prime|s,a) v_{s^\prime} \right)\exp\left(\tau^{-1} \left( r_x(s,a)+ \gamma \sum_{s^\prime}\cP_{x}(s^\prime|s,a) v_{s^\prime}\right) \right)}{\sum_{a} \exp\left(\tau^{-1} \left( r_x(s,a) + \gamma \sum_{s^\prime}\cP_{x}(s^\prime|s,a) v_{s^\prime}\right) \right)}, \notag
    \end{align}
    and substituting $v = V^*(x)$, we obtain
    \begin{align}
        \nabla_1 \Phi_s(x,V^*(x)) & = \sum_a \pi^*_{sa}(x) \left( \nabla r_x(s,a) + \gamma \sum_{s^\prime} \nabla \cP_{x}(s^\prime|s,a) V^*_{s^\prime}(x) \right).
    \end{align}

    Then for any $x$, we have
    \begin{align}
        \|\nabla_1 \Phi_s(x,V^*(x))\| & \leq \sum_a \pi^*_{sa}(x) \left( \| \nabla r_x(s,a) \| + \gamma \sum_{s^\prime} \| \nabla \cP_{x}(s^\prime|s,a) \| |V^*_{s^\prime}(x)| \right) \notag \\
        & \leq C_r + \gamma |\cS| C_P B_V \bydef B_\Phi,
    \end{align}
    thus, we can arrive at
    \begin{align}
        \|\nabla V^*(x) \| & = \left\| \sum_{t=0}^\infty \gamma^t  \left(\cP^{\pi^*(x)}_x\right)^t \nabla_1 \Phi(x,V^*(x))\right\| \notag \\
        & \leq \sum_{t=0}^\infty \gamma^t \|    \left(\cP^{\pi^*(x)}_{x}\right)^t  \| \|\nabla_1 \Phi(x,V^*(x))\| \notag \\
        & \leq \frac{\sqrt{|\cS|} B_\Phi}{1-\gamma}.
    \end{align}

    We also have for any state $s$ and for any $x_1$, $x_2$,
    \begin{align}
        &\quad\; \|\nabla_1 \Phi_s(x_1,V^*(x_1)) - \nabla_1 \Phi_s(x_2,V^*(x_2)) \| \notag\\
        & \leq\left\| \sum_a \pi^*_{sa}(x_1)  \nabla r_{x_1}(s,a) -\sum_a \pi^*_{sa}(x_2)  \nabla r_{x_2}(s,a) \right\|  \notag \\ 
        & \quad \; + \gamma \left\| \sum_a \pi^*_{sa}(x_1)  \sum_{s^\prime} \nabla \cP_{x_1}(s^\prime|s,a) V^*_{s^\prime}(x_1) - \sum_a \pi^*_{sa}(x_2)  \sum_{s^\prime} \nabla \cP_{x_2}(s^\prime|s,a) V^*_{s^\prime}(x_2) \right\| 
        \notag \\
        & \leq \left\| \sum_{a} \left( \pi^*_{sa}(x_1) - \pi^*_{sa}(x_2)\right)\nabla r_{x_1}(s,a) \right\| + \left\| \sum_a \pi^*_{sa}(x_2)  \left( \nabla r_{x_1}(s,a) - \nabla r_{x_2}(s,a) \right) \right\| 
       \notag \\
        & \quad \; + \gamma \left\| \sum_a \left( \pi^*_{sa}(x_1) - \pi^*_{sa}(x_2)\right)  \sum_{s^\prime} \nabla \cP_{x_1}(s^\prime|s,a) V^*_{s^\prime}(x_1) \right\| \notag\\
        & \quad \; + \gamma \left\| \sum_a \pi^*_{sa}(x_2)  \sum_{s^\prime} \left( \nabla \cP_{x_1}(s^\prime|s,a) - \nabla \cP_{x_2}(s^\prime|s,a) \right) V^*_{s^\prime}(x_1) \right\| \notag\\
        & \quad \; + \gamma \left\| \sum_a \pi^*_{sa}(x_2)  \sum_{s^\prime} \nabla \cP_{x_2}(s^\prime|s,a) \left( V^*_{s^\prime}(x_1) - V^*_{s^\prime}(x_2) \right) \right\| \notag\\
        & \leq C_r \sqrt{|\cA|} C_{\pi^*} \| x_1 - x_2 \|+ L_r \| x_1 - x_2 \| + \gamma |\cS| C_P B_V \sqrt{|\cA|} C_{\pi^*} \| x_1 - x_2 \|  \notag \\
        & \quad \; + \gamma |\cS| L_P B_V \| x_1 - x_2 \| + \gamma |\cS| C_P C_V \| x_1 - x_2 \| \notag \\
        & = \left( \sqrt{|\cA|} C_r C_{\pi^*} + L_r + \gamma |\cS| C_P B_V \sqrt{|\cA|} C_{\pi^*} + \gamma |\cS| L_P B_V + \gamma |\cS| C_P C_V \right) \| x_1 - x_2 \|, 
    \end{align}
    which implies that there exists a constant $L_\Phi >0$ such that
    \begin{align}
        \| \nabla_1 \Phi(x_1,V^*(x_1)) - \nabla_1 \Phi(x_2,V^*(x_2)) \| & \leq \sqrt{|\cS|} \max_{s} \|\nabla_1 \Phi_s(x_1,V^*(x_1)) - \nabla_1 \Phi_s(x_2,V^*(x_2)) \| \notag \\
        & \leq L_\Phi \| x_1 - x_2 \|.
    \end{align}

    Subsequently, we can arrive at for any $x_1,x_2$,
    \begin{align} 
        \| \nabla V^*(x_1) - \nabla V^*(x_2) \| & \leq \sum_{t=0}^\infty \gamma^t \left\|    \left(\cP^{\pi^*(x_1)}_{x_1}\right)^t  -  \left(\cP^{\pi^*(x_2)}_{x_2}\right)^t  \right\| \left\|\nabla_1 \Phi(x_1,V^*(x_1))\right\| \notag \\
        & +  \sum_{t=0}^\infty \gamma^t\left\|  \left(\cP^{\pi^*(x_2)}_{x_2}\right)^t  \right\| \left\|  \nabla_1 \Phi(x_1,V^*(x_1)) -   \nabla_1 \Phi(x_2,V^*(x_2))\right\|  \notag\\
        & \leq \sum_{t=0}^\infty \gamma^t t L_{pow} \| x_1 - x_2 \| B_\Phi + \sum_{t=0}^\infty \gamma^t \sqrt{|\cS|}L_\Phi \| x_1 - x_2 \| \notag\\
        & = \left( \frac{\gamma L_{pow} B_\Phi}{(1-\gamma)^2} + \frac{\sqrt{|\cS|} L_\Phi}{1-\gamma} \right) \| x_1 - x_2 \| \bydef L_V \| x_1 - x_2 \| .
    \end{align}

    Therefore, we have for any $x_1,x_2$,
    \begin{align}
        & \quad\; \| \nabla \pi^*(x_1) - \nabla \pi^*(x_2) \| \notag \\
         & = \big\| \tau^{-1} \operatorname{diag}(\pi^*(x_1)) \left(\nabla r(x_1) + \gamma D_\cP(x_1) - U(x_1) \nabla V^*(x_1) \right) \notag \\
         & \quad \; - \tau^{-1} \operatorname{diag}(\pi^*(x_2)) \left(\nabla r(x_2) + \gamma D_\cP(x_2) - U(x_2) \nabla V^*(x_2) \right) \big\| \notag \\
         & \leq \tau^{-1} \left\| \diag(\pi^*(x_1)) \nabla r(x_1) - \diag(\pi^*(x_2)) \nabla r(x_2) \right\| \notag \\
         & \quad \, +  \tau^{-1} \gamma\left\| \diag(\pi^*(x_1)) D_\cP(x_1) - \diag(\pi^*(x_2)) D_\cP(x_2) \right\| \notag \\
         & \quad \, + \tau^{-1} \left\| \diag(\pi^*(x_1)) U(x_1) \nabla V^*(x_1) - \diag(\pi^*(x_2)) U(x_2) \nabla V^*(x_2) \right\| \notag\\
         & \leq \tau^{-1} \left\| \diag(\pi^*(x_1)) \left( \nabla r(x_1) - \nabla r(x_2) \right) \right\| + \tau^{-1} \left\| \left( \diag(\pi^*(x_1)) - \diag(\pi^*(x_2)) \right) \nabla r(x_2) \right\|\notag \\
         & \quad \, + \tau^{-1} \gamma \left\| \diag(\pi^*(x_1)) \left(D_\cP(x_1) - D_\cP(x_2) \right) \right\| + \tau^{-1} \gamma \left\| \left(\diag\left(\pi^*(x_1)\right) - \diag\left(\pi^*(x_2)\right)\right) D_{\cP}(x_2)  \right\|  \notag \\
         & \quad \,+ \tau^{-1} \left\| \diag(\pi^*(x_1)) U(x_1) \left( \nabla V^*(x_1) - \nabla V^*(x_2) \right) \right\| \notag\\
         & \quad \,+ \tau^{-1} \left\| \diag(\pi^*(x_1)) \left( U(x_1) - U(x_2) \right) \nabla V^*(x_2) \right\| \notag \\
         & \quad \,+ \tau^{-1} \left\| \left( \diag(\pi^*(x_1)) - \diag(\pi^*(x_2)) \right) U(x_2) \nabla V^*(x_2) \right\| \notag\\
         & \leq \tau^{-1} \|\diag(\pi^*(x_1)) \| \|\nabla r(x_1) - \nabla r(x_2) \| + \tau^{-1} \| \diag(\pi^*(x_1)) - \diag(\pi^*(x_2)) \| \| \nabla r(x_2) \| \notag\\
         & \quad \, + \tau^{-1} \gamma \left\| \diag(\pi^*(x_1))  \right\| \left\| D_\cP(x_1) - D_\cP(x_2) \right\| + \tau^{-1} \gamma \left\| \diag(\pi^*(x_1)) - \diag(\pi^*(x_2))\right\| \left\| D_{\cP}(x_2)  \right\|  \notag \\
         & \quad \,+ \tau^{-1} \|\diag(\pi^*(x_1)) \| \| U(x_1) \| \| \nabla V^*(x_1) - \nabla V^*(x_2) \| \notag\\
         & \quad \,+ \tau^{-1} \|\diag(\pi^*(x_1)) \| \| U(x_1) - U(x_2) \| \| \nabla V^*(x_2) \| \notag \\ 
         & \quad \,+ \tau^{-1} \| \diag(\pi^*(x_1)) - \diag(\pi^*(x_2)) \| \| U(x_2) \| \| \nabla V^*(x_2) \| \notag\\
         & \leq \tau^{-1} \sqrt{|\cS||\cA|} L_r \| x_1 - x_2 \| + \tau^{-1} \sqrt{|\cS||\cA|} C_{\pi^*} C_r \| x_1 - x_2 \| \notag \\
         & \quad \,+ \tau^{-1} \gamma C_{D_\cP} \| x_1 - x_2 \| + \tau^{-1} \gamma C_{\pi^*} B_{D_\cP}\| x_1 - x_2\| \notag \\
         & \quad \,+ \tau^{-1}B_U L_V \| x_1 - x_2 \| + \tau^{-1} \frac{\sqrt{|\cS|}C_U B_\Phi}{1-\gamma} \| x_1 - x_2 \| + \tau^{-1}   \frac{B_U B_\Phi C_{\pi^*}}{1-\gamma}  \| x_1 - x_2 \| \notag \\
         & = \tau^{-1}\left(\sqrt{|\cS||\cA|} \left( L_r + C_{\pi^*} C_r \right) + \gamma C_{D_\cP} + \gamma C_{\pi^*} B_{D_\cP} + B_U L_V + \frac{\sqrt{|\cS|}C_U B_\Phi}{1-\gamma} +   \frac{B_U B_\Phi C_{\pi^*}}{1-\gamma}  \right) \|x_1 - x_2\| \\
         & \bydef L_\pi \| x_1 - x_2 \|,
    \end{align}
    where the second inequality is to separate the differences of each term, and the third inequality follows from the sub-multiplicative property of matrix norms.
\end{proof}

\subsection{Useful Properties of Min--Max Zero-Sum Markov Games}
\label{appendix: useful-properties-of-min-max-zero-sum-markov-game}
For the regularized MMZSMG $\cM(x) = \{\cS,\cA,\cB,r_x,\cP,\gamma\}$ defined in Section~\ref{sec: problem-formulation}, 
we define the state-visitation distribution under policies $\pi_\phi$ and $\pi_\psi$ as
\begin{align}
    d^{\pi_\phi,\pi_\psi}_\rho(s) \bydef (1-\gamma) \sum_{t=0}^{\infty} \gamma^t P(s_t = s | s_0 \sim \rho, \pi_\phi, \pi_\psi, \cM(x)).\notag
\end{align}

Consider the following two best-response optimization problems: $\max_{\pi_\psi \in \Delta_{|\cS|}^{|\cB|}} V_{\cM(x)}^{\pi_\phi,\pi_\psi}(\rho)$ and $\min_{\pi_\phi \in \Delta_{|\cS|}^{|\cA|}} V_{\cM(x)}^{\pi_\phi,\pi_\psi}(\rho)$. We have the following lemma.
\begin{lemma}
    \label{lemma: existence-and-uniqueness-of-solutions-to-one-side-problems}
    For any $x$ and any fixed
$\pi_\phi \in \Delta_{|\cS|}^{|\cA|}$, the problem
 $\max_{\pi_\psi \in \Delta_{|\cS|}^{|\cB|}}
V_{\cM(x)}^{\pi_\phi,\pi_\psi}(\rho)$ admits a unique maximizer. Similarly, for any fixed
$\pi_\psi \in \Delta_{|\cS|}^{|\cB|}$, the problem $
\min_{\pi_\phi \in \Delta_{|\cS|}^{|\cA|}}
V_{\cM(x)}^{\pi_\phi,\pi_\psi}(\rho)$ admits a unique minimizer.
\end{lemma}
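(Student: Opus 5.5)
Fix $x$ and $\pi_\phi$; the max-player case is treated here, and the min-player case is symmetric up to a sign flip. The plan is to reduce the best-response problem to a single-policy entropy-regularized MDP, exactly as in the proof of Lemma~\ref{lemma: policy-gradient-J}. There we marginalized out $b$; here we marginalize out $a$. Define the induced transition kernel
\[
P^{\pi_\phi}(\cdot|s,b) \bydef \E_{a\sim\pi_\phi(\cdot|s)}[\cP(\cdot|s,a,b)]
\]
and the induced reward
\[
\bar r_{(x,\phi)}(s,b) \bydef \E_{a\sim\pi_\phi(\cdot|s)}[r_x(s,a,b)] - \tau_\phi H(\pi_\phi(\cdot|s)).
\]
Since $h_s$ is entropy, it splits additively. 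Hence $V^{\pi_\phi,\pi_\psi}_{\cM(x)}(s)$ equals the soft value, with coefficient $\tau_\psi$, of $\pi_\psi$ in the MDP $\{\cS,\cB,P^{\pi_\phi},\bar r_{(x,\phi)},\gamma\}$. This MDP has bounded reward, with bound $B_r+\tau_\phi\log|\cA|$.

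Next I invoke the standard soft-MDP theory recalled at the start of the single-policy subsection (\citet{nachum2017bridging}). The softmax Bellman operator is a $\gamma$-contraction, so it has a unique fixed point $V^*$. Every policy satisfies $V^{\pi_\psi}(s)\le V^*(s)$ for all $s$, with equality for all $s$ if and only if $\pi_\psi = \pi^*$, where $\pi^*(b|s)\propto\exp(Q^*_{sb}/\tau_\psi)$.

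To make the ``only if'' part self-contained, I would argue through the operators of Definition~\ref{definition: min-max-bellman-operator} specialized to one player. Fix $V^*$. The statewise objective $z_s\mapsto z_s^T Q^*_s + \tau_\psi H(z_s)$ is strictly concave, so by Lemma~\ref{lemma: properties-of-h-star} it has a unique maximizer $z_s^*$, and $T_{z}V^*\le T_{*}V^*=V^*$. By the monotone-iteration argument of Lemma~\ref{lemma: unique-optimal-solution-min-max-mdp}, this gives $V^{z}\le V^*$ with $V^{z^*}=V^*$, so $z^*$ is a maximizer of $V(\rho)$.

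Uniqueness is the main obstacle, because the objective $V^{\pi_\psi}(\rho)$ is not concave in $\pi_\psi$. Suppose $\pi_\psi$ attains the maximum of $V(\rho)$. Since $V^{\pi_\psi}\le V^*$ pointwise and $\rho$ has full support, $\rho(s)\ge\delta_\rho>0$ by Assumption~\ref{assump: mdp-assumptions-main}, so we get $V^{\pi_\psi}=V^*$ at every state. Then
\[
V^* = V^{\pi_\psi} = T_{\pi_\psi}V^{\pi_\psi} = T_{\pi_\psi}V^*.
\]
Thus, at every $s$, $\pi_\psi(\cdot|s)$ attains $\max_{z_s}\{z_s^TQ^*_s+\tau_\psi H(z_s)\}$. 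By strict concavity this maximum is attained only at $z_s^*$, so $\pi_\psi=\pi^*$.

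For the min-player, apply the same argument to $-V$ with $\tilde r_{(x,\psi)}$ as defined in the proof of Lemma~\ref{lemma: policy-gradient-J}. This yields the unique minimizer.
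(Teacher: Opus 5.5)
Your proof is correct and follows the paper's route: you marginalize out the fixed opponent's action to obtain a single-policy entropy-regularized MDP (reward $\E_{a\sim\pi_\phi(\cdot|s)}[r_x(s,a,b)]-\tau_\phi H(\pi_\phi(\cdot|s))$ for the max-player, and the negated analogue $\tilde r_{(x,\psi)}$ for the min-player), then use existence and uniqueness of the soft-optimal policy there. The only difference is that the paper simply cites \citet{geist2019theory} for that last fact, whereas you prove it via the contraction/monotone-iteration argument of Lemma~\ref{lemma: unique-optimal-solution-min-max-mdp} plus statewise strict concavity, which also makes explicit that full support of $\rho$ is what turns maximizing $V(\rho)$ into pointwise optimality and hence uniqueness.
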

\begin{proof}
    By definition, we have
    \begin{align}
        V^{\pi_\phi,\pi_\psi}_{\cM(x)}(\rho) & = \underset{\substack{s_0 \sim \rho, a_t \sim \pi_\phi(\cdot|s_t) \\ b_t \sim \pi_\psi(\cdot|s_t) \\s_{t+1} \sim P(\cdot|s_t,a_t,b_t)}}{\mathbb{E}} \bigg[
            \sum_{t=0}^{\infty} \gamma^t \left[
                r_x(s_t,a_t,b_t) + \tau_\phi \log \pi_\phi(a_t|s_t) - \tau_\psi \log \pi_\psi(b_t|s_t)
            \right]   
        \bigg] \notag \\
        & = \underset{\substack{s_0 \sim \rho, a_t \sim \pi_\phi(\cdot|s_t) \\s_{t+1} \sim P^{\pi_\psi}(\cdot|s_t,a_t)}}{\mathbb{E}} \bigg[
            \sum_{t=0}^{\infty} \gamma^t \left[
                r^{\pi_\psi}_x(s_t,a_t) + \tau_\phi \log \pi_\phi(a_t|s_t) + \tau_\psi H(\pi_\psi(\cdot|s_t))
            \right]   
        \bigg],  
    \end{align}
    where $P^{\pi_\psi}(\cdot|s_t,a_t) = \mathbb{E}_{b_t \sim \pi_\psi(\cdot|s_t)}[P(\cdot|s_t,a_t,b_t)]$ and $r^{\pi_\psi}_x(s_t,a_t) = \mathbb{E}_{b_t \sim \pi_\psi(\cdot|s_t)}[r_x(s_t,a_t,b_t)]$. 
    
    Here $H(\pi_\psi(\cdot|s_t))$ does not depend on $\pi_\phi$. Since $\pi_\phi$ is the min-player, we can define a new reward model for $\pi_\phi$ as:
    \begin{align}
        \tilde{r}_{(x,\psi)}(s,a) \bydef - r^{\pi_\psi}_x(s,a) - \tau_\psi H(\pi_\psi(\cdot|s)).\notag 
    \end{align}

    Then for any given $x$ and $\psi$, $\cM(x)$ is equivalent to a single-policy entropy-regularized MDP as $\mathcal{M}^{\pi_\psi}(x) = \{\mathcal{S},\mathcal{A}, P^{\pi_\psi}, \tilde{r}_{(x,\psi)}, \gamma\}$, and $\pi_\phi$ is the policy that wants to maximize the soft value function:
    \begin{align}
        J^{\pi_\psi}(x,\phi) = \underset{\substack{s_0 \sim \rho, a_t \sim \pi_\phi(\cdot|s_t) \\s_{t+1} \sim P^{\pi_\psi}(\cdot|s_t,a_t)}}{\mathbb{E}} \bigg[
            \sum_{t=0}^{\infty} \gamma^t \left[
                \tilde{r}_{(x,\psi)}(s_t,a_t) - \tau_\phi \log \pi_\phi(a_t|s_t)
            \right]   
        \bigg] ,
    \end{align}
    which is equivalent to $- V^{\pi_\phi,\pi_\psi}_{\cM(x)}(\rho)$.

    Since we know that in the single-policy setting, the optimal policy for the soft value function exists and is unique~\citep{geist2019theory}, then for any given $x$ and $\psi$, the optimal $\pi_\phi$ that maximizes the soft value function $J^{\pi_\psi}(x,\phi)$ also exists and is unique. Therefore, the minimizer of the problem
$\min_{\pi_\phi \in \Delta_{|\cS|}^{|\cA|}}
V_{\cM(x)}^{\pi_\phi,\pi_\psi}(\rho)$
for any fixed $x$ and $\pi_\psi$ also exists and is unique.

    Similarly, $V^{\pi_\phi,\pi_\psi}_{\cM(x)}(\rho)$ can also be written as
    \begin{align}
        V^{\pi_\phi,\pi_\psi}_{\cM(x)}(\rho) & =\underset{\substack{s_0 \sim \rho, b_t \sim \pi_\psi(\cdot|s_t) \\s_{t+1} \sim P^{\pi_\phi}(\cdot|s_t,b_t)}}{\mathbb{E}} \bigg[
            \sum_{t=0}^{\infty} \gamma^t \left[
                r^{\pi_\phi}_x(s_t,b_t) - \tau_\psi \log \pi_\psi(b_t|s_t) - \tau_\phi H(\pi_\phi(\cdot|s_t))
            \right]
        \bigg],
    \end{align}
    where $P^{\pi_\phi}(\cdot|s_t,b_t) = \mathbb{E}_{a_t \sim \pi_\phi(\cdot|s_t)}[P(\cdot|s_t,a_t,b_t)]$ and $r^{\pi_\phi}_x(s_t,b_t) = \mathbb{E}_{a_t \sim \pi_\phi(\cdot|s_t)}[r_x(s_t,a_t,b_t)]$.

    Define 
    \begin{align}
        \tilde{r}_{(x,\phi)}(s,b) \bydef r^{\pi_\phi}_x(s,b) - \tau_\phi H(\pi_\phi(\cdot|s)). \notag
    \end{align}
    Then, for any fixed $x$ and $\pi_\phi$, by absorbing the effect of
$\pi_\phi$ into the transition kernel and the reward, the optimization over
$\pi_\psi$ can be reformulated as a single-policy entropy-regularized MDP $\mathcal{M}^{\pi_\phi}(x)
=
\{\mathcal{S}, \mathcal{B}, P^{\pi_\phi},
\tilde{r}_{(x,\phi)}, \gamma\}$. The corresponding objective for $\pi_\psi$ is to maximize the following soft value function:
\begin{align}
        J^{\pi_\phi}(x,\psi) = \underset{\substack{s_0 \sim \rho, b_t \sim \pi_\psi(\cdot|s_t) \\s_{t+1} \sim P^{\pi_\phi}(\cdot|s_t,b_t)}}{\mathbb{E}} \bigg[
            \sum_{t=0}^{\infty} \gamma^t \left[
                \tilde{r}_{(x,\phi)}(s_t,b_t) - \tau_\psi \log \pi_\psi(b_t|s_t)
            \right]   
        \bigg] ,
\end{align}
    which is equivalent to $V^{\pi_\phi,\pi_\psi}_{\cM(x)}(\rho)$.
    Then for any given $x$ and $\phi$, the optimal $\pi_\psi$ that maximizes $J^{\pi_\phi}(x,\psi)$ exists and is unique. Therefore, the maximizer of the problem $\max_{\pi_\psi \in \Delta_{|\cS|}^{|\cB|}} V_{\cM(x)}^{\pi_\phi,\pi_\psi}(\rho)$ for any $x$ and $\phi$ also exists and is unique.
\end{proof}

Denote the best-response policies as $\pi^*_\phi(x,\psi) = \arg\min_{\pi_\phi \in \Delta_{|\cS|}^{|\cA|}} V_{\cM(x)}^{\pi_\phi,\pi_\psi}(\rho)$, $\pi^*_\psi(x,\phi) = \arg\max_{\pi_\psi \in \Delta_{|\cS|}^{|\cB|}} V_{\cM(x)}^{\pi_\phi,\pi_\psi}(\rho)$. Denote the corresponding best-response parameters as
$\phi^*(x,\psi) = \arg\min_\phi J(x,\phi,\psi)$,
and $\psi^*(x,\phi) = \arg\max_\psi J(x,\phi,\psi)$. 
Then the optimal softmax parameters satisfy
\begin{align}
    \phi^*(x,\psi) = \{\phi \in \R^{|\cS||\cA|} : \pi_\phi = \pi^*_\phi(x,\psi) \} = \{\log \pi_\phi^*(x,\psi) + \diag(c) \mathbf{1}_{|\cS|\times|\cA|}: c\in \R^{|\cS|} \}, \notag
\end{align}
and 
\begin{align}
    \psi^*(x,\phi) = \{\psi \in \R^{|\cS||\cB|} : \pi_\psi = \pi^*_\psi(x,\phi) \} = \{\log \pi_\psi^*(x,\phi) + \diag(c) \mathbf{1}_{|\cS|\times|\cB|}: c\in \R^{|\cS|} \}, \notag
\end{align}
where $\mathbf{1}_{|\cS|\times|\cA|}$ is a $|\cS| \times |\cA|$ matrix with all entries being $1$.

Similarly, we also define the best-response state-visitation distribution for policy $\pi_\phi$ as
$d^{\pi_\phi,\dag}_{\rho}(s) \bydef d^{\pi_\phi, \pi_\psi}_\rho(s) \big|_{\pi_\psi \in \pi^*_\psi(x,\phi)}$,
and the best-response state-visitation distribution for policy $\pi_\psi$ as
$d^{\dag, \pi_\psi}_{\rho}(s) \bydef d^{\pi_\phi, \pi_\psi}_\rho(s) \big|_{\pi_\phi \in \pi^*_\phi(x,\psi)}$.

By Proposition~\ref{proposition: existence-and-uniqueness-of-equilibrium-main}, we know that the optimal policy pair $(\pi_\phi^*,\pi_\psi^*)$ exists and is unique, therefore we can also define the optimal state-visitation distribution under policies $\pi_\phi^*$ and $\pi_\psi^*$ as
$d^{*}_{\rho}(s) \bydef d^{\pi_\phi^*, \pi_\psi^*}_\rho(s)$.

Denote $J^*(x) = \min_\phi \max_\psi J(x,\phi,\psi) = V^{\star}_{\cM(x)}(\rho)$. Then we have the following lemma on the P{\L} condition of the best-response value function $J_1(x,\phi) = \max_{\psi } J(x,\phi,\psi)$ and $J_2(x,\psi) = \min_{\phi } J(x,\phi,\psi)$.

\begin{lemma}[Non-uniform P{\L} condition on best-response value function]
    \label{lemma: non-uniform-pl-condition-one-side-optimal-value-function}
    Suppose the initial distribution $\rho$ has full support, i.e., $\min_s \rho(s) >0$. 
    Then for any $x$, the best-response value function $J_1(x,\phi) = \max_{\psi} J(x,\phi,\psi)$ satisfies the following condition in $\phi$:
    \begin{tightalign}
        \frac{1}{2}\| \nabla_{\phi} J_1(x,\phi) \|_2^2  \geq \mu_1(x,\phi) \left( J_1(x,\phi) -  J^*(x) \right), \notag
    \end{tightalign}
    where
    \begin{tightalign}
       \mu_1(x,\phi) = \frac{\tau_\phi}{|\cS|}  \min_s \rho(s) \min_{s,a} \pi_\phi^2(a|s) \min_s \frac{d^{\pi_\phi,\pi^*_{\psi}(x,\phi)}_\rho (s) }{d^{\dag,\pi^*_{\psi}(x,\phi)}_\rho(s)}, \nonumber
    \end{tightalign}
    and $\pi^*_{\psi}(x,\phi) = \arg\max_{\pi_\psi} V^{\pi_\phi,\pi_\psi}_{\cM(x)}(\rho).$

    Similarly, for any $x$, the best-response value function $J_2(x,\psi) = \min_{\phi} J(x,\phi,\psi)$ satisfies the following condition in $\psi$:
    \begin{tightalign}
        \frac{1}{2}\| \nabla_{\psi} J_2(x,\psi) \|_2^2  \geq \mu_2(x,\psi) \left( J^*(x) -  J_2(x,\psi)\right), \notag
    \end{tightalign}
    where
    \begin{tightalign}
         \mu_2(x,\psi) = \frac{\tau_\psi}{|\cS|} \min_s \rho(s) \min_{s,b} \pi_\psi^2(b|s) \min_{s} \frac{d^{\pi_\phi^*(x,\psi), \pi_\psi}_{\rho}(s)}{d^{\pi_\phi^*(x,\psi),\dag}_{\rho}(s)}, \nonumber
    \end{tightalign}
    and $\pi^*_{\phi}(x,\psi) = \arg\min_{\pi_\phi} V^{\pi_\phi,\pi_\psi}_{\cM(x)}(\rho).$
\end{lemma}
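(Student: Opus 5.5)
The plan is to reduce to the single-agent non-uniform P{\L} inequality of \citet{mei2020global} (their Lemma 15, soft sub-optimality), using the reduction already employed in Lemma~\ref{lemma: existence-and-uniqueness-of-solutions-to-one-side-problems}. We treat $J_1$; $J_2$ is symmetric, with signs flipped and the roles of $\phi,\psi$ exchanged.

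\textbf{Step 1: gradient of $J_1$ via Danskin.} Fix $x$ and $\phi$. By Lemma~\ref{lemma: existence-and-uniqueness-of-solutions-to-one-side-problems} the best response $\pi^*_\psi(x,\phi)$ is unique. Every parameter in $\psi^*(x,\phi)$ induces the same policy, so $\nabla_\phi J(x,\phi,\psi)$ is the same for all maximizers. To get compactness, restrict $\psi$ to a bounded set of representatives, e.g.\ the normalized logits $\log\pi_\psi$, which lie in a bounded set thanks to $\pi_\psi\ge\delta_\pi$. Lemma~\ref{lemma: generalized-danskin-theorem} then gives
\begin{align}
\nabla_\phi J_1(x,\phi)=\nabla_\phi J(x,\phi,\psi^\dagger), \qquad \pi_{\psi^\dagger}=\pi^*_\psi(x,\phi). \notag
\end{align}

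\textbf{Step 2: reduce to a single-agent MDP and apply Mei et al.} With $\psi^\dagger$ frozen, consider the MDP $\cM^{\pi_{\psi^\dagger}}(x)$ from Lemma~\ref{lemma: policy-gradient-J}. In it $\widetilde V^{\pi_\phi}(\rho)=-J(x,\phi,\psi^\dagger)$, so $\|\nabla_\phi J_1\|=\|\nabla_\phi \widetilde V^{\pi_\phi}(\rho)\|$. Let $\pi^\circ=\pi^*_\phi(x,\psi^\dagger)$ be the soft-optimal policy of this MDP. The soft-value P{\L} lemma of \citet{mei2020global} gives
\begin{align}
\|\nabla_\phi \widetilde V^{\pi_\phi}(\rho)\|_2^2\ \ge\ \frac{2\tau_\phi}{|\cS|}\min_s\rho(s)\,\min_{s,a}\pi_\phi^2(a|s)\,\Big\|\frac{d^{\pi^\circ}_\rho}{d^{\pi_\phi}_\rho}\Big\|_\infty^{-1}\big(\widetilde V^{\pi^\circ}(\rho)-\widetilde V^{\pi_\phi}(\rho)\big). \notag
\end{align}
Translating back, $\widetilde V^{\pi^\circ}-\widetilde V^{\pi_\phi}=J(x,\phi,\psi^\dagger)-\min_{\phi'}J(x,\phi',\psi^\dagger)$. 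The visitation distributions are those of the game, namely $d^{\pi_\phi,\pi^*_\psi(x,\phi)}_\rho$ and $d^{\dag,\pi^*_\psi(x,\phi)}_\rho$. Hence the distribution-mismatch factor $\|d^{\pi^\circ}_\rho/d^{\pi_\phi}_\rho\|_\infty^{-1}$ equals exactly $\min_s d^{\pi_\phi,\pi^*_\psi(x,\phi)}_\rho(s)/d^{\dag,\pi^*_\psi(x,\phi)}_\rho(s)$, which reproduces $\mu_1$.

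\textbf{Step 3: compare gaps.} It remains to show
\begin{align}
J(x,\phi,\psi^\dagger)-\min_{\phi'}J(x,\phi',\psi^\dagger)\ \ge\ J_1(x,\phi)-J^*(x). \notag
\end{align}
The first term on the left is $J_1(x,\phi)$ by definition of $\psi^\dagger$. For the second, $\min_{\phi'}J(x,\phi',\psi^\dagger)\le\max_\psi\min_{\phi'}J(x,\phi',\psi)=J^*(x)$ by Proposition~\ref{proposition: existence-and-uniqueness-of-equilibrium-main}. This proves the claim, and combining with Step 2 yields the $J_1$ inequality.

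The $J_2$ case runs identically with $\cM^{\pi_\phi}(x)$, reward $\tilde r_{(x,\phi)}$ and a maximizing agent. There we use $\max_{\psi'}J(x,\phi^\dagger,\psi')\ge\min_\phi\max_\psi J=J^*(x)$.

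The main obstacle is Step 1 together with the bookkeeping in Step 2. Danskin requires compactness and continuity of $\nabla_\phi J$, so the restriction to bounded representatives must be made carefully. We also have to check that the reduced MDP exactly matches the setting of Mei et al.: its reward $\tilde r_{(x,\psi)}$ includes the $\psi$-entropy term, it is bounded, and its soft-optimal policy is the game best response, so that the mismatch ratio appears with precisely the stated orientation.
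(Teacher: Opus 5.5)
Your proposal is correct and follows essentially the same route as the paper: reduce to the single-agent entropy-regularized MDP $\cM^{\pi_\psi}(x)$, apply Lemma 15 of \citet{mei2020global}, use the generalized Danskin theorem to identify $\nabla_\phi J_1(x,\phi)$ with $\nabla_\phi J$ at the best response, and close the gap with $\min_{\phi'} J(x,\phi',\psi^\dagger)\le \max_\psi\min_{\phi} J = J^*(x)$. Your extra care about compactness in Danskin addresses a point the paper skips. You can justify it without invoking $\delta_\pi$ from Assumption~\ref{assump: mdp-assumptions-main}: the soft best response is a softmax of uniformly bounded soft $Q$-values, so it is bounded below uniformly in $\phi$.
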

\begin{proof}
    Denote $V(x,\pi_\phi,\pi_\psi) = V^{\pi_\phi,\pi_\psi}_{\cM(x)}(\rho)$ for notational simplicity if there is no confusion.

Define a new reward model for $\pi_\phi$ as
\begin{align}
        \tilde{r}_{(x,\psi)}(s,a) = - r^{\pi_\psi}_x(s,a) - \tau_\psi H(\pi_\psi(\cdot|s)), \notag
\end{align}
    and the transition probability as $P^{\pi_\psi}(\cdot|s_t,a_t) = \mathbb{E}_{b_t \sim \pi_\psi(\cdot|s_t)}[P(\cdot|s_t,a_t,b_t)]$.

Then, for any fixed $x$ and $\pi_\psi$, the original minimization over
$\pi_\phi$ can be equivalently reformulated as a maximization problem in the
single-policy entropy-regularized MDP
\[
\mathcal{M}^{\pi_\psi}(x)
=
\{\mathcal{S},\mathcal{A},P^{\pi_\psi},
\tilde r_{(x,\psi)},\gamma\}.
\]
Under the transformed reward $\tilde r_{(x,\psi)}$, the corresponding
objective for $\pi_\phi$ is to maximize
\begin{align}
        J^{\pi_\psi}(x,\phi) = \underset{\substack{s_0 \sim \rho, a_t \sim \pi_\phi(\cdot|s_t) \\s_{t+1} \sim P^{\pi_\psi}(\cdot|s_t,a_t)}}{\mathbb{E}} \bigg[
            \sum_{t=0}^{\infty} \gamma^t \left[
                \tilde{r}_{(x,\psi)}(s_t,a_t) - \tau_\phi \log \pi_\phi(a_t|s_t)
            \right]   
        \bigg] ,
    \end{align}
    which equals $- V^{\pi_\phi,\pi_\psi}_{\cM(x)}(\rho)$.
    Then the state-visitation distribution under policy $\pi_\phi$ in this MDP is exactly $d^{\pi_\phi,\pi_\psi}_\rho$.

    By Lemma 15 in~\citet{mei2020global}, we have for any $x$, and any given $\psi$,
    \begin{align}
        \frac{1}{2}\| \nabla_{\phi} J^{\pi_\psi}(x,\phi) \|_2^2  & \geq \mu^{\pi_\psi}(x,\phi)\left(  \max_\phi J^{\pi_\psi}(x,\phi) - J^{\pi_\psi}(x,\phi) \right),
    \end{align}
    which is equivalent to
    \begin{align}
        \frac{1}{2}\| \nabla_{\phi} V(x,\pi_\phi,\pi_\psi) \|_2^2  & \geq \mu^{\pi_\psi}(x,\phi) \left(   V(x,\pi_\phi,\pi_\psi) -\min_\phi V(x,\pi_\phi,\pi_\psi)\right) , \label{eq: pl-condition-on-soft-joint-value-function-phi}
    \end{align}
    where
    \begin{align}
        \mu^{\pi_\psi}(x,\phi) := \frac{\tau_\phi}{|\cS|}  \min_s \rho(s) \min_{s,a} \pi_\phi^2(a|s) \min_s \frac{d^{\pi_\phi,\pi_\psi}_\rho (s) }{d^{\dag,\pi_\psi}_\rho(s)} . \notag
    \end{align}
 
    Now for an arbitrary $\pi_\phi$, we have $\pi_\psi^*(x,\phi) = \arg\max_{\pi_\psi} V(x,\pi_\phi,\pi_\psi)$ is unique. Thus, by Lemma~\ref{lemma: generalized-danskin-theorem}, we can obtain
    \begin{align}
        \nabla_{\phi} J_1(x,\phi) & = \nabla_\phi \max_{\psi} J(x,\phi,\psi)\notag \\
        & = \nabla_\phi \max_{\pi_\psi} V(x,\pi_\phi,\pi_\psi) \notag \\ 
        & = \nabla_\phi V(x,\pi_\phi,\pi_\psi^*(x,\phi))  .
    \end{align}
    
    Then we can get
    \begin{align}
        \frac{1}{2} \|  \nabla_{\phi} J_1(x,\phi) \|_2^2 & = \frac{1}{2} \|\nabla_\phi V(x,\pi_\phi,\pi^*_{\psi}(x,\phi)) \|_2^2 \notag \\
        & \geq \mu^{\pi_{\psi^*(x,\pi_\phi)}}(x,\pi_\phi) \left( V(x,\pi_\phi,\pi^*_{\psi}(x,\phi)) -  \min_{\phi^\prime} V(x,\pi_{\phi^\prime},\pi^*_{\psi}(x,\phi)) \right) \notag \\
        & =  \mu_1(x,\phi) \left( J_1(x,\phi) -  V(x,\tilde{\pi}_\phi^*,\pi^*_{\psi}(x,\phi)) \right),
    \end{align}
    where $\tilde{\pi}_\phi^* = \arg\min_{\pi_\phi^\prime}   V(x,\pi_\phi^\prime,\pi^*_{\psi}(x,\phi))$ for the given $\phi$ and
    \begin{align}
        \mu_1(x,\phi) \bydef \mu^{\pi_{\psi^*(x,\pi_\phi)}}(x,\pi_\phi) = \frac{\tau_\phi}{|\cS|}  \min_s \rho(s) \min_{s,a} \pi_\phi^2(a|s) \min_s \frac{d^{\pi_\phi,\pi^*_{\psi}(x,\phi)}_\rho (s) }{d^{\dag,\pi^*_{\psi}(x,\phi)}_\rho(s)} \notag.
    \end{align}

    Consequently, for any given $\phi$, by Proposition~\ref{proposition: existence-and-uniqueness-of-equlibrium}, we can get
    \begin{align}
        \min_{\phi^\prime} J_1(x,\phi^\prime) & = \min_{\pi_\phi^\prime} \max_{\pi_\psi} V(x,\pi_\phi^\prime,\pi_\psi) \notag \\
        & = \max_{\pi_\psi} \min_{\pi_\phi^\prime} V(x,\pi_\phi^\prime,\pi_\psi) \notag \\
        & \geq \min_{\pi_\phi^\prime} V(x,\pi_\phi^\prime,\pi_\psi^*(x,\phi)) \notag \\
        & = V(x,\tilde{\pi}_\phi^*,\pi_\psi^*(x,\pi_\phi)).
    \end{align}
    
    On the other hand, by definition, we have
    \begin{align}
        \min_{\phi^\prime} J_1(x,\phi^\prime) & = \min_{\pi_\phi^\prime} \max_{\pi_\psi} V(x,\pi_\phi^\prime,\pi_\psi) = J^*(x).
    \end{align}

    Therefore, we obtain
    \begin{align}
         \frac{1}{2}\| \nabla_{\phi} J_1(x,\pi_\phi) \|_2^2 
        & \geq\mu_1(x,\phi) \left(J_1(x,\phi) -  V(x,\tilde{\pi}_\phi^*,\pi^*_{\psi}(x,\phi)) \right) \notag \\
        & \geq \mu_1(x,\phi) \left(J_1(x,\phi) -  J^*(x) \right) .
    \end{align}

    Similarly, we have for any $x$
    \begin{align}
         \frac{1}{2}\| \nabla_{\psi} J_2(x,\psi) \|_2^2
        & \geq \mu_2(x,\psi) \left( J^*(x) -  J_2(x,\psi)\right),
    \end{align}
    where 
    \begin{align*}
        \mu_2(x,\psi) = \frac{\tau_\psi}{|\cS|} \min_s \rho(s) \min_{s,b} \pi_\psi^2(b|s) \min_{s} \frac{d^{\pi_\phi^*(x,\psi), \pi_\psi}_{\rho}(s)}{d^{\pi_\phi^*(x,\psi),\dag}_{\rho}(s)}.
    \end{align*}
    and $\pi_\phi^*(x,\psi) = \arg\min_{\pi_\phi} V(x,\pi_\phi,\pi_\psi)$.

\end{proof}

\begin{lemma}[Lemma~\ref{lemma: ni-gap-non-uniform-pl-main} in the main text]
    \label{lemma: non-uniform-pl-condition-ni-gap-appendix}
    Let $\theta = (\phi,\psi) \in \R^{|\cS||\cA| + |\cS||\cB|}$ denote the
concatenation of the policy parameters $\phi$ and $\psi$. Suppose the initial
distribution $\rho$ has full support, i.e., $\min_s \rho(s)>0$. Then, for any
$x$ and any $\theta$, $g(x,\theta)$ satisfies the following condition:
    \begin{align}
        \frac{1}{2}\| \nabla_{\theta} g(x,\theta) \|_2^2 \geq \mu(x,\theta) g(x,\theta), 
    \end{align}
    where 
    \begin{align*}
        \mu(x,\theta) = (1-\gamma)\frac{\min \{\tau_\phi,\tau_\psi\}}{|\cS|} \min_s \rho^2(s) \min\{ \min_{s,a}\pi^2_\phi(a|s), \min_{s,b} \pi^2_\psi(b|s) \} > 0.
    \end{align*}
\end{lemma}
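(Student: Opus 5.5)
We decompose the NI function as $g(x,\theta) = J_1(x,\phi) - J_2(x,\psi)$, where $J_1(x,\phi)=\max_{\psi'}J(x,\phi,\psi')$ and $J_2(x,\psi)=\min_{\phi'}J(x,\phi',\psi)$. Adding and subtracting $J^*(x)$ splits it into two nonnegative gaps:
\begin{align*}
g(x,\theta) = \big(J_1(x,\phi)-J^*(x)\big) + \big(J^*(x)-J_2(x,\psi)\big).
\end{align*}
Nonnegativity of each gap follows from Proposition~\ref{proposition: existence-and-uniqueness-of-equilibrium-main}. Since $J_1$ depends only on $\phi$ and $J_2$ only on $\psi$, the gradient separates as $\nabla_\theta g = (\nabla_\phi J_1, -\nabla_\psi J_2)$. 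Hence
\begin{align*}
\tfrac12\|\nabla_\theta g\|^2 = \tfrac12\|\nabla_\phi J_1\|^2 + \tfrac12\|\nabla_\psi J_2\|^2.
\end{align*}
The differentiability of $J_1$ and $J_2$ comes from the generalized Danskin theorem (Lemma~\ref{lemma: generalized-danskin-theorem}), together with uniqueness of the best responses (Lemma~\ref{lemma: existence-and-uniqueness-of-solutions-to-one-side-problems}). This is already the route used inside Lemma~\ref{lemma: non-uniform-pl-condition-one-side-optimal-value-function}.

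Next we apply Lemma~\ref{lemma: non-uniform-pl-condition-one-side-optimal-value-function} to each piece. This gives
\begin{align*}
\tfrac12\|\nabla_\theta g\|^2 \ge \mu_1(x,\phi)\big(J_1-J^*\big) + \mu_2(x,\psi)\big(J^*-J_2\big) \ge \min\{\mu_1,\mu_2\}\, g(x,\theta).
\end{align*}
It therefore remains to lower bound $\mu_1$ and $\mu_2$ uniformly by the claimed $\mu(x,\theta)$.

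The only nontrivial ingredient in $\mu_1$ and $\mu_2$ is the distribution-mismatch ratio $\min_s d^{\pi_\phi,\pi^*_\psi(x,\phi)}_\rho(s)/d^{\dag,\pi^*_\psi(x,\phi)}_\rho(s)$, and the analogous ratio in $\mu_2$. We bound it using two facts about visitation measures:
\begin{itemize}
\item For any policy pair, $d_\rho(s)\ge(1-\gamma)\rho(s)$, since the $t=0$ term of the defining series already contributes this much.
\item Any visitation distribution satisfies $d_\rho(s)\le 1$.
\end{itemize}
Together these give ratio $\ge(1-\gamma)\min_s\rho(s)$. Multiplying by the existing factor $\min_s\rho(s)$ yields the $(1-\gamma)\min_s\rho^2(s)$ in the statement.

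Finally, replacing $\tau_\phi$ and $\tau_\psi$ by $\min\{\tau_\phi,\tau_\psi\}$, and $\min_{s,a}\pi_\phi^2$, $\min_{s,b}\pi_\psi^2$ by their minimum, gives $\min\{\mu_1,\mu_2\}\ge\mu(x,\theta)$. Positivity holds because softmax policies have full support and $\rho$ has full support.

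The main obstacle is purely bookkeeping. We must check that the one-sided PŁ inequality from \citet{mei2020global} (Lemma 15), as invoked in Lemma~\ref{lemma: non-uniform-pl-condition-one-side-optimal-value-function}, is applied with the correct visitation measures in the induced single-agent MDP $\mathcal M^{\pi_\psi}(x)$. In particular, the denominator must be the visitation of the optimal policy in that induced MDP, so that the crude bound $d\le1$ is legitimate. Beyond this, the argument is a direct assembly of results already proved.
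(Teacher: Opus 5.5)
Your proposal is correct and follows essentially the same route as the paper: you split $\nabla_\theta g$ into $(\nabla_\phi J_1,-\nabla_\psi J_2)$, apply the one-sided non-uniform P{\L} bounds of Lemma~\ref{lemma: non-uniform-pl-condition-one-side-optimal-value-function} to obtain $\min\{\mu_1,\mu_2\}\,g$, and then lower bound the distribution-mismatch ratio by $(1-\gamma)\min_s\rho(s)$. Your explicit use of $d_\rho(s)\le 1$ for the denominator only spells out a step the paper leaves implicit when it states the ratio bound for arbitrary policy pairs.
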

\begin{proof}
    By definition, we have
    \begin{align}
        \nabla_{\theta} g(x,\theta) & = 
        \begin{bmatrix}
            \nabla_{\phi} J_1(x,\phi) \\
            - \nabla_{\psi} J_2(x,\psi)
        \end{bmatrix},
    \end{align}

    Hence, by Lemma~\ref{lemma: non-uniform-pl-condition-one-side-optimal-value-function}, it follows that
    \begin{align}
        \frac{1}{2}\| \nabla_{\theta} g(x,\theta) \|_2^2 & =  \frac{1}{2}\| \nabla_{\phi} J_1(x,\phi) \|_2^2 + \frac{1}{2}\| \nabla_{\psi} J_2(x,\psi) \|_2^2 \notag \\
        & \geq  \bigg(
            \mu_1(x,\phi) (J_1(x,\phi) - J^*(x)) + \mu_2(x,\psi) (J^*(x) - J_2(x,\psi))
        \bigg) \notag \\
        & \geq \min\{\mu_1(x,\phi), \mu_2(x,\psi)\} \left(
            J_1(x,\phi) - J_2(x,\psi) \right)\notag  \\
        & = \min\{\mu_1(x,\phi), \mu_2(x,\psi)\} g(x,\theta).
    \end{align}
    Since for any $\phi,\psi$, and for any state $s$, 
    \begin{align}
        d^{\pi_\phi,\pi_\psi}_\rho (s) & = (1-\gamma) \sum_{t=0}^\infty \gamma^t \Pr(s_t = s | s_0 \sim \rho, \pi_\phi, \pi_\psi, \cM(x)) \notag \\
        & \geq (1-\gamma) \min_{s} \rho(s) > 0, \notag
    \end{align} 
    then for any $\phi,\phi^\prime,\psi,\psi^\prime$,
    \begin{align}
        \min_s \frac{d^{\pi_\phi,\pi_\psi}_\rho (s) }{d^{\pi_{\phi^\prime},\pi_{\psi^\prime}}_\rho(s)} \geq (1-\gamma) \min_{s} \rho(s) > 0 \notag.
    \end{align}
    Therefore, taking
    \begin{align}
        \mu(x,\theta) = (1-\gamma)\frac{\min \{\tau_\phi,\tau_\psi\}}{|\cS|} \min_s \rho^2(s) \min\{ \min_{s,a}\pi^2_\phi(a|s), \min_{s,b} \pi^2_\psi(b|s) \}, \notag 
    \end{align}
    we have
    \begin{align}
        \frac{1}{2}\| \nabla_{\theta} g(x,\theta) \|_2^2 \geq \mu(x,\theta) g(x,\theta).
    \end{align}
\end{proof}

\subsection{Proofs of Other Technical Lemmas}
\label{sec: proofs-of-useful-lemmas}

\begin{lemma}
    \label{lemma: pl-condition-joint-value-function}
    Under Assumption~\ref{assump: mdp-assumptions-main}, there exists a constant $\mu_J>0$ such that the joint value function $J(x,\phi,\psi)$ satisfies the following P{\L} conditions for any $x$, $\phi$ and $\psi$:
    \begin{gather}
        \frac{1}{2} \| \nabla_{\phi} J(x,\phi,\psi) \|_2^2 \geq \mu_J \left( J(x,\phi,\psi) - J_2(x,\psi) \right),\notag \\
        \frac{1}{2} \| \nabla_{\psi} J(x,\phi,\psi) \|_2^2 \geq \mu_J \left( J_1(x,\phi) - J(x,\phi,\psi) \right).\notag
    \end{gather}
\end{lemma}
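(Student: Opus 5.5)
The plan is to reuse the single-policy reduction from the proof of Lemma~\ref{lemma: non-uniform-pl-condition-one-side-optimal-value-function}, this time without passing through a best response, and then make the non-uniform constant uniform via Assumption~\ref{assump: mdp-assumptions-main}.

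\textbf{Step 1 (the $\phi$ inequality).} Fix $x$ and $\psi$. As in that proof, minimizing over $\pi_\phi$ is equivalent to maximizing the entropy-regularized soft value $J^{\pi_\psi}(x,\phi)=-J(x,\phi,\psi)$ in the single-policy MDP $\cM^{\pi_\psi}(x)$, with transformed reward $\tilde r_{(x,\psi)}$ and kernel $P^{\pi_\psi}$. Applying Lemma 15 of \citet{mei2020global} directly gives inequality~\eqref{eq: pl-condition-on-soft-joint-value-function-phi}:
\[
\tfrac12\|\nabla_\phi J(x,\phi,\psi)\|^2\ge \mu^{\pi_\psi}(x,\phi)\bigl(J(x,\phi,\psi)-\min_{\phi'}J(x,\phi',\psi)\bigr).
\]
By definition, $\min_{\phi'}J(x,\phi',\psi)=J_2(x,\psi)$. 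This is exactly the first claimed inequality, except that the constant is still non-uniform.

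\textbf{Step 2 (lower-bounding the constant).} We have
\[
\mu^{\pi_\psi}(x,\phi)=\frac{\tau_\phi}{|\cS|}\min_s\rho(s)\,\min_{s,a}\pi_\phi^2(a|s)\,\min_s \frac{d^{\pi_\phi,\pi_\psi}_\rho(s)}{d^{\dag,\pi_\psi}_\rho(s)}.
\]
Each factor can be bounded below:
\begin{itemize}
\item Assumption~\ref{assump: mdp-assumptions-main}(4) gives $\min_s\rho(s)\ge\delta_\rho$.
\item Assumption~\ref{assump: mdp-assumptions-main}(3) gives $\pi_\phi(a|s)\ge\delta_\pi$.
\item For the visitation ratio, the numerator satisfies $d^{\pi_\phi,\pi_\psi}_\rho(s)\ge(1-\gamma)\rho(s)\ge(1-\gamma)\delta_\rho$, and the denominator is at most $1$. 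So the ratio is at least $(1-\gamma)\delta_\rho$, as argued in the proof of Lemma~\ref{lemma: non-uniform-pl-condition-ni-gap-appendix}.
\end{itemize}
Together these give $\mu^{\pi_\psi}(x,\phi)\ge (1-\gamma)\tau_\phi\delta_\rho^2\delta_\pi^2/|\cS|$, uniformly in $x,\phi,\psi$.

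\textbf{Step 3 (the $\psi$ inequality).} The argument is symmetric, using the second reformulation $\cM^{\pi_\phi}(x)$ from Lemma~\ref{lemma: existence-and-uniqueness-of-solutions-to-one-side-problems}. There $\pi_\psi$ maximizes $J^{\pi_\phi}(x,\psi)=J(x,\phi,\psi)$, so Mei et al.'s lemma gives
\[
\tfrac12\|\nabla_\psi J\|^2\ge \mu\bigl(\max_{\psi'}J-J\bigr)=\mu\bigl(J_1(x,\phi)-J(x,\phi,\psi)\bigr),
\]
with the constant bounded below by $(1-\gamma)\tau_\psi\delta_\rho^2\delta_\pi^2/|\cS|$. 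Taking
\[
\mu_J=(1-\gamma)\min\{\tau_\phi,\tau_\psi\}\,\delta_\rho^2\delta_\pi^2/|\cS|
\]
proves both inequalities.

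\textbf{Main obstacle.} The delicate point is the bound on $\min_{s,a}\pi_\phi^2(a|s)$. Mei et al.'s constant involves the current policy, and softmax policies can approach the simplex boundary. The step therefore rests entirely on Assumption~\ref{assump: mdp-assumptions-main}(3); the key requirement is that this assumption applies to every $(\phi,\psi)$ under consideration.

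A second point to check is that the reformulated MDP meets Mei et al.'s hypotheses: its reward $\tilde r_{(x,\psi)}$ must be bounded, which follows since $|r_x|\le B_r$ and the entropy is at most $\log|\cB|$. One should also confirm that their result holds for a general bounded reward and a fixed full-support $\rho$, with $\phi$ entering only through the softmax.
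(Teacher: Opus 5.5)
Your proposal is correct and follows the paper's proof essentially step for step. The paper also reduces to the single-policy MDP and applies Lemma 15 of \citet{mei2020global} (its inequality~\eqref{eq: pl-condition-on-soft-joint-value-function-phi} and the symmetric one for $\psi$), then lower-bounds the non-uniform constant by $(1-\gamma)\min\{\tau_\phi,\tau_\psi\}\delta_\rho^2\delta_\pi^2/|\cS|$ using $\rho\ge\delta_\rho$, $\pi\ge\delta_\pi$, and $d_\rho\ge(1-\gamma)\rho$ with the visitation denominator at most one; the paper's argument likewise depends entirely on Assumption~\ref{assump: mdp-assumptions-main}(3) holding for every parameter, as you note.
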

\begin{proof}
    From Eq~\eqref{eq: pl-condition-on-soft-joint-value-function-phi} in Lemma~\ref{lemma: non-uniform-pl-condition-one-side-optimal-value-function}, we have for any $x$ and $\psi$,  
    \begin{align}
        \frac{1}{2} \|\nabla_\phi V(x,\pi_\phi,\pi_\psi) \|_2^2 \geq \mu^{\pi_\phi}(x,\phi) \left( V(x,\pi_\phi,\pi_\psi) - \min_{\phi^\prime} V(x,\pi_{\phi^\prime},\pi_\psi) \right), 
    \end{align}
     
    which is equivalent to
    \begin{align}
        \frac{1}{2} \|\nabla_\phi J(x,\phi,\psi) \|_2^2 \geq \mu^{\pi_\phi}(x,\phi) \left( J(x,\phi,\psi) - J_2(x,\psi) \right),
    \end{align}
    where
    \begin{align}
        \mu^{\pi_\psi}(x,\phi) & = \frac{\tau_\phi}{|\cS|}  \min_s \rho(s) \min_{s,a} \pi_\phi^2(a|s) \min_s \frac{d^{\pi_\phi,\pi_\psi}_\rho (s) }{d^{\dag,\pi_\psi}_\rho(s)}  \notag\\
        & \geq \frac{\tau_\phi}{|\cS|}  \min_s \rho(s) \left( \min_{s,a} \pi_\phi(a|s) \right)^2 (1-\gamma) \min_s \rho(s) \notag\\
        & \geq  (1-\gamma)\frac{\min\{\tau_\phi,\tau_\psi\}}{|\cS|} \delta_\rho^2 \delta_\pi^2 .
    \end{align}

    Similarly, for any $x$ and $\phi$, we have
    \begin{align}
        \frac{1}{2} \|\nabla_\psi J(x,\phi,\psi) \|_2^2 \geq \mu^{\pi_\psi}(x,\psi) \left( J_1(x,\phi) - J(x,\phi,\psi) \right),
    \end{align}
    where 
    \begin{align}
        \mu^{\pi_\psi}(x,\psi) & = \frac{\tau_\psi}{|\cS|} \min_s \rho(s) \min_{s,b} \pi_\psi^2(b|s) \min_{s} \frac{d^{\pi_\phi, \pi_\psi}_{\rho}(s)}{d^{\pi_\phi,\dag}_{\rho}(s)} \notag \\
        & \geq (1-\gamma)\frac{\min\{\tau_\phi,\tau_\psi\}}{|\cS|} \delta_\rho^2 \delta_\pi^2.
    \end{align}

    Thus, by taking $\mu_J = (1-\gamma)\frac{\min\{\tau_\phi,\tau_\psi\}}{|\cS|} \delta_\rho^2 \delta_\pi^2 > 0$, we complete the proof.
\end{proof}

\begin{lemma}
    \label{lemma: pl-condition-ni-gap}
    Under Assumption~\ref{assump: mdp-assumptions-main}, the NI function $g(x,\phi,\psi)$ is $\mu_g$-P{\L} in $(\phi,\psi)$ for some constant $\mu_g>0$.
\end{lemma}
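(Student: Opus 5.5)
The plan is to start from the non-uniform P{\L} inequality of Lemma~\ref{lemma: non-uniform-pl-condition-ni-gap-appendix} and replace its parameter-dependent modulus by a uniform lower bound, exactly as was done for $J$ in Lemma~\ref{lemma: pl-condition-joint-value-function}.

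\textbf{Step 1.} Recall the non-uniform inequality. Lemma~\ref{lemma: non-uniform-pl-condition-ni-gap-appendix} gives, for every $x$ and $\theta=(\phi,\psi)$,
\[
\tfrac12\|\nabla_\theta g(x,\theta)\|_2^2 \ge \mu(x,\theta)\, g(x,\theta),
\]
where
\[
\mu(x,\theta)=(1-\gamma)\frac{\min\{\tau_\phi,\tau_\psi\}}{|\cS|}\min_s\rho^2(s)\,\min\bigl\{\min_{s,a}\pi_\phi^2(a|s),\min_{s,b}\pi_\psi^2(b|s)\bigr\}.
\]

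\textbf{Step 2.} Bound the modulus uniformly. Under Assumption~\ref{assump: mdp-assumptions-main}, items 3 and 4 give $\pi_\phi(a|s)\ge\delta_\pi$, $\pi_\psi(b|s)\ge\delta_\pi$ and $\rho(s)\ge\delta_\rho$ for all $s,a,b$. Hence
\[
\mu(x,\theta)\ge \mu_g \bydef (1-\gamma)\frac{\min\{\tau_\phi,\tau_\psi\}}{|\cS|}\delta_\rho^2\delta_\pi^2>0,
\]
which is independent of $x$ and $\theta$.

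\textbf{Step 3.} Identify the inequality as a P{\L} condition. By the definition of P{\L}, we need $\|\nabla_\theta g\|^2\ge 2\mu_g\,(g-\min_{\theta'}g(x,\theta'))$. By Proposition~\ref{proposition: existence-and-uniqueness-of-equilibrium-main}, the equilibrium pair exists and is attained by some parameters under softmax expressiveness. At that pair the NI function vanishes, and since $g\ge 0$ everywhere, $\min_{\theta'}g(x,\theta')=0$. The inequality from Step 1 with modulus $\mu_g$ is therefore exactly the $\mu_g$-P{\L} condition.

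\textbf{Main obstacle.} The only delicate point is the legitimacy of the uniform lower bound on policy probabilities. Under a tabular softmax parameterization, $\pi_\phi(a|s)\ge\delta_\pi$ is not automatic over all of $\R^{|\cS||\cA|}$. I would rely directly on Assumption~\ref{assump: mdp-assumptions-main}(3), interpreted as restricting attention to the parameter region considered, just as Lemma~\ref{lemma: pl-condition-joint-value-function} does. A second point to check is the differentiability of $g$ used in Lemma~\ref{lemma: non-uniform-pl-condition-ni-gap-appendix}, which rests on Danskin with unique best responses (Lemma~\ref{lemma: existence-and-uniqueness-of-solutions-to-one-side-problems}). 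Both facts are already available from earlier results, so I expect the proof to be short.
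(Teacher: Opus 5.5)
Your proposal is correct and matches the paper's proof: invoke Lemma~\ref{lemma: non-uniform-pl-condition-ni-gap-appendix} and lower-bound $\mu(x,\theta)$ uniformly by $(1-\gamma)\frac{\min\{\tau_\phi,\tau_\psi\}}{|\cS|}\delta_\rho^2\delta_\pi^2$ using items 3 and 4 of Assumption~\ref{assump: mdp-assumptions-main}. Your Step 3 spells out a point the paper leaves implicit. Since $g \ge 0$, you have $g - \inf_{\theta'} g \le g$, so the inequality with modulus $\mu_g$ is already the P{\L} condition and attainment of the minimum is not even needed.
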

\begin{proof}
    By Lemma~\ref{lemma: non-uniform-pl-condition-ni-gap-appendix} and Assumption~\ref{assump: mdp-assumptions-main}, we have for any $x$ and $\theta$,
    \begin{align}
        \mu(x,\theta) & = (1-\gamma)\frac{\min \{\tau_\phi,\tau_\psi\}}{|\cS|} \min_s \rho^2(s) \min\{ \min_{s,a}\pi^2_\phi(a|s), \min_{s,b} \pi^2_\psi(b|s) \} \notag\\
        & \geq (1-\gamma)\frac{\min \{\tau_\phi,\tau_\psi\}}{|\cS|} \delta_\rho^2 \delta_\pi^2 .
    \end{align}

    Taking $\mu_g = (1-\gamma)\frac{\min \{\tau_\phi,\tau_\psi\}}{|\cS|} \delta_\rho^2 \delta_\pi^2 > 0$, we complete the proof.
\end{proof}

\begin{lemma}
    \label{lemma: lipschitz-constant-of-optimal-policy}
    Under Assumptions~\ref{assump: mdp-assumptions-main} and \ref{assump:function-assumptions-main}, for any $x$ and $\psi$,
    define $\pi_\phi^*(x,\psi) \bydef \arg\min_{\pi_\phi} V^{\pi_\phi,\pi_\psi}_{\cM(x)}(\rho)$. 
    Then there exists a constant $C_{\pi,1}$ such that for any $x_1$, $x_2$, $\psi_1$, and $\psi_2$, we have
    \begin{align}
        \| \pi_\phi^*(x_1,\psi_1) - \pi_\phi^*(x_2,\psi_2) \| \leq C_{\pi,1} \left( \|x_1 - x_2 \| + \| \psi_1 - \psi_2 \| \right). \notag
    \end{align}

    Similarly, for any $x$ and $\phi$, define $\pi_\psi^*(x,\phi) \bydef \arg\max_{\pi_\psi} V^{\pi_\phi,\pi_\psi}_{\cM(x)}(\rho)$. Then there exists a constant $C_{\pi,2}$ such that for any $x_1$, $x_2$, $\phi_1$, and $\phi_2$, we have
    \begin{align}
        \| \pi_\psi^*(x_1,\phi_1) - \pi_\psi^*(x_2,\phi_2) \| \leq C_{\pi,2} \left( \|x_1 - x_2 \| + \| \phi_1 - \phi_2 \| \right). \notag
    \end{align}  
\end{lemma}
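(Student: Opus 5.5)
The plan is to reduce the statement to the single-policy results of the previous subsection, specifically Lemma~\ref{lemma: lipschitz-of-pi-star}, applied to the induced MDP. We treat $(x,\psi)$ jointly as the ``UL parameter'' of that MDP. As in the proof of Lemma~\ref{lemma: existence-and-uniqueness-of-solutions-to-one-side-problems}, for fixed $(x,\psi)$ the min-player faces the entropy-regularized single-policy MDP $\cM^{\pi_\psi}(x)=\{\cS,\cA,P^{\pi_\psi},\tilde r_{(x,\psi)},\gamma\}$ with temperature $\tau_\phi$. Here
\[
\tilde r_{(x,\psi)}(s,a)=-\textstyle\sum_b \pi_\psi(b|s)\big(r_x(s,a,b)-\tau_\psi\log\pi_\psi(b|s)\big),\qquad P^{\pi_\psi}(s'|s,a)=\textstyle\sum_b\pi_\psi(b|s)\cP(s'|s,a,b).
\]
The best response $\pi^*_\phi(x,\psi)$ is then exactly the optimal soft policy $\pi^*$ of this MDP, viewed as a function of the parameter $w=(x,\psi)$.

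It therefore suffices to verify Assumption~\ref{assumption: appendix-single-policy-properties} for the parameterized family $w\mapsto(\tilde r_w,P_w)$, with constants that are uniform in $w$.

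\textbf{Bounded reward.} We have $|\tilde r_w|\le B_r+\tau_\psi\log|\cB|$, since the entropy of a distribution over $\cB$ is at most $\log|\cB|$.

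\textbf{Lipschitz reward.} The map $x\mapsto r_x$ is $C_r$-Lipschitz by Assumption~\ref{assump: mdp-assumptions-main}. For the dependence on $\psi$, the tabular softmax map $\psi\mapsto\pi_\psi(\cdot|s)$ is $1$-Lipschitz. The function $p\mapsto\sum_b p_b\log p_b$ is Lipschitz on the region $\{p_b\ge\delta_\pi\}$, with constant of order $1+\log(1/\delta_\pi)$, by Assumption~\ref{assump: mdp-assumptions-main}.3. Combining these with boundedness of $r_x$ gives a constant $C_{\tilde r}$ depending on $B_r,C_r,\tau_\psi,\delta_\pi,|\cB|$.

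\textbf{Lipschitz transition.} We have $\|P_{w_1}(\cdot|s,a)-P_{w_2}(\cdot|s,a)\|_1\le\|\pi_{\psi_1}(\cdot|s)-\pi_{\psi_2}(\cdot|s)\|_1\le\sqrt{|\cB|}\,\|\psi_1-\psi_2\|$, which gives $C_P=\sqrt{|\cB|}$; the transition does not depend on $x$.

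With these constants, Lemma~\ref{lemma: lipschitz-optimal-value-auxiliary} gives Lipschitzness of the optimal soft $Q$-function in $w$. Lemma~\ref{lemma: lipschitz-of-pi-star} then gives $\|\pi^*_\phi(w_1)-\pi^*_\phi(w_2)\|\le C_{\pi^*}\|w_1-w_2\|$. Finally, $\|w_1-w_2\|\le\|x_1-x_2\|+\|\psi_1-\psi_2\|$ yields the claim with $C_{\pi,1}=C_{\pi^*}$.

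The max-player case is symmetric. Use $\cM^{\pi_\phi}(x)$ with reward $\tilde r_{(x,\phi)}(s,b)=\sum_a\pi_\phi(a|s)\big(r_x(s,a,b)+\tau_\phi\log\pi_\phi(a|s)\big)$, transition $P^{\pi_\phi}$, and temperature $\tau_\psi$. The same three checks produce $C_{\pi,2}$.

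The main obstacle is the entropy term $\tau_\psi H(\pi_\psi(\cdot|s))$ inside $\tilde r$. The function $\log$ is not globally Lipschitz near the boundary of the simplex, so uniform Lipschitzness in $\psi$ needs the lower bound $\pi_\psi\ge\delta_\pi$ from Assumption~\ref{assump: mdp-assumptions-main}. Alternatively, one can use the softmax identity $\log\pi_\psi(b|s)=\psi_{sb}-\log\sum_{b'}e^{\psi_{sb'}}$, whose gradient has bounded entries, together with boundedness of $\pi_\psi\log\pi_\psi$. I would try the $\delta_\pi$ route first.

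A secondary point is that Lemma~\ref{lemma: lipschitz-of-pi-star} is phrased with parameter space $\R^{d_x}$. I would simply relabel $w=(x,\psi)$ as that parameter and note that its proof uses only the three listed Lipschitz and boundedness properties.
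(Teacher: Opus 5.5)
Your proposal is correct and follows essentially the same route as the paper. Both reduce to the induced entropy-regularized MDP $\cM^{\pi_\psi}(x)$ with $(x,\psi)$ treated as the parameter. Both check bounded reward, Lipschitz reward (the paper also uses the $\delta_\pi$-based entropy bound, with constant $\max\{1,|1+\log\delta_\pi|\}$) and Lipschitz transition via the softmax bound. Both then apply Lemma~\ref{lemma: lipschitz-optimal-value-auxiliary} and the $\frac{1}{2\tau_\phi}$-Lipschitzness of the softmax of $Q^*$. The only cosmetic differences are that the paper applies these two steps directly rather than citing Lemma~\ref{lemma: lipschitz-of-pi-star}, which consists of exactly them, and uses the sharper softmax constant $\tfrac12$ where you use $1$.
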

\begin{proof}
    By Lemma~\ref{lemma: existence-and-uniqueness-of-solutions-to-one-side-problems}, we know that $\pi_\phi^*(x,\psi)$ is the unique optimal policy for a single-policy entropy-regularized MDP $\cM^{\pi_\psi}(x) = \{\cS,\cA,\widetilde{r}_{(x,\psi)},\widetilde{\cP}_{\psi},\gamma,\tau_\phi\}$ for any given $x$ and $\psi$, where $\widetilde{r}_{(x,\psi)}(s,a)$ is defined as 
    \begin{align}
        \widetilde{r}_{(x,\psi)}(s,a) \bydef - \E_{b \sim \psi(\cdot|s)} \left[ r_x(s,a,b) -  \tau_\psi \log \pi_\psi(b|s) \right], \notag
    \end{align} 
    and $\widetilde{\cP}_{\psi}$ is defined as
    \begin{align}
        \widetilde{\cP}_\psi(s^\prime|s,a) \bydef \E_{b \sim \psi(\cdot|s)} \left[ \cP(s^\prime|s,a,b) \right]. \notag
    \end{align}
    
    Moreover, we have the following properties for $\widetilde{r}_{(x,\psi)}$ and $\widetilde{\cP}_\psi(s^\prime|s,a)$:
    \begin{align}
        |\widetilde{r}_{(x,\psi)}(s,a) | \leq B_r + \tau_\psi \log |\cB|,
    \end{align}
    and
    \begin{align}
        & \quad\, \sup_{s,a} \| \widetilde{\cP}_{\psi_1}(s^\prime|s,a) - \widetilde{\cP}_{\psi_2}(s^\prime|s,a)\|_1 \notag \\
        & = \sup_{s,a} \sum_{s^\prime} \left|\sum_b \pi_{\psi_1}(b|s) \cP(s^\prime|s,a,b) - \pi_{\psi_2}(b|s) \cP(s^\prime|s,a,b)\right| \notag\\
        & = \sup_{s,a} \sum_{s^\prime} \left|\sum_b (\pi_{\psi_1}(b|s) - \pi_{\psi_2}(b|s)) \cP(s^\prime|s,a,b)\right| \notag \\
        & \leq \sup_{s,a} \sum_b |\pi_{\psi_1}(b|s) - \pi_{\psi_2}(b|s)| \sum_{s^\prime} \cP(s^\prime|s,a,b) \notag \\
        & = \sup_{s}  \|\pi_{\psi_1}(\cdot|s) - \pi_{\psi_2}(\cdot|s)\|_1,
    \end{align}
    and
    \begin{align}
         & \quad \, \sup_{s,a} | \widetilde{r}_{(x_1,\psi_1)}(s,a) - \widetilde{r}_{(x_2,\psi_2)}(s,a) | \nonumber \\
         & \leq \sup_{s,a} \left|\sum_b \pi_{\psi}(b|s)\left(r_{x_1}(s,a,b) - r_{x_2}(s,a,b) \right) \right| + \sup_{s,a} \left| \sum_b \left(\pi_{\psi_1}(b|s) - \pi_{\psi_2}(b|s) \right)r_{x_2}(s,a,b) \right| \notag\\
         & \quad\; +   \sup_{s,a} \left| \tau_\psi\left(H(\pi_{\psi_1}(\cdot|s)) - H(\pi_{\psi_2}(\cdot|s)) \right) \right| \notag\\
        & \leq C_r\|x_1 - x_2 \| +  B_r \sup_{s} \| \pi_{\psi_1}(\cdot|s) - \pi_{\psi_2}(\cdot|s) \|_1 + \tau_\psi \sup_{s} |H(\pi_{\psi_1}(\cdot|s)) - H(\pi_{\psi_2}(\cdot|s)) |.
    \end{align}

    For any $s\in\cS$, any $\pi_1,\pi_2 \in \Delta_n$ and $\pi_1,\pi_2 \geq \delta_\pi \mathbf{1}$, the entropy function $H(\cdot)$ is Lipschitz continuous with Lipschitz constant $H_{\delta_\pi} = \max \{1, |1+\log \delta_\pi|\}$, which means
    \begin{align}
        |H(\pi_{\psi_1}(\cdot|s)) - H(\pi_{\psi_2}(\cdot|s))  | \leq H_{\delta_\pi} \| \pi_{\psi_1}(\cdot|s) - \pi_{\psi_2}(\cdot|s) \|_1 .
    \end{align}

    And we also have for softmax parameterized policy $\pi_\psi$ and any state $s$,
    \begin{align}
        \|\pi_{\psi_1}(\cdot|s) - \pi_{\psi_2}(\cdot|s) \|_1 \leq \sqrt{|\cB|}\|\pi_{\psi_1}(\cdot|s) - \pi_{\psi_2}(\cdot|s) \| \leq \frac{\sqrt{|\cB|}}{2} \| \psi_1(s) - \psi_2(s) \| \leq \frac{\sqrt{|\cB|}}{2} \| \psi_1 - \psi_2 \|.
    \end{align}

    Then we obtain
    \begin{align}
        & \quad\, \sup_{s,a} | \widetilde{r}_{(x_1,\psi_1)}(s,a) - \widetilde{r}_{(x_2,\psi_2)}(s,a) | \notag \\
         & \leq \frac{\sqrt{|\cB|}}{2} \left( B_r + \tau_\psi H_{\delta_\pi} \right)  \| \psi_1 - \psi_2 \| + C_r \| x_1 - x_2 \| \notag \\
         & \leq \widetilde{C}_r \|(x_1,\psi_1) - (x_2,\psi_2)\| ,
    \end{align}
    where $\widetilde{C}_r = \sqrt{2} \max\{ C_r, \frac{\sqrt{|\cB|}}{2} ( B_r + \tau_\psi H_{\delta_\pi} ) \}$,
    and 
    \begin{align}
         \sup_{s,a} \| \widetilde{\cP}_{\psi_1}(s^\prime|s,a) - \widetilde{\cP}_{\psi_2}(s^\prime|s,a)\|_1 \leq \frac{\sqrt{|\cB|}}{2} \| \psi_1 - \psi_2 \| \leq \frac{\sqrt{|\cB|}}{2} \|(x_1,\psi_1) - (x_2,\psi_2)\| .
    \end{align}

    Now we will show that $\pi_\phi^*(x,\psi)$ is Lipschitz continuous in $\psi$. Let $Q^*(x,\psi) \in \R^{|\cS|\times|\cA|}$ be the optimal soft Q-value for the MDP $\widetilde{\cM}(x,\psi)$. Since the assumptions required for Lemma~\ref{lemma: lipschitz-optimal-value-auxiliary} are satisfied, by Lemma~\ref{lemma: lipschitz-optimal-value-auxiliary}, there exists a constant $C_Q$ such that for any $x$ and $\psi_1,\psi_2$,
    \begin{align}
        \| Q^*(x,\psi_1) - Q^*(x,\psi_2) \| \leq C_Q \| (x,\psi_1) - (x,\psi_2) \|. \notag
    \end{align}
    And the optimal policy $\pi_\phi^*(x,\psi)$ can be represented as
    \begin{align}
        \pi_\phi^*(a|s;x,\psi) = \frac{\exp(Q^*_{sa}(x,\psi)/\tau_\phi)}{\sum_{a^\prime} \exp(Q^*_{sa^\prime}(x,\psi)/\tau_\phi)}, \quad \forall s\in\cS, a\in\cA. \notag
    \end{align}
    Therefore, we have for any $x$ and $\psi_1,\psi_2$,
    \begin{align}
        & \quad \, \| \pi_\phi^*(x_1,\psi_1) - \pi_\phi^*(x_2,\psi_2) \| \notag \\
        & \leq \frac{1}{2\tau_\phi} \| Q^*(x_1,\psi_1) - Q^*(x_2,\psi_2) \| \notag \\
        & \leq \frac{C_Q}{2\tau_\phi} \| (x_1, \psi_1) - (x_2,\psi_2) \| \notag\\
        & \leq \frac{C_Q}{2\tau_\phi} \left( \| x_1 - x_2 \| + \| \psi_1 - \psi_2 \| \right),
    \end{align}
    which completes the proof for the first part.
    The second part can be proved similarly for the opponent policy $\psi$.
\end{proof}

\begin{lemma}
    \label{lemma: lipschitz-constant-of-optimal-parameters}
    Under Assumptions~\ref{assump: mdp-assumptions-main} and \ref{assump:function-assumptions-main}, for any $\phi,\psi$, define $\phi^*(x,\psi) = \arg\min_{\phi} J(x,\phi,\psi)$ and $\psi^*(x,\phi) = \arg\max_{\psi} J(x,\phi,\psi)$, then we have for any $x_1, x_2, \psi_1,\psi_2,\phi_1,\phi_2$, there exists a constant $C_{\pi}>0$ such that
    \begin{gather}
        \operatorname{dist} \left(\phi^*(x_1,\psi_1), \phi^*(x_2,\psi_2)\right) \leq C_{\pi}  \left( \| x_1 - x_2 \| + \| \psi_1 - \psi_2 \| \right) , \notag \\
        \operatorname{dist} \left(\psi^*(x_1,\phi_1), \psi^*(x_2,\phi_2)\right)  \leq C_{\pi}  \left( \| x_1 - x_2 \| + \| \phi_1 - \phi_2 \| \right)\notag .
    \end{gather}
\end{lemma}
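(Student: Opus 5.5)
The plan is to reduce the statement about parameter sets to the policy-level Lipschitz bound of Lemma~\ref{lemma: lipschitz-constant-of-optimal-policy}, using the explicit description of the best-response parameter sets under the tabular softmax parameterization. Recall that
\[
\phi^*(x,\psi)=\{\log \pi_\phi^*(x,\psi)+\diag(c)\mathbf{1}_{|\cS|\times|\cA|}: c\in\R^{|\cS|}\},
\]
so $\phi^*(x,\psi)$ is an affine set: a particular point $\log\pi_\phi^*(x,\psi)$ plus the subspace of statewise constant shifts. Consequently,
\[
\dist(\phi^*(x_1,\psi_1),\phi^*(x_2,\psi_2))\le \|\log\pi_\phi^*(x_1,\psi_1)-\log\pi_\phi^*(x_2,\psi_2)\|,
\]
obtained by taking $c=0$ in both sets. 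Using the orthogonal projection onto the complement of the shift subspace would only sharpen this, and is not needed.

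Next I would bound the difference of logarithms by the difference of policies. The key ingredient is a uniform lower bound on the entries of $\pi_\phi^*(x,\psi)$. Assumption~\ref{assump: mdp-assumptions-main} posits $\pi\ge\delta_\pi$ for all policies in the class. Alternatively, one can derive a lower bound directly from the softmax form $\pi_\phi^*(a|s;x,\psi)\propto\exp(Q^*_{sa}(x,\psi)/\tau_\phi)$ together with the bound $\|Q^*\|_\infty\le \frac{B_r+\tau_\psi\log|\cB|}{1-\gamma}+\gamma\frac{\cdots}{1-\gamma}$ that comes from the reduction to a single-policy MDP in Lemma~\ref{lemma: lipschitz-optimal-value-auxiliary}. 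This yields
\[
\pi_\phi^*(a|s)\ge \frac{1}{|\cA|}\exp(-2\|Q^*\|_\infty/\tau_\phi)=: \underline\pi>0.
\]
Since $\log$ is $1/\underline\pi$-Lipschitz on $[\underline\pi,1]$, we get $\|\log\pi_1-\log\pi_2\|\le \underline\pi^{-1}\|\pi_1-\pi_2\|$.

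A cleaner alternative bypasses logarithms of policies entirely. Write $\log\pi_\phi^*(x,\psi)=Q^*(x,\psi)/\tau_\phi-\text{(statewise log-partition)}$. The log-partition term lies in the shift subspace, so $Q^*(x,\psi)/\tau_\phi\in\phi^*(x,\psi)$ itself. Then
\[
\dist\le \tau_\phi^{-1}\|Q^*(x_1,\psi_1)-Q^*(x_2,\psi_2)\|\le \tau_\phi^{-1}C_Q\|(x_1,\psi_1)-(x_2,\psi_2)\|,
\]
directly by Lemma~\ref{lemma: lipschitz-optimal-value-auxiliary}. That lemma applies to the induced MDP $\cM^{\pi_\psi}(x)$, whose reward and transition were shown, in the proof of Lemma~\ref{lemma: lipschitz-constant-of-optimal-policy}, to be bounded and Lipschitz in $(x,\psi)$ with constants $\widetilde C_r$ and $\sqrt{|\cB|}/2$. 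I would adopt this route, using $\|(x_1,\psi_1)-(x_2,\psi_2)\|\le\|x_1-x_2\|+\|\psi_1-\psi_2\|$.

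The $\psi^*(x,\phi)$ case is symmetric: the max-player faces the MDP $\cM^{\pi_\phi}(x)$ with reward $\tilde r_{(x,\phi)}$. I would take $C_\pi$ as the maximum of the two constants. The only delicate point is confirming that the reduced MDP's Lipschitz transition and bounded reward hypotheses hold uniformly, which needs the entropy Lipschitz constant $H_{\delta_\pi}$ and therefore Assumption~\ref{assump: mdp-assumptions-main}(3). This was already verified in the previous lemma, so I expect no real obstacle beyond bookkeeping of constants.
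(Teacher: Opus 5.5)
Your adopted route is correct, but it differs from the paper's proof at one step.

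The paper keeps the canonical representative $\log\pi_\phi^*(x,\psi)$ of each fiber. It then bounds $\|\log\pi_\phi^*(x_1,\psi_1)-\log\pi_\phi^*(x_2,\psi_2)\|\le\delta_\pi^{-1}\|\pi_\phi^*(x_1,\psi_1)-\pi_\phi^*(x_2,\psi_2)\|$, applying the uniform lower bound of Assumption~\ref{assump: mdp-assumptions-main}(3) to the best response itself. Finally it invokes Lemma~\ref{lemma: lipschitz-constant-of-optimal-policy} as a black box; that lemma's proof passes through the softmax bound $\|\pi_1^*-\pi_2^*\|\le\frac{1}{2\tau_\phi}\|Q_1^*-Q_2^*\|$. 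The resulting constant is of order $C_Q/(2\tau_\phi\delta_\pi)$.

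You instead note that the statewise log-partition term is a shift in the fiber, so $Q^*(x,\psi)/\tau_\phi$ already lies in $\phi^*(x,\psi)$. You then apply Lemma~\ref{lemma: lipschitz-optimal-value-auxiliary} directly to the induced MDP. This skips both the softmax-Lipschitz and the log-Lipschitz steps, and it never needs a lower bound on the best-response policy. It also gives the constant $C_Q/\tau_\phi$, which is the paper's constant multiplied by $2\delta_\pi\le 2/|\cA|$. The paper's route, in exchange, reuses an already-stated policy-level lemma without reopening its proof.

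Both arguments rest on the same substantive ingredient: the reduction to the single-agent MDP $\cM^{\pi_\psi}(x)$, and the check that its reward and transition are bounded and Lipschitz in $(x,\psi)$. In your version, $\delta_\pi$ enters only through the entropy term in $\tilde r_{(x,\psi)}$. Even that dependence could be removed: under the softmax parameterization, $\psi\mapsto H(\pi_\psi(\cdot|s))$ is globally $2\log|\cB|$-Lipschitz, since its gradient has $\ell_1$-norm at most $2H(\pi_\psi(\cdot|s))$.
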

\begin{proof}
    Since we know that for any given $x$ and $\psi$, $\pi_\phi^*(x,\psi) = \arg\min_{\pi_\phi} V^{\pi_\phi,\pi_\psi}_{\cM(x)}(\rho)$ is unique, then we have
    \begin{align}
        \phi^*(x,\psi) = \{\phi: \pi_\phi = \pi_\phi^*(x,\psi) \} = \{\log \pi_\phi^*(x,\psi) + 
        \diag(c) \mathbf{1}_{|\cS| \times |\cA|}: c\in \R^{|\cS|}\}. \notag
    \end{align}
    Here we consider $\log \pi$ and $\phi$ as a long vector in $\R^{|\cS||\cA|}$, and $\mathbf{1} \in \R^{|\cS||\cA|}$ is a vector with all entries being $1$.
    Thus, for any $x_1$, $x_2$, $\psi_1$ and $\psi_2$, the distance between the two sets $\phi^*(x_1,\psi_1)$ and $\phi^*(x_2,\psi_2)$ can be calculated as
    \begin{align}
        & \quad \, \dist \left(\phi^*(x_1,\psi_1), \phi^*(x_2,\psi_2)\right) \notag \\
        & = \inf_{\phi_1 \in \phi^*(x_1,\psi_1), \phi_2 \in \phi^*(x_2,\psi_2)} \| \phi_1 - \phi_2 \| \notag\\
        & = \inf_{c_1 \in \R^{|\cS|}, c_2 \in \R^{|\cS|}} \| \log \pi_\phi^*(x_1,\psi_1) + \diag(c_1) \mathbf{1}_{|\cS| \times |\cA|} - \log \pi_\phi^*(x_2,\psi_2) - \diag(c_2) \mathbf{1}_{|\cS| \times |\cA|} \| \notag\\
        & = \inf_{c \in \R^{|\cS|}} \| \log \pi_\phi^*(x_1,\psi_1) - \log \pi_\phi^*(x_2,\psi_2) + \diag(c) \mathbf{1}_{|\cS| \times |\cA|} \| \notag\\
        & \leq \| \log \pi_\phi^*(x_1,\psi_1) - \log \pi_\phi^*(x_2,\psi_2) \| \notag\\
        & \leq \frac{1}{\delta_\pi} \| \pi_\phi^*(x_1,\psi_1) - \pi_\phi^*(x_2,\psi_2) \| \notag\\
        & \leq \frac{C_{\pi,1}}{\delta_\pi} \left( \| x_1 - x_2 \| + \| \psi_1 - \psi_2 \| \right).
    \end{align}

    Similarly, we can prove that for any $x_1$, $x_2$, $\phi_1$ and $\phi_2$,
    \begin{align}
        \operatorname{dist} \left(\psi^*(x_1,\phi_1), \psi^*(x_2,\phi_2)\right)  \leq \frac{C_{\pi,2}}{\delta_\pi} \left( \| x_1 - x_2 \| + \| \phi_1 - \phi_2 \| \right).
    \end{align}

    By taking $C_{\pi} = \max\left\{\frac{C_{\pi,1}}{\delta_\pi}, \frac{C_{\pi,2}}{\delta_\pi}\right\}$, we complete the proof.
\end{proof}

\begin{lemma}
    \label{lemma: lipschitz-constants-of-estimated-ni-gap}
    Under Assumption~\ref{assump:function-assumptions-main}, the approximated gap
function $\tilde{g}(x,\phi,\psi,\tilde{\phi},\tilde{\psi})$ is
$C_{\widetilde{g}}$-Lipschitz continuous, $L_{\widetilde{g},1}$-smooth, and
has an $L_{\widetilde{g},2}$-Lipschitz continuous Hessian, for some constants
$C_{\widetilde{g}}>0$, $L_{\widetilde{g},1}>0$, and
$L_{\widetilde{g},2}>0$.
\end{lemma}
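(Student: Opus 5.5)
The statement should follow directly from the decomposition $\tilde{g}(x,\phi,\psi,\tilde{\phi},\tilde{\psi}) = J(x,\phi,\tilde{\psi}) - J(x,\tilde{\phi},\psi)$ together with item 2 of Assumption~\ref{assump:function-assumptions-main}, which makes $J$ $C_J$-Lipschitz, $L_{J,1}$-smooth and $L_{J,2}$-Hessian Lipschitz jointly in $(x,\phi,\psi)$. The plan is to treat each of the two summands as the composition of $J$ with a linear coordinate-selection map of the full variable $w \bydef (x,\phi,\psi,\tilde{\phi},\tilde{\psi})$. We then transfer the three regularity properties through this composition and add the two resulting bounds.

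First, define the selection maps $P_1 w \bydef (x,\phi,\tilde{\psi})$ and $P_2 w \bydef (x,\tilde{\phi},\psi)$. These are linear maps that extract a sub-block of coordinates, so their operator norm is at most $1$. More precisely, for the sum-of-block-norms convention used in the paper's definition of Lipschitz continuity, we have $\|x-x'\|+\|\phi-\phi'\|+\|\tilde{\psi}-\tilde{\psi}'\| \le \|x-x'\|+\|\phi-\phi'\|+\|\psi-\psi'\|+\|\tilde{\phi}-\tilde{\phi}'\|+\|\tilde{\psi}-\tilde{\psi}'\|$. Lipschitz continuity of $J\circ P_i$ with constant $C_J$ is then immediate.

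Second, the chain rule gives $\nabla (J\circ P_i)(w) = P_i^T \nabla J(P_i w)$ and $\nabla^2 (J\circ P_i)(w) = P_i^T \nabla^2 J(P_i w) P_i$. Since $P_i^T$ only places entries into the appropriate coordinates, it is an isometric embedding. Hence $\|\nabla (J\circ P_i)(w)-\nabla (J\circ P_i)(w')\| = \|\nabla J(P_iw)-\nabla J(P_iw')\| \le L_{J,1}\cdot(\text{block distance})$, and likewise for the Hessian, where $\|P_i^T M P_i\| = \|M\|$ in the Frobenius norm because this operation zero-pads the matrix.

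Third, we sum the two terms by the triangle inequality. This yields the constants $C_{\widetilde{g}} = 2C_J$, $L_{\widetilde{g},1} = 2L_{J,1}$ and $L_{\widetilde{g},2} = 2L_{J,2}$. Twice-differentiability of $\tilde{g}$ is inherited from $J$.

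The only point requiring care is bookkeeping. We must verify that the block-sum distance on $(x,\phi,\tilde{\psi})$ is dominated by that on the full $w$, and that the norm used for gradients and Hessians is the Euclidean/Frobenius norm, which is preserved under zero-padding. No MDP-specific structure (such as Assumption~\ref{assump: mdp-assumptions-main} or the P{\L} lemmas) is needed, so I expect no genuine obstacle beyond stating these norm facts cleanly.
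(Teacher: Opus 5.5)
Your proposal is correct and follows the same route as the paper. The paper notes that $\tilde{g}(x,\phi,\psi,\tilde{\phi},\tilde{\psi}) = J(x,\phi,\tilde{\psi}) - J(x,\tilde{\phi},\psi)$ and invokes Assumption~\ref{assump:function-assumptions-main} to get the same constants $C_{\widetilde{g}}=2C_J$, $L_{\widetilde{g},1}=2L_{J,1}$, $L_{\widetilde{g},2}=2L_{J,2}$; your coordinate-selection and zero-padding argument simply writes out the norm bookkeeping that the paper calls ``straightforward.''
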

\begin{proof}
    Since the approximated gap function $\tilde{g}(x,\phi,\psi,\tilde{\phi},\tilde{\psi})$ is defined as
    \begin{align}
       \tilde{g}(x,\phi,\psi,\tilde{\phi},\tilde{\psi})=J(x,\phi,\tilde{\psi}) - J(x,\tilde{\phi},\psi),
    \end{align}
    then by Assumption~\ref{assump:function-assumptions-main}, $J(x,\phi,\psi)$ is $C_J$-Lipschitz continuous, $L_{J,1}$-Lipschitz smooth and $L_{J,2}$-Hessian Lipschitz continuous in $(x,\phi,\psi)$, thus 
    the proof is straightforward by taking $C_{\widetilde{g}} = 2C_J$, $L_{\widetilde{g},1} = 2L_{J,1}$ and $L_{\widetilde{g},2} = 2L_{J,2}$.
\end{proof}

\begin{lemma}[Lemma~\ref{lemma: ni-gap-properties-main} in the main text]
    \label{lemma: lipschitz-of-ni-gap}
    Under Assumptions~\ref{assump: mdp-assumptions-main} and \ref{assump:function-assumptions-main}, the gap function $g(x,\phi,\psi)$ is $C_g$-Lipschitz continuous, $L_{g,1}$-Lipschitz smooth, and $L_{g,2}$-Hessian Lipschitz continuous in $(x,\phi,\psi)$, for some constants $C_g$, $L_{g,1}$ and $L_{g,2}$. 
\end{lemma}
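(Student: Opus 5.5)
Since $g(x,\phi,\psi) = J_1(x,\phi) - J_2(x,\psi)$ with $J_1(x,\phi)=\max_\psi J(x,\phi,\psi)$ and $J_2(x,\psi)=\min_\phi J(x,\phi,\psi)$, I will prove the three regularity properties separately for $J_1$ and $J_2$ and combine them by the triangle inequality. I treat $J_1$ in detail; $J_2$ is symmetric with the roles of the players exchanged.

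\textbf{Gradient formula and Lipschitz continuity.} By Lemma~\ref{lemma: existence-and-uniqueness-of-solutions-to-one-side-problems}, the best-response policy $\pi_\psi^*(x,\phi)$ is unique. All parameters in the set $\psi^*(x,\phi)$ induce this same policy, so $\nabla_{(x,\phi)}J(x,\phi,\psi)$ takes the same value at every maximizer. The generalized Danskin theorem (Lemma~\ref{lemma: generalized-danskin-theorem}) then gives
\[
\nabla_{(x,\phi)} J_1(x,\phi)=\nabla_{(x,\phi)} J(x,\phi,\psi^*(x,\phi)).
\]
The maximization is over an unbounded parameter set, so the compactness hypothesis does not hold directly. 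I handle this by maximizing over the compact policy simplex, which is valid because the gradient depends on $\psi$ only through $\pi_\psi$ and $\delta_\pi$ keeps the policies interior. Lipschitz continuity of $J_1$ then follows from $\|\nabla J_1\|\le \|\nabla J\|\le C_J$.

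\textbf{Smoothness.} Pick any $\psi_i\in\psi^*(x_i,\phi_i)$ and split
\[
\|\nabla J(x_1,\phi_1,\psi_1)-\nabla J(x_2,\phi_2,\psi_2)\|\le L_{J,1}\big(\|x_1-x_2\|+\|\phi_1-\phi_2\|+\|\psi_1-\psi_2\|\big).
\]
Because the left side is invariant to the choice of representative, I can take the infimum over $\psi_1,\psi_2$. The last term then becomes $\dist(\psi^*(x_1,\phi_1),\psi^*(x_2,\phi_2))$, which Lemma~\ref{lemma: lipschitz-constant-of-optimal-parameters} bounds by $C_\pi(\|x_1-x_2\|+\|\phi_1-\phi_2\|)$. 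This gives $L_{J_1,1}=L_{J,1}(1+C_\pi)$ up to constants.

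\textbf{Hessian Lipschitz continuity.} This is the main obstacle. Differentiating the Danskin formula requires differentiability of the best-response map. The representative $\psi^*$ is a set, so I fix the canonical representative $\bar\psi(x,\phi)=\log\pi^*_\psi(x,\phi)$. Then
\[
\nabla^2 J_1=\nabla^2_{(x,\phi)(x,\phi)}J+\nabla^2_{(x,\phi)\psi}J\,\nabla\bar\psi ,
\]
where the total derivative of $\nabla_{(x,\phi)}J(\cdot,\cdot,\bar\psi(\cdot,\cdot))$ is well defined since $J$ is $C^2$. Under Assumption~\ref{assump:function-assumptions-main} the Hessian of $J$ is bounded and $L_{J,2}$-Lipschitz, so it remains to show that $\bar\psi$ has a bounded and Lipschitz Jacobian.

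For this I reuse the single-policy reduction from the proof of Lemma~\ref{lemma: lipschitz-constant-of-optimal-policy}. There $\pi_\psi^*(x,\phi)$ is the optimal soft policy of an MDP whose reward $\tilde r_{(x,\phi)}$ and transition $\tilde{\cP}_\phi$ depend on $(x,\phi)$. Since $\log$ is smooth on $[\delta_\pi,1]$, Lemma~\ref{lemma: lipschitz-of-pi-star} and Lemma~\ref{lemma: lipschitz-of-nable-pi-star-of-x}, applied with $(x,\phi)$ as the parameter, yield the required bounded and Lipschitz Jacobian. Their hypotheses need $\tilde r$ and $\tilde{\cP}$ to be $C^1$ with Lipschitz gradients in $(x,\phi)$. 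This holds because $r_x$ is $L_r$-smooth, softmax is smooth with bounded derivatives, and the entropy term is smooth on policies bounded below by $\delta_\pi$.

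Combining the bounded, Lipschitz factors in the product above gives $L_{J_1,2}$, and symmetrically $L_{J_2,2}$. Summing the constants for $J_1$ and $J_2$ yields $C_g$, $L_{g,1}$ and $L_{g,2}$.
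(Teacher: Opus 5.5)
Your proposal is correct and follows essentially the same route as the paper's proof. Both use Danskin's theorem with the canonical representative $\log\pi_\psi^*(x,\phi)$, the set-distance bound of Lemma~\ref{lemma: lipschitz-constant-of-optimal-parameters} for smoothness, and the chain rule plus the single-policy reduction with Lemmas~\ref{lemma: lipschitz-of-pi-star} and~\ref{lemma: lipschitz-of-nable-pi-star-of-x} for the Hessian. The only difference is minor: you get Lipschitz continuity from the gradient bound $\|\nabla J_1\|\le C_J$ after Danskin, whereas the paper uses the envelope inequality $J_1(x_1,\phi_1)-J_1(x_2,\phi_2)\le J(x_1,\phi_1,\psi')-J(x_2,\phi_2,\psi')$ with $\psi'\in\psi^*(x_1,\phi_1)$, which needs no differentiability.
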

\begin{proof}
    By Assumption~\ref{assump:function-assumptions-main}, we know that $J(x,\phi,\psi)$ is $C_J$-Lipschitz continuous, $L_{J,1}$-Lipschitz smooth and $L_{J,2}$-Hessian Lipschitz continuous in $(x,\phi,\psi)$. 
    Then for any $x_1,x_2,\phi_1,\phi_2,\psi_1,\psi_2$,
    \begin{align}
        | J(x_1,\phi_1,\psi_1) - J(x_2,\phi_2,\psi_2) | &\leq C_J \left( \| x_1 - x_2 \| + \| \phi_1 - \phi_2 \| + \| \psi_1 - \psi_2 \| \right), \notag\\
        \|\nabla J(x_1,\phi_1,\psi_1) - \nabla J(x_2,\phi_2,\psi_2)\| &\leq L_{J,1} \left( \| x_1 - x_2 \| + \| \phi_1 - \phi_2 \| + \| \psi_1 - \psi_2 \| \right), \notag \\
        \|\nabla^2 J(x_1,\phi_1,\psi_1) - \nabla^2 J(x_2,\phi_2,\psi_2)\| &\leq L_{J,2} \left( \| x_1 - x_2 \| + \| \phi_1 - \phi_2 \| + \| \psi_1 - \psi_2 \| \right).
    \end{align}

    Consequently, we can obtain
    \begin{align}
        & \quad \; J_1(x_1,\phi_1) - J_1(x_2,\phi_2) \notag \\
        & =  \max_{\psi_1} J(x_1,\phi_1, \psi_1) - \max_{\psi_2} J(x_2,\phi_2, \psi_2)\notag\\
        & = J(x_1,\phi_1, \psi^\prime) - \max_{\psi_2} J(x_2,\phi_2, \psi_2) , \quad \psi^\prime \in \psi^*(x_1,\phi_1)\notag\\
        & \leq J(x_1,\phi_1, \psi^\prime) - J(x_2,\phi_2, \psi^\prime) \notag\\
         & \leq C_J \left( \| x_1 - x_2 \|+ \|\phi_1 - \phi_2\| \right) ,
    \end{align}
    and similarly
    \begin{align}
        & \quad \; J_2(x_1,\psi_1) - J_2(x_2,\psi_2) \notag \\
        & =  \min_{\phi_1} J(x_1,\phi_1, \psi_1) - \min_{\phi_2} J(x_2,\phi_2, \psi_2)\notag\\
        & \leq J(x_1,\phi^\prime, \psi_1) - J(x_2,\phi^\prime, \psi_2), \quad \phi^\prime \in \phi^*(x_2,\psi_2)\notag\\
         & \leq C_J \left( \| x_1 - x_2 \|+ \|\psi_1 - \psi_2\| \right) .
    \end{align}

    Therefore, we have
    \begin{align}
        & \quad \; |g(x_1,\phi_1,\psi_1) - g(x_2,\phi_2,\psi_2) | \notag \\
         & = |J_1(x_1,\phi_1) - J_2(x_1,\psi_1) - (J_1(x_2,\phi_2) - J_2(x_2,\psi_2))| \notag\\
        & \leq |J_1(x_1,\phi_1) - J_1(x_2,\phi_2)| + |J_2(x_1,\psi_1) - J_2(x_2,\psi_2)| \notag\\
        & \leq 2C_J \left( \| x_1 - x_2 \| + \|\phi_1 - \phi_2\| + \|\psi_1 - \psi_2\| \right) ,
    \end{align}
    which proves the Lipschitz continuity of $g(x,\phi,\psi)$.

    Next, we will prove the Lipschitz smoothness of $g(x,\phi,\psi)$.
    Since we know that for any $x$ and $\phi$, $\pi_\psi^*(x,\phi) = \arg\max_{\pi_\psi}V^{\pi_\phi,\pi_\psi}_{\cM(x)}(\rho)$  exists and is unique. By Lemma~\ref{lemma: generalized-danskin-theorem}, we have
    \begin{align}
        &\quad\;\nabla_{(x,\phi)} J_1(x,\phi) \notag\\
        & = \nabla_{(x,\phi)} \max_{\psi} J(x,\phi,\psi)\notag \\
        & = \nabla_{(x,\phi)} \max_{\pi_\psi} V^{\pi_\phi,\pi_\psi}_{\cM(x)}(\rho) \notag \\ 
        & = \nabla_{(x,\phi)} V^{\pi_\phi,\pi_\psi}_{\cM(x)}(\rho) \bigg|_{\pi_\psi = \pi_\psi^*(x,\phi)} \notag\\
        & = \nabla_{(x,\phi)} V^{\pi_\phi,\pi_\psi}_{\cM(x)}(\rho) \bigg|_{\psi \in \psi^*(x,\phi)} .\notag
    \end{align}
    Since any $\psi \in \psi^*(x,\phi)$ induces the same optimal policy $\pi_\psi^*(x,\phi)$, then we have for any $\psi^{\prime}, \psi^{\prime\prime} \in \psi^*(x,\phi)$,
    \begin{align}
         \nabla_{(x,\phi)} V^{\pi_\phi,\pi_\psi}_{\cM(x)}(\rho) \bigg|_{\psi = \psi^{\prime}}  =\nabla_{(x,\phi)} V^{\pi_\phi,\pi_\psi}_{\cM(x)}(\rho) \bigg|_{\pi_\psi = \pi_\psi^*(x,\phi)} =  \nabla_{(x,\phi)} V^{\pi_\phi,\pi_\psi}_{\cM(x)}(\rho) \bigg|_{\psi = \psi^{\prime\prime}}. \notag
    \end{align}
    Thus we can use any $\psi \in \psi^*(x,\phi)$ to calculate $\nabla_{(x,\phi)} J_1(x,\phi)$, and we choose $\psi = \log \pi_\psi^*(x,\phi)$, which is the canonical parameterization of the optimal policy $\pi_\psi^*(x,\phi)$, then we have
    \begin{align}
        \nabla_{(x,\phi)} J_1(x,\phi) & = \nabla_{(x,\phi)} V^{\pi_\phi,\pi_\psi}_{\cM(x)}(\rho) \bigg|_{\psi = \log \pi_\psi^*(x,\phi)} = \nabla_{(x,\phi)} J(x,\phi, \log \pi_\psi^*(x,\phi)) \notag ,
    \end{align}
    which implies that for any $x_1,x_2,\phi_1,\phi_2$,
    \begin{align}
        & \quad \, \| \nabla_{(x,\phi)} J_1(x_1,\phi_1) - \nabla_{(x,\phi)} J_1(x_2,\phi_2) \| \notag \\
        & = \| \nabla_{(x,\phi)} J(x_1,\phi_1,\log \pi^*_\psi(x_1,\phi_1)) - \nabla_{(x,\phi)} J(x_2,\phi_2,\log \pi^*_\psi(x_2,\phi_2)) \| \notag\\
        & \leq L_{J,1} \left( \| x_1 - x_2 \| + \| \phi_1 - \phi_2 \| + \| \log \pi^*_\psi(x_1,\phi_1) - \log \pi^*_\psi(x_2,\phi_2) \| \right) \notag\\
        & \leq L_{J,1} (1 + C_{\pi}) \left( \| x_1 - x_2 \| + \| \phi_1 - \phi_2 \| \right) .
    \end{align}
    where the last inequality follows from Lemma~\ref{lemma: lipschitz-constant-of-optimal-parameters}.

    Similarly, we can prove that for any $x_1,x_2,\psi_1,\psi_2$,
    \begin{align}
        \| \nabla_{(x,\psi)} J_2(x_1,\psi_1) - \nabla_{(x,\psi)} J_2(x_2,\psi_2) \| & \leq L_{J,1} (1 + C_\pi) \left( \| x_1 - x_2 \| + \| \psi_1 - \psi_2 \| \right) .
    \end{align}
    Therefore, we have
    \begin{align}
        & \quad \, \| \nabla g(x_1,\phi_1,\psi_1) - \nabla g(x_2,\phi_2,\psi_2) \| \notag \\
        & = \left\| \begin{bmatrix}
        \nabla_{x} J_1(x_1,\phi_1) - \nabla_{x} J_2(x_1,\psi_1) \\
        \nabla_{\phi} J_1(x_1,\phi_1)  \\
        - \nabla_{\psi} J_2(x_1,\psi_1)
         \end{bmatrix} 
        - \begin{bmatrix}
        \nabla_{x} J_1(x_2,\phi_2) - \nabla_{x} J_2(x_2,\psi_2) \\
        \nabla_{\phi} J_1(x_2,\phi_2)  \\
        - \nabla_{\psi} J_2(x_2,\psi_2) 
        \end{bmatrix} \right\| \notag \\
        & \leq \| \nabla_{(x,\phi)} J_1(x_1,\phi_1) - \nabla_{(x,\phi)} J_1(x_2,\phi_2) \| + \| \nabla_{(x,\psi)} J_2(x_1,\psi_1) - \nabla_{(x,\psi)} J_2(x_2,\psi_2) \| \notag\\
        & \leq 2 L_{J,1} (1 + C_{\pi}) \left( \| x_1 - x_2 \| + \| \phi_1 - \phi_2 \| + \| \psi_1 - \psi_2 \| \right) ,
    \end{align}
    which proves the Lipschitz smoothness of $g(x,\phi,\psi)$.

    Finally, we will prove the Hessian Lipschitz continuity of $g(x,\phi,\psi)$. 
    By the chain rule, we have
    \begin{align}
        \nabla^2 J_1(x,\phi) & = \nabla^2_{11} J(x,\phi,\log \pi_\psi^*(x,\phi)) + \nabla_{(x,\phi)} \log \pi_\psi^*(x,\phi)\nabla^2_{12} J(x,\phi,\log \pi_\psi^*(x,\phi))  ,
    \end{align}
    where $\nabla^2_{11} J(x,\phi,\log \pi_\psi^*(x,\phi)) \in \R^{(d_x+|\cS||\cA|) \times (d_x+|\cS||\cA|)}$ is the Hessian of $J$ with respect to the first two arguments $(x,\phi)$, and $\nabla^2_{12} J(x,\phi,\log \pi_\psi^*(x,\phi)) \in \R^{ |\cS||\cB| \times (d_x+|\cS||\cA|) }$ is the cross derivative of $J$ with respect to $(x,\phi)$ and $\psi$, and $\nabla_{(x,\phi)} \log \pi_\psi^*(x,\phi) \in \R^{(d_x+|\cS||\cA|) \times |\cS||\cB| }$ is the Jacobian of $\log \pi_\psi^*(x,\phi)$ with respect to $(x,\phi)$.

    Since given any $x$ and $\phi$, $\pi_\psi^*(x,\phi)$ is the optimal policy for the MDP $\cM^{\pi_\phi}(x)$ defined in the proof of Lemma~\ref{lemma: non-uniform-pl-condition-one-side-optimal-value-function}, where the reward function $\tilde{r}_{(x,\phi)}(s,b)$ is defined as 
    \begin{align}
        \tilde{r}_{(x,\phi)}(s,b) = \E_{a \sim \pi_\phi(\cdot|s)} \left[ r_x(s,a,b)\right] - \tau_\phi  H\left(\pi_\phi(\cdot|s) \right),\notag
    \end{align}
    and the transition probability $\widetilde{\cP}^{\pi_\phi}(s^\prime|s,b)$ is defined as
    \begin{align}
        \widetilde{\cP}^{\pi_\phi}(s^\prime|s,b) = \E_{a \sim \pi_\phi(\cdot|s)} \left[ \cP(s^\prime|s,a,b) \right] .\notag
    \end{align}
    and in the proof of Lemma~\ref{lemma: lipschitz-constant-of-optimal-policy}, we also show that there exist constants $\tilde{B}_r$, $\tilde{C}_r$ and $\tilde{C}_P$ such that the following conditions are satisfied:
    \begin{align}
        |\tilde{r}_{(x,\psi)}(s,b)| & \leq B_r + \tau_\phi \log |\cA| \bydef \tilde{B}_r , \\
        \sup_{s,b} \| \widetilde{\cP}^{\pi_{\phi_1}}(s^\prime|s,b) - \widetilde{\cP}^{\pi_{\phi_2}}(s^\prime|s,b) \|_1& \leq \tilde{C}_P \| (x_1,\phi_1) - (x_2,\phi_2) \| , \\
        \sup_{s,b} | \tilde{r}_{(x_1,\phi_1)}(s,b) - \tilde{r}_{(x_2,\phi_2)}(s,b) | & \leq \tilde{C}_r \| (x_1,\phi_1) - (x_2,\phi_2) \| .
    \end{align}
    Moreover, we also have
    \begin{align}
        & \quad\; \sup_{s,b} \left\| \nabla_{x} \tilde{r}_{(x_1,\phi_1)}(s,b) -\nabla_{x} \tilde{r}_{(x_2,\phi_2)}(s,b) \right\|\notag  \\
        & = \sup_{s,b}\left\| \E_{a \sim \pi_{\phi_1}(\cdot|s)} \left[ \nabla_x r_{x_1}(s,a,b) \right] - \E_{a \sim \pi_{\phi_2}(\cdot|s)} \left[ \nabla_x r_{x_2}(s,a,b) \right] \right\| \notag\\
        & \leq \sup_{s,b} \left\| \E_{a \sim \pi_{\phi_1}(\cdot|s)} \left[ \nabla_x r_{x_1}(s,a,b) - \nabla_x r_{x_2}(s,a,b) \right] \right\| \notag \\
        & \quad \, + \sup_{s,b} \left\| \E_{a \sim \pi_{\phi_1}(\cdot|s)} \left[ \nabla_x r_{x_2}(s,a,b) \right] - \E_{a \sim \pi_{\phi_2}(\cdot|s)} \left[ \nabla_x r_{x_2}(s,a,b) \right] \right\| \notag\\
        & \leq L_r \| x_1 - x_2 \| + C_r \sup_{s} \| \pi_{\phi_1}(\cdot|s) - \pi_{\phi_2}(\cdot|s) \|_1 \notag\\
        & \leq L_r \| x_1 - x_2 \| + C_r \frac{\sqrt{|\cA|}}{2} \| \phi_1 - \phi_2 \| ,
    \end{align}
    and 
    \begin{align}
        & \quad \, \sup_{s,b} \left\| \nabla_{\phi} \tilde{r}_{(x_1,\phi_1)}(s,b) -\nabla_{\phi} \tilde{r}_{(x_2,\phi_2)}(s,b) \right\| \notag \\
        & \leq \sup_{s,b}\left\| \sum_a  \nabla_\phi \pi_{\phi_1}(a|s) r_{x_1}(s,a,b) - \sum_a  \nabla_\phi \pi_{\phi_2}(a|s) r_{x_2}(s,a,b) \right\|  \notag\\
        & \quad \, + \tau_\phi \sup_{s}\left\|   \nabla_\phi H(\pi_{\phi_1}(\cdot|s)) - \nabla_\phi H(\pi_{\phi_2}(\cdot|s)) \right\| \notag\\
        & \leq \sup_{s,b}\left\| \sum_a  \nabla_\phi \pi_{\phi_1}(a|s) \left( r_{x_1}(s,a,b) - r_{x_2}(s,a,b) \right) \right\| \notag  \\
        & \quad \, + \sup_{s,b}\left\| \sum_a  \left( \nabla_\phi \pi_{\phi_1}(a|s) - \nabla_\phi \pi_{\phi_2}(a|s) \right) r_{x_2}(s,a,b) \right\|  \notag \\
        & \quad \, + \tau_\phi \sup_{s}\left\| \nabla_\phi H(\pi_{\phi_1}(\cdot|s)) - \nabla_\phi H(\pi_{\phi_2}(\cdot|s)) \right\|\notag \\
        & \leq |\cA|\sup_{s,a,b} \left|  r_{x_1}(s,a,b) - r_{x_2}(s,a,b)\right| + B_r |\cA| \sup_{s,a} \| \nabla_{\phi} \pi_{\phi_1}(a|s) - \nabla_\phi \pi_{\phi_2}(a|s) \|\notag \\
        & \quad \, + \tau_\phi \sup_{s}\left\| \nabla_\phi H(\pi_{\phi_1}(\cdot|s)) - \nabla_\phi H(\pi_{\phi_2}(\cdot|s)) \right\| \notag\\
        & \leq \left( |\cA| C_r + \frac{3}{2}B_r |\cA| + \tau_\phi L_H\right) \|\phi_1 - \phi_2 \| ,
    \end{align}
    where the last inequality follows from Lemma 6 in~\citet{zeng2022regularized}, and $L_H = \frac{4 + 8\log |\cA|}{(1-\gamma)^3}$.

    Then there exists a constant $\widetilde{L}_r = \max \{ L_r, |\cA| C_r + \frac{3}{2}B_r |\cA| + \tau_\phi L_H \}$ such that for any $x_1,x_2,\phi_1,\phi_2$,
    \begin{align}
        \sup_{s,b} \left\| \nabla_{(x,\phi)} \tilde{r}_{(x_1,\phi_1)}(s,b) -\nabla_{(x,\phi)} \tilde{r}_{(x_2,\phi_2)}(s,b) \right\| & \leq \widetilde{L}_r \| (x_1,\phi_1) - (x_2,\phi_2) \| .
    \end{align}

    For the transition probability, we have
    \begin{align}
        & \quad \, \sup_{s,b} \left\| \nabla_{(x,\phi)} \tilde{\cP}^{\pi_{\phi_1}} -\nabla_{(x,\phi)} \tilde{\cP}^{\pi_{\phi_2}} \right\| \notag \\
        & = \sup_{s,b} \left\| \sum_a \nabla_{(x,\phi)} \pi_{\phi_1}(a|s) \cP(s^\prime|s,a,b) - \sum_a \nabla_{(x,\phi)} \pi_{\phi_2}(a|s) \cP(s^\prime|s,a,b) \right\| \notag\\
        & \leq \frac{3}{2} |\cA| \| \phi_1 - \phi_2 \| \notag\\
        & \leq \frac{3}{2} |\cA|\| (x_1,\phi_1) - (x_2,\phi_2) \| \notag \\
        & \bydef \widetilde{C}_P \| (x_1,\phi_1) - (x_2,\phi_2) \| .
    \end{align}

    Then by Lemma~\ref{lemma: lipschitz-of-nable-pi-star-of-x}, we know that $\nabla_{(x,\phi)} \pi^*_\psi(x,\phi)$ is well-defined and there exists a constant $L_{\psi,1}$ such that for any $x_1,x_2,\phi_1,\phi_2$,
    \begin{align}
        \| \nabla_{(x,\phi)}\pi_\psi^*(x_1,\phi_1) - \nabla_{(x,\phi)} \pi_\psi^*(x_2,\phi_2) \| & \leq L_{\psi,1} \| (x_1,\phi_1) - (x_2,\phi_2) \| ,
    \end{align}
    and by Lemma~\ref{lemma: lipschitz-of-pi-star}, we know that there exists a constant $C_{\psi^*}$ such that for any $x,\phi$, $\|\nabla_{(x,\phi)} \pi_\psi^*(x,\phi) \| \leq C_{\psi^*}$.

    Therefore, we have for any $x_1,x_2,\phi_1,\phi_2$,
    \begin{align}
        & \quad \; \| \nabla^2 J_1(x_1,\phi_1)  - \nabla^2 J_1(x_2,\phi_2) \| \notag \\
        & \leq \| \nabla^2_{11} J(x_1,\phi_1,\log \pi_\psi^*(x_1,\phi_1)) - \nabla^2_{11} J(x_2,\phi_2,\log \pi_\psi^*(x_2,\phi_2)) \| \notag\\
        & \quad \, + \| \nabla_{(x,\phi)} \log \pi_\psi^*(x_1,\phi_1)\nabla^2_{12} J(x_1,\phi_1,\log \pi_\psi^*(x_1,\phi_1)) \notag\\
        & \quad \, - \nabla_{(x,\phi)} \log \pi_\psi^*(x_2,\phi_2)\nabla^2_{12} J(x_2,\phi_2,\log \pi_\psi^*(x_2,\phi_2)) \| \notag \\
        & \leq L_{J,2} \left( \| x_1 - x_2 \| + \| \phi_1 - \phi_2\| + \| \log \psi^*(x_1,\phi_1) - \log \psi^*(x_2,\phi_2)\| \right) \notag\\
        & \quad \, + \| \nabla_{(x,\phi)} \log \pi_\psi^*(x_1,\phi_1) - \nabla_{(x,\phi)} \log \pi_\psi^*(x_2,\phi_2) \| \cdot \| \nabla^2_{12} J(x_1,\phi_1,\log \pi_\psi^*(x_1,\phi_1)) \|\notag \\
        & \quad \, + \| \nabla_{(x,\phi)} \log \pi_\psi^*(x_2,\phi_2) \| \cdot \| \nabla^2_{12} J(x_1,\phi_1,\log \pi_\psi^*(x_1,\phi_1)) - \nabla^2_{12} J(x_2,\phi_2,\log \pi_\psi^*(x_2,\phi_2)) \| \notag \\
        & \leq L_{J,2} (1 + C_{\pi}) \left( \| x_1 - x_2 \| + \|\phi_1 - \phi_2\| \right) \notag \\
        & \quad \, +  L_{J,1} \sup_{s,b} \| \nabla_{(x,\phi)} \log \pi_\psi^*(b|s;x_1,\phi_1) - \nabla_{(x,\phi)} \log \pi_\psi^*(b|s;x_2,\phi_2) \| \notag\\
        & \quad \, + C_{\psi^*} L_{J,2} (1 + C_{\pi}) \left( \| x_1 - x_2 \| + \| \phi_1 - \phi_2\| \right) \notag \\
        & \leq \left( L_{J,2} (1 + C_{\pi}) + C_{\psi^*} L_{J,2} (1 + C_{\pi}) \right) \left( \| x_1 - x_2 \| + \|\phi_1 - \phi_2\| \right) \notag\\
        & \quad \, +  L_{J,1} \frac{1}{\delta_\pi} \sup_{s,b} \| \nabla_{(x,\phi)} \pi_\psi^*(b|s;x_1,\phi_1) - \nabla_{(x,\phi)} \pi_\psi^*(b|s;x_2,\phi_2) \| \notag \\
        & \leq \left( L_{J,2} (1 + C_{\pi}) + C_{\psi^*} L_{J,2} (1 + C_{\pi}) +  L_{J,1} \frac{L_{\psi,1}}{\delta_\pi} \right) \left( \| x_1 - x_2 \| + \| \phi_1 - \phi_2 \| \right).
    \end{align}

    Thus, by taking $L_{J_1,2} = L_{J,2} (1 + C_{\pi}) + C_{\psi^*} L_{J,2} (1 + C_{\pi}) +  L_{J,1} \frac{L_{\psi,1}}{\delta_\pi}$, we prove the Hessian Lipschitz continuity of $J_1(x,\phi)$.

    Similarly, we can prove that for any $x_1,x_2,\psi_1,\psi_2$, 
    \begin{align}
        \| \nabla^2 J_2(x_1,\psi_1)  - \nabla^2 J_2(x_2,\psi_2) \| & \leq L_{J_2,2} \left( \| x_1 - x_2 \| + \| \psi_1 - \psi_2 \| \right) ,
    \end{align}
    for some constant $L_{J_2,2} = L_{J,2} (1 + C_{\pi}) + C_{\phi^*} L_{J,2} (1 + C_{\pi}) +  L_{J,1} \frac{L_{\phi,1}}{\delta_\pi}$.

    Therefore, we have for any $x_1,x_2,\phi_1,\phi_2,\psi_1,\psi_2$,
    \begin{align}
        & \quad \, \| \nabla^2 g(x_1,\phi_1,\psi_1) - \nabla^2 g(x_2,\phi_2,\psi_2) \| \notag\\
        & \leq \| \nabla^2 J_1(x_1,\phi_1)  - \nabla^2 J_1(x_2,\phi_2) \| + \| \nabla^2 J_2(x_1,\psi_1)  - \nabla^2 J_2(x_2,\psi_2) \| \notag \\
        & \leq (L_{J_1,2} + L_{J_2,2}) \left( \| x_1 - x_2 \| + \| \phi_1 - \phi_2 \| + \| \psi_1 - \psi_2 \| \right) ,
    \end{align}
    which proves the Hessian Lipschitz continuity of $g(x,\phi,\psi)$.
\end{proof}

\begin{lemma}
    \label{lemma: lipschitz-constant-of-penalized-objective}
    Under Assumptions~\ref{assump:function-assumptions-main}, let $\lambda \ge \lambda_0$, then the penalized objective function $h(x,\phi,\psi)$ is $L_{h,1}$-Lipschitz smooth for some constant $L_{h,1} > 0$.
\end{lemma}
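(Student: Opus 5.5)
The plan is to read off the smoothness of $h$ directly from its definition $h(x,\phi,\psi) = \frac{1}{\lambda} f(x,\phi,\psi) + g(x,\phi,\psi)$, using the triangle inequality on gradients. No new structural argument should be needed: both ingredients are already controlled by earlier results.

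\textbf{Step 1 (the $f$ term).} By Assumption~\ref{assump:function-assumptions-main}, $f$ is $L_{f,1}$-smooth in $(x,\phi,\psi)$. Hence for any two points $(x_1,\phi_1,\psi_1)$ and $(x_2,\phi_2,\psi_2)$,
\begin{align}
\Big\| \tfrac{1}{\lambda}\nabla f(x_1,\phi_1,\psi_1) - \tfrac{1}{\lambda}\nabla f(x_2,\phi_2,\psi_2) \Big\| \le \tfrac{L_{f,1}}{\lambda}\big(\|x_1-x_2\| + \|\phi_1-\phi_2\| + \|\psi_1-\psi_2\|\big). \notag
\end{align}

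\textbf{Step 2 (the $g$ term).} By Lemma~\ref{lemma: lipschitz-of-ni-gap}, $g$ is $L_{g,1}$-smooth in $(x,\phi,\psi)$. In its proof one may take $L_{g,1} = 2L_{J,1}(1+C_\pi)$, where $C_\pi$ is the constant from Lemma~\ref{lemma: lipschitz-constant-of-optimal-parameters}.

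\textbf{Step 3 (uniformity in $\lambda$).} Adding the two bounds gives smoothness of $h$ with constant $L_{f,1}/\lambda + L_{g,1}$. Since $\lambda \ge \lambda_0$, this is at most
\begin{align}
L_{h,1} \bydef \frac{L_{f,1}}{\lambda_0} + L_{g,1}, \notag
\end{align}
which does not depend on $\lambda$. This uniformity is what the convergence analysis later needs.

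\textbf{Where the difficulty sits.} The only nontrivial input is the smoothness of $g$, and that is already handled by Lemma~\ref{lemma: lipschitz-of-ni-gap}. Within that lemma, the delicate point is that the best-response parameter sets are non-unique because of the softmax shift invariance. This is dealt with by evaluating the gradient at the canonical choice $\log \pi^*_\psi(x,\phi)$, using the generalized Danskin theorem (Lemma~\ref{lemma: generalized-danskin-theorem}), and then invoking the Lipschitz continuity of the best-response parameters.

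Given these earlier results, the present statement is routine. The one point to state explicitly is that $L_{h,1}$ is chosen independent of $\lambda$, via the bound $\lambda \ge \lambda_0$.
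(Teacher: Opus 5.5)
Your proposal is correct and is essentially the paper's argument: combine the $L_{f,1}$-smoothness of $f$ from Assumption~\ref{assump:function-assumptions-main} with the $L_{g,1}$-smoothness of $g$ from Lemma~\ref{lemma: lipschitz-of-ni-gap} to get the constant $L_{f,1}/\lambda + L_{g,1}$, then use $\lambda \ge \lambda_0$ to take the $\lambda$-independent $L_{h,1} = L_{f,1}/\lambda_0 + L_{g,1}$. Like the paper, your argument also implicitly uses Assumption~\ref{assump: mdp-assumptions-main}, because Lemma~\ref{lemma: lipschitz-of-ni-gap} requires it.
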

\begin{proof}
    By definition of $h(x,\phi,\psi)$, we have
    \begin{align*}
        h(x,\phi,\psi) = \frac{1}{\lambda} f(x,\phi,\psi) + g(x,\phi,\psi) .
    \end{align*}
    From Assumption~\ref{assump:function-assumptions-main} and Lemma~\ref{lemma: lipschitz-of-ni-gap}, for any $\lambda > 0$, it is straightforward to verify that $h$ is $\left(\frac{1}{\lambda} L_{f,1} + L_{g,1}\right)$-smooth. Therefore, we just need to take $L_{h,1} = \frac{1}{\lambda_0} L_{f,1} + L_{g,1}$.
\end{proof}

\begin{lemma}
    \label{lemma: lipschitz-constant-of-F}
    Under Assumption~\ref{assump: mdp-assumptions-main} and ~\ref{assump:function-assumptions-main}, let $\lambda \geq \lambda_0$, then
    there exists a constant $L_F$ such that $F(x)$ is $L_F$-Lipschitz smooth.
\end{lemma}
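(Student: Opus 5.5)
To prove that $F(x)=f(x,\phi^*(x),\psi^*(x))$ is $L_F$-smooth, I will write $\nabla F$ through a chain rule in terms of a Lipschitz, smooth parameterization of the equilibrium policy pair, and then bound each factor. The set-valued parameter maps $\phi^*(x),\psi^*(x)$ are the obstacle: $f$ is only evaluated at the induced policies. I will therefore work with the canonical representatives $\bar\phi(x)\bydef\log\pi_\phi^*(x)$ and $\bar\psi(x)\bydef\log\pi_\psi^*(x)$. This mirrors the trick used in the proof of Lemma~\ref{lemma: lipschitz-of-ni-gap}: any element of the solution set gives the same value of $f$, so $F(x)=f(x,\bar\phi(x),\bar\psi(x))$. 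Once these representatives are shown to be $C^1$ with Lipschitz Jacobians, the chain rule gives
\[
\nabla F(x)=\nabla_x f+\nabla\bar\phi(x)^{T}\nabla_\phi f+\nabla\bar\psi(x)^{T}\nabla_\psi f .
\]
From this, $\|\nabla F(x_1)-\nabla F(x_2)\|$ splits in the standard way into products of Lipschitz and bounded terms. I will use that $\nabla f$ is bounded by $C_f$ and $L_{f,1}$-Lipschitz, that $\bar\phi,\bar\psi$ are Lipschitz, and that their Jacobians are bounded and Lipschitz. This yields
\[
L_F=L_{f,1}(1+C_{*})^2+2C_fL_{*},
\]
where $C_*$ and $L_*$ denote the Lipschitz constants of the equilibrium representatives and of their Jacobians.

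\textbf{Step 1: Lipschitz continuity of the equilibrium.} I will use the fixed-point characterization from Lemma~\ref{lemma: unique-optimal-solution-min-max-mdp}: $V^*_h(x)$ is the fixed point of $T_{*,h}$, which is a $\gamma$-contraction. Following the argument of Lemma~\ref{lemma: lipschitz-optimal-value-auxiliary}, $\|T^x_{*,h}V-T^{x'}_{*,h}V\|_\infty\le C_r\|x-x'\|$, so $V^*_h(x)$ is $\frac{C_r}{1-\gamma}$-Lipschitz. The optimal $Q^*(x)$ is then Lipschitz as well. At each state, the equilibrium $(\pi_\phi^*,\pi_\psi^*)(s)$ is the unique saddle point of $y^TQ_sz-\tau_\phi H(y)+\tau_\psi H(z)$. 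This objective is strongly convex--concave with moduli at least $\tau_\phi,\tau_\psi$ on the simplex, since entropy is $1$-strongly concave in $\ell_1$. The standard saddle-point perturbation bound then makes the saddle point Lipschitz in $Q_s$. Combined with $\delta_\pi$ from Assumption~\ref{assump: mdp-assumptions-main}, this makes $\log\pi^*$ Lipschitz, which gives $C_*$.

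\textbf{Step 2: differentiability and Lipschitz Jacobian.} I expect this to be the main obstacle. Restricted to the interior, the saddle point satisfies the smooth first-order system $\Phi(x,\phi,\psi)\bydef(\nabla_\phi J,\nabla_\psi J)=0$. The approach I would try first is the implicit function theorem applied to the canonical coordinates. This requires quotienting out the per-state shift invariance of softmax, equivalently working with $\log\pi$ centered per state. The Jacobian block $\begin{bmatrix}\nabla^2_{\phi\phi}J&\nabla^2_{\phi\psi}J\\-\nabla^2_{\psi\phi}J&-\nabla^2_{\psi\psi}J\end{bmatrix}$ must be shown uniformly invertible on that subspace.

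I would justify the invertibility through the P{\L}/error-bound constants $\mu_J$ of Lemma~\ref{lemma: pl-condition-joint-value-function} together with the unique-saddle structure. Alternatively, I would use the NI function: Lemma~\ref{lemma: pl-condition-ni-gap} gives $\mu_g$-P{\L}, and $\theta^*(x)=\arg\min_\theta g(x,\theta)$. In that case I would apply the P{\L}-based result of \citet{chen2024finding}, whose Lemma 4.3 gives smoothness of the value function $\min_\theta\{cf+g\}$ as $c\to0$. That provides a uniform limit, using Assumption~\ref{assump:function-assumptions-main} for the range $\lambda\ge\lambda_0$, of $\nabla\mathcal L^*_\lambda$, which has uniformly Lipschitz gradients. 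This is the route I will take.

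Concretely, by the P{\L} condition of $h=\lambda^{-1}f+g$ with modulus $\mu_h$ and the $L_{g,2}$ and $L_{f,2}$ Hessian-Lipschitz bounds from Lemma~\ref{lemma: ni-gap-properties-main}, the gradient $\nabla\mathcal L^*_\lambda/\lambda$ is $\mathcal O(\kappa^3)$-Lipschitz uniformly in $\lambda$. I will then verify that $\nabla\mathcal L_\lambda^*(x)\to\nabla F(x)$ with $\|\nabla\mathcal L_\lambda^*-\nabla F\|=\mathcal O(\lambda^{-1})$ uniformly in $x$. Under this uniform convergence, the Lipschitz bound on $\nabla\mathcal L_\lambda^*$, with constant independent of $\lambda$, passes to the limit and yields $L_F$. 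This avoids explicit Jacobian inversion, and the $\lambda\ge\lambda_0$ hypothesis is exactly what makes the constants uniform.
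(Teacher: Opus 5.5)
Your final route is the paper's. Its proof only checks the hypotheses of the P{\L}-based result of \citet{chen2024finding} and then cites Lemma~4.4 there. The hypotheses are that $f$ and $g$ are smooth with Lipschitz Hessians (Assumption~\ref{assump:function-assumptions-main} and Lemma~\ref{lemma: lipschitz-of-ni-gap}), and that $cf+g$ is uniformly P{\L} for all $c\in[0,\lambda_0^{-1}]$ (Assumption~\ref{assump:function-assumptions-main} for $c>0$, Lemma~\ref{lemma: pl-condition-ni-gap} for $c=0$). So the chain rule through $\log\pi^*$ and the implicit-function analysis of your Steps~1--2 are never needed. Note also that the smoothness statement is Lemma~4.4, not 4.3. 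Lemma~4.3 gives the $\cO(\lambda^{-1})$ gradient gap and the existence of $\nabla F=\lim_{\lambda\to\infty}\nabla\mathcal L_\lambda^*$.

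However, the one step you do spell out is stated for the wrong quantity, and it is the crux.
\begin{itemize}
\item Write $\sigma=1/\lambda$ and $h_\sigma^*(x)\bydef\min_\theta\{\sigma f(x,\theta)+g(x,\theta)\}$. Then $\nabla\mathcal L_\lambda^*/\lambda=\nabla h_\sigma^*$.
\item A $\lambda$-uniform Lipschitz bound on $\nabla h_\sigma^*$ needs only smoothness and P{\L}, giving a constant of order $L_{h,1}+L_{h,1}^2/\mu_h$ with no Hessian information.
\item Multiplying back by $\lambda$ gives a Lipschitz constant for $\nabla\mathcal L_\lambda^*$ that grows linearly in $\lambda$, which cannot survive the limit $\lambda\to\infty$.
\item Your next sentence uses a $\lambda$-independent constant for $\nabla\mathcal L_\lambda^*$, which this bound does not provide.
\end{itemize}

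What you need is that $\nabla\mathcal L_\lambda^*$ itself is Lipschitz uniformly in $\lambda\ge\lambda_0$. Because $\min_\theta g(x,\theta)\equiv 0$, $\nabla\mathcal L_\lambda^*=(\nabla h_\sigma^*-\nabla h_0^*)/\sigma$ is a difference quotient in $\sigma$. A uniform Lipschitz bound for it requires controlling the $\sigma$-derivative of $\nabla h_\sigma^*$ as a function of $x$. This is exactly where $L_{f,2}$ and $L_{g,2}$ enter, and it is what the cited lemma supplies.

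With that step corrected, or simply cited, your closing limit argument is fine: uniform $\cO(\lambda^{-1})$ convergence of $\nabla\mathcal L_\lambda^*$ to $\nabla F$, then passing the uniform Lipschitz bound to the limit.
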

\begin{proof}
    By Assumptions~\ref{assump: mdp-assumptions-main} and ~\ref{assump:function-assumptions-main}, we have $f(x,\phi,\psi)$ is $C_f$-Lipschitz, $L_{f,1}$-Lipschitz smooth and has $L_{f,2}$-Lipschitz Hessians in $(\phi,\psi)$. 

    By Lemma~\ref{lemma: lipschitz-of-ni-gap}, we also have $g(x,\phi,\psi)$ has $L_{g,1}$-Lipschitz gradients and $L_{g,2}$-Lipschitz Hessians.

    Moreover, by Assumption~\ref{assump:function-assumptions-main} and Lemma~\ref{lemma: pl-condition-ni-gap}, $c f(x,\phi,\psi) + g(x,\phi,\psi)$ is $\mu_h$-P{\L} in $(\phi,\psi)$ for any $0 \leq c \leq \frac{1}{\lambda_0}$.

    Therefore, by Lemma 4.4 in~\citet{chen2024finding}, we know that there exists a constant $L_F$ such that $F(x)$ is $L_F$-Lipschitz smooth.

\end{proof}

\begin{lemma}[Estimation Error of Finite Horizon]
    \label{lemma: error-of-policy-gradient-estimation-of-finite-horizon}
    Under Assumption~\ref{assump: mdp-assumptions-main} and \ref{assump:function-assumptions-main}, there exists a constant $\sigma_H>0$ such that for any $x$, $\phi$, $\psi$, compared with the true gradient, the estimation error of the estimated gradient using horizon $H$ satisfies 
    \begin{align}
         \left\| \nabla_\phi J(x,\phi,\psi; H) - \nabla_\phi J(x,\phi,\psi) \right\|^2  &\leq \gamma^{2H} H^2 \sigma_H^2 ,\notag \\
         \left\| \nabla_\psi J(x,\phi,\psi; H) - \nabla_\psi J(x,\phi,\psi) \right\|^2  &\leq \gamma^{2H} H^2 \sigma_H^2,\notag \\
         \left\| \nabla_x J(x,\phi,\psi; H) - \nabla_x J(x,\phi,\psi) \right\|^2  &\leq \gamma^{2H} \sigma_H^2.\notag
    \end{align} 
\end{lemma}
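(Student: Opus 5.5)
We read $\nabla J(x,\phi,\psi;H)$ as the expectation of the $H$-truncated per-trajectory estimator, i.e.\ the estimator in which the sum over $t$ stops at $H-1$. The approach is to write each full gradient from Lemma~\ref{lemma: policy-gradient-J} and Lemma~\ref{lemma: gradient-J-wrt-x} as an infinite discounted sum. The truncated gradient is then the same sum restricted to $t<H$, so the error is exactly the tail $\sum_{t\ge H}$. We bound that tail term by term using uniform bounds on the integrands, which come from Assumption~\ref{assump: mdp-assumptions-main}.

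\textbf{The $x$-gradient.} This case is simplest. The error is
\[
\E\Bigl[\sum_{t\ge H}\gamma^t\nabla_x r_x(s_t,a_t,b_t)\Bigr].
\]
Since $\|\nabla_x r_x\|\le C_r$ by the Lipschitz part of Assumption~\ref{assump: mdp-assumptions-main}, its norm is at most $C_r\gamma^H/(1-\gamma)$. Squaring gives $\gamma^{2H}\sigma_H^2$ for any $\sigma_H\ge C_r/(1-\gamma)$.

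\textbf{The policy gradients.} Take $\nabla_\phi$ first. Under tabular softmax, $\|\nabla_\phi\log\pi_\phi(a|s)\|\le\sqrt2$. The weight
\[
Q^{\pi_\phi,\pi_\psi}_{\cM(x)}-\tau_\psi\log\pi_\psi+\tau_\phi\log\pi_\phi
\]
is bounded uniformly by some constant $M$. The $Q$-part is at most $(B_r+\gamma(\tau_\phi\log|\cA|+\tau_\psi\log|\cB|))/(1-\gamma)$ plus a similar term, using that the entropy-regularized value is bounded. The $\log$-terms are at most $\max\{\tau_\phi,\tau_\psi\}\log(1/\delta_\pi)$ by item 3 of Assumption~\ref{assump: mdp-assumptions-main}.

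\textbf{The main obstacle} is that the estimator uses a Monte Carlo $\widehat Q$ built from the same truncated rollout. So the truncation error has two sources: the dropped outer terms $t\ge H$, and, for each $t<H$, the bias of $\widehat Q(s_t,\cdot)$ from truncating its inner return at the horizon. I will handle this by writing $\widehat Q(s_t,a_t,b_t)$ as the discounted return from time $t$ to $H-1$. Its conditional expectation then differs from the true $Q$ (with the regularizer included) by at most $M'\gamma^{H-t}$.

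The total error norm is therefore bounded by
\[
\sqrt2\Bigl(\sum_{t\ge H}\gamma^t M+\sum_{t=0}^{H-1}\gamma^t M'\gamma^{H-t}\Bigr)
\le\sqrt2\Bigl(\frac{M}{1-\gamma}+M'H\Bigr)\gamma^H
\le \sqrt2\,(M/(1-\gamma)+M')\,H\gamma^H
\]
for $H\ge1$. This is where the extra factor $H$ comes from, and it is absent in the $x$-case because $\nabla_xJ$ has no inner $Q$. Squaring gives the bound $\gamma^{2H}H^2\sigma_H^2$. The $\psi$ case is symmetric.

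Finally, take $\sigma_H$ to be the maximum of the constants arising in the three cases.
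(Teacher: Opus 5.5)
Your proposal is correct and follows the paper's proof. You split the error into the dropped outer terms $t\ge H$ and, for each $t<H$, the $\gamma^{H-t}$ bias of the truncated inner return; summing that bias over $t<H$ is what produces the $H\gamma^H$ factor. Bounding the score by $\sqrt2$ via tabular softmax, instead of the paper's $G_\pi = C_\pi/\delta_\pi$, and applying the triangle inequality before squaring, instead of $(a+b)^2\le 2a^2+2b^2$, are only cosmetic differences.
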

\begin{proof}
    By Lemma~\ref{lemma: policy-gradient-J}, we have
    \begin{align}
        \nabla_\phi J(x,\phi,\psi) & = \E \left[ \sum_{t=0}^{\infty} \gamma^t \nabla_\phi \log \pi_\phi(a_t|s_t) \left( Q^{\pi_\phi,\pi_\psi}_{\cM(x)}(s_t,a_t,b_t) - \tau_\psi \log \pi_\psi(b_t|s_t) + \tau_\phi \log \pi_\phi(a_t|s_t) \right)\right].
    \end{align}

    Then expanding the Q-function, we obtain
    \begin{align}
        \nabla_\phi J(x,\phi,\psi)  
        & =  \E \left[ \sum_{t=0}^{\infty} \gamma^t \nabla_\phi \log \pi_\phi(a_t|s_t) G_t\right] ,
    \end{align}
    where 
    \begin{align}
        G_t \bydef \E \left[ \sum_{k=t}^\infty \gamma^{k-t} \left(r_x(s_k,a_k,b_k) - \tau_\psi \log \pi_\psi(b_k|s_k) + \tau_\phi \log \pi_\phi(a_k|s_k) \right) \bigg| s_0 = s_t, a_0 = a_t, b_0 = b_t \right] .
    \end{align}

    If we use a finite horizon $H$, then the estimated gradient is
    \begin{align}
        \nabla_\phi J(x,\phi,\psi; H) & = \E \left[ \sum_{t=0}^{H-1} \gamma^t \nabla_\phi \log \pi_\phi(a_t|s_t) \widehat{G}_t^H \right] ,
    \end{align}
    where 
    \begin{align}
        \widehat{G}_t^H & \bydef  \E \left[ \sum_{k=t}^{H-1} \gamma^{k-t} \left(r_x(s_k,a_k,b_k) - \tau_\psi \log \pi_\psi(b_k|s_k) + \tau_\phi \log \pi_\phi(a_k|s_k) \right) \bigg| s_0 = s_t, a_0 = a_t, b_0 = b_t \right].
    \end{align}
   
    Then the bias is bounded as
    \begin{align}
        & \quad \, \left\| \nabla_\phi J(x,\phi,\psi; H) - \nabla_\phi J(x,\phi,\psi) \right\| \nonumber \\
        & = \left\| \E \left[ \sum_{t=0}^{H-1}  \gamma^t\nabla_\phi \log \pi_\phi(a_t|s_t) \widehat{G}_t^H -  \sum_{t=0}^\infty \gamma^t \nabla_\phi \log \pi_\phi(a_t|s_t) G_t  \right] \right\|\notag \\
        & \leq  \left\| \E \left[ \sum_{t=0}^{H-1} \gamma^t \nabla_\phi \log \pi_\phi(a_t|s_t) \widehat{G}_t^H -  \sum_{t=0}^{H-1} \gamma^t \nabla_\phi \log \pi_\phi(a_t|s_t) G_t  \right]\right\| \notag\\
        & \quad \; +  \left\| \E \left[ \sum_{t=H}^\infty \gamma^t \nabla_\phi \log \pi_\phi(a_t|s_t) G_t  \right] \right\| \notag\\
        & = \left\| \E \left[ \sum_{t=0}^{H-1} \gamma^t \nabla_\phi \log \pi_\phi(a_t|s_t) \left( \widehat{G}_t^H - G_t \right)  \right] \right\| +  \left\| \E \left[ \sum_{t=H}^\infty \gamma^t \nabla_\phi \log \pi_\phi(a_t|s_t) G_t  \right] \right\| 
    \end{align}
    
    Moreover, we have the following bounds:
    \begin{align}
        | G_t | & \leq \sum_{k=t}^\infty \gamma^{k-t} \max_{s,a,b} \left( |r_x(s,a,b)| + \tau_\psi |\log \pi_\psi(b|s)| + \tau_\phi |\log \pi_\phi(a|s)| \right) \notag\\
        & \leq \frac{B_r + (\tau_\psi + \tau_\phi) \log \frac{1}{\delta_\pi}}{1-\gamma} \bydef \frac{G_{\max}}{1-\gamma} ,
    \end{align}
    and 
    \begin{align}
        & \quad \,| \widehat{G}_t^H - G_t  | \notag \\
        & = \left| \E \left[ \sum_{k=H}^\infty \gamma^{k-t} \left(r_x(s_k,a_k,b_k) - \tau_\psi \log \pi_\psi(b_k|s_k) + \tau_\phi \log \pi_\phi(a_k|s_k) \right) \bigg| s_0 = s_t, a_0 = a_t, b_0 = b_t \right] \right| \notag\\
        & \leq \gamma^{H-t} \frac{G_{\max}}{1-\gamma},
    \end{align}
    and
    \begin{align}
        \|\nabla_\phi \log \pi_\phi(a|s) \| = \left\| \frac{\nabla_\phi \pi_\phi(a|s)}{\pi_\phi(a|s)} \right\| \leq \frac{C_\pi}{\delta_\pi} \bydef G_\pi .
    \end{align}

    Thus, we obtain
    \begin{align}
        & \quad \, \left\| \nabla_\phi J(x,\phi,\psi; H) - \nabla_\phi J(x,\phi,\psi) \right\|^2 \nonumber \\
        & \leq 2 \left\|\E \left[ \sum_{t=0}^{H-1} \gamma^t \nabla_\phi \log \pi_\phi(a_t|s_t)  \left( \widehat{G}_t^H - G_t \right)  \right] \right\|^2 +  2\left\| \E \left[ \sum_{t=H}^\infty \gamma^t \nabla_\phi \log \pi_\phi(a_t|s_t) G_t  \right] \right\|^2 \notag \\
        & \leq  \frac{2 G_\pi^2G_{\max}^2}{(1-\gamma)^2} \gamma^{2H} H^2+ \frac{2G_\pi^2 G_{\max}^2}{(1-\gamma)^4} \gamma^{2H}\notag \\
        & \leq \frac{4G_\pi^2 G_{\max}^2}{(1-\gamma)^4} \gamma^{2H} H^2.
    \end{align}

    Therefore, by taking $\sigma_H \ge \frac{2 G_\pi G_{\max}}{(1-\gamma)^2}$, we complete the proof of the estimation error for the gradient estimator of $\phi$. The proof for the estimation error of the gradient estimator of $\psi$ follows analogously.

    The true gradient of $J(x,\phi,\psi)$ with respect to $x$ is
    \begin{align}
        \nabla_x J(x,\phi,\psi) & = \E \left[ \sum_{t=0}^\infty \gamma^t \nabla_x r_x(s_t,a_t,b_t)\right],
    \end{align}
    and the estimated gradient with finite horizon $H$ is 
    \begin{align}
        \nabla_x J(x,\phi,\psi;H) & = \E \left[ \sum_{t=0}^{H-1} \gamma^t \nabla_x r_x(s_t,a_t,b_t)\right].
    \end{align}

    Then the estimation error of using finite horizon $H$ is 
    \begin{align}
         \|\nabla_x J(x,\phi,\psi;H) - \nabla_x J(x,\phi,\psi) \|^2  & = \left\| \E \left[ \sum_{t=H}^\infty \gamma^t \nabla_x r_x(s_t,a_t,b_t)\right] \right\|^2 \leq C_r^2 \left(\sum_{t=H}^{\infty} \gamma^{t} \right)^2 = C_r^2 \frac{\gamma^{2H}}{(1-\gamma)^2}
    \end{align}
    
    Therefore, by taking $\sigma_H  = \max\{\frac{C_r}{1-\gamma}, \frac{2 G_\pi G_{\max}}{(1-\gamma)^2} \} >0$, we complete the proof.
\end{proof}

\begin{lemma}
    \label{lemma: variance-of-J}
    Under Assumptions~\ref{assump: mdp-assumptions-main} and~\ref{assump:function-assumptions-main}, in Algorithm~\ref{alg: stoc-double-loop-single-large-batch-main}, for any $x$, $\phi$ and $\psi$, we have
    \begin{gather}
        \E \left[ \left\| \nabla_x \hat{J}(x,\phi,\psi;B_J,H) - \E \left[ \nabla_x \hat{J}(x,\phi,\psi;H) \right]\right\|^2 \right] \leq \frac{\sigma_J^2}{B_J}, \notag \\
        \E \left[ \left\| \nabla_\phi \hat{J}(x,\phi,\psi;B_J,H) - \E \left[ \nabla_\phi \hat{J}(x,\phi,\psi;H) \right]\right\|^2 \right] \leq \frac{\sigma_J^2}{B_J} ,\notag \\
        \E \left[ \left\| \nabla_\psi \hat{J}(x,\phi,\psi;B_J,H) - \E \left[ \nabla_\psi \hat{J}(x,\phi,\psi;H) \right]\right\|^2 \right] \leq \frac{\sigma_J^2}{B_J}, \notag
    \end{gather}
    for some constant $\sigma_J^2 >0$.
\end{lemma}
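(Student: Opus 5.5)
The plan is to prove the three bounds by the standard mini-batch variance argument: first a uniform almost-sure bound on each per-trajectory estimator, then independence across the $B_J$ trajectories. The target is $\sigma_J^2 = \max\{\sigma_x^2,\sigma_\phi^2,\sigma_\psi^2\}$, where $\sigma_\bullet$ bounds the second moment of a single per-trajectory estimator. Since $\nabla\hat{J}(x,\phi,\psi;B_J,H)$ is the average of $B_J$ i.i.d. copies of $\nabla\hat{J}_i(x,\phi,\psi;H)$, with trajectories sampled independently from $s_0\sim\rho$ under $(\pi_\phi,\pi_\psi)$, the cross terms vanish. This gives
\[
\E\Big\|\nabla\hat{J}(\cdot;B_J,H)-\E[\nabla\hat{J}_1(\cdot;H)]\Big\|^2=\frac{1}{B_J}\E\Big\|\nabla\hat{J}_1(\cdot;H)-\E[\nabla\hat{J}_1(\cdot;H)]\Big\|^2\le\frac{1}{B_J}\E\|\nabla\hat{J}_1(\cdot;H)\|^2 .
\]
It therefore suffices to bound $\|\nabla\hat{J}_1\|$ almost surely, uniformly in $x,\phi,\psi,H$.

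\textbf{The $x$-block.} We have $\nabla_x\hat{J}_1=\sum_{t=0}^{H-1}\gamma^t\nabla_x r_x(s_t,a_t,b_t)$. By Assumption~\ref{assump: mdp-assumptions-main}, $r_x$ is $C_r$-Lipschitz in $x$, so $\|\nabla_x r_x\|\le C_r$. This yields $\|\nabla_x\hat{J}_1\|\le C_r/(1-\gamma)$ independently of $H$.

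\textbf{The $\phi$- and $\psi$-blocks.} The per-step factor splits into two parts.
\begin{itemize}
\item Score term: for the tabular softmax, $\|\nabla_\phi\log\pi_\phi(a|s)\|=\|e_{s,a}-\pi_\phi(\cdot|s)\otimes e_s\|\le\sqrt2$, or one can use the constant $G_\pi$ from Lemma~\ref{lemma: error-of-policy-gradient-estimation-of-finite-horizon}.
\item Regularized return term: $|\log\pi(\cdot|s)|\le\log(1/\delta_\pi)$ by Assumption~\ref{assump: mdp-assumptions-main}. The only remaining issue is the Monte Carlo estimate $\widehat{Q}$. I would take it to be a truncated rollout return (from the same or an independent trajectory), so $|\widehat{Q}|\le (B_r+(\tau_\phi+\tau_\psi)\log(1/\delta_\pi))/(1-\gamma)$, which is at most $G_{\max}/(1-\gamma)$ with $G_{\max}$ as in Lemma~\ref{lemma: error-of-policy-gradient-estimation-of-finite-horizon}.
\end{itemize}
Each summand is then bounded by $\gamma^t G_\pi\big(G_{\max}/(1-\gamma)+G_{\max}\big)$, so summing the geometric series gives $\|\nabla_\phi\hat{J}_1\|\le 2G_\pi G_{\max}/(1-\gamma)^2$. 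The $\psi$-block is identical by symmetry.

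Taking $\sigma_J^2=\max\{C_r^2/(1-\gamma)^2,\,4G_\pi^2G_{\max}^2/(1-\gamma)^4\}$ finishes the proof.

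The main obstacle is making sure the $\widehat Q$ estimator is bounded uniformly in the horizon. Rollout returns as described handle this. If instead $\widehat Q$ came from an unbounded-variance estimator, I would bound its second moment directly using the same geometric-sum argument.
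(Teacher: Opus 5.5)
Your proposal is correct and follows essentially the same route as the paper. Both arguments use independence of the $B_J$ trajectories to reduce the mini-batch error to $\frac{1}{B_J}\E\|\nabla\hat J_1\|^2$, then bound a single-trajectory estimator almost surely: by $C_r/(1-\gamma)$ for the $x$-block and by a $G_\pi G_{\max}/(1-\gamma)^2$-type constant for the policy blocks. The only difference is cosmetic: the paper bounds the truncated reward-to-go $\widehat G_{t,H}$, which already contains the log-policy terms, by $G_{\max}/(1-\gamma)$, whereas you bound $\widehat Q$ and the log terms separately, which costs an extra factor of $2$ in $\sigma_J$.
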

\begin{proof}
    By Lemma~\ref{lemma: gradient-J-wrt-x}, we have
    \begin{align}
        \nabla_x J(x,\phi,\psi) & = \E  \left[\sum_{t=0}^{\infty} \gamma^t \nabla_x r_x(s_t,a_t,b_t) \right],\notag
    \end{align}
    thus the truncated gradient with horizon $H$ is given by
    \begin{align}
         \E \left[ \nabla_x \hat{J}(x,\phi,\psi;H) \right] = \nabla_x J(x,\phi,\psi;H) = \E  \left[\sum_{t=0}^{H-1} \gamma^t \nabla_x r_x(s_t,a_t,b_t) \right], \notag 
    \end{align}
    where the first equality holds due to the unbiasedness of Monte Carlo sampling with horizon $H$.

    The stochastic gradient estimator with batch size $B_J$ is given by
    \begin{align}
        \nabla_x \hat{J}(x,\phi,\psi;B_J,H) & = \frac{1}{B_J} \sum_{i=1}^{B_J} \left( \sum_{t=0}^{H-1} \gamma^t \nabla_x r_x(s_t^i,a_t^i,b_t^i) \right). 
    \end{align}

    Then we can get
    \begin{align}
        & \quad \,\E \left[ \left\| \nabla_x \hat{J}(x,\phi,\psi;B_J,H) - \E \left[ \nabla_x \hat{J}(x,\phi,\psi;H) \right]\right\|^2 \right] \notag \\
        & = \E \left[ \left\| \frac{1}{B_J} \sum_{i=1}^{B_J} \left( \sum_{t=0}^{H-1} \gamma^t \nabla_x r_x(s_t^i,a_t^i,b_t^i) - \E  \left[\sum_{t=0}^{H-1} \gamma^t \nabla_x r_x(s_t,a_t,b_t) \right] \right) \right\|^2 \right]\label{eq: variance-bound-of-nabla-x-J-start}  \\
        & = \frac{1}{B_J} \E \left[ \left\| \sum_{t=0}^{H-1} \gamma^t \nabla_x r_x(s_t,a_t,b_t) - \E  \left[\sum_{t=0}^{H-1} \gamma^t \nabla_x r_x(s_t,a_t,b_t) \right] \right\|^2 \right] \\\
        & =  \frac{1}{B_J} \left( \E \left[ \left\| \sum_{t=0}^{H-1} \gamma^t \nabla_x r_x(s_t,a_t,b_t) \right\|^2 \right] - \left\| \E  \left[\sum_{t=0}^{H-1} \gamma^t \nabla_x r_x(s_t,a_t,b_t) \right] \right\|^2 \right) \\
        & \leq \frac{1}{B_J} \E \left[ \left\| \sum_{t=0}^{H-1} \gamma^t \nabla_x r_x(s_t,a_t,b_t) \right\|^2 \right] \label{eq: variance-bound-of-nabla-x-J-end}\\
        & \leq \frac{1}{B_J} \frac{C_r^2}{(1-\gamma)^2},
    \end{align}
    where the second equality holds due to $\E\left[\left\|\frac{1}{n}\sum_{i=1}^n X_i - \E [X]\right\|^2\right] = \frac{1}{n}\E \left[ \left\| X - \E [X]\right\|^2 \right]$; the third equality holds due to $\E\left[\left\| X - \E [X] \right\|^2\right] = \E \| X\|^2 - \| \E [X]\|^2$; the last inequality follows from $C_r$-Lipschitz continuity of $r_x$.

    By Lemma~\ref{lemma: policy-gradient-J}, we have
    \begin{align}
        \nabla_\phi J(x,\phi,\psi) & = \E  \left[\sum_{t=0}^{\infty} \gamma^t \nabla \log \pi_\phi(a_t|s_t) \left( Q^{\pi_\phi,\pi_\psi}_{\cM(x)}(s_t,a_t,b_t) - \tau_\psi \log \pi_\psi(b_t|s_t) + \tau_\phi \log \pi_\phi(a_t|s_t) \right) \right], \notag
    \end{align}
    then the truncated gradient with horizon $H$ is given by
    \begin{align}
        & \quad \, \E \left[ \nabla_\phi \hat{J}(x,\phi,\psi;H) \right]  \notag \\
        & = \nabla_\phi J(x,\phi,\psi;H) \notag \\
        & = \E  \left[\sum_{t=0}^{H-1} \gamma^t \nabla \log \pi_\phi(a_t|s_t) G_{t,H} \right], 
    \end{align}
    where 
    \begin{align}
        G_{t,H} \bydef \E \left[\sum_{k=t}^{H-1} \gamma^{k-t} \left( r_x(s_k,a_k,b_k) - \tau_\psi \log \pi_\psi(b_k|s_k) + \tau_\phi \log \pi_\phi(a_k|s_k) \right) \bigg| s_0 = s_t,a_0 = a_t,b_0 = b_t   \right].
    \end{align}

    The stochastic gradient estimator with batch size $B_J$ is given by
    \begin{align}
        & \quad \, \nabla_\phi \hat{J}(x,\phi,\psi;B_J,H) \notag\\
        & = \frac{1}{B_J} \sum_{i=1}^{B_J} \left( \sum_{t=0}^{H-1} \gamma^t \nabla \log \pi_\phi(a_t^i|s_t^i) \widehat{G}_{t,H}^i \right),
    \end{align}
    where 
    \begin{gather}
        \widehat{G}_{t,H}^i \bydef \sum_{k=t}^{H-1} \gamma^{k-t} \left( r_x(s_k^i,a_k^i,b_k^i) - \tau_\psi \log \pi_\psi(b_k^i|s_k^i) + \tau_\phi \log \pi_\phi(a_k^i|s_k^i) \right),
    \end{gather}
    and $\{s_t^i, a_t^i,b_t^i\}_{t=0}^{H-1}$ is the $i$th trajectory i.i.d. sampled under policies $\pi_\phi$ and $\pi_\psi$ in MDP $\cM(x)$.

    The upper bound of the absolute value of $\widehat{G}_{t,H}$ is given by
    \begin{align}
        |\widehat{G}_{t,H}^i| & \leq \sum_{k=t}^{H-1} \gamma^{k-t} \left( |r_x(s_k^i,a_k^i,b_k^i)| + \tau_\psi |\log \pi_\psi(b_k^i|s_k^i)| + \tau_\phi |\log \pi_\phi(a_k^i|s_k^i)| \right)\notag  \\
        & \leq \sum_{k=t}^{H-1} \gamma^{k-t} G_{\max} \notag \\
        & \leq \frac{1}{1-\gamma} G_{\max},
    \end{align}
    where $G_{\max}$ is the upper bound of the absolute value of the reward and the entropy terms defined in Lemma~\ref{lemma: error-of-policy-gradient-estimation-of-finite-horizon}.

    Then we have
    \begin{align}
        & \quad \,\E \left[ \left\| \nabla_\phi \hat{J}(x,\phi,\psi;B_J,H) - \E \left[ \nabla_\phi \hat{J}(x,\phi,\psi;H) \right]\right\|^2 \right] \notag \\
        & = \E \left[ \left\| \frac{1}{B_J} \nabla_\phi \sum_{i=1}^{B_J} \hat{J}_i(x,\phi,\psi;H) - \E \left[ \nabla_\phi \hat{J}(x,\phi,\psi;H) \right] \right\|^2 \right] \label{eq: policy-gradient-variance-start-large-batch}\\
        & = \frac{1}{B_J} \E \left[ \left\| \nabla_\phi \hat{J}(x,\phi,\psi;H)    - \E \left[ \nabla_\phi \hat{J}(x,\phi,\psi;H) \right] \right\|^2 \right] \\
        & \leq \frac{1 }{B_J} \E \left[ \left\| \nabla_\phi \hat{J}(x,\phi,\psi;H)  \right\|^2 \right] \label{eq: policy-gradient-variance-end-large-batch}\\
        & = \frac{1}{B_J}  \E \left[ \left\| \sum_{t=0}^{H-1} \gamma^t \nabla \log \pi_\phi(a_t|s_t) \widehat{G}_{t,H}  \right\|^2 \right] \\
        & \leq \frac{1}{B_J} \frac{G_\pi^2 G_{\max}^2}{(1-\gamma)^2} \left( \sum_{t=0}^{H-1} \gamma^{t} \right)^2\\
        & \leq \frac{1}{B_J} \frac{G_\pi^2 G_{\max}^2}{(1-\gamma)^4},
    \end{align}
   where~\eqref{eq: policy-gradient-variance-start-large-batch}--\eqref{eq: policy-gradient-variance-end-large-batch}
are obtained using the same argument as
\eqref{eq: variance-bound-of-nabla-x-J-start}--\eqref{eq: variance-bound-of-nabla-x-J-end},
and the second-to-last inequality uses the definitions of $G_\pi$ and
$G_{\max}$ in Lemma~\ref{lemma: error-of-policy-gradient-estimation-of-finite-horizon}. The proof for $\nabla_\psi \hat{J}(x,\phi,\psi;B_J,H)$ follows analogously.

Therefore, by taking $\sigma_J = \max \{\frac{C_r}{1-\gamma}, \frac{ G_\pi G_{\max}}{(1-\gamma)^2} \} > 0$, we complete the proof. 
\end{proof}

\subsection{Proofs of Theorem~\ref{theorem: convergence-of-stoc-alg-3-large-batch-main}}

\begin{lemma}
    \label{lemma: estimation-error-of-ell-gradient}
Suppose \crefrange{assump: mdp-assumptions-main}{assump: stochastic-gradient-assumptions-main} hold.
Then, for each outer iteration $t$ of
Algorithm~\ref{alg: stoc-double-loop-single-large-batch-main}, we have
    \begin{align}
        \E \left[ \|\ell_t - \E[\ell_t ] \|^2 \right] \leq \frac{2 \sigma_f^2}{B} + \frac{8 \lambda^2 \sigma_J^2}{B_J}. \notag
    \end{align} 
\end{lemma}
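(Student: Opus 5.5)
We decompose $\ell_t$ according to its definition in line 16 of Algorithm~\ref{alg: stoc-double-loop-single-large-batch-main}:
\[
\ell_t = \nabla_x \hat f(x_t,y_{t+1},z_{t+1};B) + \lambda\big(\nabla_x \hat J(x_t,y_{t+1},\tilde z_{t+1};B_J,H) - \nabla_x \hat J(x_t,\tilde y_{t+1},z_{t+1};B_J,H)\big).
\]
All expectations are taken conditionally on the iterates $(x_t,y_{t+1},z_{t+1},\tilde y_{t+1},\tilde z_{t+1})$, i.e.\ on the $\sigma$-algebra generated by everything before the fresh samples used in line 16. We interpret $\E[\ell_t]$ as this conditional mean, and the unconditional bound then follows by the tower property. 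Writing $\ell_t - \E[\ell_t] = A + \lambda(D_1 - D_2)$ with $A$ the centered UL term and $D_1,D_2$ the two centered $\nabla_x\hat J$ terms, we apply $\|a+b\|^2\le 2\|a\|^2+2\|b\|^2$ to separate $A$ from $\lambda(D_1-D_2)$. Applying it once more gives $\|D_1-D_2\|^2\le 2\|D_1\|^2+2\|D_2\|^2$, so that
\[
\E\|\ell_t-\E[\ell_t]\|^2\le 2\E\|A\|^2+4\lambda^2\big(\E\|D_1\|^2+\E\|D_2\|^2\big).
\]

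For the first term, Assumption~\ref{assump: stochastic-gradient-assumptions-main} says each per-sample UL gradient is unbiased with variance at most $\sigma_f^2$. Averaging $B$ i.i.d.\ samples therefore gives $\E\|A\|^2\le \sigma_f^2/B$ at the (conditionally fixed) point $(x_t,y_{t+1},z_{t+1})$. For $D_1$ and $D_2$ we invoke Lemma~\ref{lemma: variance-of-J}, which holds uniformly for any $(x,\phi,\psi)$. Applied at $(x_t,y_{t+1},\tilde z_{t+1})$ and at $(x_t,\tilde y_{t+1},z_{t+1})$, it yields $\E\|D_i\|^2\le \sigma_J^2/B_J$ for $i=1,2$. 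Combining the pieces gives $2\sigma_f^2/B + 8\lambda^2\sigma_J^2/B_J$, which is exactly the claimed bound.

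The only delicate step is justifying the conditioning. The evaluation points are random, since they depend on earlier samples, so Lemma~\ref{lemma: variance-of-J} and Assumption~\ref{assump: stochastic-gradient-assumptions-main} must be applied conditionally, using that the batches drawn in line 16 are fresh and independent of the past. Because the lemma's bound does not depend on the point, it passes through the conditional expectation unchanged. We do not need independence between the two $J$-batches, since we used the crude $2+2$ split rather than a variance-additivity argument; this is why the constant is $8$ rather than $4$ or $2$.
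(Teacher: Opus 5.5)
Your proof is correct and is essentially the paper's argument: the same decomposition of $\ell_t$ into the UL batch term and the two $\lambda\nabla_x\hat J$ terms, the same $2/4/4$ split via $\|a+b\|^2\le 2\|a\|^2+2\|b\|^2$, and the same appeal to Assumption~\ref{assump: stochastic-gradient-assumptions-main} and Lemma~\ref{lemma: variance-of-J} at the points $(x_t,\phi_t^K,\tilde\psi_t^K)$ and $(x_t,\tilde\phi_t^K,\psi_t^K)$. You also make explicit that $\E[\ell_t]$ is a conditional mean given the iterates, with fresh batches drawn in line 16. The paper leaves this implicit but uses it in exactly this way in the subsequent lemmas.
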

\begin{proof}
    In Algorithm~\ref{alg: stoc-double-loop-single-large-batch-main}, we have
    \begin{align}
        \ell_t & = \nabla_x\widehat{\tilde{\cL}}_\lambda(x_t, y_{t+1}, z_{t+1}, \tilde{y}_{t+1}, \tilde{z}_{t+1}; B, B_J, H) \notag \\
        & = \nabla_x \hat{f}(x_t,\phi_t^K, \psi_t^K;B) + \lambda \nabla_x \widehat{\tilde{g}}(x_t,y_{t+1},z_{t+1},\tilde{y}_{t+1},\tilde{z}_{t+1};B_J,H) \notag \\
        & = \nabla_x \hat{f}(x_t,\phi_t^K, \psi_t^K;B) + \lambda \nabla_x \hat{J}(x_t,\phi_t^K,\tilde{\psi}_t^K;B_J,H) - \lambda \nabla_x \hat{J}(x_t,\tilde{\phi}_t^K,\psi_t^K;B_J,H).
    \end{align}

    Therefore, it follows from Assumption~\ref{assump: stochastic-gradient-assumptions-main} and Lemma~\ref{lemma: variance-of-J} that 
    \begin{align}
         \E \left[ \|\ell_t - \E[\ell_t ] \|^2 \right] & \leq 2 \E \left[ \left\| \frac{1}{B} \sum_{i=1}^{B} \nabla_x \hat{f}_i(x_t,\phi_t^K, \psi_t^K) - \E \left[ \nabla_x \hat{f}(x_t,\phi_t^K, \psi_t^K) \right] \right\|^2  \right]  \notag \\ 
       & \quad \, + 4 \lambda^2 \E \left[ \left\| \nabla_x \hat{J}(x_t,\phi_t^K,\tilde{\psi}_t^K;B_J,H) - \E \left[ \nabla_x \hat{J}(x_t,\phi_t^K,\tilde{\psi}_t^K;H) \right]\right\|^2 \right] \notag \\ 
       & \quad \, + 4 \lambda^2 \E \left[ \left\| \nabla_x \hat{J}(x_t,\tilde{\phi}_t^K,\psi_t^K;B_J,H) - \E \left[ \nabla_x \hat{J}(x_t,\tilde{\phi}_t^K,\psi_t^K;H) \right]\right\|^2 \right] \notag \\
       & \leq \frac{2 \sigma_f^2}{B} + \frac{8 \lambda^2 \sigma_J^2}{B_J}.
    \end{align}
\end{proof}

\begin{lemma}
    \label{lemma: bound-of-error-of-stochastic-inner-large-batch}
    Suppose \crefrange{assump: mdp-assumptions-main}{assump: stochastic-gradient-assumptions-main} hold.
    In Algorithm~\ref{alg: stoc-double-loop-single-large-batch-main}, if $\lambda \geq \lambda_0$, then
    \begin{align}
        & \quad \, \frac{1}{T }\sum_{t=0}^{T-1} \E \left[ \left\| \E [\ell_t ] - \nabla \cL_\lambda^*(x_t) \right\|^2 \right] \notag \\
        & \leq \frac{C_1}{T} \frac{8L_{\theta}}{\mu_\theta} \exp ( - \mu_h \eta_\theta K )   \Delta_\theta  + C_1 \frac{8 L_{\theta}}{\mu_\theta} \exp( - \mu_h \eta_\theta K ) \frac{1}{T} \sum_{t=0}^{T-1} \E \left[ \|x_{t+1} - x_{t} \|^2 \right]\notag \\
        & \quad \,+ \cO(\frac{\lambda^2}{B_J})  + \cO(\frac{\sigma_f^2}{B}) + \cO(\lambda^2\gamma^{2H}H^2 ), \notag 
    \end{align}
    where $C_1 = \frac{1}{c}\left(2L_{f,1}^2 + 32\lambda^2L_{J,1}^2 + 32 \lambda^2L_{J,1}^2C_{\pi}^2\right)$, $c$ is a constant defined in~\eqref{eq: alg-2-stochastic-parameter-choice},  $L_\theta = \max\{L_{h,1},L_{J,1}\}$, $\mu_\theta = \min\{\mu_h,\mu_J\}$, and $ \Delta_\theta $ is the initial optimality gap of the parameters defined in~\eqref{eq: definition-delta-theta-large-batch}.
\end{lemma}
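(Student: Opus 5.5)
I will first split the bias $\E[\ell_t]-\nabla\cL^*_\lambda(x_t)$ into a parameter-error part and a truncation part. Write $\theta_t^K=(\phi_t^K,\psi_t^K)$ and $\tilde\theta_t^K=(\tilde\phi_t^K,\tilde\psi_t^K)$. Conditioned on these iterates, Assumption~\ref{assump: stochastic-gradient-assumptions-main} and the unbiasedness of Monte Carlo roll-outs give $\E[\ell_t]=\nabla_x f(x_t,\theta_t^K)+\lambda\nabla_x J(x_t,\phi_t^K,\tilde\psi_t^K;H)-\lambda\nabla_x J(x_t,\tilde\phi_t^K,\psi_t^K;H)$. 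By Lemma~\ref{lemma: error-of-policy-gradient-estimation-of-finite-horizon}, replacing the truncated gradients with the true ones costs $\cO(\lambda^2\gamma^{2H})$. I then compare with \eqref{eq: gradient-hyper-objective}.

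By $L_{f,1}$- and $L_{J,1}$-smoothness, the squared remainder is at most a constant times
\[
L_{f,1}^2\dist^2(\theta_t^K,\theta^*_\lambda(x_t))+\lambda^2L_{J,1}^2\big(\dist^2(\theta_t^K,\theta^*_\lambda)+\dist^2(\tilde\theta^K_t,(\tilde\phi^*,\tilde\psi^*))\big).
\]
For the best-response targets, I use a triangle inequality with Lemma~\ref{lemma: lipschitz-constant-of-optimal-parameters}:
\[
\dist(\tilde\psi^K_t,\psi^*(x_t,\phi^*_\lambda))\le\dist(\tilde\psi^K_t,\psi^*(x_t,\phi_t^K))+C_\pi\|\phi^K_t-\phi^*_\lambda\|.
\]
This produces the coefficient $C_1\propto 2L_{f,1}^2+32\lambda^2L_{J,1}^2+32\lambda^2L_{J,1}^2C_\pi^2$. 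The problem then reduces to bounding the inner tracking errors, which I will measure through a potential $\Delta_t$. This potential combines the function-value gaps $h(x_t,\theta^k_t)-\min h$, $J_1(x_t,\phi_t^k)-J(x_t,\phi_t^k,\tilde\psi_t^k)$ and $J(x_t,\tilde\phi^k_t,\psi_t^k)-J_2(x_t,\psi_t^k)$. By quadratic growth under PŁ (Lemma~\ref{lemma: pl-condition-joint-value-function} for $J$ with $\mu_J$, and Assumption~\ref{assump:function-assumptions-main} for $h$ with $\mu_h$), these gaps control the squared distances with the factor $1/\mu_\theta$.

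Inside outer iteration $t$, I run the standard one-step descent argument for the $\theta$ update, using $L_{h,1}$-smoothness (Lemma~\ref{lemma: lipschitz-constant-of-penalized-objective}) and PŁ of $h$. The gradient $g^k_t$ uses $\nabla\tilde g$ at $\tilde\theta^{k+1}_t$ rather than $\nabla g$. By Danskin (Lemma~\ref{lemma: generalized-danskin-theorem}), the discrepancy is at most $L_{J,1}\dist(\tilde\theta^{k+1}_t,\text{best response})$. The best-response ascent and descent steps on $J$ are themselves PŁ/smooth gradient steps, but with a drifting target because $\theta^k_t$ moves. I will handle this drift with a Lyapunov combination whose weights depend on $\kappa$. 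The choice $\eta_\theta\asymp\kappa^{-5}$ makes the best-response subsequence converge faster than the target moves. This yields the per-step contraction $\Delta^{k+1}_t\le(1-\mu_h\eta_\theta)\Delta^k_t+(\text{noise}+\text{truncation})$. Here the noise term is $\cO(1/B_J)$ by Lemma~\ref{lemma: variance-of-J} and $\cO(\sigma_f^2/(\lambda^2B))$ for the $f$ part, and the truncation term is $\cO(\gamma^{2H}H^2)$. Unrolling $K$ steps gives the factor $\exp(-\mu_h\eta_\theta K)$, with the ratio $L_\theta/\mu_\theta$ coming from converting function gaps to gradients and distances.

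Across outer iterations, warm starts give $\Delta^0_{t+1}=\Delta^K_t$ plus the change caused by moving $x_t$ to $x_{t+1}$. Because $h$, $J_1$ and $J_2$ are Lipschitz in $x$ (Lemmas~\ref{lemma: lipschitz-of-ni-gap} and \ref{lemma: lipschitz-constant-of-optimal-parameters}), this change is at most a constant times $\|x_{t+1}-x_t\|^2$, up to $\lambda$ scaling absorbed into $C_1$. With $K=\cO(\log\lambda)$ making $e^{-\mu_h\eta_\theta K}\le1/2$, the recursion telescopes: $\sum_t\Delta_t^K\le e^{-\mu_h\eta_\theta K}\big(\Delta_\theta+\sum_t\|x_{t+1}-x_t\|^2\big)+T\cdot\text{noise}$. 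Dividing by $T$ and multiplying by $C_1$ gives the claimed bound, and the $\lambda^2$ factor turns the noise and truncation terms into $\cO(\lambda^2/B_J)$ and $\cO(\lambda^2\gamma^{2H}H^2)$.

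The main obstacle is the coupled inner analysis. The $\theta$ update uses an inexact gradient that depends on the lagging best-response iterates, while those iterates chase targets that move with $\theta$. Establishing a joint contraction requires choosing the Lyapunov weights and $\eta_\theta\asymp\kappa^{-5}$ carefully. I would first try the weighted sum $h$-gap $+\,\alpha\cdot$(best-response gaps), with $\alpha\asymp\kappa^2$.
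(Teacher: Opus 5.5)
The decomposition of $\E[\ell_t]-\nabla\cL_\lambda^*(x_t)$, the triangle inequality through $\theta^*(x_t,\theta_t^K)$ that produces $C_1$, and the inner contraction of the potential ($h$-gap plus best-response gaps) all match the paper. The step that fails is the warm start across outer iterations.

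You claim that moving from $x_t$ to $x_{t+1}$ changes the function-value potential by at most a constant times $\|x_{t+1}-x_t\|^2$, because $h$, $J_1$, $J_2$ are Lipschitz in $x$. Lipschitz continuity only gives a first-order change. For fixed $\theta$,
$\big(h(x_{t+1},\theta)-h^*(x_{t+1})\big)-\big(h(x_t,\theta)-h^*(x_t)\big)=\cO(\|x_{t+1}-x_t\|)$.
This is genuinely linear when $\theta$ is not a minimizer: its $x$-derivative $\nabla_x h(x,\theta)-\nabla_x h(x,\theta^*_\lambda(x))$ is generally nonzero. The same holds for $J_1(x,\phi)-J(x,\phi,\tilde\psi)$ and $J(x,\tilde\phi,\psi)-J_2(x,\psi)$. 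An additive $\cO(\|x_{t+1}-x_t\|)$ term does not give the $\sum_t\E\|x_{t+1}-x_t\|^2$ form in the statement. That quadratic form is exactly what Theorem~\ref{theorem: convergence-of-stoc-alg-3-large-batch} later cancels against $-\frac{1}{2\eta_x^2}\sum_t\E\|x_{t+1}-x_t\|^2$.

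What is missing is to transfer the error in distance space, using Lipschitz continuity of the solution maps, at the cost of a multiplicative factor. The paper does this in four steps:
\begin{enumerate}
\item It converts the end-of-loop potential to $\dist^2(\theta_t^K,\theta^*_\lambda(x_t))+c\,\dist^2(\tilde\theta_t^K,\theta^*(x_t,\theta_t^K))$ by quadratic growth, losing a factor $2/\mu_\theta$.
\item It splits off the displacement of the target sets $\theta^*_\lambda(x_t)\to\theta^*_\lambda(x_{t+1})$ and $\theta^*(x_t,\theta_t^K)\to\theta^*(x_{t+1},\theta_t^K)$ using $(a+b)^2\le 2a^2+2b^2$.
\item It bounds these displacements by $\frac{L_{h,1}^2}{\mu_h^2}\|x_{t+1}-x_t\|^2$ and $C_\pi^2\|x_{t+1}-x_t\|^2$. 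The first uses Lipschitz continuity of $x\mapsto\theta^*_\lambda(x)$ (Lemma 4.1 of Chen et al.\ (2024), which relies on the uniform P{\L} condition in Assumption~\ref{assump:function-assumptions-main}); none of the lemmas you cite supplies this. The second uses Lemma~\ref{lemma: lipschitz-constant-of-optimal-parameters}.
\item It converts back to function gaps at the start of the next inner loop by $L_\theta$-smoothness.
\end{enumerate}
Each outer step therefore multiplies the previous error by $\frac{4L_\theta}{\mu_\theta}e^{-\mu_h\eta_\theta K}$. This is where the $\frac{L_\theta}{\mu_\theta}$ factors in the statement come from. It also means $K$ must satisfy $\frac{4L_\theta}{\mu_\theta}e^{-\mu_h\eta_\theta K}\le\frac12$, not just $e^{-\mu_h\eta_\theta K}\le\frac12$.

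A smaller point: the paper puts weight $c\asymp\kappa^{-2}$, not $\kappa^2$, on the best-response gaps. With its Young split, the drift term $\|\theta_t^{k+1}-\theta_t^k\|^2$ in the best-response recursion has a coefficient of order $L_{J,1}^2/(\mu_J\eta_\theta)$. That term must be absorbed by the $-\frac{1}{4\eta_\theta}\|\theta_t^{k+1}-\theta_t^k\|^2$ term from the $h$-descent, so a large weight would require retuning the split.
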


\begin{proof}
In this proof, we use the notation for $\theta$, $\theta^*(x)$, and
$\theta^*(x,\theta)$ introduced in Table~\ref{table: notation-table}.
    
    In Algorithm~\ref{alg: stoc-double-loop-single-large-batch-main}, the estimation error is bounded as 
    \begin{align}
        & \quad \, \left\| \E\left[\ell_{t} \right] - \nabla \cL_\lambda^*(x_t)\right\|^2 \notag \\
        & \leq  2\| \nabla f(x_t,\theta_\lambda^*(x_t)) - \nabla f(x_t,\theta_t^{K}) \|^2 + 2 \lambda^2 \|\nabla \tilde{g}(x_t,\theta_\lambda^*(x_t),\theta^*(x_t,\theta^*_\lambda(x_t)))  - \nabla \tilde{g}(x_t,\theta_t^K,\tilde{\theta}_t^K;H)\|^2  \notag  \\
        & \leq 2L_{f,1}^2 \dist^2(\theta_t^{K}, \theta_\lambda^*(x_t)) + 4\lambda^2 \|\nabla \tilde{g}(x_t,\theta_\lambda^*(x_t),\theta^*(x_t,\theta^*_\lambda(x_t)))  - \nabla \tilde{g}(x_t,\theta_t^K,\tilde{\theta}_t^K)\|^2  \notag \\
        & \quad \, + 4\lambda^2 \|\nabla \tilde{g}(x_t,\theta_t^K,\tilde{\theta}_t^K) - \nabla \tilde{g}(x_t,\theta_t^K,\tilde{\theta}_t^K;H)\|^2 \notag\\
        & \leq 2L_{f,1}^2 \dist^2(\theta_t^{K}, \theta_\lambda^*(x_t)) + 2\lambda^2L_{\widetilde{g},1}^2\left(2 \dist^2(\theta_\lambda^*(x_t),\theta_t^K) + 2 \dist^2 (\theta^*(x_t,\theta_\lambda^*(x_t)), \tilde{\theta}_t^K) \right) \notag \\
        & \quad \, + 8\lambda^2 \|\nabla_x J(x_t,\phi_t^K,\tilde{\psi}_t^K) - \nabla_x J(x_t,\phi_t^K,\tilde{\psi}_t^K;H)\|^2 + 8\lambda^2 \|\nabla_x J(x_t,\tilde{\phi}_t^K,\psi_t^K) - \nabla_x J(x_t,\tilde{\phi}_t^K,\psi_t^K;H)\|^2 \notag\\ 
        & \leq  \left(2L_{f,1}^2 + 4 \lambda^2L_{\widetilde{g},1}^2\right) \dist^2 (\theta^*_\lambda(x_t), \theta_t^{K}) + 4 \lambda^2L_{\widetilde{g},1}^2 \dist^2 (\theta^*(x_t,\theta_\lambda^*(x_t)), (\tilde{\phi}_{t}^K,\tilde{\psi}_{t}^K)) + 16 \lambda^2 \gamma^{2H} \sigma_H^2 \notag\\
        & \leq \left(2L_{f,1}^2 + 4 \lambda^2L_{\widetilde{g},1}^2 + 8 \lambda^2 L_{\widetilde{g},1}^2 C_\pi^2\right) \dist^2 (\theta^*_\lambda(x_t), \theta_t^{K}) + 8\lambda^2L_{\widetilde{g},1}^2 \dist^2 (\theta^*(x_t,\theta_t^K), \tilde{\theta}_t^K) + 16 \lambda^2 \gamma^{2H} \sigma_H^2 ,  \label{eq: recursion-of-stochastic-error-1-large-batch}
    \end{align}
    where the second inequality decomposes the estimation error of $\tilde{g}$ into an optimization error and a stochastic error arising from the finite horizon $H$; the second-to-last inequality is implied by Lemma~\ref{lemma: error-of-policy-gradient-estimation-of-finite-horizon}; and the last inequality follows from Lemma~\ref{lemma: lipschitz-constant-of-optimal-parameters}.

We next bound the distance terms arising from the inner loop.     By the $L_{h,1}$-smoothness of $h(x,\theta)$, we have for $\eta_\theta \leq \frac{1}{2L_{h,1}}$,
    \begin{align}
        h(x_t,\theta_t^{k+1}) & \leq h(x_t,\theta_t^{k}) + \langle \nabla_\theta h(x_t,\theta_t^k), \theta_t^{k+1} - \theta_t^k  \rangle + \frac{L_{h,1}}{2} \| \theta_t^{k+1} - \theta_t^k \|^2 \notag \\
        & = h(x_t,\theta_t^k) - \frac{\eta_\theta}{2} \|  \nabla_\theta h(x_t,\theta_t^k)\|^2 + \left(\frac{L_{h,1}\eta_\theta^2 }{2} - \frac{\eta_\theta}{2}\right) \|g_t^k \|^2 + \frac{\eta_\theta}{2} \|b_t^k \|^2 \notag \\
        & \leq h(x_t,\theta_t^k) - \frac{\eta_\theta}{2} \|  \nabla_\theta h(x_t,\theta_t^k)\|^2 - \frac{1}{4\eta_\theta} \|\theta_t^{k+1} - \theta_t^k \|^2 + \frac{\eta_\theta}{2} \|b_t^k \|^2,
    \end{align}
    where $b_t^k \bydef \nabla_\theta h(x_t,\theta_t^k) - g_t^k$ is the gradient-estimation error.

    Since $h(x,\theta)$ satisfies the $\mu_h$-P{\L} condition with respect to
$\theta$, it follows that
    \begin{align}
        h(x_t,\theta_t^{k+1}) - h^*(x_t) & \leq h(x_t,\theta_t^{k}) - h^*(x_t) - \frac{\eta_\theta}{2} \| \nabla_\theta h(x_t,\theta_t^k)\|^2   - \frac{1}{4\eta_\theta} \|\theta_t^{k+1} - \theta_t^k \|^2 + \frac{\eta_\theta}{2} \|b_t^k \|^2  \notag\\
        & \leq \left(1 - \mu_h \eta_\theta \right) \left(h(x_t,\theta_t^{k}) - h^*(x_t)\right) - \frac{1}{4\eta_\theta} \|\theta_t^{k+1} - \theta_t^k \|^2 + \frac{\eta_\theta}{2} \| b_t^k\|^2, \label{eq: descent-lemma-of-h-inner-loop-large-batch}
    \end{align}
    where $h^*(x_t) = \min_{\theta} h(x_t,\theta)$.

For the error term $b_t^k$, we obtain
    \begin{align}
        \| b_t^k\|^2 & = \left\|\nabla_\theta h(x_t,\theta_t^k) - g_t^k \right\|^2 \notag\\
        & = \left\|\frac{1}{\lambda} \nabla_\theta f(x_t,\theta_t^k) - \frac{1}{\lambda} \nabla_\theta \hat{f}(x_t,\theta_t^k;B) + \nabla_\theta \tilde{g}(x_t,\theta_t^k,\theta^*(x_t,\theta_t^k)) - \nabla_\theta \widehat{\tilde{g}}(x_t,\theta_t^k, \tilde{\theta}_t^{k+1}; B_J, H)  \right\|^2 \notag\\
        & \leq \frac{2}{\lambda^2} \frac{\sigma_f^2}{B} + 4\left\| \nabla_\theta \tilde{g}(x_t,\theta_t^k,\theta^*(x_t,\theta_t^k)) - \nabla_\theta \tilde{g}(x_t,\theta_t^k, \tilde{\theta}_t^{k+1})  \right\|^2 \notag \\
        & \quad \, + 4 \left\|\nabla_\theta \tilde{g}(x_t,\theta_t^k, \tilde{\theta}_t^{k+1})  - \nabla_\theta \widehat{\tilde{g}}(x_t,\theta_t^k, \tilde{\theta}_t^{k+1};B_J,H)  \right\|^2,
    \end{align}
    where the second equality follows from $g(x_t,\theta_t^k)=\tilde g(x_t,\theta_t^k,\theta^*(x_t,\theta_t^k))$ and the corresponding gradient identity with respect to $\theta$.

We first consider the second error term between $\nabla_\theta \tilde{g}$ and $\nabla_\theta \widehat{\tilde{g}}(\cdot;B_J,H)$, which arises from the stochastic gradient estimation with batch size $B_J$ and the finite-horizon truncation with horizon length $H$. We have
    \begin{align}
        & \quad \, \E \left[ \left\|\nabla_\theta \tilde{g}(x_t,\theta_t^k, \tilde{\theta}_t^{k+1})  - \nabla_\theta\widehat{\tilde{g}}(x_t,\theta_t^k, \tilde{\theta}_t^{k+1};B_J,H)  \right\|^2\right] \notag \\
        & \leq  2 \E \left[ \left\|\nabla_\theta \tilde{g}(x_t,\theta_t^k, \tilde{\theta}_t^{k+1})  - \nabla_\theta \widehat{\tilde{g}}(x_t,\theta_t^k, \tilde{\theta}_t^{k+1};H)  \right\|^2\right]  \notag \\
        & \quad \, +2 \E  \left[ \left\|\nabla_\theta \widehat{\tilde{g}}(x_t,\theta_t^k, \tilde{\theta}_t^{k+1}; H)  - \frac{1}{B_J} \sum_{i=1}^{B_J}\nabla_\theta \widehat{\tilde{g}}_i(x_t,\theta_t^k, \tilde{\theta}_t^{k+1};H)  \right\|^2\right]\\
        & \leq 4 \gamma^{2H} H^2 \sigma_H^2 + \frac{4\sigma_J^2}{B_J},\label{eq: error-bound-of-g-large-batch}
    \end{align}
    where the last inequality follows from Lemma~\ref{lemma: error-of-policy-gradient-estimation-of-finite-horizon} and Lemma~\ref{lemma: variance-of-J}. 

    The first error term can be bounded as
    \begin{align}
        &  \quad \, \|\nabla_\theta \tilde{g}(x_t,\theta_t^k,\theta^*(x_t,\theta_t^k)) -  \nabla_\theta \tilde{g}(x_t,\theta_t^k, \tilde{\theta}_t^{k+1})  \|^2 \notag  \\
        & \leq L^2_{\widetilde{g},1} \dist(\theta^*(x_t,\theta_t^k), \tilde{\theta}_t^{k+1})^2 \\
        & = L^2_{\widetilde{g},1} \dist^2 (\phi^*(x_t,\psi_t^k), \tilde{\phi}_t^{k+1}) +L^2_{\widetilde{g},1} \dist^2 (\psi^*(x_t,\phi_t^k), \tilde{\psi}_t^{k+1}) \notag\\
        & \leq \frac{2L^2_{\widetilde{g},1}}{\mu_J} \left( J(x_t,\tilde{\phi}_t^{k+1},\psi_t^k) - J_2(x_t,\psi_t^k) \right) + \frac{2L^2_{\widetilde{g},1}}{\mu_J} \left( - J(x_t,\phi_t^k,\tilde{\psi}_t^{k+1}) + J_1(x_t,\phi_t^k) \right) , \label{eq: bound-of-bias-term-1-large-batch}
    \end{align}
    where the first inequality follows from the $L_{\widetilde g,1}$-smoothness of $\tilde{g}$ in Lemma~\ref{lemma: lipschitz-constants-of-estimated-ni-gap}; the second inequality follows from the QG condition implied by the $\mu_J$-P{\L} condition of $J(x,\phi,\psi)$ in Lemma~\ref{lemma: pl-condition-joint-value-function}.

    We next bound the optimality gap of $J$. By the $L_{J,1}$-smoothness of $J(x,\phi,\psi)$ and $\mu_J$-P{\L} of $J(x,\phi,\psi)$ in $\phi$, we have for $\eta_\phi \leq \frac{1}{L_{J,1}}$,
    \begin{align}
        & \quad \, J(x_t,\tilde{\phi}_t^{k+1},\psi_t^k) - J_2(x_t,\psi_t^{k}) \notag \\
        & \leq J(x_t,\tilde{\phi}_t^{k},\psi_t^k) - J_2(x_t,\psi_t^{k}) + \langle \nabla_\phi J(x_t,\tilde{\phi}_t^{k},\psi_t^k), \tilde{\phi}_t^{k+1} - \tilde{\phi}_t^k  \rangle + \frac{L_{J,1}}{2} \| \tilde{\phi}_t^{k+1} - \tilde{\phi}_t^k \|^2 \notag \\
        & = J(x_t,\tilde{\phi}_t^{k},\psi_t^k)  - J_2(x_t,\psi_t^{k}) - \frac{\eta_\phi}{2} \|  \nabla_\phi J(x_t,\tilde{\phi}_t^{k},\psi_t^k)\|^2 \notag \\
        & \quad \, + \left(\frac{L_{J,1}\eta_\phi^2 }{2} - \frac{\eta_\phi}{2}\right) \|u_t^k \|^2 + \frac{\eta_\phi}{2} \|\nabla_\phi J(x_t,\tilde{\phi}_t^{k},\psi_t^k) - u_t^k\|^2 \notag\\
        & \leq J(x_t,\tilde{\phi}_t^{k},\psi_t^k) - J_2(x_t,\psi_t^{k}) - \frac{\eta_\phi}{2} \|  \nabla_\phi J(x_t,\tilde{\phi}_t^{k},\psi_t^k)\|^2 + \frac{\eta_\phi}{2} \| \nabla_\phi J(x_t,\tilde{\phi}_t^{k},\psi_t^k) - u_t^k\|^2 \notag\\
        & \leq \left(1 - \mu_J \eta_\phi \right) \left(J(x_t,\tilde{\phi}_t^{k},\psi_t^k) - J_2(x_t,\psi_t^k)\right) + \frac{\eta_\phi}{2} \| \nabla_\phi J(x_t,\tilde{\phi}_t^{k},\psi_t^k) - u_t^k\|^2, \label{eq: stochastic-descent-lemma-J-phi-large-batch}
    \end{align}
    where the first inequality follows from the $L_{J,1}$-smoothness of $J(x,\phi,\psi)$; the first equality follows from the update rule of $\tilde{\phi}_t^k$ and the identity
\(
\langle a,b\rangle = \tfrac{1}{2}\|a\|^2 + \tfrac{1}{2}\|b\|^2 - \tfrac{1}{2}\|a-b\|^2;
\)
the second inequality follows from $\eta_\phi \le \tfrac{1}{L_{J,1}}$; and the last inequality follows from the $\mu_J$-P{\L} condition of $J(x,\phi,\psi)$.

    The error term can be bounded as 
    \begin{align}
       & \quad \,  \E \left[ \left\| \nabla_\phi J(x_t,\tilde{\phi}_t^{k},\psi_t^k) - u_t^k\right\|^2 \right] \notag \\
       &  =  \E \left[ \left\| \nabla_\phi J(x_t,\tilde{\phi}_t^{k},\psi_t^k) - \nabla_\phi \hat{J}(x_t,\tilde{\phi}_t^{k},\psi_t^k;B_{J},H) \right\|^2 \right]\notag\\
        & \leq 2 \E \left[ \left\| \nabla_\phi J(x_t,\tilde{\phi}_t^{k},\psi_t^k) - \nabla_\phi J(x_t,\tilde{\phi}_t^{k},\psi_t^k;H) \right\|^2 \right]\notag \\
        & \quad \, + 2 \E \left[ \left\| \nabla_\phi J(x_t,\tilde{\phi}_t^{k},\psi_t^k;H) - \frac{1}{B_{J}} \sum_{i=1}^{B_{J}} \nabla_\phi \hat{J}_i(x_t,\tilde{\phi}_t^{k},\psi_t^k;H) \right\|^2 \right]\notag\\
        & \leq 2 \gamma^{2H} H^2 \sigma_H^2 + \frac{2\sigma_J^2}{B_{J}}, \label{eq: error-term-of-J-phi-large-batch}
    \end{align}
    where the last inequality follows from Lemma~\ref{lemma: error-of-policy-gradient-estimation-of-finite-horizon} and Lemma~\ref{lemma: variance-of-J}.
    
    Therefore, the optimality gap of $J$ w.r.t. $\phi$ can be bounded by
    \begin{align}
        J(x_t,\tilde{\phi}_t^{k+1},\psi_t^k) - J_2(x_t,\psi_t^{k})  \leq \left(1 - \mu_J \eta_\phi \right) \left(J(x_t,\tilde{\phi}_t^{k},\psi_t^k) - J_2(x_t,\psi_t^k)\right) + \eta_\phi \left( \gamma^{2H} H^2 \sigma_H^2 + \frac{\sigma_J^2}{B_{J}} \right). \label{eq: optimality-gap-of-J-phi-large-batch}
    \end{align}

    Similarly, for $\eta_\psi \leq \frac{1}{L_{J,1}}$, we can bound the optimality gap of $J$ w.r.t. $\psi$ by
    \begin{align}
         - J(x_t,\phi_t^{k},\tilde{\psi}_t^{k+1}) + J_1(x_t,\phi_t^{k}) 
         \leq \left(1 - \mu_J \eta_\psi \right) \left( - J(x_t,\phi_t^{k},\tilde{\psi}_t^{k}) + J_1(x_t,\phi_t^{k})\right) + \eta_\psi \left( \gamma^{2H} H^2 \sigma_H^2 + \frac{\sigma_J^2}{B_{J}} \right).
    \end{align}

    Thus, substituting the above two inequalities back into~\eqref{eq: bound-of-bias-term-1-large-batch}, we have
    \begin{align}
         &  \quad \, \|\nabla_\theta \tilde{g}(x_t,\theta_t^k,\theta^*(x_t,\theta_t^k)) -  \nabla_\theta \tilde{g}(x_t,\theta_t^k, \tilde{\theta}_t^{k+1})  \|^2 \notag  \\
        & \leq \frac{2L^2_{\widetilde{g},1}}{\mu_J} \left( J(x_t,\tilde{\phi}_t^{k+1},\psi_t^k) - J_2(x_t,\psi_t^k) \right) + \frac{2L^2_{\widetilde{g},1}}{\mu_J} \left( - J(x_t,\phi_t^k,\tilde{\psi}_t^{k+1}) + J_1(x_t,\phi_t^k) \right) \notag\\
        & \leq \frac{2L^2_{\widetilde{g},1}}{\mu_J}(1 -\mu_J \eta_\phi) \left( J(x_t,\tilde{\phi}_t^{k},\psi_t^k) - J_2(x_t,\psi_t^k) \right) + \frac{2L^2_{\widetilde{g},1}}{\mu_J}(1 - \mu_J \eta_\psi) \left( - J(x_t,\phi_t^k,\tilde{\psi}_t^{k}) + J_1(x_t,\phi_t^k) \right) \notag \\
        & \quad \, + \frac{2L^2_{\widetilde{g},1}}{\mu_J} \left( \eta_\phi + \eta_\psi\right) \left( \gamma^{2H} H^2 \sigma_H^2 + \frac{\sigma_J^2}{B_{J}} \right) \notag \\
        & \leq \frac{2L^2_{\widetilde{g},1}}{\mu_J}(1 -\mu_J \eta_\phi) \left( J(x_t,\tilde{\phi}_t^{k},\psi_t^k) - J_2(x_t,\psi_t^k) \right) + \frac{2L^2_{\widetilde{g},1}}{\mu_J}(1 - \mu_J \eta_\psi) \left( - J(x_t,\phi_t^k,\tilde{\psi}_t^{k}) + J_1(x_t,\phi_t^k) \right) \notag \\
        & \quad \, + \frac{16 L_{J,1}}{\mu_J} \left( \gamma^{2H} H^2 \sigma_H^2 + \frac{\sigma_J^2}{B_{J}} \right)  \notag \\ 
        & \leq \kappa_b \left( J(x_t,\tilde{\phi}_t^{k},\psi_t^k) - J_2(x_t,\psi_t^k) - J(x_t,\phi_t^k,\tilde{\psi}_t^{k}) + J_1(x_t,\phi_t^k) \right)  + \frac{16 L_{J,1}}{\mu_J} \left( \gamma^{2H} H^2 \sigma_H^2 + \frac{\sigma_J^2}{B_{J}} \right),
    \end{align}
     where $\kappa_b \bydef \max\left\{\frac{2L^2_{\widetilde{g},1}}{\mu_J}(1 -\mu_J \eta_\phi), \frac{2L^2_{\widetilde{g},1}}{\mu_J}(1 - \mu_J \eta_\psi) \right\}$, and the second-to-last inequality follows from $L_{\widetilde{g},1} = 2 L_{J,1}$ in Lemma~\ref{lemma: lipschitz-constants-of-estimated-ni-gap} and $\eta_\phi \leq \frac{1}{L_{J,1}}$, $\eta_\psi \leq \frac{1}{L_{J,1}}$.

  Combining the preceding estimates yields the following bound on the error term $b_t^k$:
    \begin{align}
        \E \left[ \| b_t^k\|^2 \right] 
        & \leq \kappa_b \E \bigg( J(x_t,\tilde{\phi}_t^{k},\psi_t^k) - J_2(x_t,\psi_t^k)  - J(x_t,\phi_t^k,\tilde{\psi}_t^{k}) + J_1(x_t,\phi_t^k) \bigg)  \notag \\
        & \quad \, + 16\left( 1 + \frac{L_{J,1}}{\mu_J}\right) \left( \gamma^{2H} H^2 \sigma_H^2 + \frac{\sigma_J^2}{B_J} \right)+ \frac{2\sigma_f^2}{B\lambda^2} . \label{eq: bound-of-error-btk-large-batch} 
    \end{align}

    Moreover, by the $L_{J,1}$-smoothness of $J(x,\phi,\psi)$, and $L_{J_2,1}$-smoothness of $J_2(x,\psi)$ established in Lemma~\ref{lemma: lipschitz-of-ni-gap}, where $L_{J_2,1}$ is absorbed into the NI smoothness constant $L_{g,1}$, we have, for $\eta_\theta \leq 1/(L_{J,1} + L_{J_2,1})$,
    \begin{align}
        & \quad\, J(x_t,\tilde{\phi}_t^{k+1},\psi_t^{k+1}) - J_2(x_t,\psi_t^{k+1}) \notag \\
        & \leq J(x_t,\tilde{\phi}_t^{k+1},\psi_t^k) - J_2(x_t,\psi_t^k) + \langle \nabla_\psi J(x_t,\tilde{\phi}_t^{k+1},\psi_t^k) - \nabla_\psi J_2(x_t,\psi_t^k), \psi_t^{k+1} - \psi_t^{k}  \rangle \notag \\
        & \quad \, + \frac{L_{J,1} + L_{J_2,1}}{2} \| \psi_t^{k+1} - \psi_t^k \|^2 \notag\\
        & \leq J(x_t,\tilde{\phi}_t^{k+1},\psi_t^k) - J_2(x_t,\psi_t^k) + \eta_\theta L_{J,1} \dist(\tilde{\phi}_t^{k+1} ,\phi^*(x_t,\psi_t^k) )\| \frac{\psi^{k+1}_t - \psi^{k}_t}{\eta_\theta}\| \notag \\
        & \quad \, + \frac{(L_{J,1} + L_{J_2,1}) \eta_\theta^2}{2} \| \frac{\psi^{k+1}_t - \psi^{k}_t}{\eta_\theta}\|^2 \notag\\
        & \leq J(x_t,\tilde{\phi}_t^{k+1},\psi_t^k) - J_2(x_t,\psi_t^k) + \frac{\alpha \eta_\theta L_{J,1}}{2} \dist^2(\tilde{\phi}_t^{k+1} , \phi^*(x_t,\psi_t^k) ) \notag \\
        & \quad \, + \left( \frac{\eta_\theta L_{J,1}}{2\alpha}  + \frac{(L_{J,1} + L_{J_2,1}) \eta_\theta^2}{2} \right) \| \frac{\psi^{k+1}_t - \psi^{k}_t}{\eta_\theta}\|^2 , \quad \forall \alpha >0 \notag\\
        & \leq J(x_t,\tilde{\phi}_t^{k+1},\psi_t^k) - J_2(x_t,\psi_t^k) + \frac{\alpha \eta_\theta L_{J,1}}{\mu_J} \left( J(x_t,\tilde{\phi}_t^{k+1},\psi_t^k) - J_2(x_t,\psi_t^k) \right) \notag \\
        & \quad \, + \left( \frac{\eta_\theta L_{J,1}}{2\alpha}  + \frac{(L_{J,1} + L_{J_2,1}) \eta_\theta^2}{2} \right) \| \frac{\psi^{k+1}_t - \psi^{k}_t}{\eta_\theta}\|^2 , \quad \forall \alpha >0 \notag \\
        & = \left( 1 +\frac{\alpha \eta_\theta L_{J,1}}{\mu_J} \right)  \left( J(x_t,\tilde{\phi}_t^{k+1},\psi_t^k) - J_2(x_t,\psi_t^k) \right)  \notag \\
        & \quad \, + \left( \frac{\eta_\theta L_{J,1}}{2\alpha}  + \frac{(L_{J,1} + L_{J_2,1}) \eta_\theta^2}{2} \right) \| \frac{\psi^{k+1}_t - \psi^{k}_t}{\eta_\theta}\|^2, \quad \forall \alpha >0,
    \end{align}
    where the second inequality is due to the $L_{J,1}$-smoothness of $J$;   
    the third inequality uses the fact that $ab \leq \frac{\alpha}{2} a^2 + \frac{1}{2\alpha} b^2$ for any $\alpha >0$; the last inequality follows from the $\mu_J$-P{\L} condition of $J$.

    Now substituting~\eqref{eq: optimality-gap-of-J-phi-large-batch} into the above inequality, we have
    \begin{align}
        & \quad\, J(x_t,\tilde{\phi}_t^{k+1},\psi_t^{k+1}) - J_2(x_t,\psi_t^{k+1}) \notag \\
        & \leq \left( 1 +\frac{\alpha \eta_\theta L_{J,1}}{\mu_J} \right) \left(1 - \eta_\phi \mu_J\right)  \left( J(x_t,\tilde{\phi}_t^{k},\psi_t^k) - J_2(x_t,\psi_t^k) \right) + \left( \frac{\eta_\theta L_{J,1}}{2\alpha}  + \frac{(L_{J,1} + L_{J_2,1}) \eta_\theta^2}{2} \right) \| \frac{\psi^{k+1}_t - \psi^{k}_t}{\eta_\theta}\|^2 \notag \\
        & \quad\, +\left( 1 +\frac{\alpha \eta_\theta L_{J,1}}{\mu_J} \right) \eta_\phi \left( \gamma^{2H} H^2 \sigma_H^2 + \frac{\sigma_J^2}{B_{J}} \right)     ,\quad \forall \alpha >0.
    \end{align}

    Let 
    \begin{align}
        \kappa_{1,1} &\bydef \left( 1 +\frac{\alpha \eta_\theta L_{J,1}}{\mu_J} \right)\left(1 - \eta_\phi \mu_J\right), \notag\\  
        \kappa_{2,1} &\bydef \frac{L_{J,1}}{2\eta_\theta \alpha}   + \frac{L_{J,1} + L_{J_2,1}}{2}, \notag
    \end{align}
    where the definition of $\kappa_{2,1}$ absorbs the factor $\eta_\theta^{-2}$ from $\|(\psi_t^{k+1}-\psi_t^k)/\eta_\theta\|^2$.
    Then we have
    \begin{align}
         & \quad \, J(x_t,\tilde{\phi}_t^{k+1},\psi_t^{k+1}) - J_2(x_t,\psi_t^{k+1}) \notag \\
         & \leq \kappa_{1,1} \left( J(x_t,\tilde{\phi}_t^{k},\psi_t^{k}) - J_2(x_t,\psi_t^{k}) \right) +  \left( \frac{\eta_\theta L_{J,1}}{2\alpha}  + \frac{ (L_{J,1} + L_{J_2,1}) \eta_\theta^2}{2} \right)\| \frac{\psi^{k+1}_t - \psi^{k}_t}{\eta_\theta}\|^2 \notag \\
         & \quad \, +\left( 1 +\frac{\alpha \eta_\theta L_{J,1}}{\mu_J} \right) \eta_\phi \left( \gamma^{2H} H^2 \sigma_H^2 + \frac{\sigma_J^2}{B_{J}} \right) \notag\\
         & = \kappa_{1,1} \left( J(x_t,\tilde{\phi}_t^{k},\psi_t^{k}) - J_2(x_t,\psi_t^{k}) \right)  + \kappa_{2,1} \| \psi^{k+1}_t - \psi^{k}_t\|^2 +\left( 1 +\frac{\alpha \eta_\theta L_{J,1}}{\mu_J} \right) \eta_\phi \left( \gamma^{2H} H^2 \sigma_H^2 + \frac{\sigma_J^2}{B_{J}} \right).
    \end{align}

    Similarly, by the $L_{J,1}$-smoothness of $J(x,\phi,\psi)$, and $L_{J_1,1}$-smoothness of $J_1(x,\phi)$, we have for $\eta_\theta \leq 1 / \left( L_{J,1} + L_{J_1,1}\right)$ and any $\alpha^\prime > 0$,
    \begin{align}
        & \quad \, - J(x_t,\phi_t^{k+1},\tilde{\psi}_t^{k+1}) + J_1(x_t,\phi_t^{k+1}) \notag \\
        & \leq \kappa_{1,2} \left( - J(x_t,\phi_t^{k},\tilde{\psi}_t^{k}) + J_1(x_t,\phi_t^{k}) \right) +  \left( \frac{\eta_\theta L_{J,1}}{2\alpha'}  + \frac{(L_{J,1} + L_{J_1,1}) \eta_\theta^2}{2} \right)\| \frac{\phi^{k+1}_t - \phi^{k}_t}{\eta_\theta}\|^2 \notag \\
        & \quad \, +  \left( 1 +\frac{\alpha' \eta_\theta L_{J,1}}{\mu_J} \right)  \eta_\psi \left( \gamma^{2H} H^2 \sigma_H^2 + \frac{\sigma_J^2}{B_{J}} \right) \notag \\
        & = \kappa_{1,2} \left( - J(x_t,\phi_t^{k},\tilde{\psi}_t^{k}) + J_1(x_t,\phi_t^{k}) \right)  + \kappa_{2,2} \| \phi^{k+1}_t - \phi^{k}_t\|^2  + \left( 1 +\frac{\alpha' \eta_\theta L_{J,1}}{\mu_J} \right)  \eta_\psi \left( \gamma^{2H} H^2 \sigma_H^2 + \frac{\sigma_J^2}{B_{J}} \right),
    \end{align}
    where 
    \begin{align}
        \kappa_{1,2} & \bydef \left( 1 +\frac{\alpha' \eta_\theta L_{J,1}}{\mu_J} \right)\left(1 - \eta_\psi \mu_J\right), \notag\\
        \kappa_{2,2} & \bydef \frac{L_{J,1}}{2\eta_\theta\alpha'}  + \frac{L_{J,1} + L_{J_1,1}}{2}. \notag
    \end{align}

    Therefore, we have
    \begin{align}
        & \quad\, J(x_t,\tilde{\phi}_t^{k+1},\psi_t^{k+1}) - J_2(x_t,\psi_t^{k+1}) - J(x_t,\phi_t^{k+1},\tilde{\psi}_t^{k+1}) + J_1(x_t,\phi_t^{k+1}) \notag \\
        & \leq \kappa_{1} \left( J(x_t,\tilde{\phi}_t^{k},\psi_t^{k}) - J_2(x_t,\psi_t^{k}) - J(x_t,\phi_t^{k},\tilde{\psi}_t^{k}) + J_1(x_t,\phi_t^{k}) \right)  + \kappa_{2} \| \theta^{k+1}_t - \theta^{k}_t\|^2 \notag \\
        & \quad \, + \left( 1 +\frac{\alpha \eta_\theta L_{J,1}}{\mu_J} \right) \eta_\phi \left( \gamma^{2H} H^2 \sigma_H^2 + \frac{\sigma_J^2}{B_{J}} \right) + \left( 1 +\frac{\alpha' \eta_\theta L_{J,1}}{\mu_J} \right)  \eta_\psi \left( \gamma^{2H} H^2 \sigma_H^2 + \frac{\sigma_J^2}{B_{J}} \right),
    \end{align}
    where $\kappa_1 = \max\{\kappa_{1,1}, \kappa_{1,2}\}$ and $\kappa_2 = \max\{\kappa_{2,1}, \kappa_{2,2}\}$.

Since our goal is to derive a joint recursion involving both $h$ and $J$,
we multiply the preceding inequality by a positive constant $c>0$ and add it
to~\eqref{eq: descent-lemma-of-h-inner-loop-large-batch}. Define
\[
J_t^k
=
J(x_t,\tilde{\phi}_t^{k},\psi_t^{k})
-
J_2(x_t,\psi_t^{k})
-
J(x_t,\phi_t^{k},\tilde{\psi}_t^{k})
+
J_1(x_t,\phi_t^{k}).
\]
Then we have
\begin{align}
        & \quad \, h(x_t,\theta_t^{k+1}) - h^*(x_t) + c J_t^{k+1} \notag \\
        & \leq (1 - \mu_h \eta_\theta) \left(h(x_t,\theta_t^{k}) - h^*(x_t)\right) + \left(c \kappa_2 - \frac{1}{4\eta_\theta} \right) \|\theta_t^{k+1} - \theta_t^k \|^2 + \frac{\eta_\theta}{2} \| b_t^k\|^2 + c \kappa_1 J_t^k\notag \\
        & \quad \, + c \left( 1 +\frac{\alpha \eta_\theta L_{J,1}}{\mu_J} \right) \eta_\phi \left( \gamma^{2H} H^2 \sigma_H^2 + \frac{\sigma_J^2}{B_{J}} \right) + c\left( 1 +\frac{\alpha' \eta_\theta L_{J,1}}{\mu_J} \right)  \eta_\psi \left( \gamma^{2H} H^2 \sigma_H^2 + \frac{\sigma_J^2}{B_{J}} \right).
    \end{align}
    
Taking expectation with respect to all sources of randomness and substituting
the bound on $b_t^k$ from~\eqref{eq: bound-of-error-btk-large-batch} into the
preceding inequality, we obtain
\begin{align}
        & \quad\, \E \left[ h(x_t,\theta_t^{k+1}) - h^*(x_t) + c J_t^{k+1}\right] \notag \\
        & \leq (1 - \mu_h \eta_\theta) \E \left[h(x_t,\theta_t^{k}) - h^*(x_t)\right] + \left( c \kappa_1 + \frac{\eta_\theta}{2}\kappa_b  \right)\E \left[ J_t^k \right] + \left( c \kappa_2 - \frac{1}{4\eta_\theta} \right) \E \left[\|\theta_t^{k+1} - \theta_t^k \|^2\right] \notag \\
        & \quad \, +8 \eta_\theta\left( 1 + \frac{L_{J,1}}{\mu_J}\right) \left( \gamma^{2H} H^2 \sigma_H^2 + \frac{\sigma_J^2}{B_J} +  \frac{\sigma_f^2}{B \lambda^2}\right)  + c \left( 1 +\frac{\alpha \eta_\theta L_{J,1}}{\mu_J} \right) \eta_\phi \left( \gamma^{2H} H^2\sigma_H^2 + \frac{\sigma_J^2}{B_{J}} \right) \notag \\
        & \quad \, + c \left( 1 +\frac{\alpha' \eta_\theta L_{J,1}}{\mu_J} \right) \eta_\psi \left( \gamma^{2H}H^2 \sigma_H^2 + \frac{\sigma_J^2}{B_{J}} \right) .
    \end{align}

To obtain a contraction, it suffices to impose the following two conditions:
    \begin{align}
        c \kappa_2 - \frac{1}{4\eta_\theta} & =  \frac{cL_{J,1}}{2\eta_\theta \min \{\alpha,\alpha'\}}+ \frac{cL_{J,1} + cL_{J,\star}}{2} - \frac{1}{4\eta_\theta} \\
        & = \frac{1}{2\eta_\theta} \left( c \left( \frac{L_{J,1}}{\min \{\alpha,\alpha'\}} + L_{J,1} \eta_\theta + L_{J,\star} \eta_\theta \right) - \frac{1}{2} \right) \leq 0, \notag
    \end{align}
    where $L_{J,\star} = \max\{L_{J_1,1},L_{J_2,1}\}$,
    and 
    \begin{align}
        c \kappa_1 + \frac{\eta_\theta}{2}\kappa_b < c (1 - \mu_h \eta_\theta). \notag
    \end{align}
    Since 
    \begin{align}
        \kappa_1 & = \max\left\{ \left( 1 +\frac{\alpha  \eta_\theta L_{J,1}}{\mu_J} \right)\left(1 - \eta_\phi \mu_J\right), \left( 1 +\frac{\alpha' \eta_\theta L_{J,1}}{\mu_J} \right)\left(1 - \eta_\psi \mu_J\right) \right\}, \notag
    \end{align} 
    and $\kappa_b= \max\left\{\frac{2L_{\widetilde{g},1}^2}{\mu_J}(1 -\mu_J \eta_\phi), \frac{2L_{\widetilde{g},1}^2}{\mu_J}(1 - \mu_J \eta_\psi) \right\} > 0$ is independent of $\eta_\theta$, it suffices to ensure that the following
three inequalities hold:
    \begin{gather}
        c\left( 1 +\frac{\alpha \eta_\theta L_{J,1}}{\mu_J} \right)\left(1 - \eta_\phi \mu_J\right) + \frac{\eta_\theta}{2}\kappa_b - c(1 - \mu_h \eta_\theta) < 0 , \notag\\
        c\left( 1 +\frac{\alpha' \eta_\theta L_{J,1}}{\mu_J} \right)\left(1 - \eta_\psi \mu_J\right) + \frac{\eta_\theta}{2}\kappa_b - c(1 - \mu_h \eta_\theta) < 0 , \notag\\
         c \left( \frac{L_{J,1}}{\min \{\alpha,\alpha'\}} + L_{J,1} \eta_\theta + L_{J,\star} \eta_\theta \right) \leq \frac{1}{2}. \notag
    \end{gather}

    Choose
    \begin{align} 
        \begin{cases}
            & \alpha  = \frac{\mu_J}{L_{J,1}(1 - \eta_\phi \mu_J)} > 0  \\
            & \alpha' = \frac{\mu_J}{L_{J,1}(1 - \eta_\psi \mu_J)} > 0 \\
            & c = \min\{\frac{1}{2\left( \frac{L_{J,1}}{\min \{\alpha,\alpha'\}} + L_{J,1}  + L_{J,\star} \right)},1 \} > 0 \\
            & \eta_\theta \leq \min \{(\eta_\phi \mu_J)(2+ 2 \mu_h + \frac{\kappa_b}{c})^{-1} , (\eta_\psi \mu_J)(2 + 2 \mu_h + \frac{\kappa_b}{c})^{-1}, \frac{1}{2L_{h,1}}, \frac{1}{\mu_h+1}, 1 \} \\
            & \eta_\phi \leq \min\{\frac{1}{L_{J,1}}, \frac{1}{\mu_J + 1}, 1\}\\
            & \eta_\psi \leq \min\{\frac{1}{L_{J,1}}, \frac{1}{\mu_J+ 1}, 1\}
        \end{cases}.
        \label{eq: alg-2-stochastic-parameter-choice}
    \end{align}
With these choices, $c=\cO(1)$, $\eta_\theta=\cO(1)$,
$\eta_\phi=\cO(1)$, and $\eta_\psi=\cO(1)$. Moreover, we obtain \begin{gather}
        c\left( 1 +\frac{\alpha \eta_\theta L_{J,1}}{\mu_J} \right)\left(1 - \eta_\phi \mu_J\right) + \frac{\eta_\theta}{2}\kappa_b -  c(1 - \mu_h \eta_\theta) = c \left(- \eta_\phi \mu_J + \eta_\theta\left( 1+ \frac{\kappa_b}{2c}+ \mu_h\right)\right) \leq - c \frac{\eta_\phi\mu_J}{2} < 0 , \\ 
        c\left( 1 +\frac{\alpha' \eta_\theta L_{J,1}}{\mu_J} \right)\left(1 - \eta_\psi \mu_J\right) + \frac{\eta_\theta}{2}\kappa_b -  c(1 - \mu_h \eta_\theta) = c \left(- \eta_\psi \mu_J + \eta_\theta\left( 1+ \frac{\kappa_b}{2c}+ \mu_h\right)\right) \leq - c \frac{\eta_\psi\mu_J}{2} < 0 , \\
         c \left( \frac{L_{J,1}}{\min \{\alpha,\alpha'\}} + L_{J,1} \eta_\theta + L_{J,\star} \eta_\theta \right)\leq c \left( \frac{L_{J,1}}{\min \{\alpha,\alpha'\}} + L_{J,1}  + L_{J,\star}  \right) \leq \frac{1}{2} ,  \\
         \max \{\left( 1 +\frac{\alpha \eta_\theta L_{J,1}}{\mu_J} \right), \left( 1 +\frac{\alpha' \eta_\theta L_{J,1}}{\mu_J} \right) \}\leq \max \{1 + \frac{\eta_\theta}{1 - \eta_\phi\mu_J},1 + \frac{\eta_\theta}{1 - \eta_\psi\mu_J}  \} \leq 2 . \label{eq: upper-bound-of-max-alpha-large-batch}
    \end{gather}

    Therefore, we have
    \begin{align}
        &\quad\; \E \left[ h(x_t,\theta_t^{k+1}) - h^*(x_t) + c J_t^{k+1}\right] \notag \\
        & \leq (1 - \mu_h \eta_\theta) \E \left[h(x_t,\theta_t^{k}) - h^*(x_t)+ c  J_t^k \right] +8 \eta_\theta\left( 1 + \frac{L_{J,1}}{\mu_J}\right) \left( \gamma^{2H} H^2 \sigma_H^2 + \frac{\sigma_J^2}{B_J} +  \frac{\sigma_f^2}{B \lambda^2}\right) \notag \\
        & \quad \, + c \left( 1 +\frac{\alpha \eta_\theta L_{J,1}}{\mu_J} \right) \eta_\phi \left(  \gamma^{2H}H^2 \sigma_H^2 + \frac{\sigma_J^2}{B_{J}} \right) + c \left( 1 +\frac{\alpha' \eta_\theta L_{J,1}}{\mu_J} \right) \eta_\psi \left( \gamma^{2H}H^2 \sigma_H^2 + \frac{\sigma_J^2}{B_{J}} \right) \notag \\
        & \leq  (1 - \mu_h \eta_\theta) \E \left[h(x_t,\theta_t^{k}) - h^*(x_t)+ c  J_t^k \right] + 12 \left(1 + \frac{L_{J,1}}{\mu_J} \right)\left( \gamma^{2H}H^2 \sigma_H^2 + \frac{ \sigma_J^2}{B_{J}} + \frac{ \sigma_f^2}{B\lambda^2} \right)  , 
    \end{align}
    where the last inequality follows from the parameter choices in~\eqref{eq: alg-2-stochastic-parameter-choice} and the upper bound in~\eqref{eq: upper-bound-of-max-alpha-large-batch}.

    Denote 
    \begin{align}
        E_\theta \bydef   \left(1 + \frac{L_{J,1}}{\mu_J} \right)\left( \gamma^{2H}H^2 \sigma_H^2 + \frac{ \sigma_J^2}{B_{J}} + \frac{ \sigma_f^2}{B\lambda^2} \right) = \cO(\gamma^{2H}H^2) + \cO\left(\frac{1}{B_{J}}\right) + \cO\left(\frac{\sigma_f^2}{B\lambda^2}\right). \label{eq: definition-E-theta-large-batch}
    \end{align}
Here, $\sigma_H$ and $\sigma_J$ are constants from
Lemma~\ref{lemma: error-of-policy-gradient-estimation-of-finite-horizon}
and Lemma~\ref{lemma: variance-of-J}, respectively, while $\sigma_f$ is the
variance bound in Assumption~\ref{assump: stochastic-gradient-assumptions-main}.
Thus, only the dependence on $\sigma_f$ is retained explicitly in the final
bound.

By unrolling the preceding inequality over $k=0,1,\ldots,K-1$, we obtain
    \begin{align}
        & \quad\, \E \left[ h(x_t,\theta_t^{K}) - h^*(x_t) + c J_t^{K}\right] \notag \\
        & \leq (1 - \mu_h \eta_\theta)^K \E \left[h(x_t,\theta_t^{0}) - h^*(x_t)+ c  J_t^0 \right] + 12 \sum_{k=0}^{K-1} \left( 1 - \mu_h \eta_\theta\right)^{K-k-1}  E_\theta \notag \\
        & \leq (1 - \mu_h \eta_\theta)^K \E \left[h(x_t,\theta_t^{0}) - h^*(x_t)+ c  J_t^0 \right] + \frac{12}{\mu_h \eta_\theta} E_\theta. \label{eq: recursion-of-h-and-J-large-batch}
    \end{align}

For notational simplicity, we suppress the expectation notation whenever it is clear from context. Then we have
\begin{align}\notag
        & \quad \, \dist^2(\theta_t^K, \theta_\lambda^*(x_t)) + c\dist^2 (\tilde{\theta}_t^K, \theta^*(x_t,\theta_t^K))  \\
        & \overset{(a)}{\leq} \frac{2}{\mu_h} \left( h(x_t,\theta_t^{K}) - h^*(x_t)\right) + c\frac{2}{\mu_J} J_t^K  \\
        & \overset{(b)}{\leq}\frac{2}{\mu_\theta} \left( h(x_t,\theta_t^{K}) - h^*(x_t)+ c J_t^K\right) \\ 
         & \overset{(c)}{\leq} \frac{2}{\mu_\theta} (1 - \mu_h \eta_\theta)^K  \bigg(h(x_t,\theta_t^{0}) - h^*(x_t) + c J_t^0 \bigg)  + \frac{24}{\mu_\theta^2 \eta_\theta} E_\theta\\
         & \overset{(d)}{\leq} \frac{2}{\mu_\theta} (1 - \mu_h \eta_\theta)^K L_{\theta} \left(\dist^2\left(\theta_t^0, \theta_\lambda^*(x_t) \right)+ c  \dist^2\left(\tilde{\theta}_t^0, \theta^*(x_t,\theta_t^0)\right)\right)+ \frac{24}{\mu_\theta^2 \eta_\theta} E_\theta 
         \label{eq: dist-bound-stochastic-1-large-batch} \\
         & = \frac{2}{\mu_\theta} (1 - \mu_h \eta_\theta)^K L_{\theta} \left(\dist^2\left(\theta_{t-1}^K, \theta_\lambda^*(x_t) \right)+ c  \dist^2\left(\tilde{\theta}_{t-1}^K, \theta^*(x_t,\theta_{t-1}^K)\right)\right)+ \frac{24}{\mu_\theta^2 \eta_\theta} E_\theta \\
         & \overset{(e)}{\leq} \frac{4}{\mu_\theta} (1 - \mu_h \eta_\theta)^K L_{\theta} \left(\dist^2\left(\theta_{t-1}^K, \theta_\lambda^*(x_{t-1}) \right)+ c  \dist^2\left(\tilde{\theta}_{t-1}^K, \theta^*(x_{t-1},\theta_{t-1}^K)\right) \right)+ \frac{24}{\mu_\theta^2 \eta_\theta} E_\theta \notag \\
         & \quad \, +  \frac{4}{\mu_\theta} (1 - \mu_h \eta_\theta)^K L_{\theta} \left(\dist^2\left(\theta_\lambda^*(x_{t}), \theta_\lambda^*(x_{t-1}) \right)+ c  \dist^2\left(\theta^*(x_{t},\theta_{t-1}^K), \theta^*(x_{t-1},\theta_{t-1}^K)\right) \right) \\
         & \overset{(f)}{\leq}  \frac{4}{\mu_\theta} \exp( - \mu_h \eta_\theta K ) L_{\theta} \left(\dist^2\left(\theta_{t-1}^K, \theta_\lambda^*(x_{t-1}) \right)+ c  \dist^2\left(\tilde{\theta}_{t-1}^K, \theta^*(x_{t-1},\theta_{t-1}^K)\right) \right) + \frac{24}{\mu_\theta^2 \eta_\theta} E_\theta \notag \\
         & \quad \, + \frac{4}{\mu_\theta} \exp( - \mu_h \eta_\theta K ) L_{\theta} \left( \frac{L_{h,1}^2}{\mu^2_h}  + c C_{\pi}^2  \right) \|x_{t} - x_{t-1} \|^2 \\
         & \leq  \frac{4 L_{\theta}}{\mu_\theta} \exp( - \mu_h \eta_\theta K ) \left(\dist^2\left(\theta_{t-1}^K, \theta_\lambda^*(x_{t-1}) \right)+ c  \dist^2\left(\tilde{\theta}_{t-1}^K, \theta^*(x_{t-1},\theta_{t-1}^K)\right) \right) + \frac{24}{\mu_\theta^2 \eta_\theta} E_\theta \notag \\
         & \quad \, + \frac{4L_{\theta}}{\mu_\theta} \exp( - \mu_h \eta_\theta K )  \left( \frac{L_{\theta}^2}{\mu^2_\theta}  +  C_{\pi}^2  \right) \|x_{t} - x_{t-1} \|^2 ,
\end{align}
where $(a)$ follows from the QG condition implied by the $\mu_h$-P{\L}
condition of $h$ and the $\mu_J$-P{\L} condition of $J$; $(b)$ follows by setting $\mu_\theta \bydef \min\{\mu_h,\mu_J\}>0$; 
$(c)$ applies the recursion in~\eqref{eq: recursion-of-h-and-J-large-batch};
$(d)$ relies on the $L_{h,1}$-smoothness of $h$, the $L_{J,1}$-smoothness of
$J$, and the definition
$L_{\theta} \bydef \max\{L_{h,1},L_{J,1}\}>0$;
$(e)$ is obtained by applying $\dist^2(A,B) \le 2\,\dist^2(A,C) + 2\,\dist^2(B,C)$ to separate the terms involving $x_t$ and $x_{t-1}$; 
and $(f)$ is a consequence of Lemma~4.1 of~\citet{chen2024finding} and
Lemma~\ref{lemma: lipschitz-constant-of-optimal-parameters}, which give the
Lipschitz continuity of $\theta_\lambda^*(x)$ and $\theta^*(x,\theta)$ with
respect to $x$. Note that the distance in
Definition~\ref{definition: distance} is no greater than the Hausdorff
distance used in~\citet{chen2024finding}.

By choosing $K \geq \frac{1}{\mu_h\eta_\theta}
\log \frac{8L_\theta}{\mu_\theta}$, we ensure that $\frac{4L_{\theta}}{\mu_\theta}
\exp(-\mu_h\eta_\theta K)
\leq \frac{1}{2}$. Therefore, we obtain
    \begin{align}
        & \quad \, \dist^2(\theta_t^K, \theta_\lambda^*(x_t)) + c\dist^2 (\tilde{\theta}_t^K, \theta^*(x_t,\theta_t^K)) \notag \\
        & \leq \frac{1}{2^t} \left(\dist^2\left(\theta_{0}^K, \theta_\lambda^*(x_{0}) \right)+ c  \dist^2\left(\tilde{\theta}_{0}^K, \theta^*(x_{0},\theta_{0}^K)\right) \right) +  \frac{4L_{\theta}}{\mu_\theta} \exp( - \mu_h \eta_\theta K )  \left( \frac{L_{\theta}^2}{\mu^2_\theta}  +  C_{\pi}^2  \right) \sum_{i=0}^{t-1} \frac{\|x_{i+1} - x_{i} \|^2}{2^{t-i-1}} \notag \\
        & \quad \, + \frac{24}{\mu_\theta^2 \eta_\theta} E_\theta \sum_{i=0}^{t-1} \frac{1}{2^{t-i-1}} \notag \\
        & \leq \frac{1}{2^t} \frac{2L_{\theta}}{\mu_\theta} (1 - \mu_h \eta_\theta)^K  \left(\dist^2\left(\theta_0^0, \theta_\lambda^*(x_0) \right)+ c  \dist^2\left(\tilde{\theta}_0^0, \theta^*(x_0,\theta_0^0)\right)\right) + \frac{4 L_{\theta}}{\mu_\theta} \exp( - \mu_h \eta_\theta K ) \sum_{i=0}^{t-1} \frac{\|x_{i+1} - x_{i} \|^2}{2^{t-i-1}} \notag \\
        & \quad \, + \frac{48}{\mu_\theta^2 \eta_\theta}E_\theta + \frac{1}{2^t} \frac{24}{\mu_\theta^2 \eta_\theta} E_\theta \notag \\
        & \leq \frac{1}{2^t} \frac{4L_{\theta}}{\mu_\theta} \exp ( - \mu_h \eta_\theta K )   \Delta_\theta + \frac{4 L_{\theta}}{\mu_\theta} \exp( - \mu_h \eta_\theta K ) \sum_{i=0}^{t-1} \frac{\|x_{i+1} - x_{i} \|^2}{2^{t-i-1}} +  \frac{72}{\mu_\theta^2 \eta_\theta} E_\theta ,
    \end{align}
    where 
    \begin{align}
        \Delta_\theta \bydef\dist^2\left((\phi_0,\psi_0), \left(\phi_\lambda^*(x_0),\psi^*_\lambda(x_0) \right)\right)+ c  \dist^2\left((\phi_0,\psi_0), \left( \phi^*(x_0,\psi_0), \psi^*(x_0,\phi_0)\right)\right) \label{eq: definition-delta-theta-large-batch}
    \end{align}
    is the initial optimality gap of the parameters, and the second inequality is obtained by applying~\eqref{eq: dist-bound-stochastic-1-large-batch} at $t=0$..
    
    Substituting this bound back into~\eqref{eq: recursion-of-stochastic-error-1-large-batch}, we obtain
    \begin{align}
        & \quad \,  \E \left[ \left\| \E [\ell_t ] - \nabla \cL_\lambda^*(x_t) \right\|^2 \right] \notag \\
        & \leq \frac{2L_{f,1}^2 + 32 \lambda^2L_{J,1}^2 + 32 \lambda^2 L_{J,1}^2 C_\pi^2}{c} \E \left[  \dist^2 (\theta^*_\lambda(x_t), \theta_t^{K}) + c \dist^2 (\theta^*(x_t,\theta_t^K), \tilde{\theta}_t^K)  \right]+ 16 \lambda^2 \gamma^{2H} H^2 \sigma_H^2 \notag \\
        & \leq  C_1 \frac{1}{2^t} \frac{4L_{\theta}}{\mu_\theta} \exp ( - \mu_h \eta_\theta K )   \Delta_\theta + C_1 \frac{4 L_{\theta}}{\mu_\theta} \exp( - \mu_h \eta_\theta K ) \sum_{i=0}^{t-1} \E \left[ \frac{\|x_{i+1} - x_{i} \|^2}{2^{t-i-1}} \right]  \notag \\
        & \quad \, +  C_1 \frac{72}{\mu_\theta^2 \eta_\theta} E_\theta  + 16 \lambda^2 \gamma^{2H} H^2 \sigma_H^2 ,
    \end{align}
    where $L_{\widetilde{g},1} = 2L_{J,1}$ by Lemma~\ref{lemma: lipschitz-constants-of-estimated-ni-gap} and $C_1 \bydef \frac{2L_{f,1}^2 + 32 \lambda^2L_{J,1}^2 + 32 \lambda^2 L_{J,1}^2 C_\pi^2}{c} = \cO(\lambda^2)$.

    Summing the above inequality over $t=0,1,\ldots,T-1$, we have
    \begin{align}
        & \quad \, \frac{1}{T }\sum_{t=0}^{T-1} \E \left[ \left\| \E [\ell_t ] - \nabla \cL_\lambda^*(x_t) \right\|^2 \right] \notag \\
        & \leq C_1 \frac{4L_{\theta}}{\mu_\theta} \exp ( - \mu_h \eta_\theta K )   \Delta_\theta \frac{1}{T } \sum_{t=0}^{T-1} \frac{1}{2^t} + C_1 \frac{4 L_{\theta}}{\mu_\theta} \exp( - \mu_h \eta_\theta K ) \frac{1}{T } \sum_{t=0}^{T-1} \sum_{i=0}^{t-1} \E \left[ \frac{\|x_{i+1} - x_{i} \|^2}{2^{t-i-1}} \right]  \notag \\
        & \quad \, +  C_1\frac{72}{\mu_\theta^2 \eta_\theta} E_\theta + 16 \lambda^2 \gamma^{2H} H^2 \sigma_H^2 \\
        & \leq \frac{C_1}{T} \frac{8L_{\theta}}{\mu_\theta} \exp ( - \mu_h \eta_\theta K )   \Delta_\theta + C_1 \frac{8 L_{\theta}}{\mu_\theta} \exp( - \mu_h \eta_\theta K ) \frac{1}{T } \sum_{t=0}^{T-1} \E \left[ \|x_{t+1} - x_{t} \|^2 \right] \notag \\
        & \quad \, + \cO\left(\frac{\lambda^2}{B_J}\right) + \cO(\gamma^{2H} H^2 \lambda^2) + \cO\left(\frac{\sigma_f^2}{B}\right)   
    \end{align}
    where the second inequality follows from the definition of $E_\theta$ in~\eqref{eq: definition-E-theta-large-batch} and the geometric-series bounds 
    $\sum_{t=0}^{T-1}2^{-t}\le 2$ and 
    $\sum_{t=i+1}^{T-1}2^{-(t-i-1)}\le 2$.
\end{proof}

\begin{lemma}
    \label{lemma: bound-of-difference-of-stochastic-outer-large-batch}
Under Assumptions~\ref{assump: mdp-assumptions-main},
\ref{assump:function-assumptions-main}, and
\ref{assump: stochastic-gradient-assumptions-main}, suppose that
Algorithm~\ref{alg: stoc-double-loop-single-large-batch-main} is run with
$\lambda \geq \lambda_0$. Then we have
    \begin{align}
        &  \quad \, \frac{1}{T}\sum_{t=0}^{T-1} \E \left[ \left\|\ell_{t} - \nabla F(x_{t})  \right\|^2 \right] \notag \\
        & \leq \frac{3}{T}\sum_{t=0}^{T-1} \E \left[ \left\| \E [\ell_t] - \nabla \cL_\lambda^*(x_t)\right\|^2 \right] + \cO\left(\frac{\sigma_f^2}{B}\right) + \cO\left(\frac{\lambda^2}{B_J}\right) + \cO(\lambda^{-2})  \notag.
    \end{align}
\end{lemma}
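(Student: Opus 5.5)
The plan is a three-term triangle decomposition at each outer iteration $t$:
\begin{align*}
\ell_t - \nabla F(x_t) = \bigl(\ell_t - \E[\ell_t]\bigr) + \bigl(\E[\ell_t] - \nabla \cL_\lambda^*(x_t)\bigr) + \bigl(\nabla \cL_\lambda^*(x_t) - \nabla F(x_t)\bigr).
\end{align*}
Here $\E[\ell_t]$ denotes the expectation conditional on all randomness up to the end of the inner loop at iteration $t$, that is, conditional on $(x_t,\theta_t^K,\tilde\theta_t^K)$. Applying $\|a+b+c\|^2 \le 3\|a\|^2+3\|b\|^2+3\|c\|^2$ gives three terms. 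The middle one, after taking expectations and averaging over $t$, is exactly the term $\frac{3}{T}\sum_t \E\|\E[\ell_t]-\nabla\cL_\lambda^*(x_t)\|^2$ kept in the statement.

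For the first term, Lemma~\ref{lemma: estimation-error-of-ell-gradient} gives directly
\begin{align*}
3\,\E\|\ell_t-\E[\ell_t]\|^2 \le \frac{6\sigma_f^2}{B}+\frac{24\lambda^2\sigma_J^2}{B_J},
\end{align*}
which is $\cO(\sigma_f^2/B)+\cO(\lambda^2/B_J)$ uniformly in $t$. The conditioning needs care: the fresh samples used to form $\ell_t$ must be independent of the inner-loop iterates, so that conditioning on them leaves the variance bound intact. This is consistent with how Lemma~\ref{lemma: estimation-error-of-ell-gradient} is stated.

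The third term is the deterministic penalty bias. I would invoke the remark following \eqref{eq: gradient-hyper-objective}, i.e. Lemma 4.3 of \citet{chen2024finding}, which gives $\|\nabla F(x)-\nabla\cL_\lambda^*(x)\| = \cO(\lambda^{-1})$ uniformly in $x$ once $\lambda\ge\lambda_0$. Squaring yields $\cO(\lambda^{-2})$.

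That lemma has several hypotheses, which I would check as follows:
\begin{itemize}
\item Lipschitz continuity, smoothness and Hessian-Lipschitz continuity of $f$ come from Assumption~\ref{assump:function-assumptions-main}.
\item The same properties for $g$ come from Lemma~\ref{lemma: ni-gap-properties-main}.
\item The uniform P{\L} property of $cf+g$ for $c\in[0,\lambda_0^{-1}]$ comes from Assumption~\ref{assump:function-assumptions-main} when $c>0$, and from Lemma~\ref{lemma: pl-condition-ni-gap} when $c=0$.
\end{itemize}

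The main obstacle is this last step: making sure the hypotheses of \citet{chen2024finding} really hold in our setting. The LL solution set is not a singleton in parameter space, because of the softmax shift invariance, so I would rely on the fact that their results are stated for set-valued solutions with Hausdorff distance. I would also note, as in the proof of Lemma~\ref{lemma: lipschitz-constant-of-F}, that this is the same toolkit that gives the smoothness of $F$. Summing the three contributions and averaging over $t$ then yields the claim.
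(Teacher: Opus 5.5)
Your proposal is correct and is essentially the paper's proof: the same three-way split with the factor $3$, Lemma~\ref{lemma: estimation-error-of-ell-gradient} for the variance term, and Lemma~4.3 of \citet{chen2024finding} for the $\cO(\lambda^{-2})$ penalty bias, followed by averaging over $t$. The paper leaves three points implicit that you spell out: the conditioning, the uniform P{\L} property of $cf+g$ for $c\in[0,\lambda_0^{-1}]$, and the fact that the lower-level solutions are set-valued.
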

\begin{proof}
    By adding and subtracting $\E[\ell_t]$ and $\nabla \cL_\lambda^*(x_t)$, we have
    \begin{align}
         \left\| \ell_t - \nabla F(x_t)\right\|  \leq \left\| \ell_t - \E [\ell_t]\right\| + \left\| \E [\ell_t] - \nabla \cL_\lambda^*(x_t)\right\| + \left\|\nabla \cL_\lambda^*(x_t) - \nabla F(x_t)\right\| ,
    \end{align}
    which implies that 
    \begin{align}
       & \quad \, \E \left[  \left\| \ell_t - \nabla F(x_t)\right\|^2  \right] \notag \\
       & \leq 3 \E \left[ \left\| \ell_t - \E [\ell_t]\right\|^2 \right] + 3 \E \left[ \left\| \E [\ell_t] - \nabla \cL_\lambda^*(x_t)\right\|^2 \right] + 3 \left\|\nabla \cL_\lambda^*(x_t) - \nabla F(x_t)\right\|^2 \notag \\
       & \leq  3 \E \left[ \left\| \ell_t - \E [\ell_t]\right\|^2 \right] + 3 \E \left[ \left\| \E [\ell_t] - \nabla \cL_\lambda^*(x_t)\right\|^2 \right] + \cO(\lambda^{-2}),
    \end{align}
    where the last inequality follows from Lemma 4.3 in~\citet{chen2024finding}. 

    Moreover, by Lemma~\ref{lemma: estimation-error-of-ell-gradient}, the first error term can be bounded by
    \begin{align}
         \E \left[ \left\| \ell_t - \E [\ell_t]\right\|^2 \right]  \leq \frac{2\sigma_f^2}{B} + \frac{8 \lambda^2 \sigma_J^2}{B_J}.
    \end{align}

    Therefore, telescoping from $t=0$ to $T-1$, we have
    \begin{align}
        & \quad \, \frac{1}{T}\sum_{t=0}^{T-1} \E \left[ \left\|\ell_{t} - \nabla F(x_{t})  \right\|^2 \right] \notag \\
        & \leq \frac{3}{T}\sum_{t=0}^{T-1} \E \left[ \left\| \ell_t - \E [\ell_t]\right\|^2 \right] + \frac{3}{T}\sum_{t=0}^{T-1} \E \left[ \left\| \E [\ell_t] - \nabla \cL_\lambda^*(x_t)\right\|^2 \right] + \cO(\lambda^{-2}) \notag \\
        & \leq \frac{6\sigma_f^2}{B} + \frac{24 \lambda^2 \sigma_J^2}{B_J}  + \frac{3}{T}\sum_{t=0}^{T-1} \E \left[ \left\| \E [\ell_t] - \nabla \cL_\lambda^*(x_t)\right\|^2 \right] + \cO(\lambda^{-2}) \notag \\
        & \leq \frac{3}{T}\sum_{t=0}^{T-1} \E \left[ \left\| \E [\ell_t] - \nabla \cL_\lambda^*(x_t)\right\|^2 \right] + \cO\left(\frac{\sigma_f^2}{B}\right) + \cO\left(\frac{\lambda^2}{B_J}\right) + \cO(\lambda^{-2}) .
    \end{align}

\end{proof}

\begin{theorem}[Theorem~\ref{theorem: convergence-of-stoc-alg-3-large-batch-main}]
    \label{theorem: convergence-of-stoc-alg-3-large-batch}

    Suppose Assumptions~\ref{assump: mdp-assumptions-main},
\ref{assump:function-assumptions-main}, and
\ref{assump: stochastic-gradient-assumptions-main} hold.
Let $\lambda \geq \lambda_0$, and let the step sizes satisfy
\begin{gather*}
    \eta_x \leq \frac{1}{2L_F}, \quad
    \eta_\theta \asymp \kappa^{-5}, \quad
    \eta_\phi,\eta_\psi
    \leq
    \min \left\{
        \frac{1}{L_{J,1}},
        \frac{1}{\mu_g+1},
        1
    \right\}.
\end{gather*}
Then, for the iterates generated by
Algorithm~\ref{alg: stoc-double-loop-single-large-batch-main}, choosing
$K=\mathcal{O}(\log \lambda)$ yields
\begin{align}
            \frac{1}{T}\sum_{t=0}^{T-1} \E \left[ \|\nabla F(x_t)  \|^2\right]  \leq \cO\left(\frac{\Delta_F }{T}\right)  + \cO\left(\frac{\sigma_f^2}{B}\right) + \cO(\lambda^2\gamma^{2H}H^2) +  \cO(\lambda^{-2}) + \cO\left(\frac{\lambda^2}{B_J}\right) , \notag 
        \end{align}
        where $\Delta_F = \E \left[ F(x_0) - F(x_T) \right]$.

        Choosing $T = \cO(\epsilon^{-1})$, $\lambda = \cO(\epsilon^{-1/2})$, $K = \cO(\log \epsilon^{-1})$, $H = \cO(\log \epsilon^{-1})$, $B_{J} = \cO(\epsilon^{-2})$ and $B = \cO(\epsilon^{-1})$, 
         we can obtain
        \begin{align}
            \frac{1}{T}\sum_{t=0}^{T-1} \E \left[ \|\nabla F(x_t)  \|^2 \right] & \leq \cO(\epsilon). \notag 
        \end{align}
\end{theorem}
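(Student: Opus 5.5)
The proof combines a standard descent lemma for the outer update on the true hyper-objective $F$ with the two error lemmas already established: Lemma~\ref{lemma: bound-of-difference-of-stochastic-outer-large-batch}, which controls $\ell_t-\nabla F(x_t)$, and Lemma~\ref{lemma: bound-of-error-of-stochastic-inner-large-batch}, which controls the inner-loop bias $\E[\ell_t]-\nabla\cL_\lambda^*(x_t)$.

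First, I apply the $L_F$-smoothness of $F$ from Lemma~\ref{lemma: lipschitz-constant-of-F} to the update $x_{t+1}=x_t-\eta_x\ell_t$:
\begin{align*}
F(x_{t+1})\le F(x_t)-\eta_x\langle\nabla F(x_t),\ell_t\rangle+\tfrac{L_F\eta_x^2}{2}\|\ell_t\|^2 .
\end{align*}
Using the identity $\langle a,b\rangle=\tfrac12\|a\|^2+\tfrac12\|b\|^2-\tfrac12\|a-b\|^2$ and $\eta_x\le 1/(2L_F)$, this becomes
\begin{align*}
F(x_{t+1})\le F(x_t)-\tfrac{\eta_x}{2}\|\nabla F(x_t)\|^2-\tfrac{1}{4\eta_x}\|x_{t+1}-x_t\|^2+\tfrac{\eta_x}{2}\|\ell_t-\nabla F(x_t)\|^2 .
\end{align*}
I then take expectations, sum over $t=0,\dots,T-1$, and divide by $T\eta_x/2$. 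This gives $\frac1T\sum_t\E\|\nabla F(x_t)\|^2$ bounded by $\frac{2\Delta_F}{\eta_xT}$, minus $\frac{1}{2\eta_x^2T}\sum_t\E\|x_{t+1}-x_t\|^2$, plus $\frac1T\sum_t\E\|\ell_t-\nabla F(x_t)\|^2$.

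Next, I bound the last term. Lemma~\ref{lemma: bound-of-difference-of-stochastic-outer-large-batch} gives three times the averaged bias plus $\cO(\sigma_f^2/B)+\cO(\lambda^2/B_J)+\cO(\lambda^{-2})$. I then insert Lemma~\ref{lemma: bound-of-error-of-stochastic-inner-large-batch}, which contributes four kinds of terms:
\begin{itemize}
\item an initialization term $\frac{C_1}{T}\frac{8L_\theta}{\mu_\theta}e^{-\mu_h\eta_\theta K}\Delta_\theta$;
\item a drift term $C_1\frac{8L_\theta}{\mu_\theta}e^{-\mu_h\eta_\theta K}\frac1T\sum_t\E\|x_{t+1}-x_t\|^2$;
\item statistical terms $\cO(\lambda^2/B_J)$ and $\cO(\sigma_f^2/B)$;
\item a truncation term $\cO(\lambda^2\gamma^{2H}H^2)$.
\end{itemize}

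The main obstacle is absorbing the drift term, because $C_1=\cO(\lambda^2)$ is large. I choose $K$ large enough that
\begin{align*}
3C_1\frac{8L_\theta}{\mu_\theta}e^{-\mu_h\eta_\theta K}\le\frac{1}{2\eta_x^2}.
\end{align*}
With $\eta_\theta\asymp\kappa^{-5}$ a constant, this requires $K\ge\frac{1}{\mu_h\eta_\theta}\log\big(48C_1L_\theta\eta_x^2/\mu_\theta\big)=\cO(\log\lambda)$. The same choice also satisfies the requirement $K\ge\frac{1}{\mu_h\eta_\theta}\log\frac{8L_\theta}{\mu_\theta}$ used inside the proof of Lemma~\ref{lemma: bound-of-error-of-stochastic-inner-large-batch}. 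With this $K$, the drift term cancels against the negative $\|x_{t+1}-x_t\|^2$ term from the descent step. The initialization term becomes $\cO(\Delta_\theta/T)$ with an $\cO(1)$ constant, so it merges into $\cO(1/T)$ and can be absorbed alongside $\cO(\Delta_F/T)$.

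One step needs care. The inner-loop lemma's constraints on $\eta_\theta$ involve $\mu_J$, whereas the theorem is stated with $\mu_g$. Lemmas~\ref{lemma: pl-condition-joint-value-function} and~\ref{lemma: pl-condition-ni-gap} give the same explicit constant, $\mu_g=\mu_J$, so the hypotheses match. In addition, $\eta_\theta\asymp\kappa^{-5}$ is small enough to satisfy \eqref{eq: alg-2-stochastic-parameter-choice}, because $\kappa_b/c$ there is polynomial in $\kappa$.

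Collecting the remaining terms yields the stated bound. For the complexity claim, I substitute $\lambda=\epsilon^{-1/2}$, so that $\lambda^{-2}=\epsilon$ and $\lambda^2/B_J=\epsilon^{-1}\epsilon^{2}=\epsilon$. Taking $H=\cO(\log\epsilon^{-1})$ makes $\lambda^2\gamma^{2H}H^2\le\epsilon$. Finally, $T=\epsilon^{-1}$ and $B=\epsilon^{-1}$ make the remaining terms $\cO(\epsilon)$.
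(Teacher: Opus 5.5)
Your proposal is correct and follows the paper's proof essentially step for step. It uses the same descent inequality for $F$ with $\eta_x\le 1/(2L_F)$, the same appeal to Lemmas~\ref{lemma: bound-of-difference-of-stochastic-outer-large-batch} and~\ref{lemma: bound-of-error-of-stochastic-inner-large-batch}, the same choice $K=\cO(\log\lambda)$ to cancel the drift term against the negative $\frac{1}{2\eta_x^2}\frac1T\sum_t\E\|x_{t+1}-x_t\|^2$ term, and the same identification $\mu_g=\mu_J$ for the step-size conditions.

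There are two small bookkeeping differences, neither of which affects the $\cO(\epsilon)$ conclusion. First, the paper enlarges $K$ slightly, still $\cO(\log\lambda)$, so that the initialization term $\cO(\lambda^2e^{-\mu_h\eta_\theta K}\Delta_\theta/T)$ is absorbed into $\cO(\lambda^{-2})$. You instead leave it as an extra $\cO(\Delta_\theta/T)$, so your first display is not literally in the stated form. Second, you should take the maximum of your $K$ with $\frac{1}{\mu_h\eta_\theta}\log\frac{8L_\theta}{\mu_\theta}$ rather than claim your $K$ already implies it. That implication requires $6C_1\eta_x^2\ge 1$, which holds only once $\lambda$ is large.
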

\begin{proof}
    By the $L_F$-smoothness of $F(x)$ in Lemma~\ref{lemma: lipschitz-constant-of-F}, we have for $\eta_x \leq \frac{1}{2L_F}$,
    \begin{align}
        F(x_{t+1}) &\leq  F(x_t) + \langle \nabla F(x_t), x_{t+1} - x_t \rangle + \frac{L_F}{2}\|x_{t+1} - x_{t}\|^2 \notag \\
        & = F(x_t) - \frac{\eta_x}{2} \| \nabla F(x_t)\|^2 - (\frac{\eta_x}{2} - \frac{\eta_x^2L_F}{2} ) \| \ell_t \|^2 + \frac{\eta_x}{2} \| \ell_t - \nabla F(x_t)\|^2 \notag \\
        & \leq  F(x_t) - \frac{\eta_x}{2} \| \nabla F(x_t)\|^2 - \frac{1}{4\eta_x} \| x_{t+1} - x_t \|^2 + \frac{\eta_x}{2} \|\ell_t - \nabla F(x_t)\|^2 . 
    \end{align}

Taking expectation over all the randomness and telescoping
the preceding inequality from $t=0$ to $T-1$, we obtain
    \begin{align}
        \frac{1}{T}\sum_{t=0}^{T-1} \E \left[ \|\nabla F(x_t)  \|^2\right] & \leq \frac{2\E \left[F(x_0) - F(x_T) \right]}{\eta_x T} + \frac{2}{T} \sum_{t=0}^{T-1}\E \left[ \left\| \ell_t - \nabla F(x_t)\right\|^2  \right]- \frac{1}{2\eta_x^2} \frac{1}{T}\sum_{t=0}^{T-1}\E \left[ \| x_{t+1} - x_t \|^2\right].
    \end{align}

    By Lemma~\ref{lemma: bound-of-error-of-stochastic-inner-large-batch} and Lemma~\ref{lemma: bound-of-difference-of-stochastic-outer-large-batch}, we have
    \begin{align}
        & \quad \, \frac{1}{T}\sum_{t=0}^{T-1} \E \left[ \|\nabla F(x_t)  \|^2\right] \notag \\
        & \leq \frac{2\E \left[F(x_0) - F(x_T) \right]}{\eta_x T} + \frac{2}{T} \sum_{t=0}^{T-1}\E \left[ \left\| \ell_t - \nabla F(x_t)\right\|^2  \right]- \frac{1}{2\eta_x^2}\frac{1}{T}\sum_{t=0}^{T-1}  \E \left[ \| x_{t+1} - x_t \|^2\right] \\
        & \leq  \frac{2\E \left[F(x_0) - F(x_T) \right]}{\eta_x T} + \frac{6}{T} \sum_{t=0}^{T-1} \E \left[ \left\| \E [\ell_t] - \nabla \cL_\lambda^*(x_t)\right\|^2 \right] + \cO\left(\frac{\sigma_f^2}{B}\right) + \cO\left(\frac{\lambda^2}{B_{J}}\right) + \cO\left(\lambda^2\gamma^{2H}H^2\right)+ \cO(\lambda^{-2})\notag \\
        & \quad \,   - \frac{1}{2\eta_x^2} \frac{1}{T}\sum_{t=0}^{T-1} \E \left[ \| x_{t+1} - x_t \|^2\right] \\
        & \leq  \frac{2\E \left[F(x_0) - F(x_T) \right]}{\eta_x T} + \cO\left(\frac{\sigma_f^2}{B}\right)  + \cO\left(\lambda^{-2}\right) + \cO\left(\lambda^2 \gamma^{2H}H^2\right) + \cO\left(\frac{\lambda^2}{B_J}\right)    \notag \\
        & \quad \, + \cO\left(\frac{\lambda^2 e^{-K}\Delta_\theta}{T} \right)   + \left( 48 C_1 \frac{ L_{\theta}}{\mu_\theta} \exp( - \mu_h \eta_\theta K ) - \frac{1}{2 \eta_x^2} \right) \frac{1}{T}\sum_{t=0}^{T-1} \E \left[ \|x_{t+1} - x_{t} \|^2 \right],
    \end{align}
    where $C_1 = \frac{2L_{f,1}^2 + 32 \lambda^2L_{J,1}^2 + 32 \lambda^2 L_{J,1}^2 C_\pi^2}{c} = \cO(\lambda^2)$ and $\Delta_\theta $ is the initial optimality gap of the parameters defined in~\eqref{eq: definition-delta-theta-large-batch}.

    By taking $K = \cO\left(\log \lambda \right)$ such that 
    \begin{align}
        48 C_1 \frac{ L_{\theta}}{\mu_\theta} \exp( - \mu_h \eta_\theta K ) - \frac{1}{2 \eta_x^2} \leq 0 , \quad \cO(\frac{\lambda^2 e^{-K} \Delta_\theta}{T}) \leq \cO(\lambda^{-2}) , \notag 
    \end{align}
    we have
    \begin{align}
          \frac{1}{T}\sum_{t=0}^{T-1} \E \left[ \|\nabla F(x_t)  \|^2\right] \leq \cO\left(\frac{\Delta_F}{T}\right)  + \cO\left(\frac{\sigma_f^2}{B}\right) + \cO(\lambda^2\gamma^{2H}H^2) +  \cO\left(\lambda^{-2}\right) + \cO\left(\frac{\lambda^2}{B_J}\right) ,
    \end{align}
    where $\Delta_F = \E \left[ F(x_0) - F(x_T) \right]$. 

    Now we consider the requirements for the step sizes to ensure the above bound. For the step size $\eta_x$ in the outer loop, we just need $\eta_x \leq \frac{1}{2L_F}$, which is required by the descent lemma of $F(x)$. 
    For the step sizes $\eta_\theta, \eta_\phi, \eta_\psi$ in the inner loop, we need the conditions in~\eqref{eq: alg-2-stochastic-parameter-choice} to hold. Thus, $\eta_\phi$ and $\eta_\psi$ should satisfy $\eta_\phi, \eta_\psi \leq \min \{\frac{1}{L_{J,1}}, \frac{1}{\mu_g + 1}, 1 \}$. Note that $\mu_J = \mu_g$ from Lemma~\ref{lemma: pl-condition-joint-value-function} and~\ref{lemma: pl-condition-ni-gap}. In~\eqref{eq: alg-2-stochastic-parameter-choice}, it can also be verified that $c \asymp \kappa^{-2}$ and $\kappa_b \asymp \kappa^2$. Thus, $\frac{\min\{\eta_\phi,\eta_\psi\} \mu_J}{2+ 2 \mu_h + \frac{\kappa_b}{c}} \asymp \kappa^{-5}$, so we can choose $\eta_\theta \asymp \kappa^{-5}$.  

Choosing $T = \mathcal{O}(\epsilon^{-1})$,
$\lambda = \mathcal{O}(\epsilon^{-1/2})$,
$K = \mathcal{O}(\log \epsilon^{-1})$,
$H = \mathcal{O}(\log \epsilon^{-1})$,
$B_J = \mathcal{O}(\epsilon^{-2})$, and
$B = \mathcal{O}(\epsilon^{-1})$, we obtain
\begin{align}
    \frac{1}{T}\sum_{t=0}^{T-1}
    \E \left[ \|\nabla F(x_t)\|^2 \right]
    & \leq \mathcal{O}(\epsilon).
\end{align}

The resulting total sample complexity is
\begin{align}
    T \cdot K \cdot (B + B_J)
    =
    \tilde{\mathcal{O}}(\epsilon^{-3}).
    \notag
\end{align}
\end{proof}


\clearpage
\newpage
\section{Additional Experiment Details}
\label{appendix: experiment-details}
\subsection{Synthetic Problem}
\label{appendix: synthetic-problem-details}
In this experiment, all algorithms are evaluated using the same environment and initial parameters. The discount factor $\gamma$ is set to $0.99$, and the maximum trajectory length is $3$. All algorithms use Monte Carlo sampling to estimate policy gradients, with a batch size of $16$. The regularization coefficients $\tau_\phi$ and $\tau_\psi$ are set to $0.1$ for all methods. Each algorithm adopts a double-loop structure, with $10$ iterations in the inner loop.
For the PANDA algorithm, the learning rate for the UL parameter $x$ is set to $0.05$, while the learning rates for the policy parameters $\phi$, $\psi$, $\tilde{\phi}$, and $\tilde{\psi}$ are all set to $0.1$. The penalty parameter $\lambda$ is set to $4.0$. For the PBRL algorithm, the learning rate for $x$ is $0.05$, the learning rates for $\phi$ and $\psi$ are $0.1$, and we also use a learning rate of $0.1$ for updating the policy parameters in the inner loop to estimate $\phi^*$ and $\psi^*$. The penalty parameter $\lambda$ is set to $4.0$. For the DA algorithm, the learning rate for $x$ is $0.05$, and the learning rates for $\phi$ and $\psi$ are both $0.1$. 

\subsection{Sentinel-Intruder}
\label{appendix: sentinel-intruder-details}
\subsubsection{$5\times 5$ Grid}
In this experiment, the environment discount factor $\gamma$ is set to $0.99$, the maximum trajectory length is $20$, and the regularization coefficient is set to $0.1$ for all algorithms.
The state in the environment is represented as a $4 \times 5 \times 5$ tensor. The first, second, and third channels are one-hot encodings of the sentinel's position, the intruder's position, and the target location, respectively, while the fourth channel represents the positions of restricted areas.
The action space of each policy consists of five actions: moving up, down, left, or right, and staying in place.

Each policy is composed of a convolutional layer followed by a fully connected layer. The convolutional layer uses a kernel size of $3$, stride $1$, padding $1$, and outputs $32$ channels. After a ReLU activation and a global average pooling operation, the features are passed through a $32 \times 5$ fully connected layer and a softmax layer to produce a probability distribution over the actions.
The parameterized reward function $r_x$ has a similar architecture, consisting of a convolutional layer and a fully connected layer. The convolutional layer shares the same structure as that of the policy network and outputs $32$ channels. After ReLU activation and global average pooling, a hidden representation of the state is obtained. Meanwhile, the actions of the two policies are mapped to two $8$-dimensional vectors through a fixed embedding layer. The state representation and the two action embeddings are then concatenated and passed through a $48 \times 1$ fully connected layer and a sigmoid layer to produce the final reward value.

All algorithms use Monte Carlo sampling to estimate policy gradients, with a batch size of $64$. Each algorithm adopts a double-loop structure, with $10$ iterations in the inner loop. The learning rates for the UL parameter $x$ and the policy parameters $\phi$ and $\psi$ are selected by grid search over in $\{1,2,5\} \times \{10^{-3},10^{-4},10^{-5}\}$. The penalty parameter $\lambda$ is set to $4.0$ for both the PANDA and PBRL algorithms.

\subsubsection{$20 \times 20$ Grid}
\begin{figure}[!ht]
    \centering
    \includegraphics[width=0.5\textwidth]{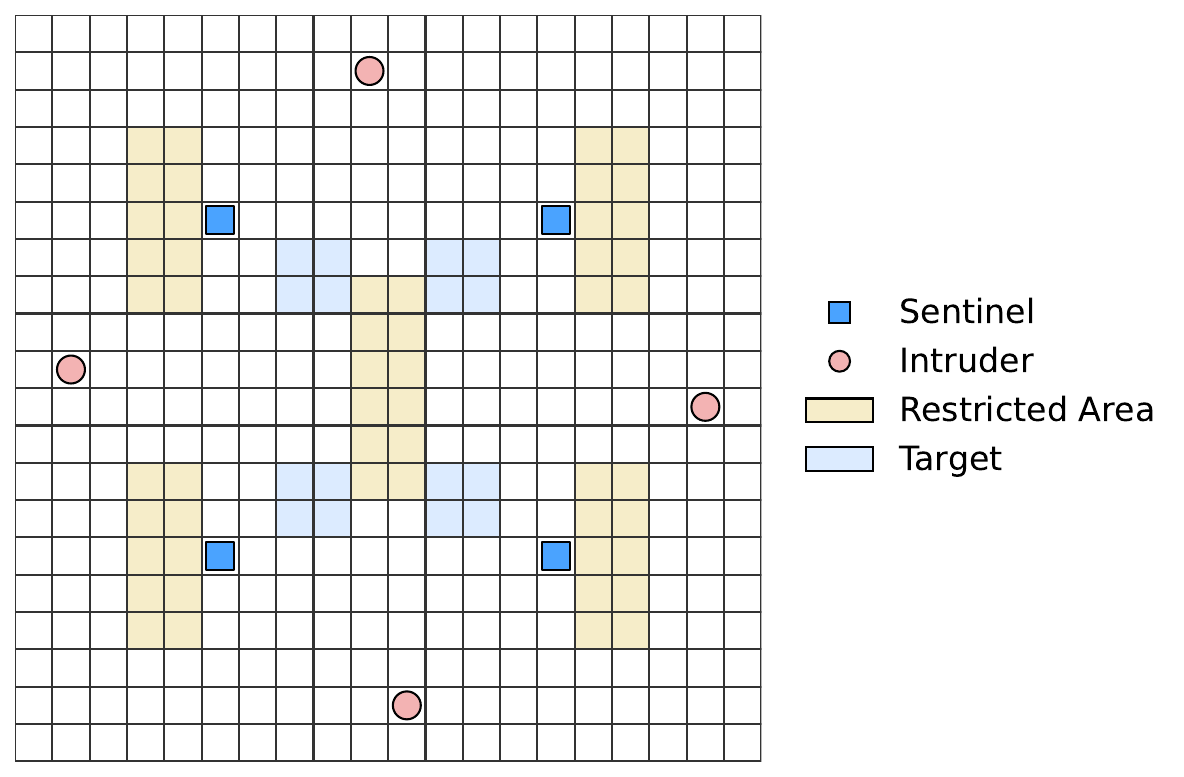}
    \caption{The $20 \times 20$ grid environment for the Sentinel-Intruder game. Dark blue cells represent possible sentinel spawn locations, red cells represent possible intruder spawn locations, yellow cells denote restricted areas, and light blue cells indicate target locations.}
    \label{fig: large-grid}
\end{figure}

\begin{figure*}[!ht]
    \centering
    \includegraphics[width=0.4\textwidth]{pics/robust_incentive_value_new.pdf}
    \includegraphics[width=0.4\textwidth]{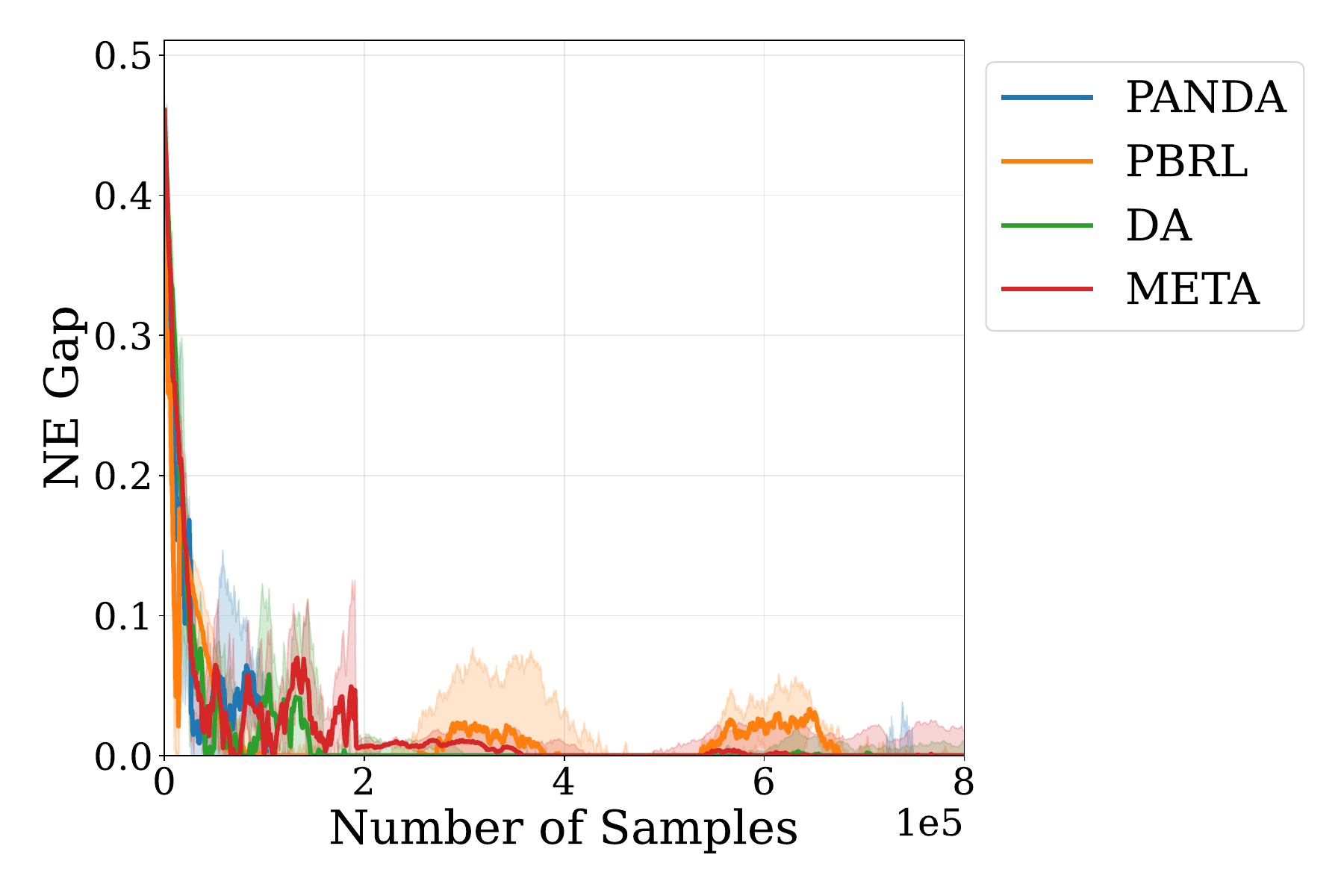}
    \caption{Sentinel-Intruder results on the $5 \times 5$ grid, averaged over three random seeds. Left: UL loss vs. number of sampled trajectories. Right: LL NE gap vs. number of sampled trajectories.}
    \label{fig: robust-small-results}
  \vspace{-4mm}
\end{figure*}
\begin{figure*}[!ht]
    \centering
    \includegraphics[width=0.4\textwidth]{pics/robust_large_incentive_value.pdf}
    \includegraphics[width=0.4\textwidth]{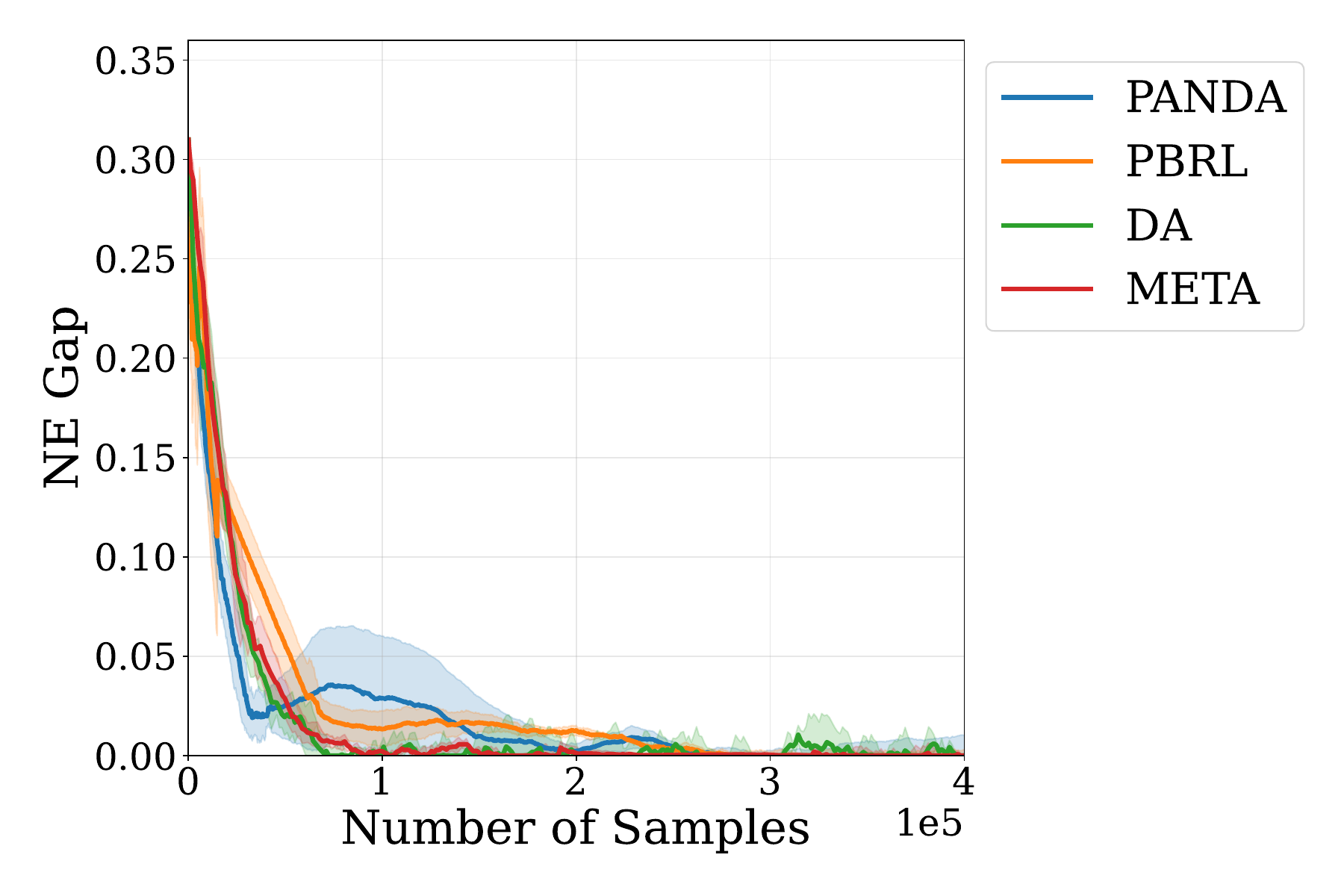}
    \caption{Sentinel-Intruder results on $20 \times 20$ grid, averaged over three random seeds. Left: UL loss vs. number of sampled trajectories. Right: LL NE gap vs. number of sampled trajectories.}
    \label{fig: robust-large-results}
  \vspace{-4mm}
\end{figure*}
The basic experimental configuration is the same as in the $5 \times 5$ grid
environment, except for the state dimensionality and network architecture. 
In the environment, each state is represented as a $4 \times 20 \times 20$ tensor, where each channel has the same interpretation as in the $5 \times 5$ grid environment.

We use CNN-based policy networks. 
The network contains two convolutional layers with $32$ and $64$ output channels, respectively. 
Both layers use kernel size $3$, stride $1$, and padding $1$, and each is followed by a ReLU activation. 
The resulting feature map is flattened and fed into a fully connected layer with $256$ hidden units, followed by another ReLU activation. 
The final linear layer outputs logits over the $5$ discrete actions, and a softmax layer converts these logits into an action probability distribution.
The parameterized reward function $r_x$ is also implemented by a CNN-based neural reward model. 
The state tensor is first encoded by three convolutional layers with $32$, $64$, and $64$ output channels, respectively, each using kernel size $3$, stride $1$, and padding $1$, followed by a ReLU activation. 
The resulting feature map is processed by adaptive average pooling to obtain a $64$-dimensional state representation. 
The actions of the two agents are embedded into two $16$-dimensional vectors, which are concatenated with the state representation. 
The concatenated $96$-dimensional feature vector is then passed through an MLP with one hidden layer of dimension $128$, followed by a sigmoid output layer to produce the final reward value.

All algorithms use Monte Carlo sampling to estimate policy gradients, with a batch size of $64$. Each algorithm adopts a double-loop structure, with $10$ iterations in the inner loop. The learning rates for the UL parameter $x$ and the policy parameters $\phi$ and $\psi$ are selected from the best combination in $\{1,2,3,5,8\} \times \{10^{-3},10^{-4},10^{-5}\}$. The penalty parameter $\lambda$ is set to $4.0$ for both the PANDA and PBRL algorithms.

\subsubsection{Additional Experimental Results}
The results in Figures~\ref{fig: robust-small-results}
and~\ref{fig: robust-large-results} show that, in both the $5 \times 5$ and
$20 \times 20$ grid environments, all algorithms reduce the LL NE gap to
nearly zero, indicating that they can effectively solve the LL problem.
However, PANDA achieves lower UL loss than the baseline algorithms.

\end{document}